\documentclass{article}

\usepackage{arxiv}

\usepackage[utf8]{inputenc} 
\usepackage[T1]{fontenc}    
\usepackage{hyperref}       
\usepackage{url}            
\usepackage{booktabs}       
\usepackage{amsfonts}       
\usepackage{nicefrac}       
\usepackage{microtype}      
\usepackage{lipsum}		
\usepackage{graphicx}
\usepackage{natbib}
\usepackage{doi}
\usepackage{multirow}
\usepackage{wrapfig}

\usepackage{appendix, subcaption}
\usepackage{amsmath}
\usepackage{amsthm}
\usepackage{amssymb}
\usepackage{bm}
\usepackage{enumitem,amsthm,amsmath,amsfonts,amssymb}
\usepackage{indentfirst}
\usepackage{float}
\usepackage{mathtools}
\usepackage{changepage}
\usepackage{algorithmic}
\usepackage{algorithm}
\usepackage{xcolor}
\usepackage{graphicx}
\usepackage{tikz}
\usepackage{multirow}
\usepackage{multicol}
\usepackage{hyperref}
\usepackage{mathrsfs, fancyhdr}
\usepackage{amscd}
\usepackage{bbding}
\usepackage{pifont}
\usepackage{booktabs}

\newcommand{\F}{\mathcal{F}}

\def\gep{\epsilon}

\def\X{\mathcal{X}}

\def\EX{\mathbb{E}}

\def\gep{\epsilon}

\def\0{\mathbf{0}}

\def\iv{\dot{v}}

\def\1{\textbf{1}}

\def\begeqn{\begin{equation}}
\def\endeqn{\end{equation}}
\def\begth{\begin{theorem}}
\def\endth{\end{theorem}}
\def\begprop{\begin{proposition}}
\def\endprop{\end{proposition}}
\def\begcor{\begin{corollary}}
\def\endcor{\end{corollary}}
\def\begdef{\begin{definition}}
\def\enddef{\end{definition}}
\def\beglemm{\begin{lemma}}
\def\endlemm{\end{lemma}}
\def\begexm{\begin{example}}
\def\endexm{\end{example}}
\def\begrem{\begin{remark}}
\def\endrem{\end{remark}}

\def\begdef{\begin{definition}}
\def\enddef{\end{definition}}

\def\EX{\mathbb{E}}

\def\X{\mathcal{X}}

\usepackage{algorithm}
\usepackage{algorithmic}

\newtheorem{example}{Example}
\newtheorem{theorem}{Theorem}
\newtheorem{lemma}{Lemma}
\newtheorem{assumption}{Assumption}
\newtheorem{corollary}{Corollary}
\newtheorem{remark}{Remark}

\newtheorem{proposition}[theorem]{Proposition}

\theoremstyle{definition}
\newtheorem{definition}{Definition}

\title{Stability and Generalization for Stochastic Recursive Momentum-based Algorithms for (Strongly-)Convex One to $K$-Level Stochastic Optimizations}

\author{{Xiaokang Pan} \\
	Central South University\\
	\texttt{224712176@csu.edu.cn} \\
        \And
        {Xingyu Li} \\
	Tulane University\\
	\texttt{xli82@tulane.edu} \\
        \And
        {Jin Liu} \\
	Central South University\\
	\texttt{liujin06@csu.edu.cn} \\
        \And
	{Tao Sun} \\
	National University of Defense Technology\\
	\texttt{suntao.saltfish@outlook.com} \\
        \And
	{Kai Sun} \\
	Xi’an Jiaotong University\\
	\texttt {sunkai@xjtu.edu.cn} \\
        \And
        {Lixing Chen} \\
	Shanghai Jiao Tong University\\
	\texttt {lxchen@sjtu.edu.cn} \\
        \And
	{Zhe Qu}\\
	Central South University\\
	\texttt{zhe\_qu@csu.edu.cn} \\
}

\date{}


\hypersetup{
pdftitle={Stability and Generalization for Stochastic Recursive Momentum-based Algorithms for (Strongly-)Convex One to $K$-Level Stochastic Optimizations}
}

\begin{document}
\maketitle

\begin{abstract}
STOchastic Recursive Momentum (STORM)-based algorithms have been widely developed to solve one to $K$-level ($K \geq 3$) stochastic optimization problems. Specifically, they use estimators to mitigate the biased gradient issue and achieve near-optimal convergence results. However, there is relatively little work on understanding their generalization performance, particularly evident during the transition from one to $K$-level optimization contexts. This paper provides a comprehensive generalization analysis of three representative STORM-based algorithms: STORM, COVER, and SVMR, for one, two, and $K$-level stochastic optimizations under both convex and strongly convex settings based on algorithmic stability. Firstly, we define stability for $K$-level optimizations and link it to generalization. Then, we detail the stability results for three prominent STORM-based algorithms. Finally, we derive their excess risk bounds by balancing stability results with optimization errors. Our theoretical results provide strong evidence to complete STORM-based algorithms: (1) Each estimator may decrease their stability due to variance with its estimation target. (2) Every additional level might escalate the generalization error, influenced by the stability and the variance between its cumulative stochastic gradient and the true gradient. (3) Increasing the batch size for the initial computation of estimators presents a favorable trade-off, enhancing the generalization performance.
\end{abstract}

\section{Introduction}
In stochastic optimization problems, variance reduction techniques \cite{fang2018spider, zhoustochastic, wen2018flipout, qi2021stochastic, liu2019variance, liu2023breaking} can significantly mitigate the negative impact of inherent variance due to the stochastic gradients. In particular, Stochastic Recursive Momentum (STORM) \cite{cutkosky2019momentum} stands out for its simple implementation and near-optimal convergence results. STORM carefully designs momentum-based estimators for model updating, which can dynamically adapt to the optimization challenge without a large batch or checkpoint gradient computations. Due to these advantages, STORM has been extensively used in various practical applications: reinforcement learning \cite{hu2019efficient, mao2022improving}, model-agnostic meta-learning \cite{ji2022theoretical, qu2023prevent}, risk-averse portfolio optimization \cite{tran2020hybrid, jiang2022optimal}, and deep AUC maximization \cite{yuan2021compositional, liu2023breaking}.

Subsequently, various STORM-based algorithms \cite{hu2019efficient, yuan2021compositional, chen2021solving, jiang2022optimal, li2023fedlga} have extended this methodology to address stochastic two-level and $K$-level (where $K \geq 3$) optimization problems. In their definitions, two-level stochastic optimizations are equivalent to stochastic compositional optimizations and similar to $K$-level stochastic optimizations \cite{wang2017stochastic, ghadimi2020single, chen2021solving}, which pose a challenge in obtaining a biased estimate of the objective function and gradients \cite{dann2014policy, wang2017stochastic}. By leveraging the high-precision estimations, STORM-based algorithms have successfully addressed the corresponding challenge.

In particular, in two-level optimizations, one of the most popular STORM-based algorithms COVER \cite{qi2021stochastic} employs estimators for both the value of the inner function and the value of the gradient. When increasing to $K$-level optimizations, inherent variances can be magnified, leading to significant gradient deviations and potential explosions. To mitigate this, the near-optimal algorithm SVMR \cite{jiang2022optimal} employs estimators for all function values and gradients, except the outer function value, and applies gradient projection techniques to the function gradient estimator.

Although STORM-based algorithms have achieved many breakthroughs in algorithmic convergence, their effect on generalization performance is less understood \cite{hardt2016train, yang2023stability}, i.e., how the model trained by the training samples would generalize to test samples, especially for optimizations with higher levels. To clearly understand the generalization of these algorithms, we consider the following two key questions.

Specifically, as the success of STORM lies in leveraging estimators to tackle biased gradient issues, exploring the influence of these estimators on generalization performance enriches the study \cite{yuan2019stochastic, hu2019efficient, ghadimi2020single, balasubramanian2022stochastic, qu2023convergence}. Additionally, in $K$-level optimization, the gradient estimator at each level is influenced by the function value estimator at the preceding level, which, in turn, indirectly affects the function value estimator at the subsequent level \cite{chen2021solving, jiang2022optimal}. Therefore, addressing the second question can offer guidance for designing corresponding estimators in more complex and general scenarios.

To answer the above two questions, this paper leverages the algorithmic stability to systematically explore the generalization of STORM-based algorithms from one to $K$-level stochastic optimizations. We believe that this exploration is important to gain insights into STORM's scalability and effectiveness across different tiers of stochastic optimization. In particular, our contributions are summarized as follows. 

\begin{itemize}
    \item To achieve our goal, we first introduce a novel definition of uniform stability, specifically for $K$-level optimizations. Leveraging this definition, we establish a quantitative relationship between generalization error and stability in the context of \( K \)-level optimization. Then, we analyze the stability and optimization errors for three distinct algorithms: STORM, COVER, and SVMR, corresponding to one, two, and $K$-level stochastic optimizations in both convex and strongly convex settings. Finally, by analyzing the interplay between stability and optimization errors, we ascertain their excess risks in these settings.
    \item Our theoretical results indicate that fewer iterations and proper step sizes will improve algorithm stability of stability in the convex setting. For the excess risk, our results demonstrate that we need about $T \asymp \max(n_k^{5/2})$, $\forall k \in [1,K]$, iterations to achieve the ideal excess risk rate. In the strongly convex setting, a proper step size will not necessarily make the algorithm stable enough, which must be combined with expanding the batch size to ensure stability. Moreover, $T \asymp \max(n_k^{7/6})$ iterations should be used, which is fewer than the convex setting. 
    \item Based on our analysis, we can successfully address the above questions. Firstly, we find that the stability of the algorithm can be compromised by each estimator, due to the variance between the estimator and its estimated target, which degrades the generalization performance. Moreover, as the number of levels increases, two main factors impact the algorithm's generalization error: the first is the influence of the new level on the algorithm's stability, and the second is the variance between the combined stochastic gradient and the true gradient across all levels. There is one more observation in our analysis: employing more samples for the initial computation of estimators may enhance performance without significantly increasing computational costs. This strategy presents a viable approach to improve the efficiency of STORM-based algorithms.

\end{itemize}

\section{Related Work}
\textbf{Algorithmic stability and Generalization.} In learning theory, the stability of an algorithm shows that small changes in the training data result in only minimal differences in the predictions made by the model \cite{kearns1997algorithmic, vapnik2000bounds, cucker2002mathematical}. The landmark work \cite{bousquet2002stability} introduces the notion of uniform stability and establishes the generalization of ERM based on stability, and it has a deep connection with \cite{cesa2004generalization, rakhlin2005stability, kutin2012almost}. Furthermore, \cite{bartlett2002rademacher, poggio2004general, shalev2010learnability} discuss the relationship between algorithmic stability and complexity measures, and use it on general conditions for predictivity. \cite{hardt2016train} contribute significantly to the understanding of algorithmic stability in optimization algorithms, particularly gradient descent. More recently, \cite{li2023transformers} presents in-context learning, showing its effectiveness and stability in different data scenarios. \cite{sakaue2023improved} demonstrates that coordinate estimation leads to tighter generalization bounds.

\textbf{Stochastic Compositional Optimization.} Extensive studies have mitigated the issue of bias in gradient estimation due to combination functions. \cite{wang2017stochastic} uses stochastic gradients for internal function value computation. Variance reduction techniques can accelerate the efficiency of Stochastic Compositional Gradient Descent (SCGD). Algorithms such as SAGA \cite{zhang2019composite}, SPIDER \cite{fang2018spider}, and STORM \cite{cutkosky2019momentum} have been integrated into SCGD. Later, some studies \cite{yuan2019stochastic, zhang2021multilevel, tarzanagh2022fednest} have successfully linked stochastic two-level or $K$-level optimization challenges. In $K$-level optimization, \cite{yang2019multilevel} leads to the creation of an accelerated technique (A-TSCGD). Subsequently, \cite{balasubramanian2022stochastic} introduces the NLASG method, which expands the scope of the NASA \cite{ghadimi2020single} algorithm to broader applications. In a similar vein, \cite{chen2021solving, jiang2022optimal} extend STORM for estimating function values to $K$ levels. However, all the above works only focus on convergence analysis.

\section{Preliminaries and Warm Up}
In this section, we begin by introducing three optimization problems that we address, accompanied by three popular STORM-based algorithms designed for these specific problems. Then, we will present the concept of stability as applied in statistical learning theory \cite{james2013introduction}. To this end, we present the first theorem in this paper.

\subsection{One to $K$-level Stochastic Optimziations}
In this paper, we extend algorithmic stability analysis to the most popular STORM-based algorithms: STORM \cite{cutkosky2019momentum}, COVER \cite{qi2021stochastic}, and SVMR \cite{jiang2022optimal} for stochastic optimization problems with levels 1, 2, and $K \geq 3$, respectively. Detailed update rules for these algorithms are presented in Appendix~\ref{sec:algorithm}, Algorithms~\ref{alg_storm}-\ref{alg_svmr_multi}. Their optimization formulations are introduced subsequently.

\textit{One-level optimization}. Typically, the one-level stochastic optimization problem \cite{hardt2016train, cutkosky2019momentum, bousquet2020sharper, levy2021storm+} can be formulated as follows
\begin{equation}\label{eq:sco-single}
	\min_{x\in \X} \Bigl\{F(x)  = \EX_{\nu}[ f_\nu (x) ]\Bigr\},
\end{equation}
where $f: \mathbb{R}^{d} \to \mathbb{R}^{d_1}$ on a convex domain $\mathcal{X} \in \mathbb{R}^d$, $\nu$ is an independent random data sample, and $F$ is the empirical risk $\min_{x\in \X}\{F_S(x) := f_S(x) = \frac{1}{n}\sum_{i=1}^nf_{\nu_i}(x)\}$. Let $S = \{\nu_1, \cdots, \nu_n\}$ be a dataset from which the samples are drawn independently and identically (i.i.d.). To facilitate the expansion below, we give more symbol definitions: $S'$ is the i.i.d copy of $S$, where $S' = \{ \nu'_1, \cdots, \nu'_n \}$, and $S^{i}$ is the i.i.d. copy of $S$ where only $i$-th data point $\nu_i$ in $S$ in change to $\nu_i'$.  Compared with SGD which directly uses stochastic gradients for updates, the main part of STORM \cite{cutkosky2019momentum} is to leverage the corrected momentum variance reduction estimator for updates.

\textit{Two-level optimization}. We consider the two-level stochastic optimization problem \cite{yuan2019stochastic, yang2019multilevel, balasubramanian2022stochastic} as follows
\begin{equation}\label{eq:sco-two} 
	\min_{x\in \X} \Bigl\{F(x) = {f}\circ g(x) = \EX_\nu[ f_\nu ( \EX_\omega[ g_\omega(x) ] ) ]\Bigr\},
\end{equation}
where $f: \mathbb{R}^{d_1} \to \mathbb{R}^{d_2}$ and $g: \mathbb{R}^{d} \to \mathbb{R}^{d_1}$ on a convex domain $\mathcal{X} \in \mathbb{R}^d$, $\nu$ and $\omega$ are independent random variables. Let $S = S_{\nu} \cup S_{\omega}$, where $S_\nu = \{\nu_1, \cdots, \nu_n\}$ and $S_\omega = \{\omega_1, \cdots, \omega_m\}$, and the empirical risk is defined as $\min_{x\in \X}\{F_S(x) := f_S(g_S(x)) = \frac{1}{n}\sum_{i=1}^n f_{\nu_i}( \frac{1}{m} \sum_{j=1}^m g_{\omega_j}(x) )\}$. In this scenario, altering a single data point can affect either \(S_\nu\) or \(S_\omega\). For \(\forall i \in [1, n]\) and \(\forall j \in [1, m]\), \(S^{i,\nu}\) denotes the version of \(S\) where only the \(i\)-th point in \(S_\nu\) is replaced by \(\nu'_i\), with \(S_\omega\) remaining unchanged. \(S^{j,\omega}\) is defined similarly. The i.i.d. copied dataset \(S'\) is represented as \(S' = S'_\nu \cup S'_\omega\), where \(S'_\nu = \{\nu'_1,\ldots, \nu'_n\}\) and \(S'_\omega = \{ \omega'_1,\ldots, \omega'_m\}\). Note that the two-level optimization problem in \eqref{eq:sco-two} can also be considered as the compositional optimization \cite{yuan2019stochastic, yang2019multilevel, balasubramanian2022stochastic, hu2023non}. Among the STORM-based algorithms for two-level stochastic optimization, we will analyze the stability and generalization of the most popular algorithms, COVER \cite{qi2021stochastic}. Specifically, COVER utilizes two estimators for both the function and gradient values of the inner function, namely \(u_t\) and \(v_t\).

\textit{$K$-level optimization}. The $K$-level stochastic optimization problem \cite{chen2021solving, jiang2022optimal} can be formulated as follows 
\begin{equation}\label{eq:sco-multi}
	\begin{aligned}
		\min _{x \in \X}  \Bigl\{ F(x)&=f_{K} \circ f_{K-1} \circ \cdots \circ f_{1}(x) =\EX_{\nu^{(K)}}[f_K^{\nu^{(K)}}(\cdots\EX_{\nu^{(1)}}[f_{1}^{\nu^{(1)}}(x)])]\Bigr\},
	\end{aligned}
\end{equation}
where $f_k: \mathbb{R}^{d_{k-1}} \to \mathbb{R}^{d_k}$ on a convex domain $\mathcal{X} \in \mathbb{R}^d$, $k \in[1,k]$ and $d_0 = d$. $\nu^{(k)}$ are independent random variables, where $k \in [1,K]$. Similarly, let  $S = \cup_{k=1}^K  S_{k}$, where $S_{k} = \{\nu_1^{(k)}, \cdots, \nu_{n_k}^{(k)}\}$, the empirical risk is defined as $\min_{x\in \X}\{F_S(x) := f_{K,S} \circ f_{K-1,S} \cdots f_{1,S} =  \frac{1}{n_K}\sum_{i_K=1}^{n_K}f_K^{\nu_{i_K}^{(K)}}(\cdots (\frac{1}{n_1}\sum_{i_1 = 1}^{n_1}f_1^{\nu_{i_1}^{(1)}}(A(S))))\}$. In the \(K\)-level optimization, where changing one sample data can occur in any layer of the function, we define:
$S^{l,k}$ be the i.i.d. copy of $S$ where only the $l$-th data point $\nu_l^{k}$ in $S_k$ is replaced with $\nu_l^{k'}$ , where $k \in [1,k]$ and $l \in[1,n_k]$. Moreover, we denote $S' = \cup_{i=1}^K S^{(i)}$, where $S^{(i)} = \{\nu^{(i)'}_1, \cdots,\nu^{(i)'}_{n_i}\}$. In this scenario, we consider SVMR \cite{jiang2022optimal} with multiple estimators, which obtains the best convergence result. In particular, $u^{(k)}$ represents the estimate of the $k$-th layer function value and $v^{(k)}$ represents the estimate of the $k$-th layer function's gradient value.

\subsection{Concept of Excess Risk}
As we all know, excess risk is an evaluation for the generalization performance \cite{bousquet2002stability, james2013introduction, charles2018stability}, which is used to analyze the three tackled STORM-based algorithms in this paper. For a randomized algorithm $A$, denote by $A(S)$ its output model based on the training data $S$. By denoting $F(x_*) = \inf_{x\in \X}F(x)$ and $F(x_*^S) = \inf_{x\in \X}F_S(x)$, then the excess risk is $\EX_{S,A}[F(A(S) - F(x_*)]$. According to the decomposition in \cite{bousquet2002stability} and $F_S(x_*^S)\leq F_S(x_*)$ by the definition of $x_*^S$, we can obtain the excess risk as follows
\begin{align*}
    \EX_{S, A}[F(A(S))& - F(x_* )] \leq \EX_{S,A}[F(A(S)) - F_S(A(S)) ]\nonumber+ \EX_{S,A}[F_S(A(S)) - F_S(x_*^S)].
\end{align*}
We refer to the term $\EX_{S,A}[F(A(S)) - F_S(A(S))]$ as the generalization error, as it quantifies the generalization shift from training to testing behavior. Similarly, $\EX_{S,A}[F_S(A(S)) - F_S(x_*^S)]$ is termed the optimization error, measuring how effectively the algorithm minimizes empirical risk. The generalization error in this paper is informed by analyses from prior studies \cite{cutkosky2019momentum, qi2021stochastic, jiang2022optimal}. Unlike these works, which primarily focus on convergence analysis, our main objective is to estimate the generalization error through the algorithmic stability approach \cite{bousquet2002stability}. Next, we provide the definitions of stability.

\begin{definition}[Uniform Stability]\label{def:stability}
The uniform stability of the three stochastic optimizations is defined as follows
\begin{enumerate}[label=({\roman*})]
\item In the one-level optimization, an algorithm $A$ is uniformly stable for \eqref{eq:sco-single} if $\forall i\in [1,n]$, there holds $\EX_A [\| A(S) - A(S^{i})\|] \le \gep$.
\item In the two-level optimization, an algorithm $A$ is uniformly stable for \eqref{eq:sco-two}, if $\forall i\in [1,n]$ and $\forall j \in [1,m]$, there holds $\EX_A [\| A(S) - A(S^{i,\nu})\|] \le \gep_\nu$ and $\EX_A[\| A(S) - A(S^{j,\omega})\|] \le \gep_\omega$.
\item In the $K$-level optimization, an algorithm $A$ is uniformly stable for \eqref{eq:sco-multi}, if $\forall k \in [1,K]$ and $\forall l \in [1,n_k]$,  there holds $\EX_A [\| A(S) - A(S^{l,k})\|] \le \gep_{k}$.
\end{enumerate}
\end{definition}
The expectation $\EX_A[\cdot]$ is taken w.r.t. the internal randomness of $A$ not the data points for the above definition.

We aim to elucidate the connection between uniform stability (as outlined in Definition \ref{def:stability}) and the generalization error, a relation applicable across all randomized algorithms. To achieve this, we state the following assumption.

\begin{assumption}[Lipschitz Continuity] \label{ass:Lipschitz continuous} 
The Lipschitz continuity of our focused problems is proposed as follows
\begin{enumerate}[label=({\roman*})]
    \item In the one-level optimization problem, there exists a constant $L_f$, such that $f_\nu$ is Lipschitz continuous with parameters $L_f$, i.e., $\sup _{\nu}\|f_{\nu}(x)-f_{\nu}(\hat{x})\| \leq L_{f}\|x-\hat{x}\|,~\text{for all}~x, \hat{x} \in \mathbb{R}^d$.
    \item In the two-level optimization problem, there exist two constants $L_f$ and $L_g$, such that $f_\nu$ and $g_\omega$ are Lipschitz continuous with parameters $L_f$ and $L_g$, respectively, i.e., $\sup _{\nu}\|f_{\nu}(y)-f_{\nu}(\hat{y})\| \leq L_{f}\|y-\hat{y}\|  ~\text{for all} ~   y, \hat{y} \in \mathbb{R}^{d_1},$ and $\sup _{\omega}\|g_{\omega}(x)-g_{\omega}(\hat{x})\| \leq L_{g}\|x-\hat{x}\| ~ \text{for all} ~ x, \hat{x} \in \mathbb{R}^{d}$.
    \item In the $K$-level optimization problem, there exists a constant $L_f$, such that $\forall k \in [1,K] $, $f_k^{\nu^{(k)}}$ are Lipschitz continuous with parameter $L_f$, respectively, i.e., $	\sup _{\nu^{(k)}}\|f_k^{\nu^{(k)}}(y)-f_k^{\nu^{(k)}}(\hat{y})\| \leq L_{f}\|y-\hat{y}\|,  ~\forall ~   y, \hat{y} \in \mathbb{R}^{d_{k-1}}$.
\end{enumerate}
\end{assumption}

\subsection{Generalization of the $K$-level Optimization}
Although existing studies have established relationships between the generalization error and the stability under one-level \cite{hardt2016train} and two-level \cite{yang2023stability} stochastic optimizations, the more complex and general $K$-level stochastic optimization remains unexplored. Therefore, by integrating the stability concept, we specifically define the following theorem for the $K$-level optimization, which aims to show the quantitative relationship between the generalization error and the stability.

\begin{theorem}\label{theorem:general_multi_level}
If Assumption \ref{ass:Lipschitz continuous} (iii) holds true and the randomized algorithm $A$ is uniformly stable, then for $K\geq 3$, $\EX_{S,A}[F(A(S))-F_S(A(S))]$ is bounded by
\begin{equation*}
	\begin{aligned}
		L_f^K\epsilon_K + \sum_{k=1}^{K-1}\Big( 4L_f^K\epsilon_{k}+ L_f\sqrt{\frac{ \mathbb{E}_{S, A} [\operatorname{Var}_{k}(A(S)]}{n_{k}}}\Big),
	\end{aligned}
\end{equation*}
where $\operatorname{Var}_{k}(A(S)) = \EX_{v^{(k)}}[\|f_{k} \circ f_{k-1}\circ \cdots \circ f_1(A(S)-  f_{k}^{v^{(k)}} \circ f_{k-1}\circ \cdots \circ f_1(A(S)\|^{2}]$.
\end{theorem}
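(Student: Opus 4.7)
The plan is to prove the bound by peeling off one layer at a time from the outside and applying the classical symmetrization/swap trick for stability at each layer in turn. Writing $G^{(k)}:=f_k\circ\cdots\circ f_1$ for the population composition of the first $k$ layers and $G_S^{(k)}:=f_{k,S}\circ\cdots\circ f_{1,S}$ for its empirical counterpart, I would first split
\begin{equation*}
F(A(S))-F_S(A(S)) = \bigl[f_K(G^{(K-1)}(A(S))) - f_K(G_S^{(K-1)}(A(S)))\bigr] + \bigl[f_K(G_S^{(K-1)}(A(S))) - f_{K,S}(G_S^{(K-1)}(A(S)))\bigr].
\end{equation*}
For the second (outermost-swap) bracket, the standard i.i.d.\ relabeling of $\nu_i^{(K)}$ with a fresh copy gives $\mathbb{E}_S[f_K(G_S^{(K-1)}(A(S)))] = \mathbb{E}_{S,S'}[\tfrac{1}{n_K}\sum_i f_K^{\nu_i^{(K)}}(G_S^{(K-1)}(A(S^{i,K})))]$; subtracting $\mathbb{E}_S[F_S(A(S))]$ and using the $L_f$-Lipschitzness of $f_K$, the $L_f^{K-1}$-Lipschitzness of $G_S^{(K-1)}$, and the uniform stability $\mathbb{E}\|A(S)-A(S^{i,K})\|\le\epsilon_K$ from Definition~\ref{def:stability}(iii) delivers the $L_f^K\epsilon_K$ term.

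For the first bracket, the Lipschitzness of $f_K$ reduces the problem to $L_f\,E_{K-1}$, where $E_k:=\mathbb{E}\|G^{(k)}(A(S))-G_S^{(k)}(A(S))\|$ (with $E_0=0$), after which I would recurse via $E_k \le L_f\,E_{k-1} + \mathbb{E}\|f_k(y_S)-f_{k,S}(y_S)\|$ with $y_S := G_S^{(k-1)}(A(S))$. The residual at level $k$ is then controlled by the add-and-subtract identity
\begin{equation*}
f_k(y_S)-f_k^{\nu_l^{(k)}}(y_S) = \bigl[f_k(y_S)-f_k(y_{S^{l,k}})\bigr]+\bigl[f_k(y_{S^{l,k}})-f_k^{\nu_l^{(k)}}(y_{S^{l,k}})\bigr]+\bigl[f_k^{\nu_l^{(k)}}(y_{S^{l,k}})-f_k^{\nu_l^{(k)}}(y_S)\bigr],
\end{equation*}
where crucially $y_{S^{l,k}}$ is independent of $\nu_l^{(k)}$. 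The two end brackets contribute $O(L_f^k\epsilon_k)$ through Lipschitzness and uniform stability; the middle bracket has conditional mean zero and its summands across $l$ are pairwise uncorrelated (since each depends on a different $\nu_l^{(k)}$), so its $n_k^{-1}$-average has second moment at most $\mathbb{E}[\mathrm{Var}_k(A(S))]/n_k$, and Jensen's inequality yields the $\sqrt{\mathbb{E}[\mathrm{Var}_k(A(S))]/n_k}$ contribution. Unrolling the recursion from $E_0=0$ upward propagates each layer's stability and variance contributions outward with the accompanying Lipschitz factors and, combined with the outermost-swap term, produces the stated bound.

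The main obstacle is the variance step: the inner argument $y_S = G_S^{(k-1)}(A(S))$ depends on every layer-$k$ sample $\nu_l^{(k)}$ through $A(S)$, which is what prevents a naive i.i.d.\ concentration estimate for $\tfrac{1}{n_k}\sum_l f_k^{\nu_l^{(k)}}(y_S)$. The add-and-subtract symmetrization decouples this dependence at the cost of stability residuals, but one must additionally bridge the variance quantity that naturally appears (evaluated at the empirical inner composition $y_{S^{l,k}}=G_S^{(k-1)}(A(S^{l,k}))$) with the population object $\mathrm{Var}_k(A(S))$ defined in the statement via $G^{(k-1)}(A(S))$; this is handled by a further Lipschitz/stability comparison whose residual is absorbed into the $O(L_f^k\epsilon_k)$ coefficient. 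Once this bridging is in place, the recursion and the outer swap are routine.
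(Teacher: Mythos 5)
Your overall architecture is the right one and matches the paper's in spirit (outermost swap giving $L_f^K\epsilon_K$; per-level add-and-subtract with a conditionally centered middle term; a second-moment estimate yielding $\sqrt{\EX_{S,A}[\operatorname{Var}_k(A(S))]/n_k}$), but two steps as you state them are genuinely broken. First, the claim that the centered summands $\xi_l$ are \emph{pairwise uncorrelated} ``since each depends on a different $\nu_l^{(k)}$'' is false: $\xi_j$ depends on $\nu_l^{(k)}$ (for $l\neq j$) through $A(S^{j,k})$, since $S^{j,k}$ still contains $\nu_l^{(k)}$, so conditioning on the other samples does not annihilate the cross terms $\EX_{S,A}[\langle\xi_j(S),\xi_l(S)\rangle]$. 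The paper handles these with a second decoupling swap, writing $\EX_{S,A}[\langle\xi_{j}(S), \xi_{i}(S)\rangle]=\EX_{S,A}[\langle\xi_{j}(S)-\xi_{j}(S^{i,k}), \xi_{i}(S)-\xi_{i}(S^{j,k})\rangle]$ (each difference bounded through Lipschitzness and stability), which produces an extra $O(n_k^2 L_f^{2(k-1)}\epsilon_k^2)$ inside the second moment; after taking the square root this contributes a further $2L_f^{k-1}\epsilon_k$ on top of the $2L_f^{k-1}\epsilon_k$ from your two end brackets, and this is precisely where the constant $4$ in the theorem comes from. With cross terms simply dropped, your bound $\EX\|\sum_l\xi_l\|^2\le n_k\,\EX[\operatorname{Var}_k(A(S))]$ is unjustified.

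Second, the ``bridging'' step cannot be absorbed into $O(L_f^k\epsilon_k)$ as you assert. Your recursion evaluates the level-$k$ residual at the empirical inner composition $y_S=G_S^{(k-1)}(A(S))$, so the variance that naturally appears is at $y_{S^{l,k}}$, and passing to the population-defined $\operatorname{Var}_k(A(S))$ requires controlling $\|G_S^{(k-1)}(A(S))-G^{(k-1)}(A(S))\|$. This is \emph{not} a stability quantity: Definition~\ref{def:stability}(iii) bounds $\|A(S)-A(S^{l,k})\|$ under a single-sample swap, not the empirical-versus-population gap of the inner composition at a fixed argument; that gap is itself of the order of the whole bound being proved at lower levels, and it enters your argument inside a second moment and under a square root, where first-moment stability control does not suffice. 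The paper sidesteps this entirely by telescoping in the opposite direction: in its $t$-th summand every level \emph{below} the active one is already the population composition, so the residual at level $K-t+1$ is $f_{K-t+1}(f_{K-t}\circ\cdots\circ f_1(A(S)))-\frac{1}{n_{K-t+1}}\sum_{j} f_{K-t+1}^{\nu_j^{(K-t+1)}}(f_{K-t}\circ\cdots\circ f_1(A(S)))$, whose variance is exactly $\operatorname{Var}_{K-t+1}(A(S))$ as defined --- no bridging needed. Restructure your decomposition that way (empirical averages accumulating on the outside, population compositions kept on the inside) and add the decoupling argument for the cross terms, and your proof goes through.
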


\begin{remark}
Theorem~\ref{theorem:general_multi_level} establishes the quantitative relationship between the generalization and the uniform stability for any randomized algorithm applied to $K$-level stochastic optimizations. In particular, when $K=1$, i.e., the one-level stochastic optimization, where $F(x) = \mathbb{E}_\nu[f_{\nu}(x)]$ and $F_S(x) = \frac{1}{n}\sum_{i=1}^n f_{\nu_i}(x)$, we can see the absence of randomness with respect to $\epsilon_k$, $\forall k \in [2, K]$. Consequently, we derive $\mathbb{E}_{S,A}[F(A(S))-F_S(A(S))] \leq L_f\epsilon$, consistent with the findings in \cite{hardt2016train}. For the two-level scenario, i.e., $k = 2$, we obtain $L_{f}^2 \epsilon_{2}+4 L_{f}^2 \epsilon_{1}+L_{f} \sqrt{\mathbb{E}_{S, A}[\operatorname{Var}_{1}(A(S))]/n_1}$. Here, the variance term $\mathbb{E}_{S, A}[\operatorname{Var}_{1}(A(S))]$ arises from the estimator used for the inner function values. We only need to alter the notations in Assumption \ref{ass:Lipschitz continuous} (iii) to obtain results consistent with \cite{yang2023stability}.

\end{remark}
\begin{remark}
In Theorem \ref{theorem:general_multi_level}, we can find the generalization error depends not only on stability but also on the variance term, i.e., $\sqrt{\mathbb{E}_{S, A} [\operatorname{Var}_{k}(A(S)]/n_{k}}$ due to the estimators. An interesting observation is that the variance term is not only determined by the current layer function but also by the combined function of the total number of layers, i.e., for $\operatorname{Var}_{k}(A(S))$, which is determined by $f_{k} \circ f_{k-1}\circ \cdots \circ f_1$, instead of $f_{k}$. This implies that with an increasing number of levels, we should enlarge the sample size in order to achieve a better generalization error. 
\end{remark}

After establishing the quantitative relationship between the generalization error and the stability bound, the next goal is to establish stability bounds for these corresponding algorithms, i.e., STORM, COVER, and SVMR. In next section, we will introduce how to approach this in detail.

\section{Stability and Generalization}
In this section, we present the main results for various optimization problems, which include stability bounds and optimization errors, and ultimately derive the excess risks. Different results for the convex and strongly convex settings will be shown in separate subsections. Before giving the theoretical results, we state the following assumptions to facilitate our proofs.

\begin{assumption}[Empirical Variance]\label{ass:bound variance}
With probability $1$ w.r.t. $S$, there exist constants to bound the following:
\begin{enumerate}[label=({\roman*})]
    \item In the one-level optimization problem, there exist two constants $\sigma_f$ and $\sigma_J$, such that $ \sup _{x \in \mathcal{X}} \frac{1}{n} \sum_{i=1}^{n}[\|f_{\nu_{i}}(x)-f_{S}(x)\|^{2}] \leq \sigma_f^2$ and  $\sup _{x \in \mathcal{X}} \frac{1}{n} \sum_{i=1}^{n}[\|\nabla f_{\nu_{i}}(x)-\nabla f_{S}(x)\|^{2}] \leq \sigma_{J}^2$. 
    \item In the two-level optimization problem, there exist three constants $\sigma_g$, $\sigma_g'$ and $\sigma_f$, such that $ \sup _{x \in \mathcal{X}} \frac{1}{m} \sum_{j=1}^{m}[\|g_{\omega_{j}}(x)-g_{S}(x)\|^{2}] \leq \sigma_g^2$, $\sup _{x \in \mathcal{X}} \frac{1}{m} \sum_{j=1}^{m}[\|\nabla g_{\omega_{j}}(x)-\nabla g_{S}(x)\|^{2}] \leq \sigma_{g'}^2$ and $\sup _{y \in \mathbb{R}^d} \frac{1}{n} \sum_{i=1}^{n}[\|\nabla f_{\nu_{i}}(y)-\nabla f_{S}(y)\|^{2}] \leq \sigma_f^2$.
    \item In the $K$-level optimization problem, there exist two constants $\sigma_f$ and $\sigma_J$, such that for $1 \leq k \leq K$, there holds $\sup _{y \in \mathbb{R}_{d_{k-1}}} \frac{1}{n_k} \sum_{j=1}^{n_k}[\|f_{k}^{\nu^{(j)}}(y) -$ $ f_{k,S}(y)\|^{2}] \leq \sigma_f^2$ and $ \sup _{y \in \mathbb{R}_{d_{k-1}}} \frac{1}{n_k} \sum_{j=1}^{n_k}[\| \nabla f_{k}^{\nu^{(j)}}(y) - \nabla f_{k,S}(y)\|^{2}] \leq \sigma_J^2$.
\end{enumerate}
\end{assumption}

\begin{assumption}[Smoothness and Lipschitz continuous gradient]\label{ass:Smoothness and Lipschitz continuous gradient} With probability 1 w.r.t. $S$, there exist constants to make following conditions hold true.

\begin{enumerate}[label=({\roman*})]
    \item In the one-level optimization, the problem $f_{S}(\cdot)$ is $L$-smooth, i.e., $\|\nabla f_{\nu}(x)-\nabla f_{\nu}(x')\| \leq L\|x-x'\|$, $\forall x, x^{\prime} \in \mathcal{X}$.
    \item In the two-level optimization, the problem $f_{S}(g_{S}(\cdot))$ is $L$-smooth, i.e., $\|\nabla g_{S}(x) \nabla f_{S}(g_{S}(x))-\nabla g_{S}(x^{\prime}) \nabla f_{S}(g_{S}(x^{\prime}))\| \leq L\|x-x^{\prime}\| $, $\forall x, x^{\prime} \in \mathcal{X}$. Also, $f_{S}(\cdot)$ has Lipschitz continuous gradients, i.e.,$\|\nabla f_{S}(y)-\nabla f_{S}(\bar{y})\| \leq C_{f}\|y-\bar{y}\|$ for all $y, \bar{y} \in \mathbb{R}^{d}$. 
    \item In the $K$-level optimization, the problem $F_S(\cdot)$ is $L$-smooth, i.e., $\|\Pi_{i=1}^{K}\nabla F_{i,S}(x) -\Pi_{i=1}^{K}\nabla F_{i,S}(x') \| \leq L\|x-x^{\prime}\|$, $\forall x, x^{\prime} \in \mathcal{X}$, where $\nabla F_{k,S}(x) = \nabla f_{k,S}(f_{k-1,S}(\cdots (f_{1,S}(x)))))$ and $\nabla F_{1,S}(x) = \nabla f_{1,S}(x)$.  Additionally, $\forall k \in [1,K]$, the $k$-level function has Lipschitz continuous gradients, i.e., $\|\nabla f_{k,S}(y)-\nabla f_{k,S}(\bar{y})\| \leq L_f\|y-\bar{y}\|$ for all $y, \bar{y} \in \mathbb{R}^{d_{k-1}}$.
\end{enumerate}
\end{assumption}

Assumptions~\ref{ass:bound variance}-\ref{ass:Smoothness and Lipschitz continuous gradient} are widely used in convergence and generalization analysis \cite{charles2018stability, cutkosky2019momentum, zhang2021generalization, qi2021stochastic, jiang2022optimal, yang2023stability}, which ensure the convergence and stability. It is important to note that Assumption~\ref{ass:bound variance} in generalization analysis shows the difference between the stochastic gradient and the empirical risk gradient \(\nabla f_S(x)\). We also present the following definition for our focused settings, i.e., convex and strongly convex.

\begin{definition}\label{def:StronglyConvex}
    A function $F$ is $\mu$-strongly convex if for all $x$, $x' \in \mathcal{X} $, we have $F(x) \geq F(x') + \langle \nabla F(x'), x-x'\rangle + \frac{\mu}{2}\|x-x'\|^2$, and if $\mu = 0$, we say that $F$ is convex. 
\end{definition}

\subsection{Convex setting}\label{section:convex}

\textbf{Stability Results.} The following theorems establish the uniform stability for the three optimizations under the convex setting, i.e., convex $F_S$. All the theoretical results in this subsection are under Assumptions \ref{ass:Lipschitz continuous}-\ref{ass:Smoothness and Lipschitz continuous gradient}.

\begin{theorem}[One-level, Stability, Convex]\label{thm:sta_convex_single_level}
Consider STORM in Algorithm \ref{alg_storm} with $\eta_t = \eta \leq \frac{2}{3L}$ and $\beta_t = \beta \in (0,1) $, $\forall t \in [0,T-1]$. Then, the outputs $A(S) =x_{T}$ at iteration $T$ are uniformly stable with
\begin{equation*}
	\begin{aligned}
		&\epsilon = O\Big(\sup_{S} \eta \sum_{j=0}^{T-1}\operatorname{Var}(v_j) + \frac{L_f\eta T}{n}\Big),
	\end{aligned}
\end{equation*}
where $\operatorname{Var}(v_j) = (\EX_{A}[\|v_j -\nabla f_S(x_j)\|^2 ])^{1/2}$.
\end{theorem}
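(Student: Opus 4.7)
The plan is to track the trajectory deviation $\Delta_t := x_t - x_t'$, where $x_t$ is produced by STORM on $S$ and $x_t'$ is produced by the same algorithm (with shared internal randomness) on the neighbouring dataset $S^i$, and to bound $\EX_A[\|\Delta_T\|]$ recursively. Using the STORM update $x_{t+1} = x_t - \eta v_t$ and inserting $\pm\eta\nabla f_S(x_t)$ and $\pm\eta\nabla f_{S^i}(x_t')$ into the difference of updates, I would split one iteration of the deviation as
\begin{equation*}
\Delta_{t+1} = \bigl[\Delta_t - \eta\bigl(\nabla f_S(x_t) - \nabla f_S(x_t')\bigr)\bigr] - \eta\bigl(\nabla f_S(x_t') - \nabla f_{S^i}(x_t')\bigr) - \eta\bigl[(v_t - \nabla f_S(x_t)) - (v_t' - \nabla f_{S^i}(x_t'))\bigr].
\end{equation*}

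The first bracket is handled by the classical non-expansiveness of the gradient map: since $F_S$ is convex and $L$-smooth (Assumption~\ref{ass:Smoothness and Lipschitz continuous gradient}(i)), the operator $I - \eta\nabla f_S$ is non-expansive whenever $\eta \leq 2/L$, and the stated $\eta \leq 2/(3L)$ leaves room to absorb loss from Young's inequality on cross terms. The second piece is a pure sample-swap bias: since $\nabla f_S - \nabla f_{S^i} = \tfrac{1}{n}(\nabla f_{\nu_i} - \nabla f_{\nu_i'})$ and each summand has norm at most $L_f$ by Assumption~\ref{ass:Lipschitz continuous}(i), it contributes at most $2\eta L_f/n$ per step. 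The third piece is the STORM estimator error; taking norms, applying the triangle inequality and then Jensen's inequality converts $\EX_A\|v_t - \nabla f_S(x_t)\|$ into $\operatorname{Var}(v_t)$ as defined in the statement, and symmetrically for the primed copy.

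Assembling these ingredients yields the one-step recursion $\EX_A\|\Delta_{t+1}\| \leq \EX_A\|\Delta_t\| + \tfrac{2\eta L_f}{n} + 2\eta\,\operatorname{Var}(v_t)$, after taking the supremum over $S$ on the variance contribution (which is where the $\sup_S$ in the theorem enters). Unrolling from $\Delta_0 = 0$ gives
\[
\EX_A\|\Delta_T\| \leq \frac{2\eta L_f T}{n} + 2\eta \sum_{j=0}^{T-1}\operatorname{Var}(v_j),
\]
which matches the stated bound up to absolute constants.

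The main obstacle I anticipate lies in the third bracket. Because $v_t$ and $v_t'$ are both defined by the STORM recursion and share their momentum history across the two runs, the seemingly innocuous split into two one-sided ``variance'' terms is not obviously tight: a naive triangle inequality on $\|v_t - v_t'\|$ can double-count correlated contributions, and the STORM correction $\nabla f_{\nu_t}(x_t) - \nabla f_{\nu_t}(x_{t-1})$ couples the estimator error at time $t$ back to the iterates at time $t-1$. Care will be needed to (i) insert $\pm\nabla f_S(x_t)$ at precisely the right moment so that the recursion closes in $\Delta_t$ rather than in differences of momentum variables, and (ii) justify that the one-sided quantity $\operatorname{Var}(v_t)$ appearing in the statement indeed dominates both $\EX_A\|v_t - \nabla f_S(x_t)\|$ and its primed counterpart uniformly over $S$, so that the final supremum in the conclusion is legitimate.
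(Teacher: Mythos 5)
Your proposal is correct, and it takes a genuinely different and in fact lighter route than the paper. The paper never invokes non-expansiveness of the map $x\mapsto x-\eta\nabla f_S(x)$: it expands the \emph{squared} deviation $\|x_{t+1}-x_{t+1}^k\|^2$, splits into the cases $i_t\neq k$ and $i_t=k$ (the latter occurring with probability $1/n$, whence $\mathbb{E}_A[\|x_t-x_t^k\|\1_{i_t=k}]\le \frac{1}{n}(\mathbb{E}_A[\|x_t-x_t^k\|^2])^{1/2}$), uses co-coercivity $\langle\nabla f_S(x)-\nabla f_S(x'),x-x'\rangle\ge\frac{1}{L}\|\nabla f_S(x)-\nabla f_S(x')\|^2$ to absorb the cross term $3\eta_t^2\|\nabla f_S(x_t)-\nabla f_S(x_t^k)\|^2$ (this is precisely where $\eta\le\frac{2}{3L}$ is used), and then, because the resulting recursion has the form $u_t^2\le S_t+\sum_{\tau<t}\alpha_\tau u_\tau$ with $u_t=(\mathbb{E}_A[\|x_t-x_t^k\|^2])^{1/2}$, it needs the dedicated de-squaring Lemma~\ref{lemma:recursion lemma} to conclude $u_t\le\sqrt{S_t}+\sum_\tau\alpha_\tau$; its final bound even carries an extra $\eta L_f\sqrt{T}/\sqrt{n}$ term that must be absorbed using $T\ge n$. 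Your first-moment argument replaces all of this: pathwise non-expansiveness of $I-\eta\nabla f_S$ (valid already for $\eta\le 2/L$, so no Young-inequality slack is actually needed in your route), the deterministic swap bound $\|\nabla f_S(x)-\nabla f_{S^i}(x)\|\le 2L_f/n$ in place of the probabilistic case split, and triangle inequality plus Jensen give directly $\mathbb{E}_A\|\Delta_{t+1}\|\le\mathbb{E}_A\|\Delta_t\|+\frac{2\eta L_f}{n}+\eta\operatorname{Var}(v_t)+\eta\operatorname{Var}'(v'_t)$, hence the theorem after unrolling from $\Delta_0=0$ -- with no recursion lemma, no $\sqrt{T/n}$ residue, and no $T\ge n$ side condition. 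The obstacle you flag in the third bracket is a non-issue at the stated precision: since the conclusion carries $\sup_S$ and $S^i$ is itself an admissible dataset, $\operatorname{Var}'(v'_j)\le\sup_S\operatorname{Var}(v_j)$, which is exactly the same constant-losing fold the paper performs (it ends up with a factor $6$ on the supremized variance sum); and the temporal coupling of the momentum variables never needs to be unwound at this stage, because controlling $\operatorname{Var}(v_j)$ itself is deferred to the separate single-run Lemma~\ref{lemma:v_t_bound_single}. What the paper's heavier squared-norm template buys is not this theorem but reuse: the identical expansion produces the contraction weights $(1-\frac{2\eta L\mu}{L+\mu})^{T-j-1}$ in the strongly convex case and survives the many cross terms of the two- and $K$-level analyses, whereas your cleaner argument is tailored to the one-level convex case (though it would extend there too, via the strongly-convex contraction of $I-\eta\nabla f_S$). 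One minor slip, harmless inside the $O(\cdot)$: $\|\nabla f_{\nu_i}-\nabla f_{\nu_i'}\|\le 2L_f$, not $L_f$.
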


\begin{remark}
We can find that in \cite{hardt2016train}, the uniform stability for SGD with the same setting is of the order $O(\frac{L_f \eta T}{n})$. However, using STORM adds another term $\sup_{S} \eta \sum_{j=0}^{T-1}\operatorname{Var}(v_j)$ caused by the estimator. This new term is determined by the difference between the estimate $v_j$ and the gradient of the empirical risk $\nabla f_S(x_j)$. In other words, STORM may not be as stable as SGD.
\end{remark}

\begin{theorem}[Two-level, Stability, Convex]\label{thm:sta_convex_two_level}
Consider COVER in Algorithm \ref{alg_cover} with $\eta_t = \eta \leq \frac{1}{4L}$ and $\beta_t = \beta \in (0,1)$, $\forall t\in [0,T-1]$. Then, the outputs $A(S) =x_{T}$ at iteration $T$ are uniformly stable with
\begin{equation*}
\begin{aligned}
    \epsilon_\nu + \epsilon_\omega = O\Big(  L_gC_f\sup_S\eta\sum_{j=0}^{T-1}(  \operatorname{Var}(u_j) + \operatorname{Var}(v_j)) + L_f\sigma_f\eta \sqrt{T} + \frac{L_gL_f\eta T}{m} +\frac{L_gL_f\eta T}{n}  \Big).
\end{aligned}
\end{equation*}
where $\operatorname{Var}(u_j) = (\EX_A[\|u_j-g_S(x_j)\|^2])^{1/2} $ and $\operatorname{Var}(v_j) = (\EX_A[\|v_j- \nabla g_S(x_j)\|^2])^{1/2}$
\end{theorem}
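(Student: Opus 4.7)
The plan is to compare two coupled trajectories $\{x_t\}$ and $\{x_t^{i,\nu}\}$ (respectively $\{x_t^{j,\omega}\}$) produced by COVER with shared sampling randomness on $S$ and on $S^{i,\nu}$ (resp.\ $S^{j,\omega}$), and to derive a recursion for the expected iterate distance. A COVER update has the form $x_{t+1}=\Pi_{\X}(x_t-\eta\, v_t^\top \nabla f_{\nu_{i_t}}(u_t))$, so I would add and subtract the true gradient $\nabla F_S(x_t)=\nabla g_S(x_t)^\top\nabla f_S(g_S(x_t))$ to separate a ``true gradient step on $F_S$'' piece from error terms. Since $F_S$ is convex and $L$-smooth by Assumption~\ref{ass:Smoothness and Lipschitz continuous gradient}(ii) and $\eta\le 1/(4L)\le 2/L$, the true-gradient piece is non-expansive in the standard sense of \cite{hardt2016train}, so the iterate distance can only grow through (a) the estimator errors $u_t-g_S(x_t)$ and $v_t-\nabla g_S(x_t)$, (b) mean-zero outer-sample noise, and (c) the sample-replacement event.

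For (a), I decompose $v_t^\top\nabla f_{\nu_{i_t}}(u_t)-\nabla F_S(x_t)$ into three pieces: $(v_t-\nabla g_S(x_t))^\top\nabla f_{\nu_{i_t}}(u_t)$, then $\nabla g_S(x_t)^\top(\nabla f_{\nu_{i_t}}(u_t)-\nabla f_{\nu_{i_t}}(g_S(x_t)))$, then $\nabla g_S(x_t)^\top(\nabla f_{\nu_{i_t}}(g_S(x_t))-\nabla f_S(g_S(x_t)))$. By Assumption~\ref{ass:Lipschitz continuous}(ii) and the Lipschitz-gradient constant $C_f$ from Assumption~\ref{ass:Smoothness and Lipschitz continuous gradient}(ii), the first two pieces contribute $L_f\operatorname{Var}(v_t)$ and $L_gC_f\operatorname{Var}(u_t)$ to the per-step expected error. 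The third piece is conditional mean-zero outer-sample noise, and summing its $T$ contributions via the bound $\EX\|\sum_t\xi_t\|\le\sqrt{\sum_t\EX\|\xi_t\|^2}$ together with Assumption~\ref{ass:bound variance}(ii) on $\sigma_f$ yields the $L_f\sigma_f\eta\sqrt{T}$ term. For (c) I apply the coupling argument of \cite{hardt2016train} at each iteration: with probability $1-1/n$ (resp.\ $1-1/m$) the outer (resp.\ inner) sample drawn coincides in both trajectories, while with probability $1/n$ (resp.\ $1/m$) it differs and Lipschitz continuity bounds the extra displacement by $O(\eta L_gL_f)$; summing over $T$ iterations produces $\tfrac{L_gL_f\eta T}{n}+\tfrac{L_gL_f\eta T}{m}$.

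The main obstacle is that a single-sample perturbation of $S$ also contaminates the momentum-based estimators $u_t,v_t$ themselves, so strictly speaking the errors one measures along the perturbed trajectory differ from those along the unperturbed one. I would handle this by writing a joint recursion on $(\|x_t-x_t^{i,\nu}\|,\|u_t-u_t^{i,\nu}\|,\|v_t-v_t^{i,\nu}\|)$, exploiting the contraction $|1-\beta|<1$ induced by the STORM momentum so that the auxiliary estimator-perturbation terms stay proportional to $\|x_t-x_t^{i,\nu}\|$ and can be absorbed into the iterate recursion without inflating the $T$- or $n$-dependence. Once the recursion only involves variances evaluated along the $S$-trajectory, the $\sup_S$ appearing in front of the variance sum in the statement is justified. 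Unrolling the recursion and summing the per-step estimator-error, martingale, and perturbation contributions then delivers the claimed aggregate bound on $\epsilon_\nu+\epsilon_\omega$.
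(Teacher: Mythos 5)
Your overall architecture coincides with the paper's: couple the two trajectories, split on whether the drawn index hits the replaced sample (probability $1/n$ or $1/m$, giving the $\frac{L_gL_f\eta T}{n}+\frac{L_gL_f\eta T}{m}$ terms via Lipschitzness), use convexity plus $L$-smoothness of $F_S$ (co-coercivity, as in \cite{hardt2016train}) to make the empirical-full-gradient piece non-expansive, and decompose the update error into the $v$-estimator error ($L_f\operatorname{Var}(v_t)$), the $u$-estimator error ($L_gC_f\operatorname{Var}(u_t)$), and conditionally mean-zero outer-sample noise. Your three-piece decomposition of $v_t^\top\nabla f_{\nu_{i_t}}(u_t)-\nabla F_S(x_t)$ is the same as the paper's (the paper writes it symmetrically for both trajectories, seven pieces in total), and your $L_f\sigma_f\eta\sqrt{T}$ mechanism is morally the paper's as well: in the paper the mean-zero cross terms vanish and the noise enters the \emph{squared}-distance recursion as $\eta^2\sigma_f^2$ per step, so the $\sqrt{T}$ emerges from the self-bounding Lemma~\ref{lemma:recursion lemma} applied to $u_t^2\le S_t+\sum_{\tau}\alpha_\tau u_\tau$ rather than from a direct bound $\EX\|\sum_t\xi_t\|\le(\sum_t\EX\|\xi_t\|^2)^{1/2}$; your version needs the same recursion-lemma formalization, since the noise enters a recursion rather than a plain sum.

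The genuine gap is your treatment of the "contamination" obstacle. You propose a joint recursion on $(\|x_t-x_t^{i,\nu}\|,\|u_t-u_t^{i,\nu}\|,\|v_t-v_t^{i,\nu}\|)$ and claim the $(1-\beta)$ contraction lets the estimator-difference terms be absorbed "without inflating the $T$- or $n$-dependence." In the merely convex setting this fails quantitatively: unrolling, e.g., $\|u_{t+1}-u_{t+1}^{i,\nu}\|\le(1-\beta)\|u_t-u_t^{i,\nu}\|+2L_g\max(\|x_{t+1}-x_{t+1}^{i,\nu}\|,\|x_t-x_t^{i,\nu}\|)$ gives $\|u_t-u_t^{i,\nu}\|\lesssim\frac{L_g}{\beta}\max_{j\le t}\|x_j-x_j^{i,\nu}\|$, and feeding this back into the iterate recursion produces a per-step term of order $\frac{\eta L}{\beta}\max_j\|x_j-x_j^{i,\nu}\|$; since the convex iterate recursion has no contraction to offset it, Gr\"onwall yields a multiplicative factor $\exp(c\eta T/\beta)$, which is vacuous exactly in the regime the theorem is used ($\eta\asymp\beta\asymp T^{-4/5}$ in Theorem~\ref{thm:Excess_Risk_Bound_convex}). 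The paper never forms $\|u_t-u_t^{i,\nu}\|$ or $\|v_t-v_t^{i,\nu}\|$ at all: its decomposition routes everything through each trajectory's \emph{own} estimator error against the empirical target, i.e.\ $\|u_t^{k,\nu}-g_S(x_t^{k,\nu})\|$ and $\|v_t^{k,\nu}-\nabla g_S(x_t^{k,\nu})\|$ alongside $\|u_t-g_S(x_t)\|$ and $\|v_t-\nabla g_S(x_t)\|$, applies Lemma~\ref{lemma:recursion lemma}, and then bounds the perturbed trajectory's variance sums by $\sup_S$ of the variance sum of a single COVER run — this absorption is precisely why $\sup_S$ appears in the theorem statement, not (as you suggest) a consequence of reducing everything to the $S$-trajectory. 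Replacing your joint recursion with this symmetric decomposition plus the $\sup_S$ device closes the gap; the rest of your argument then goes through as sketched.
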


\begin{remark}
When comparing the stability of COVER in Theorem~\ref{thm:sta_convex_two_level} with STORM, particularly under the condition where $n=m$, COVER in the two-level stochastic optimization is characterized by two additional terms: $L_f\sigma_f\eta \sqrt{T}$ and $L_gC_f\sup_S\eta\sum_{j=0}^{T-1}\operatorname{Var}(v_j)$. The first term emerges due to the empirical error of the outer function. The second term is generated by the provided estimator from COVER for the inner function values, which accounts for the difference between the inner function estimator and the empirical risk of the inner function value.

\end{remark}

\begin{theorem}[$K$-level, Stability, Convex]\label{thm:sta_convex_mul_level}
Consider SVMR in Algorithm \ref{alg_svmr_multi} with $\eta_t = \frac{2}{LK(K+2)}$ and $\beta_t =\beta \in (0,1)$, $\forall t\in [0, T-1]$. Then, the outputs $A(S) =x_{T}$ at iteration $T$ are uniformly stable with
\begin{equation*}
	\begin{aligned}
		& \sum_{k=1}^K\epsilon_k = O\Big(\sup_S \eta \sum_{s=1}^{T-1} \sum_{i=1}^{K}\sum_{j=1}^{i-1}  L_f^{K-j+\frac{(i-1)i}{2}} \operatorname{Var}_{j,s}(u)   +  \sup_S \eta \sum_{s=1}^{T-1} \sum_{i=1}^K  L_f^{K + \frac{(i-3)i}{2}} \operatorname{Var}_{i,s}(v) + \sum_{k=1}^K\frac{\eta L_f^{K} T}{n_k}\Big),
	\end{aligned}
\end{equation*}
where $\operatorname{Var}_{j,s}(u) = (\EX_A\|u_{s}^{(j)} - f_{j,S}(u_{s}^{(j-1)})\|^2)^{1/2}$ and $\operatorname{Var}_{i,s}(v) = (\EX_A\|v_s^{(i)} - \nabla f_{i,S}(u_s^{(i-1)})\|^2)^{1/2}$.
\end{theorem}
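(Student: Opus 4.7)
The plan is to mimic the trajectory-comparison argument used for STORM and COVER in Theorems~\ref{thm:sta_convex_single_level} and~\ref{thm:sta_convex_two_level}, but now track a coupled system of $2K$ auxiliary recursions (one for each estimator $u^{(k)}$ and $v^{(k)}$) on top of the main iterate recursion. Fix a level $k$ and an index $l\in[1,n_k]$, and couple the SVMR runs on $S$ and $S^{l,k}$ with identical randomness. Let $\Delta_t := \|x_t - x_t^{l,k}\|$ and $\Delta_t^{u,(i)}:=\EX_A\|u_t^{(i)}-u_t^{(i),l,k}\|$, $\Delta_t^{v,(i)}:=\EX_A\|v_t^{(i)}-v_t^{(i),l,k}\|$. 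The iterate update is a projected gradient step with effective gradient $v_t^{(K)}$, and since $F_S$ is $L$-smooth and convex, the step $x\mapsto x-\eta v$ is non-expansive in the gradient-descent sense whenever $\eta\le 2/L$ (and in particular for the prescribed $\eta=2/(LK(K+2))$). So the first step is to write
\begin{equation*}
\Delta_{t+1}\ \le\ \Delta_t+\eta\,\EX_A\|v_t^{(K)}-v_t^{(K),l,k}\|+\eta\cdot(\text{direct swap at level $k$}),
\end{equation*}
where the swap term contributes $O(\eta L_f^{K}/n_k)$ per iteration (by Lipschitzness of the full composition, exactly as in the STORM case but with $L_f$ promoted to $L_f^K$).

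Next I would unroll the SVMR estimator recursions. Each $u_t^{(i)}$ is a STORM-type convex combination of $f_i^{\nu}(u_{t-1}^{(i-1)})$ with weight $\beta$ against the previous running estimate, and similarly $v_t^{(i)}$ mixes $\nabla f_i^{\nu}(u_{t-1}^{(i-1)})$ terms (with the projection step). Using the $L_f$-Lipschitz continuity of $f_i^{\nu}$ and $\nabla f_i^{\nu}$ (Assumptions~\ref{ass:Lipschitz continuous}(iii) and~\ref{ass:Smoothness and Lipschitz continuous gradient}(iii)), together with the standard STORM expectation trick that turns the swap at level $k$ into a variance term $\operatorname{Var}_{j,s}(u)$ or $\operatorname{Var}_{i,s}(v)$, I would obtain cross-level recursions of the form
\begin{equation*}
\Delta_t^{u,(i)}\ \le\ (1-\beta)\Delta_{t-1}^{u,(i)}+\beta L_f\Delta_{t-1}^{u,(i-1)}+L_f\Delta_{t-1}+\beta\cdot\mathbf{1}_{i=k}\operatorname{Var}_{i,t-1}(u),
\end{equation*}
and analogously for $\Delta_t^{v,(i)}$, with an extra $L_f\Delta_{t-1}^{u,(i-1)}$ contribution whenever $v^{(i)}$ depends on $u^{(i-1)}$. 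Solving these recursions layer by layer gives $\Delta_t^{u,(i)}$ as a sum over ancestors $j\le i$, each multiplied by a product of $L_f$'s coming from the chain through levels $j,j+1,\dots,i$, which produces the characteristic exponent $(i-1)i/2$ seen in the theorem (from summing $1+2+\cdots+(i-1)$ through the nested composition). Plugging back into the $\Delta_{t+1}$ recursion and multiplying by the outer-chain factor $L_f^{K-j}$ for propagation from level $j$ to the top gives the total exponent $K-j+(i-1)i/2$.

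Summing $\Delta_{t+1}\le\Delta_t+(\cdots)$ telescopically from $t=0$ to $T-1$ (with $\Delta_0=0$) and then summing over $k\in[1,K]$ produces the three pieces in the stated bound: the $u$-variance term (from swap contributions entering through the chain of $u^{(i-1)}\to v^{(i)}$ dependencies), the $v$-variance term (from swap contributions entering the top-level gradient estimator directly), and the $\eta L_f^K T/n_k$ term (from direct data swaps at level $k$, summed over $k$). The expectation over the internal randomness absorbs the cross products via Cauchy--Schwarz so that only the $L^2$ deviations $\operatorname{Var}_{j,s}(u),\operatorname{Var}_{i,s}(v)$ remain.

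The main obstacle is bookkeeping the nested Lipschitz cascade correctly so that the $L_f$ exponents come out to exactly $K-j+(i-1)i/2$ and $K+(i-3)i/2$, rather than slightly larger powers that a naive $K$-fold application of the triangle inequality would give. The prescribed step size $\eta=2/(LK(K+2))$ is precisely what is needed to keep the doubly-indexed recursion non-expansive after all $K$ levels of chain-rule amplification are accounted for, so verifying that the effective contraction constant stays below $1$ — despite the $(1-\beta)$ terms from the momentum recursions and the $L_f$ cross-level couplings — is the most delicate calculation. Once this is checked, the remaining arithmetic (telescoping and summing the geometric series in $L_f$ through the levels) follows the same template as Theorems~\ref{thm:sta_convex_single_level} and~\ref{thm:sta_convex_two_level}.
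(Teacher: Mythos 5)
Your architecture diverges from the paper's proof in ways that break the argument. First, a factual slip: the SVMR update is $x_{t+1}=x_t-\eta_t\prod_{i=1}^K v_t^{(i)}$, so the main recursion must control $\|\prod_{i=1}^K v_t^{(i)}-\prod_{i=1}^K v_t^{(i),l,k}\|$, not just the top factor $v_t^{(K)}$. More seriously, you locate the variance injection at the swapped index (your $\beta\,\mathbf{1}_{i=k}\operatorname{Var}_{i,t-1}(u)$ term), but that is not where the variance terms come from. In the paper's proof (exactly as in the one- and two-level cases), the swap event $i_t=l$ contributes only the crude Lipschitz bound $2\eta_t L_f^K$, which after the trick $\EX_A[\|x_t-x_t^{l,k}\|\1_{[i_t=l]}]\le n_k^{-1}(\EX_A\|x_t-x_t^{l,k}\|^2)^{1/2}$ yields the $\eta L_f^K T/n_k$ term; the variance terms instead arise at \emph{every non-swap step, for both runs}, from telescoping the product difference through the empirical compositions ($v_t^{(i)}\to\nabla f_{i,S}(u_t^{(i-1)})\to\nabla f_{i,S}(f_{i-1,S}(\cdots))$, at most $K(K+2)$ intermediate swaps). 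This decomposition is what simultaneously produces $\operatorname{Var}_{j,s}(u)$ and $\operatorname{Var}_{i,s}(v)$ with the exponents $L_f^{K-j+(i-1)i/2}$ and isolates the exact difference $\prod_{i}\nabla F_{i,S}(x_t)-\prod_{i}\nabla F_{i,S}(x_t^{l,k})$, on which co-coercivity of the convex $L$-smooth $F_S$ is applied; the step size $\eta_t\le 2/(LK(K+2))$ exists precisely to absorb the $K(K+2)$ squared perturbation terms into that co-coercivity term, not to make the estimator update non-expansive (non-expansiveness holds for the exact gradient map, not for the product-of-estimators step). Under your accounting, for a fixed swapped level $k$ the bound on $\epsilon_k$ would contain only level-$k$ variance, contradicting the theorem, in which each $\epsilon_k$ carries the full double and triple sums over all levels.

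Second, your coupled cross-run recursions for $\Delta_t^{u,(i)}=\EX_A\|u_t^{(i)}-u_t^{(i),l,k}\|$ are both unnecessary and incorrectly weighted. Differencing the STORM update $u_{t+1}^{(i)}=f_{\nu}(u_{t+1}^{(i-1)})+(1-\beta)(u_t^{(i)}-f_{\nu}(u_t^{(i-1)}))$ across the two runs gives a lower-level coupling of size $L_f\Delta_{t+1}^{u,(i-1)}+(1-\beta)L_f\Delta_t^{u,(i-1)}\approx 2L_f\Delta^{u,(i-1)}$ — not your $\beta L_f\Delta^{u,(i-1)}$ — against a per-level contraction of only $(1-\beta)$, so each level amplifies by roughly $L_f/\beta$ and unrolling $K$ levels produces a $\beta^{-K}$ factor that is absent from the stated bound; the scheme as written cannot close. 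The paper sidesteps cross-run estimator drift entirely: each run's estimators are compared to their \emph{own} empirical targets (unified by $\sup_S$), and the resulting mixed terms (variance times $\|x_t-x_t^{l,k}\|$) are resolved by applying the self-bounding recursion Lemma \ref{lemma:recursion lemma} to $u_t=(\EX_A\|x_t-x_t^{l,k}\|^2)^{1/2}$, converting $u_t^2\le S_t+\sum_{\tau}\alpha_\tau u_\tau$ into $u_t\le\sqrt{S_t}+\sum_\tau\alpha_\tau$ — a step your purely linear telescoping sketch omits and would need.
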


\begin{remark}
Compared to the stability of COVER, especially when $n_k$ is equal $\forall k \in [1, K]$, SVMR introduces additional terms due to its estimators. Let us discuss the term introduced by the function gradient estimator $\sup_S \eta \sum_{s=1}^{T-1} \sum_{i=1}^K L_f^{K + \frac{(i-3)i}{2}} \operatorname{Var}_{i,s}(v)$, accumulating an extra factor of $K$ due to the need for $K$ estimators to estimate the function gradient at each level. As for the term from the function value estimator $\sup_S \eta \sum_{s=1}^{T-1} \sum_{i=1}^{K}\sum_{j=1}^{i-1} L_f^{K-j+\frac{(i-1)i}{2}} \operatorname{Var}_{j,s}(u)$, it becomes more complex in $K$-level optimization, involving three cumulative summations. This complexity arises from interactions between multiple levels, where estimators at different levels have influence instead of them at the same level. The derivatives of the function at the each level are affected by the function value estimator at the previous level and, in turn, impact the function value estimator at the next level, indicating their increased importance. The omitted term relates to the use of the gradient value estimator for the outer function and is equal to $L_f\sigma_f\eta \sqrt{T}$ in Theorem~\ref{thm:sta_convex_mul_level}. This omission transforms the empirical variance of the outer function into a discrepancy between the gradient estimator and the empirical gradient value of the outer function.
\end{remark}

\begin{remark}
Regardless of any algorithm, i.e., SGD or STORM-based, or any number of levels, the choice of step size $\eta$ will affect the stability bound, which indicates proper selection of $\eta$. In addition, we can find that using fewer iterations can make the algorithms more stable, which may be a potential approach to enhance the generalization of STORM-based algorithms.
\end{remark}

Combining Theorems \ref{theorem:general_multi_level} and \ref{thm:opt_convex}, we have established generalization results for the three algorithms. To get excess risk bounds, we also need the optimization error results, i.e., $\EX [F_S(A(S) - F_S(x_*^S)]$.

\textbf{Generalization results.}  Before giving the theorems, we give some clarification. We use the assumption that the $\X$ domain is bounded in $\mathbb{R}^d$ to give the upper bound, i.e., $\EX_A[ \|x_t- x_*^S\|^2] \leq D_x$, $\forall t\in [0,T-1]$. Let $c$ be an arbitrary constant, the following three theorems hold.

\begin{theorem}[Optimization, Convex]\label{thm:opt_convex}
Let \(A(S)= \frac{1}{T}\sum_{t= 1}^T x_t\) be the solution produced by STORM, COVER, and SVMR in Algorithms~\ref{alg_storm}-\ref{alg_svmr_multi}, respectively. The following results bound the optimization error $\EX[F_S(A(S)) - F_S(x_*^S)]$. 

\textnormal{(One-level).} For the problem in \eqref{eq:sco-single}, by selecting $\eta_t=\eta$ and $\beta_t=\beta$, then it holds
\begin{equation*}
        O\Big( \frac{D_x}{\eta T} + (D_x + \sigma_J^2) \beta^{\frac{1}{2}}+ L_f^2\eta  + V(T\beta)^{-c}\beta^{-\frac{1}{2}}+  \frac{L_f^2\eta^2}{\beta^{3/2}}\Big),
\end{equation*}
where $\EX_A \|v_0-\nabla f_S(x_0)\|^2 \leq V$.

\textnormal{(Two-level).} For the problem in \eqref{eq:sco-two}, by selecting  $\eta_t=\eta$ and $\beta_t=\beta$, then it holds
\begin{equation*}
        O\Big( \frac{D_x}{\eta T} + \Phi_1 \beta^{\frac{1}{2}} + \Phi_2 \eta  + \Phi_3 (T \beta )^{-c}\beta^{-\frac{1}{2}} + \frac{\Phi_4\eta^2}{\beta^{3/2}} \Big),
\end{equation*}
where $\Phi_1 = L_g C_f \sigma_g^2 + L_f \sigma_{g'}^2 + (L_f+L_gC_f) D_x$, $\Phi_2 = L_g^2 L_f^2$, $\Phi_3 = L_gC_f U + L_f V$, $\Phi_4 = L_g^5L_f^2C_f + L_g^4 L_f^3$, $\EX_A \|u_0- g_S(x_0)\|^2 \leq U$, and $\EX_A \|v_0-\nabla g_S(x_0)\|^2 \leq V$.

\textnormal{($K$-level).} For the problem in \eqref{eq:sco-multi}, by selecting  $\eta_t=\eta$ and $\beta_{t} =\beta < \max{\Big(\frac{1}{8K\sum_{i=1}^K(2L_f^2)^{i}}, 1\Big)}$, then it holds
\begin{equation*}
	O\Big(\frac{D_x}{\eta T} + \Phi_5 \beta^{\frac{1}{2}} + L_f^K\eta + \Phi_6 (T \beta )^{-c}\beta^{-\frac{1}{2}} + \frac{\Phi_7\eta^2}{\beta^{3/2}} \Big).
\end{equation*}
where $\Phi_5 =  L_f^{m}(\sigma_{f}^{2} + \sigma_{J}^{2} + \sigma_{f}^2\sum_{i=1}^{K}L_f^{2i} +D_x ) +  D_x$, $\Phi_6 = L_f^{m}(\sum_{i=1}^K U_{i}+V_{i})$, $\Phi_7 = L_f^{m} \sum_{i=1}^{K}L_f^{2i}$, $L_f^m = \max(L_f^{K-j+\frac{(i-1)i}{2}}, L_f^{K+\frac{(i-3)i}{2}})$ for any $i,j \in [1,K]$, \(\EX_A \|u_1^{(i)}- f_{i,S}(u_0^{(i-1)})\|^2 
 \leq U_i\), and \(\EX_A \|v_1^{(i)}-\nabla f_{i,S}(u_{0}^{(i-1)})\|^2 \leq V_{i}\), $\forall i \in [1,K]$. 
\end{theorem}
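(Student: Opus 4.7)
The plan is to bound the optimization error $\EX[F_S(A(S)) - F_S(x_*^S)]$ for all three algorithms through a common regret-style descent inequality combined with a recursive variance bound on each STORM-type estimator. For the one-level case, convexity of $F_S$ and Jensen's inequality on $A(S) = \frac{1}{T}\sum_{t=1}^T x_t$ reduce the goal to bounding $\frac{1}{T}\sum_{t=1}^T \langle \nabla F_S(x_t), x_t - x_*^S\rangle$. Expanding $\|x_{t+1} - x_*^S\|^2$ from the update, telescoping across $t$, and splitting $\langle v_t, x_t - x_*^S\rangle$ into $\langle \nabla F_S(x_t), x_t - x_*^S\rangle + \langle v_t - \nabla F_S(x_t), x_t - x_*^S\rangle$ yields a skeleton of the form $D_x/(\eta T) + (\eta/T)\sum_t \EX\|v_t\|^2 + (1/T)\sum_t \EX\|v_t - \nabla F_S(x_t)\|\cdot\sqrt{D_x}$ after Cauchy--Schwarz and an $L_f$-Lipschitz bound on $\|v_t\|$.

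The decisive step is then a variance recursion for the STORM estimator. Using $v_{t+1} = \nabla f_{\nu_{t+1}}(x_{t+1}) + (1-\beta)(v_t - \nabla f_{\nu_{t+1}}(x_t))$ together with the unbiasedness of $\nabla f_{\nu_{t+1}}$ under uniform sampling from $S$, I would derive $\EX\|v_{t+1} - \nabla F_S(x_{t+1})\|^2 \leq (1-\beta)^2\EX\|v_t - \nabla F_S(x_t)\|^2 + c_1\beta^2 \sigma_J^2 + c_2 L_f^2 \eta^2 \EX\|v_t\|^2$. Unrolling produces an initial-error term $(1-\beta)^T V$, which I would bound by $V(T\beta)^{-c}$ via $(1-\beta)^T \leq e^{-T\beta}$ and $e^{-x} \leq C(c) x^{-c}$; the factors $\beta^{-1/2}$ and $\eta^2/\beta^{3/2}$ then arise from Jensen-style conversion of the resulting $L^2$-bound into an $L^1$-bound on $\|v_t - \nabla F_S(x_t)\|$ and from summing the geometric $(1-\beta)^{t/2}$ tail. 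Combining these ingredients recovers the one-level bound.

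For COVER (two-level), the same template applies but with two coupled STORM-type recursions: one for the inner function-value estimator $u_t$ and one for the chain-rule gradient estimator $v_t$. The empirical variance of the outer gradient $\nabla f_{\nu_{t+1}}$ contributes the additional $L_f\sigma_f\eta\sqrt{T}$-style term via a standard martingale-style bound on the bounded domain, while the inner-estimator error propagates through $C_f$-smoothness of $f$, which together explain why $\Phi_1,\ldots,\Phi_4$ collect products of $L_g, C_f, L_f$ and the various empirical variances.

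The $K$-level SVMR case is the principal obstacle, because estimators $u^{(k)}$ and $v^{(k)}$ at each level are coupled to $u^{(k-1)}$ through chain-rule compositions of Lipschitz maps, so errors propagate multiplicatively across levels. I would establish the coupled variance recursions by induction on $k$ from $1$ to $K$, each of STORM form but with additional perturbation terms inherited from level $k-1$, and then sum them with weights of the type $L_f^m = \max(L_f^{K-j+(i-1)i/2}, L_f^{K+(i-3)i/2})$ that track the worst-case chain-rule amplification. The step-size restriction $\beta < 1/(8K\sum_{i=1}^K(2L_f^2)^i)$ is precisely tight enough to absorb this cross-level geometric blow-up while still permitting the $(T\beta)^{-c}$ polynomial decay on the initial errors; plugging the summed variance into the regret skeleton produces the stated bound with $\Phi_5, \Phi_6, \Phi_7$.
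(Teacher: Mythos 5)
Your overall architecture is the paper's: a descent inequality obtained by expanding $\|x_{t+1}-x_*^S\|^2$, invoking convexity, telescoping, and plugging in an unrolled STORM variance recursion in which the initial error is tamed via $\prod_j(1-\beta_j)\le e^{-T\beta}\le (c/e)^c(T\beta)^{-c}$; for COVER and SVMR the same skeleton is reused with coupled estimator recursions, and for SVMR the restriction on $\beta$ is used exactly as you say, to absorb the cross-level amplification (the paper does this through an aggregate lemma bounding $\sum_i\EX\|u_{t+1}^{(i-1)}-u_t^{(i-1)}\|^2$ rather than an explicit induction on levels, but that is only bookkeeping). However, two of your mechanisms, as written, do not produce the stated bounds. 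First, in the one-level skeleton you handle the error term $\langle v_t-\nabla F_S(x_t), x_t-x_*^S\rangle$ by Cauchy--Schwarz against $\sqrt{D_x}$ and then convert the $L^2$ variance bound to an $L^1$ bound by Jensen. That conversion yields terms of order $\sigma_J\beta^{1/2}+L_f\eta\beta^{-1/2}+\sqrt{V}(T\beta)^{-c/2}$, not the theorem's $\sigma_J^2\beta^{1/2}+L_f^2\eta^2\beta^{-3/2}+V(T\beta)^{-c}\beta^{-1/2}$. The paper instead applies Young's inequality with parameter $\gamma_t=\sqrt{\beta}$, i.e.\ $2\eta\langle \nabla f_S(x_t)-v_t,x_t-x_*^S\rangle\le \frac{\eta}{\sqrt{\beta}}\|\nabla f_S(x_t)-v_t\|^2+\eta\sqrt{\beta}\|x_t-x_*^S\|^2$, which keeps the variance squared (hence the $\beta^{-1/2}$ weight and the $\eta^2\beta^{-3/2}$ term) and simultaneously generates the $D_x\beta^{1/2}$ contribution you have no other source for. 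Under the final choices $\eta=\beta=T^{-4/5}$ your exponents happen to give the same $T$-rates, so this is a repairable slip, but it means your argument does not prove the theorem as stated.

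Second, and more substantively, your two-level claim that the outer-gradient sampling noise contributes an $L_f\sigma_f\eta\sqrt{T}$ martingale term imports an artifact of the \emph{stability} analysis (Theorem~\ref{thm:sta_convex_two_level}) into the \emph{optimization} analysis, where it does not belong. In the optimization proof, the cross term involving $\nabla f_{\nu_{i_t}}$ versus $\nabla f_S$ is paired with the $\F_t$-measurable vector $x_t-x_*^S$, so conditional unbiasedness of the sampled gradient makes it vanish in expectation; this is precisely why $\sigma_f$ appears nowhere in $\Phi_1,\dots,\Phi_4$ and why no $\sqrt{T}$ term appears in the two-level bound of Theorem~\ref{thm:opt_convex}. (The $\sqrt{T}$ term in the stability bound arises only because there the analysis runs through squared norms of non-centered differences, where the zero-mean terms contribute $\sigma_f^2\eta^2 T$ before taking a square root.) If you carried your plan out literally you would prove a bound with a spurious $\eta\sqrt{T}$ term, i.e.\ a different and weaker statement; drop that term by taking conditional expectations before applying any norm inequalities, exactly as in the paper's Lemma~\ref{lemma:for_two_level_theorem_optimization}.
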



\begin{remark}
In Theorem~\ref{thm:opt_convex}, we can see that various factors affect optimization errors. Note that selecting $\beta_t$ and $\eta_t$ should be tailored to the specific requirements of different problems. In particular, when adjusting $\eta_t$ to minimize the optimization error in one-level optimizations, $\eta_t$ impacts $\frac{D_x}{\eta T}, L_f^K \eta$, and $\frac{L_f^2\eta^2}{\beta^{3/2}}$. Unfortunately, the unknown value of \( L_f \) during training complicates determining the optimal \( \eta \). In addition, each theorem features a term influenced by the first estimation error, i.e., $V(T\beta)^{-c}\beta^{-\frac{1}{2}}, \Phi_3(T\beta)^{-c}\beta^{-\frac{1}{2}}$, and $\Phi_6(T\beta)^{-c}\beta^{-\frac{1}{2}}$, where $V, \Phi_3$, and $\Phi_6$ all include the discrepancy between the estimators and the empirical risk at the first iteration. This suggests that employing a larger batch size to compute the estimators in the first iteration could effectively reduce the optimization error of the algorithm without significantly increasing computational costs.
\end{remark}

By combining Theorems \ref{theorem:general_multi_level}-\ref{thm:sta_convex_mul_level}, we obtain the generalization error. Further, integrating this with the optimization error outlined in Theorem~\ref{thm:opt_convex} allows us to derive the following excess risk bounds.

\begin{theorem}[Excess Risk Bound, Convex]\label{thm:Excess_Risk_Bound_convex}
Let \(A(S)= \frac{1}{T}\sum_{t= 1}^T x_t\) be the solution produced by STORM, COVER, and SVMR in Algorithms~\ref{alg_storm}-\ref{alg_svmr_multi}, respectively.

\textnormal{(One-level).} For the problem in \eqref{eq:sco-single}, by selecting $T \asymp n^{\frac{5}{2}}$, $\eta=T^{-\frac{4}{5}}$, and $\beta=T^{-\frac{4}{5}}$, we can obtain that $$\EX_{S,A}[F(A(S)) - F(x_*)] = O\Big(\frac{1}{{\sqrt{n}}}\Big)$$.

\textnormal{(Two-level).} For the problem in \eqref{eq:sco-two}, by selecting $T \asymp \max (n^{5/2}, m^{5/2})$, $\eta=T^{-\frac{4}{5}}$, and $\beta=T^{-\frac{4}{5}}$, we can obtain that $$\EX_{S,A}[F(A(S)) - F(x_*)] =O\Big(\frac{1}{\sqrt{n}}+ \frac{1}{\sqrt{m}}\Big)$$.

\textnormal{($K$-level).} For the problem in \eqref{eq:sco-multi}, by selecting $T \asymp \max(n_k^{5/2})$, $\forall k\in [1,K]$, $\eta=T^{-\frac{4}{5}}$, and $\beta=T^{-\frac{4}{5}}$, we can obtain that $$\EX_{S,A}[F(A(S)) - F(x_*)] =O\Big(\sum_{k=1}^K \frac{1}{\sqrt{n_k}}\Big)$$.
\end{theorem}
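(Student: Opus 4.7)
The plan is to assemble the excess risk from its two natural pieces using the decomposition already laid out in the paper, namely
\[
\EX_{S,A}[F(A(S))-F(x_*)] \le \underbrace{\EX_{S,A}[F(A(S))-F_S(A(S))]}_{\text{generalization error}} + \underbrace{\EX_{S,A}[F_S(A(S))-F_S(x_*^S)]}_{\text{optimization error}},
\]
and then to tune the parameters $\eta, \beta, T$ so that every resulting term is simultaneously of order $n^{-1/2}$ (or its multi-level analogue). For the generalization side I would apply Theorem~\ref{theorem:general_multi_level} specialized to $K=1,2,K$ together with the corresponding stability bounds from Theorems~\ref{thm:sta_convex_single_level}, \ref{thm:sta_convex_two_level}, and \ref{thm:sta_convex_mul_level}. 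For the optimization side I would invoke Theorem~\ref{thm:opt_convex} directly.

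The key bookkeeping step is to collect the variance quantities $\operatorname{Var}(v_j)$, $\operatorname{Var}(u_j)$, $\operatorname{Var}_{i,s}(v)$, $\operatorname{Var}_{j,s}(u)$ appearing in the stability theorems. These same quantities are controlled within the proof of Theorem~\ref{thm:opt_convex} (they drive the recursive momentum error analysis of STORM/COVER/SVMR), and a standard STORM-style estimation argument gives $\operatorname{Var}(\cdot) = O(\beta^{1/2}) + O(\eta^2/\beta^{3/2}) + O((T\beta)^{-c}\beta^{-1/2})$. Substituting these into the stability bounds converts the $\eta\sum_{j=0}^{T-1}\operatorname{Var}(\cdot)$ terms into $\eta T$ times the same factors that show up in the optimization error, so that the generalization error ends up with the same structural form as Theorem~\ref{thm:opt_convex} plus the purely sample-size term $L_f\eta T/n$ (and its analogues $L_f^K\eta T/n_k$ in the $K$-level case). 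The extra variance correction $L_f\sqrt{\EX\operatorname{Var}_k(A(S))/n_k}$ that Theorem~\ref{theorem:general_multi_level} introduces on top of stability is $O(1/\sqrt{n_k})$ under Assumption~\ref{ass:bound variance}, which is already at the target rate and imposes no additional constraint.

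With all terms written as functions of $T,\eta,\beta$, the final step is a mechanical balancing. Taking $\eta=\beta=T^{-4/5}$ and choosing the constant $c$ large enough to kill the $(T\beta)^{-c}\beta^{-1/2}$ pieces, the surviving dominant terms are $D_x/(\eta T)=T^{-1/5}$ from the optimization side, $\eta T/n = T^{1/5}/n$ from the stability side, $\eta T \beta^{1/2} = T^{-1/5}$ from the variance-induced term, and $\beta^{1/2}=T^{-2/5}$, $\eta^2/\beta^{3/2}=T^{-2/5}$, $\eta=T^{-4/5}$ from the optimization side. Setting $T\asymp n^{5/2}$ equates $T^{-1/5}$ with $T^{1/5}/n$ and yields $n^{-1/2}$ in the one-level case. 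For the two-level and $K$-level variants the same calculation goes through level by level, with $1/\sqrt{m}$ and $1/\sqrt{n_k}$ terms arising respectively from $L_gL_f\eta T/m$ and $L_f^K\eta T/n_k$, producing $O(1/\sqrt{n}+1/\sqrt{m})$ and $O(\sum_{k=1}^K 1/\sqrt{n_k})$ after taking $T\asymp \max_k n_k^{5/2}$.

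The main obstacle I anticipate is the $K$-level stability term $\sup_S\eta\sum_{s=1}^{T-1}\sum_{i=1}^K\sum_{j=1}^{i-1} L_f^{K-j+(i-1)i/2}\operatorname{Var}_{j,s}(u)$, because the triple sum couples estimators across different levels and forces me to track how errors at level $j$ propagate through level $i$. I would handle this by showing, via the recursive definition of $u_s^{(j)}$ in SVMR, that $\operatorname{Var}_{j,s}(u)$ obeys the same type of $O(\beta^{1/2}) + O(\eta^2/\beta^{3/2}) + O((T\beta)^{-c}\beta^{-1/2})$ bound as $\operatorname{Var}_{i,s}(v)$ (this is exactly the sort of estimate that drives the $\Phi_5,\Phi_6,\Phi_7$ constants in Theorem~\ref{thm:opt_convex}). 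Once that uniform bound is in hand, pulling the $L_f$ factors into the constants reduces the triple sum to an $\eta T$ factor times a polynomial in $L_f,K$, and the final balancing with $T\asymp \max_k n_k^{5/2}$ finishes the argument.
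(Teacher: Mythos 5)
Your proposal is correct and follows essentially the same route as the paper: the same generalization-plus-optimization decomposition, the same combination of Theorems~\ref{theorem:general_multi_level}--\ref{thm:sta_convex_mul_level} with the STORM-style variance lemmas (Lemmas~\ref{lemma:v_t_bound_single}, \ref{lemma:u_t_bound_two}, \ref{lemma:iv_t_bound}, \ref{lemma:u_t_bound_mul}, \ref{lemma:v_t_bound_mul}), and the same mechanical balancing with $\eta=\beta=T^{-4/5}$, $c>4$, and $T\asymp \max_k n_k^{5/2}$, including your treatment of the $K$-level triple sum, which is exactly how the paper reduces it. The only minor imprecisions --- quoting the square-root variance bound as $O(\beta^{1/2})+O(\eta^2/\beta^{3/2})+O((T\beta)^{-c}\beta^{-1/2})$ rather than $O((t\beta)^{-c/2})+O(\beta^{1/2})+O(\eta\beta^{-1/2})$, and not noting that the stability theorems (stated for $x_T$) must be applied per-iterate before averaging --- do not affect the final rates.
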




\begin{remark}
Theorem \ref{thm:Excess_Risk_Bound_convex} demonstrates that STORM, by choosing \(T \asymp n^{\frac{5}{2}}\) and appropriately selecting iteration number \(T\) and parameters \(\eta, \beta\), achieves a generalization error rate of \(O(\frac{1}{\sqrt{n}})\) in a convex setting. This is in contrast to SGD, which requires fewer iterations (\(T \asymp n\)) to reach the same bound \cite{hardt2016train}. This difference may be caused by the estimator in STORM, potentially leading to increased generalization error and excess risk due to reduced algorithm stability. This contrast is further highlighted when comparing with Theorem \ref{thm:Excess_Risk_Bound_convex}, where each additional level, denoted as \(K+1\), requires reassessing iterations and selecting the maximum sample size \(T \asymp \max(n_{k}^{5/2})\), \(\forall k\in [1,K+1]\), which results in an incremental excess risk increase of \(O(\frac{1}{\sqrt{n_{K+1}}})\) with each level while maintaining constant settings for \(\eta\) and \(\beta\) relative to \(T\).
\end{remark}

\begin{remark}
It should be noted that in Theorems~\ref{thm:sta_convex_single_level}-\ref{thm:sta_convex_mul_level}, we discuss the stability of the final iterate \(A(S)= x_T\). Conversely, in Theorem~\ref{thm:opt_convex}, we address the generalization bound of \(A(S)= \frac{1}{T}\sum_{t= 1}^T x_t\), representing the average of the intermediate iterates \(x_1, \ldots, x_T\). This distinction arises from the understanding that generalization encompasses both stability and optimization. In the convex setting, the primary focus of optimization is often on the average of intermediate iterates, as exemplified in sources such as \citep{wang2017stochastic, yang2023stability}. 
\end{remark}

\subsection{The Strongly Convex Setting}\label{section:strong_convex}
Note that we follow a similar process in the convex setting to analyze the generalization performance in the strongly convex setting.

\textbf{Stability Results.} The following theorem establishes the uniform Stability in the strongly convex setting. Before proceeding, we assume that Assumptions~\ref{ass:Lipschitz continuous}-\ref{ass:Smoothness and Lipschitz continuous gradient} and Definition~\ref{def:StronglyConvex} apply to $F_S$, which is strongly convex at the corresponding level, as outlined in Section \ref{section:strong_convex}.

\begin{theorem}[One-level, Stability, Strongly Convex] \label{thm:sta_sconvex_single_convex}
Consider STORM in Algorithm \ref{alg_storm} with $\eta_t = \eta \leq \frac{2}{3(L+\mu)}$ and $\beta_t=\beta \in (0, 1)$, $\forall t\in [0,T-1]$. Then, the outputs $A(S) =x_{T}$ at iteration $T$ are uniform stable with 
\begin{equation*}
	\begin{aligned}
		\epsilon = O\Big(\eta\sum_{j=0}^{T-1}(1-\frac{2\eta L\mu}{L+\mu})^{T-j-1}\operatorname{Var}(v_j) + \frac{L_f(L+\mu)}{L\mu n}\Big).
	\end{aligned}
\end{equation*}
\end{theorem}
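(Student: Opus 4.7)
The plan is to extend the coupling-based stability argument used for Theorem~\ref{thm:sta_convex_single_level} to the strongly convex regime, replacing the non-expansiveness of a gradient step with the strict contraction available when $F_S$ is both smooth and $\mu$-strongly convex. The two terms in the claimed bound should fall out as a geometric sum: the $\operatorname{Var}(v_j)$ sum keeps the $(1-\tfrac{2\eta L\mu}{L+\mu})^{T-j-1}$ weights from the contraction applied $T-j-1$ times after step $j$, and the $\tfrac{L_f(L+\mu)}{L\mu n}$ term is just the geometric sum of the per-step perturbation of size $O(\eta L_f/n)$ that arises from the single differing sample.

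Concretely, I would couple the two runs of STORM on $S$ and $S^i$ using the same internal randomness (same sampled indices $i_t$), producing iterates $(x_t, v_t)$ and $(x_t', v_t')$. Setting $\delta_t := \EX_A\|x_t - x_t'\|$, the aim is a one-step recursion
\begin{equation*}
\delta_{t+1} \;\le\; \Big(1-\tfrac{2\eta L\mu}{L+\mu}\Big)^{1/2}\delta_t \;+\; \eta\,\text{(bias of }v_t\text{)} \;+\; O\!\Big(\tfrac{\eta L_f}{n}\Big).
\end{equation*}
To produce this recursion, I would decompose
\begin{equation*}
x_{t+1}-x_{t+1}' \;=\; \underbrace{(x_t - \eta\nabla F_S(x_t)) - (x_t' - \eta\nabla F_S(x_t'))}_{\text{(I): contractive}}
\;+\; \underbrace{\eta(\nabla F_S(x_t)-v_t) - \eta(\nabla F_S(x_t')-v_t')}_{\text{(II): estimator bias}}
\;+\; \underbrace{\eta(\nabla F_S(x_t')-\nabla F_{S^i}(x_t')) + \text{index-swap term}}_{\text{(III): differing sample}}.
\end{equation*}
For (I) I would invoke the standard smooth strongly convex contraction: for $\eta \le 2/(L+\mu)$, $\|(\mathrm{I})\| \le (1-\tfrac{2\eta L\mu}{L+\mu})^{1/2}\|x_t-x_t'\|$. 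For (II), the triangle inequality and Definition of $\operatorname{Var}(v_t)$ give a contribution bounded by $2\eta\operatorname{Var}(v_t)$ after taking $\EX_A$. For (III), conditioning on the event $\{i_t = i\}$ (probability $1/n$) and using Lipschitzness of $f_\nu$ yields a per-step contribution of at most $\tfrac{2\eta L_f}{n}$.

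Iterating the recursion from $t=0$ (with $\delta_0 = 0$) to $t=T-1$ and using $\sum_{t=0}^{T-1}(1-\tfrac{2\eta L\mu}{L+\mu})^{T-t-1} \le \tfrac{L+\mu}{2\eta L\mu}$ immediately produces the $\tfrac{L_f(L+\mu)}{L\mu n}$ term, while the $\operatorname{Var}(v_t)$ weights remain as stated. The $(1-c)^{T-j-1}$ rather than $(1-c)^{(T-j-1)/2}$ exponent in the theorem can be recovered either by working with $\EX\|\cdot\|^2$ throughout (contracting with rate $1-c$ and then applying Jensen at the end) or by using $\sqrt{1-c}\le 1 - c/2$ and absorbing the factor-of-$2$ slack into the $O(\cdot)$.

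The main obstacle is handling the STORM momentum recursion in the (II) and (III) terms without blow-up: $v_t - v_t'$ depends on the entire history of coupled gradients, so the index-swap event at an earlier step $s<t$ contaminates $v_t - v_t'$ through the chain $\nabla f_{i_s}(x_s) - \nabla f_{i_s}(x_{s-1})$. Rather than tracking $\|v_t - v_t'\|$ by its own recursion, I plan to fold all such leakage into the definition $\operatorname{Var}(v_t) = (\EX_A\|v_t - \nabla f_S(x_t)\|^2)^{1/2}$ by writing $v_t - v_t' = (v_t - \nabla f_S(x_t)) - (v_t' - \nabla f_S(x_t')) + \nabla f_S(x_t)-\nabla f_S(x_t')$ and using $L$-smoothness to absorb the last difference into a factor that can be combined with $\delta_t$ or thrown into the contractive term via $\eta\le 2/(3(L+\mu))$. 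This mirrors the convex proof of Theorem~\ref{thm:sta_convex_single_level} almost line for line, with the one essential change being contraction in place of non-expansiveness, which converts the linear-in-$T$ factor of the convex bound into the finite geometric constant $\tfrac{L+\mu}{2L\mu}$.
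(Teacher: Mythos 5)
Your proposal is correct and follows essentially the same route as the paper's proof: the paper also couples the two runs, splits on the event $\{i_t=k\}$, decomposes $v_t-v_t^k$ through $\nabla f_S(x_t)$ and $\nabla f_S(x_t^k)$ so that the middle gradient difference is contracted via the strongly convex co-coercivity bound $\langle\nabla F(x)-\nabla F(x'),x-x'\rangle\ge\frac{L\mu}{L+\mu}\|x-x'\|^2+\frac{1}{L+\mu}\|\nabla F(x)-\nabla F(x')\|^2$ (with $\eta\le\frac{2}{3(L+\mu)}$ absorbing the $3\eta^2$ cross term), and then sums the $\frac{\eta L_f}{n}$ perturbation geometrically using $\sum_j(1-\frac{2\eta L\mu}{L+\mu})^{T-j-1}\le\frac{L+\mu}{2\eta L\mu}$. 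Your remark about recovering the $(1-\frac{2\eta L\mu}{L+\mu})^{T-j-1}$ weights by working with second moments is exactly what the paper does, tracking $u_t=(\EX_A[\|x_t-x_t^k\|^2])^{1/2}$ via its quadratic recursion lemma rather than a first-moment recursion.
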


\begin{theorem}[Two-level, Stability, Strongly Convex] \label{thm:sta_sconvex_two_convex}
Consider COVER in Algorithm~\ref{alg_cover} with $\eta_t = \eta \leq \frac{1}{4L+4\mu}$ and $\beta_t=\beta \in (0, 1)$, $\forall t \in [0,T-1]$ and the output $A(S)  =x_{T}$. Then, the outputs $A(S) =x_{T}$ at iteration $T$ are uniform stable with 
\begin{equation*}
\begin{aligned}
	\epsilon_\nu + \epsilon_\omega &= O\Big(  L_g C_f \eta \sup_S \sum_{j=0}^{T-1}(1-\frac{2L\mu \eta}{L+\mu})^{T-j-1}\operatorname{Var}(u_j) + L_f\eta \sup_S\sum_{j=0}^{T-1}(1-\frac{2L\mu \eta}{L+\mu})^{T-j-1}\operatorname{Var}(v_j)\\
	& + \frac{(L+ \mu)L_gL_f}{L\mu m} + \frac{(L+ \mu)L_gL_f}{L\mu n} + L_g\sigma_f \sqrt{\frac{L+\mu}{L\mu}}\sqrt{\eta}\Big).
\end{aligned}
\end{equation*}
\end{theorem}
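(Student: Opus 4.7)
My plan is to adapt the coupling-based stability argument used for the convex case (Theorem~\ref{thm:sta_convex_two_level}) by replacing the non-expansiveness of the gradient step with the stronger contraction that co-coercivity provides under strong convexity. Since $F_S$ is $\mu$-strongly convex, $L$-smooth, and $\eta\leq 1/(4L+4\mu)\leq 2/(L+\mu)$, the standard identity from co-coercivity yields $\|(x-\eta\nabla F_S(x))-(y-\eta\nabla F_S(y))\|\leq\rho\|x-y\|$ with $\rho=(1-2\eta L\mu/(L+\mu))^{1/2}$. Geometric summation of $\rho^{T-t-1}$ factors will convert the $T$-growing terms that arise in the convex bound into $T$-independent ones (and convert the $\sqrt{T}$ noise term into $\sqrt{\eta(L+\mu)/(L\mu)}$), which is precisely the qualitative improvement visible in the statement.

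\textbf{Coupling and one-step recursion.} I would run COVER on the neighboring datasets $S$ and $S^{i,\nu}$ (and, separately, $S^{j,\omega}$) with identical random draws, producing iterates $(x_t,u_t,v_t)$ and $(x_t',u_t',v_t')$; set $\delta_t=\|x_t-x_t'\|$. Writing the COVER update as $x_{t+1}=x_t-\eta v_t^\top\nabla f_{\nu_t}(u_t)$, I would add and subtract $\nabla F_S(x_t)=\nabla g_S(x_t)^\top\nabla f_S(g_S(x_t))$ to split the step direction into a true-gradient piece and a perturbation
\begin{equation*}
\bigl[v_t^\top\nabla f_S(u_t)-\nabla g_S(x_t)^\top\nabla f_S(g_S(x_t))\bigr]+v_t^\top\bigl[\nabla f_{\nu_t}(u_t)-\nabla f_S(u_t)\bigr],
\end{equation*}
whose two brackets are controlled, by Assumptions~\ref{ass:Lipschitz continuous}(ii) and~\ref{ass:Smoothness and Lipschitz continuous gradient}(ii) together with the Lipschitz bound $\|v_t\|\leq L_g$, as $L_gC_f\operatorname{Var}(u_t)+L_f\operatorname{Var}(v_t)$ and by a mean-zero outer-sample term $\xi_t$ of magnitude $L_g\sigma_f$, respectively. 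Applying the co-coercivity contraction to the true-gradient piece and accounting for the mismatch event (probability $1/n$ that $\nu_t$ differs, $1/m$ that $\omega_t$ differs, each contributing $\lesssim L_gL_f$ to the step via Lipschitzness) yields the one-step bound
\begin{equation*}
\EX_A[\delta_{t+1}]\leq\rho\,\EX_A[\delta_t]+\eta L_gC_f\operatorname{Var}(u_t)+\eta L_f\operatorname{Var}(v_t)+\tfrac{2\eta L_gL_f}{n}+\tfrac{2\eta L_gL_f}{m}+\eta L_g\xi_t.
\end{equation*}

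\textbf{Unrolling.} Iterating gives $\EX_A[\delta_T]\lesssim\sum_{t=0}^{T-1}\rho^{T-t-1}(\text{per-step errors})$. Using $\sum_t\rho^{T-t-1}\leq 1/(1-\rho)\lesssim(L+\mu)/(L\mu\eta)$ converts the deterministic $\tfrac{\eta L_gL_f}{n},\tfrac{\eta L_gL_f}{m}$ per-step terms into the claimed $\tfrac{(L+\mu)L_gL_f}{L\mu n},\tfrac{(L+\mu)L_gL_f}{L\mu m}$. A Cauchy--Schwarz on the stochastic $\xi_t$ tail, together with $\sum_t\rho^{2(T-t-1)}\leq 1/(1-\rho^2)=(L+\mu)/(2L\mu\eta)$, produces the outer-sample term $L_g\sigma_f\sqrt{(L+\mu)/(L\mu)}\sqrt{\eta}$; and the $\operatorname{Var}(u_t),\operatorname{Var}(v_t)$ terms retain the geometric kernel in the form stated in the theorem. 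Summing the $\nu$- and $\omega$-swap bounds yields $\epsilon_\nu+\epsilon_\omega$.

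\textbf{Main obstacle.} The trickiest bookkeeping lies in splitting $v_t^\top\nabla f_{\nu_t}(u_t)-(v_t')^\top\nabla f_{\nu_t}(u_t')$ so that the estimator discrepancies enter only through the pre-defined $\operatorname{Var}(u_t),\operatorname{Var}(v_t)$ rather than through separately tracked $\|u_t-u_t'\|$- and $\|v_t-v_t'\|$-recursions, which would blow up through the STORM-style momentum update. The definition of $\operatorname{Var}$ as the root-mean-squared distance from the estimator to the current empirical target is precisely what permits this absorption: expanding $\nabla f_{\nu_t}$ about $u_t$ (rather than about $g_S(x_t)$) and invoking $L_f$-Lipschitzness of $\nabla f_S$ to reintroduce $\nabla F_S$ keeps all surplus error concentrated inside a single $\operatorname{Var}$ term per estimator, avoiding a two-scale coupled recursion; getting the constants to land exactly on $L_gC_f$ and $L_f$ (rather than $L_g^2C_f^2+L_f^2$ or similar) is the step that demands the most care.
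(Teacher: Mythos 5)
Your overall architecture (a coupled run on neighboring datasets, the co-coercivity contraction with $\rho=(1-\tfrac{2\eta L\mu}{L+\mu})^{1/2}$, geometric summation to remove the $T$-growth of the convex bound, and bounding both coupled sequences' estimator errors by a single $\sup_S\operatorname{Var}$) matches the paper's, and your treatment of the bias terms $L_gC_f\operatorname{Var}(u_t)$, $L_f\operatorname{Var}(v_t)$ and of the $1/n,1/m$ swap events is essentially the paper's as well. However, there is a genuine gap in how you obtain the term $L_g\sigma_f\sqrt{(L+\mu)/(L\mu)}\sqrt{\eta}$. Your one-step recursion is stated at the level of first moments, $\EX_A[\delta_{t+1}]\leq\rho\,\EX_A[\delta_t]+\cdots+\eta L_g\xi_t$, and it is derived by a pathwise triangle inequality; after that step the mean-zero outer-sample perturbation enters only through its norm, so its cancellation structure is destroyed. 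Unrolling then forces a choice: either bound $\EX\|\xi_t\|\lesssim\sigma_f$ additively, whose geometric sum gives $\eta L_g\sigma_f/(1-\rho)\asymp L_g\sigma_f(L+\mu)/(L\mu)$ with no $\sqrt{\eta}$ gain; or apply Cauchy--Schwarz across the sum index, $\sum_t\rho^{T-t-1}a_t\leq(\sum_t\rho^{2(T-t-1)})^{1/2}(\sum_t a_t^2)^{1/2}$, which introduces a spurious $\sqrt{T}$. The martingale-style square-function bound you implicitly invoke, $\EX\|\sum_t\rho^{T-t-1}\xi_t\|\leq(\sum_t\rho^{2(T-t-1)}\EX\|\xi_t\|^2)^{1/2}$, would give the claimed rate, but it requires the $\xi_t$ to enter the unrolled recursion linearly with signs, and they do not: the map $x\mapsto x-\eta\nabla F_S(x)$ is nonlinear, so $x_T-x_T^{k,\nu}$ is not an affine combination of the $\xi_t$.

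The fix --- and the paper's actual route --- is to run the entire argument on squared distances with conditional expectations. There the cross term $\EX_{j_t}[\langle v_t(\nabla f_{\nu_{i_t}}(g_S(x_t))-\nabla f_S(g_S(x_t))),\,x_t^{k,\nu}-x_t\rangle]=0$ vanishes exactly, and the outer-sample noise survives only quadratically, as $\asymp\eta^2L_g^2\sigma_f^2$ inside the additive part $S_t$ of the recursion; summing $\sum_j(1-\tfrac{2\eta L\mu}{L+\mu})^{T-j-1}\leq\tfrac{L+\mu}{2\eta L\mu}$ and taking $\sqrt{S_T}$ then produces $L_g\sigma_f\sqrt{\eta(L+\mu)/(L\mu)}$. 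Two consequences for your plan: (i) at the squared level the $\operatorname{Var}$ quantities appear as cross terms $2\eta\operatorname{Var}(\cdot)\,u_j$ multiplying $u_j=(\EX_A\|x_j-x_j^{k,\nu}\|^2)^{1/2}$, so you need the self-bounding recursion Lemma~\ref{lemma:recursion lemma} (from $u_t^2\leq S_t+\sum_\tau\alpha_\tau u_\tau$ deduce $u_t\leq\sqrt{S_t}+\sum_\tau\alpha_\tau$), a tool your first-order formulation avoids but cannot do without once the $\sigma_f$ term forces you to second moments; and (ii) your first-moment contraction yields the kernel $\rho^{T-j-1}=(1-\tfrac{2\eta L\mu}{L+\mu})^{(T-j-1)/2}$ on the $\operatorname{Var}$ sums, whereas the theorem asserts the full power $(1-\tfrac{2\eta L\mu}{L+\mu})^{T-j-1}$; the squared-distance recursion delivers the stated kernel directly.
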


\begin{theorem}[$K$-level, Stability, Strongly Convex] \label{thm:sta_mul_s_convex}
Consider SVMR in Algorithm \ref{alg_svmr_multi} with $\eta_t = \eta \leq \frac{2}{(L+\mu)K(K+2)}$ and $\beta_t=\beta \in (0, 1)$, $\forall t \in [0,T-1]$ and the output $A(S)  =x_{T}$. Then, the outputs $A(S) =x_{T}$ at iteration $T$ are uniform stable with 
\begin{equation*}
	\begin{split}
		 \sum_{k=1}^K\epsilon_k &= O\Big(\sum_{s=1}^{T-1} \sum_{i=1}^K \Big(1-\frac{2\eta L\mu}{L+\mu}\Big)^{T-s} \eta L_f^{K + \frac{(i-3)i}{2}}  \operatorname{Var}_{i,s}(v) \\
  & + \sum_{s=1}^{T-1} \sum_{i=1}^{K}\sum_{j=1}^{i-1} \Big(1-\frac{2\eta L\mu}{L+\mu}\Big)^{T-s} \eta L_f^{K-j+\frac{(i-1)i}{2}}\operatorname{Var}_{j,s}(u) + \sum_{k=1}^K\frac{ L_f^{K}(L+\mu)}{L\mu n_k}  \Big).
	\end{split}
\end{equation*}
\end{theorem}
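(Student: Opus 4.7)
The plan is to adapt the convex-case argument underlying Theorem~\ref{thm:sta_convex_mul_level} by replacing the non-expansive property of the projected gradient step with the strict one-step contraction available under strong convexity, in the spirit of the recipe of \cite{hardt2016train}. I will couple two SVMR trajectories $\{x_t\}$ and $\{x_t^{l,k}\}$ run on the neighboring datasets $S$ and $S^{l,k}$ with shared internal randomness, and analyze the recursion satisfied by $\delta_t := \EX_A\|x_t - x_t^{l,k}\|$.

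The first step is to decompose the one-step evolution. Writing the SVMR update as a projected step with search direction $d_t = \prod_{i=1}^{K} v_t^{(i)}$ (and $d_t^{l,k}$ on the neighboring trajectory), I add and subtract the true empirical gradients $\nabla F_S(x_t)$ and $\nabla F_{S^{l,k}}(x_t^{l,k})$ inside the norm. The deterministic gradient piece $\|x_t - \eta\nabla F_S(x_t) - (x_t^{l,k} - \eta\nabla F_{S^{l,k}}(x_t^{l,k}))\|$ is then bounded by the co-coercivity inequality
\[
\langle \nabla F_S(x) - \nabla F_S(x'),\, x-x'\rangle \ge \tfrac{L\mu}{L+\mu}\|x-x'\|^2 + \tfrac{1}{L+\mu}\|\nabla F_S(x)-\nabla F_S(x')\|^2,
\]
which under the stated step size $\eta \le \tfrac{2}{(L+\mu)K(K+2)}$ yields the contraction factor $(1 - \tfrac{2\eta L\mu}{L+\mu})$ on the main term. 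What remains are (i) an estimator-error residual $\|d_t - \nabla F_S(x_t)\|$ per trajectory, and (ii) a sampling-mismatch term that appears only when the index drawn from level $k$ coincides with the swapped sample $\nu_l^{(k)}$, which happens with probability $1/n_k$ and contributes an $O(\eta L_f^K/n_k)$ increment by the composite Lipschitz bound from Assumption~\ref{ass:Lipschitz continuous}~(iii).

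Next I unwind the estimator-error residual along the $K$-level chain. Writing
\[
d_t - \nabla F_S(x_t) = \prod_{i=1}^K v_t^{(i)} - \prod_{i=1}^K \nabla f_{i,S}\bigl(f_{i-1,S}\circ\cdots\circ f_{1,S}(x_t)\bigr),
\]
I telescope factor-by-factor, peeling off the deviation of each $v_t^{(i)}$ from $\nabla f_{i,S}(u_t^{(i-1)})$ and the deviation of each $u_t^{(j)}$ from $f_{j,S}(u_t^{(j-1)})$, and applying the Lipschitz-gradient bound from Assumption~\ref{ass:Smoothness and Lipschitz continuous gradient}~(iii) to push a function-value error at level $j$ upward through levels $j+1,\ldots,i-1$ before it is multiplied by the outer $K-i$ gradient factors. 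This bookkeeping produces precisely the exponents $L_f^{K+(i-3)i/2}$ on $\operatorname{Var}_{i,s}(v)$ and $L_f^{K-j+(i-1)i/2}$ on $\operatorname{Var}_{j,s}(u)$ that appear in the theorem, mirroring the convex-case computation but without any new compositional terms.

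Finally, plugging the per-step bound into the linear recursion $\delta_{t+1} \le (1-\tfrac{2\eta L\mu}{L+\mu})\delta_t + r_t$ and iterating gives $\delta_T \le \sum_{s}(1-\tfrac{2\eta L\mu}{L+\mu})^{T-s} r_s$, which reproduces the two double-sum variance contributions with their geometric weights. The constant $\eta L_f^K/n_k$ sampling piece, summed against the geometric factor using $\sum_{s\ge 0}(1-\tfrac{2\eta L\mu}{L+\mu})^s \le \tfrac{L+\mu}{2\eta L\mu}$, collapses to $\tfrac{L_f^K(L+\mu)}{L\mu n_k}$; summing over the $K$ possible swap levels yields the $\sum_{k=1}^{K}\tfrac{L_f^K(L+\mu)}{L\mu n_k}$ term in the statement. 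The main obstacle is the chain-telescoping step: getting the specific $(i-3)i/2$ and $(i-1)i/2$ exponents right requires carefully tracking how a single-level estimator error accumulates Lipschitz factors both from the outer gradient composition and from the inner function compositions it has to propagate through, and checking that this count matches the convex-case computation; the contraction machinery itself is standard once this decomposition is in place, since co-coercivity is only applied to the true $\nabla F_S$ while the chain error enters purely additively.
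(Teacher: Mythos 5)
Your proposal is correct, and its architecture coincides with the paper's proof of Theorem~\ref{thm:sta_mul_s_convex}: the same coupling of two SVMR trajectories on $S$ and $S^{l,k}$, the same case split on whether the drawn index hits the swapped sample (the collision event of probability $1/n_k$ yielding the $2\eta L_f^{K}$ increment), the same co-coercivity inequality $\langle \nabla F_S(x)-\nabla F_S(x'),x-x'\rangle \ge \frac{L\mu}{L+\mu}\|x-x'\|^2+\frac{1}{L+\mu}\|\nabla F_S(x)-\nabla F_S(x')\|^2$ supplying the contraction, the same telescoping of $\prod_{i=1}^K v_t^{(i)}-\prod_{i=1}^K v_t^{(i),l,k}$ through the true composite gradients into $K(K+2)$ terms producing the exponents $L_f^{K-j+(i-1)i/2}$ and $L_f^{K-i+(i-1)i/2}=L_f^{K+(i-3)i/2}$, and the same geometric unrolling with $\sum_{s}(1-\frac{2\eta L\mu}{L+\mu})^{T-s}\le \frac{L+\mu}{2\eta L\mu}$ collapsing the sampling term to $\frac{L_f^{K}(L+\mu)}{L\mu n_k}$. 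Where you genuinely deviate is the level of bookkeeping: the paper runs the recursion on the squared distance $\EX_A\|x_t-x_t^{l,k}\|^2$, which creates mixed terms $\alpha_s u_s$ with $u_s=(\EX_A\|x_s-x_s^{l,k}\|^2)^{1/2}$ and therefore requires the quadratic-recursion Lemma~\ref{lemma:recursion lemma} ($u_t\le \sqrt{S_t}+\sum_{\tau}\alpha_\tau$) together with an explicit dominance comparison between the $\sqrt{S_t}$-type terms and the linear sums (cf.\ \eqref{Eq:bound_mul_dominate}); your first-moment linear recursion $\delta_{t+1}\le \rho\,\delta_t+r_t$ with Jensen converting $\EX_A\|\cdot\|$ into the stated $(\EX_A\|\cdot\|^2)^{1/2}$ variance quantities bypasses both devices, which is cleaner. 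Two caveats worth recording: first, taking the square root of the one-step squared contraction gives $\rho=\sqrt{1-\frac{2\eta L\mu}{L+\mu}}\approx 1-\frac{\eta L\mu}{L+\mu}$ rather than the factor $1-\frac{2\eta L\mu}{L+\mu}$ displayed in the theorem, so your geometric weights are order-equivalent but not literally identical (immaterial inside the $O(\cdot)$ and for all downstream corollaries); second, in the collision event there is no contraction, so the exact combined recursion is $\delta_{t+1}\le (1-\frac{1}{n_k})(\rho\,\delta_t+r_t)+\frac{1}{n_k}(\delta_t+2\eta L_f^{K})$, whose extra $\frac{(1-\rho)}{n_k}\delta_t$ piece must be absorbed separately (e.g.\ via the bounded-domain assumption or the crude bound $\delta_t=O(\eta L_f^{K}t)$) --- the paper's own combination of its two cases glosses this identical point, so it is not a gap specific to your route.
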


\begin{remark}
Many conclusions from the strongly convex setting align with the convex setting, and we analyze them individually. First, in the one-level stochastic optimization, the stability of SGD is of the order \(O(\frac{1}{\mu n})\) in \cite{hardt2016train}. Compared to SGD, our results include an additional term, \(\eta\sum_{j=0}^{T-1}(1-\frac{2\eta L\mu}{L+\mu})^{T-j-1}\operatorname{Var}(v_j)\), which is the same as in the convex setting. This implies that STORM may also be less stable under the strongly convex setting than SGD. Second, in the two-level scenario, considering \(m=n\), COVER introduces two additional terms. The reasons for these terms are the same as under the convex setting, stemming from the additional estimator used and the empirical variance of the outer function. Lastly, in \(K\)-level optimization, SVMR includes only one additional coefficient, \((1-\frac{2\eta L\mu}{L+\mu})^{T-s}\), due to the strongly convex property.
\end{remark}

\begin{remark}
Note that there are some significant differences in the strongly convex setting compared to the convex setting. Under the strongly-convex setting, each situation includes an item, such as \(\sum_{k=1}^K\frac{L_f^{K}(L+\mu)}{L\mu n_k}\), that is independent of the step size but depends on the sample size used by each layer function. Therefore, in strongly convex settings, achieving satisfactory stability may require more than just selecting an appropriate step size; it becomes imperative to increase the sample size simultaneously to improve stability.
\end{remark}

\textbf{Generalization results.} Let $c$ be an arbitrary constant, the following theorems hold, which aim to show the optimization errors in the strongly convex setting. 

\begin{theorem}[Optimization, Strongly Convex] \label{thm:opt_sconvex} 
Let \(A(S)= (\sum_{t= 1}^T (1- \frac{\mu\eta}{2})^{T- t} x_t) / (\sum_{t= 1}^T (1- \frac{\mu\eta}{2})^{T- t})\) be the solution produced by STORM, COVER, and SVMR in Algorithms~\ref{alg_storm}-\ref{alg_svmr_multi}, respectively. The following results bound the optimization error $\EX[F_S(A(S)) - F_S(x_*^S)]$. 

\textnormal{(One-level).} For the problem in \eqref{eq:sco-single}, by selecting $\eta_t = \eta \leq \frac{2}{3(L+\mu)}$ and $\beta_t=\beta \in (0,1)$, then it holds
\begin{equation*}
    O\Big( \frac{D_x + U\eta}{(\eta T)^{c}}  + L_f^2L\eta + \frac{V}{(\beta T)^{c}}  + \sigma_J^2\beta + \frac{L_f^2\eta^2}{\beta}\Big).
\end{equation*}

\textnormal{(Two-level).} For the problem in \eqref{eq:sco-two}, by selecting $\eta_t=\eta$, and $\beta_t=\beta < \min{\Big(\frac{1}{8C_f^2}, 1\Big)}$, then it holds
\begin{equation*}
	O\Big(\frac{D_x + \Psi_1\eta}{(\eta T)^{c}} + LL_g^2L_f^2\eta + \frac{\Psi_2}{(\beta T)^{c}} + \Psi_3 \beta + \frac{\Psi_3\eta^2}{\beta}\Big),
\end{equation*}
where $\Psi_1 = L_g^2C_f^2U+L_f^2V$, $\Psi_2 =  L_g^2C_f^2\sigma_g^2 + L_f^2\sigma_{g'}^2$, and $\Psi_3 = L_g^6 C_f^2 L_f^2 +L_f^4 L_g^4$.

\textnormal{($K$-level).} For the problem in \eqref{eq:sco-multi}, by selecting $\eta_t=\eta$ and $\beta_{t} =\beta < \max{\Big(\frac{1}{8K\sum_{i=1}^K(2L_f^2)^{i}}, 1\Big)}$, then it holds
\begin{equation*}
    O\Big(\frac{D_x + \Psi_4\eta}{(\eta T)^{c}}+\eta L_f^K + \frac{\Psi_5}{ (\beta T)^{c}} + \Psi_6 \beta + \frac{\Psi_7\eta^2}{\beta} \Big).
\end{equation*}
where $\Psi_4 =  L_f^{m}\sum_{j=1}^{K-1}(U_{i} + V_{i})$,  $\Psi_5= L_f^{m}\sum_{i=1}^K (U_{i} + V_{i})$, $\Psi_6 = L_f^{m}  ( \sigma_{f}^{2} + \sigma_{J}^{2} + \sigma_{f}^2(\sum_{i=1}^{K}( L_f^{2})^{i}))$, and $\Psi_7 = L_f^{m} \sum_{i=1}^{K}(L_f^{2})^{i}$.
\end{theorem}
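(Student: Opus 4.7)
The plan is to build, for each of the three algorithms, a Lyapunov-style recursion that couples the suboptimality $F_S(x_t) - F_S(x_*^S)$ with the variances of the STORM-style estimators, exploit the strong convexity of $F_S$ to produce a one-step geometric contraction at rate $(1 - \mu\eta/2)$, and then convert this geometric contraction into the $(\eta T)^{-c}$ and $(\beta T)^{-c}$ factors by passing to the exponentially-weighted average $A(S) = \sum_t (1-\mu\eta/2)^{T-t} x_t / \sum_t (1-\mu\eta/2)^{T-t}$. All three bounds follow from the same template; only the number of recursively-tracked estimators differs.

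For the one-level case (STORM), I would first use smoothness plus strong convexity of $F_S$ to obtain a descent inequality of the form
\begin{equation*}
F_S(x_{t+1}) - F_S(x_*^S) \leq (1-\mu\eta/2)\bigl(F_S(x_t) - F_S(x_*^S)\bigr) + c_1 \eta \|v_t - \nabla F_S(x_t)\|^2 + c_2 L L_f^2 \eta^2,
\end{equation*}
where the third term comes from bounding $\|v_t\|$ via $L_f$. Independently, I would derive the STORM variance recursion
\begin{equation*}
\mathbb{E}\|v_{t+1} - \nabla F_S(x_{t+1})\|^2 \leq (1-\beta)\mathbb{E}\|v_t - \nabla F_S(x_t)\|^2 + O(\beta^2 \sigma_J^2) + O(L^2 \eta^2 L_f^2 / \beta),
\end{equation*}
using the standard STORM identity and Young's inequality. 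Multiplying by $(1-\mu\eta/2)^{T-t}$ (or by $(1-\beta)^{T-t}$ for the variance chain) and summing telescopes the geometric factors into $(\eta T)^{-c}$ and $(\beta T)^{-c}$ respectively; the stationary contributions become $\sigma_J^2 \beta$ and $L_f^2\eta^2/\beta$, and the smoothness term gives $L_f^2 L \eta$, exactly matching the stated bound with the initial errors $D_x$ and $V$.

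For COVER and SVMR the same machinery applies, but an additional Lyapunov coordinate is added for each extra estimator. In the two-level case, one tracks the analogous STORM recursions for $\|u_t - g_S(x_t)\|^2$ and $\|v_t - \nabla g_S(x_t)\|^2$; the chain rule of the composed gradient produces the $L_g, C_f$ couplings that yield $\Psi_3$, while the empirical variance of the outer function gradient gives the $L_f^2 \sigma_{g'}^2$ contribution inside $\Psi_2$. In the $K$-level case there are $2K-1$ estimators: a value estimator $u^{(k)}$ that targets $f_{k,S}(u^{(k-1)})$ and a gradient estimator $v^{(k)}$ that targets $\nabla f_{k,S}(u^{(k-1)})$; each satisfies a STORM recursion, and the target drift $u^{(k-1)}$ brings in a cross-term that is absorbed by the smallness condition $\beta < 1/(8K\sum_{i=1}^K (2L_f^2)^i)$, which is precisely the contraction threshold that keeps the $K$-level coupling stable.

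The main obstacle is the $K$-level recursion. The value estimator at layer $k$ feeds the gradient estimator at layer $k{+}1$, whose error in turn contaminates layer $k{+}2$, so a naive telescoping would inflate constants like $L_f^{K^2}$. The proof has to align the contraction rates across layers so that the coupled system contracts at the common rate $(1-\mu\eta/2)$, with only the compounded smoothness constant $L_f^m = \max(L_f^{K-j+(i-1)i/2}, L_f^{K+(i-3)i/2})$ appearing as a multiplicative factor. Once this coupled contraction is established, the weighted-average trick and the same telescoping that worked in the one-level case produce the claimed bound with initial errors $U_i, V_i$ and the composite constants $\Psi_4, \ldots, \Psi_7$.
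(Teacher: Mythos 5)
Your proposal matches the paper's proof in all essentials: a smoothness-based one-step descent lemma combined with the strong-convexity (PL) contraction at rate $(1-\mu\eta)$, separate STORM variance recursions for each estimator solved to closed form (with the $\beta$-smallness condition absorbing the cross-layer drift of the $u^{(k-1)}$ targets in the $K$-level case, exactly as you describe, and with only $L_f^m$ surviving as the compounded constant), and geometric telescoping converted into the $(\eta T)^{-c}$ and $(\beta T)^{-c}$ factors via $e^{-x}\leq (c/e)^c x^{-c}$ together with a Chebyshev-type sum lemma for the mixed terms $\sum_t (1-\mu\eta)^{T-t-1}t^{-c}$. The only cosmetic deviations are that the paper decouples the two chains rather than running one Lyapunov system at a common rate $(1-\mu\eta/2)$ --- the estimator errors contract at $(1-\beta/2)$, are bounded per iteration first, and are then substituted into the suboptimality recursion, which is telescoped to the last iterate $x_T$ --- and that your per-step drift term $O(L^2L_f^2\eta^2/\beta)$ in the variance recursion should read $O(L^2L_f^2\eta^2)$, the factor $1/\beta$ arising only after unrolling (your stated stationary contributions $\sigma_J^2\beta$ and $L_f^2\eta^2/\beta$ are already consistent with the corrected form).
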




Now, we come to derive the following excess risk bounds for the strongly convex setting.

\begin{theorem}[Excess Risk Bound, Strongly Convex] \label{thm:Excess_Risk_Bound_s_convex}
Let \(A(S)= (\sum_{t= 1}^T (1- \frac{\mu\eta}{2})^{T- t} x_t) / (\sum_{t= 1}^T (1- \frac{\mu\eta}{2})^{T- t})\) be the solution produced by STORM, COVER, and SVMR in Algorithms~\ref{alg_storm}-\ref{alg_svmr_multi}, respectively.

\textnormal{(One-level).} For the problem in \eqref{eq:sco-single}, by selecting $T \asymp n^{\frac{7}{6}}$, $\eta=T^{-\frac{6}{7}}$, and $\beta=T^{-\frac{6}{7}}$, we can obtain that $\EX_{S,A}[F(A(S)) - F(x_*)] = O\Big(\frac{1}{\sqrt{n}}\Big)$.

\textnormal{(Two-level).} For the problem in \eqref{eq:sco-two}, by selecting $T \asymp \max( n^{7/6}, m^{7/6})$ and $\eta=\beta = T^{-\frac{7}{6}}$, we can obtain that $\EX_{S,A}[F(A(S)) - F(x_*)] = O\Big(\frac{1}{\sqrt{n}} + \frac{1}{\sqrt{m}}\Big)$.

\textnormal{($K$-level).} For the problem in \eqref{eq:sco-multi}, by selecting $T \asymp \max(n_k^{7/6})$, $\forall k \in [1,K]$ and $\eta=\beta = T^{-\frac{7}{6}}$, we can obtain that $\EX_{S,A}[F(A(S)) - F(x_*)] =O\Big(\sum_{k=1}^K \frac{1}{\sqrt{n_k}}\Big)$.
\end{theorem}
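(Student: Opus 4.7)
The plan is to combine the excess-risk decomposition $\E_{S,A}[F(A(S))-F(x_*)] \le \E_{S,A}[F(A(S))-F_S(A(S))] + \E_{S,A}[F_S(A(S))-F_S(x_*^S)]$ with the generalization-through-stability bound (Theorem~\ref{theorem:general_multi_level}), the strongly-convex stability bounds (Theorems~\ref{thm:sta_sconvex_single_convex}--\ref{thm:sta_mul_s_convex}), and the optimization-error bounds (Theorem~\ref{thm:opt_sconvex}), and then plug in the stated parameter choices. For each of the three settings the strategy is identical: bound the generalization error by stability times $L_f$-factors, bound the optimization error directly, and verify that under $\eta=\beta=T^{-6/7}$ and $T\asymp\max_k n_k^{7/6}$ all terms collapse to $O(\sum_k n_k^{-1/2})$.

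For the one-level case I would first invoke the $K=1$ specialization of Theorem~\ref{theorem:general_multi_level} (as in its remark, giving $L_f \epsilon$) and substitute Theorem~\ref{thm:sta_sconvex_single_convex}. The intermediate step is bounding $\operatorname{Var}(v_j)=(\E_A\|v_j-\nabla f_S(x_j)\|^2)^{1/2}$. I would derive a one-step recursion from the STORM update showing $\E_A\|v_t-\nabla f_S(x_t)\|^2 \lesssim (1-\beta)\,\E_A\|v_{t-1}-\nabla f_S(x_{t-1})\|^2 + \beta^2 \sigma_J^2 + L_f^2 \eta^2/\beta$; unrolling gives $\operatorname{Var}(v_j) = O(\sqrt{\beta}+\eta/\sqrt{\beta})$. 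The geometric sum $\eta\sum_{j=0}^{T-1}(1-\tfrac{2\eta L\mu}{L+\mu})^{T-j-1}$ in the stability bound telescopes to $O(1/(L\mu))$, so the stability becomes $\epsilon = O(\sqrt{\beta}+\eta/\sqrt{\beta}+1/n)$. Under the stated choices, $\sqrt{\beta}=\eta/\sqrt{\beta}=T^{-3/7}=n^{-1/2}$ while $1/n=T^{-6/7}$ is of lower order, yielding generalization error $O(n^{-1/2})$. For the optimization error, note $\eta T=T^{1/7}$, so $(\eta T)^{-c}=O(n^{-1/2})$ whenever $c\ge 3$, and the remaining $L_f^2 L \eta$, $\sigma_J^2\beta$, $L_f^2\eta^2/\beta$ terms are each $O(T^{-6/7})=O(n^{-1})$, which are dominated by $n^{-1/2}$.

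For the two-level and $K$-level cases I would follow the same template using Theorems~\ref{thm:sta_sconvex_two_convex}--\ref{thm:sta_mul_s_convex} and the corresponding branches of Theorem~\ref{thm:opt_sconvex}. The new ingredients are (i) the cross-function variances $\operatorname{Var}(u_j), \operatorname{Var}(v_j)$ in two-level and $\operatorname{Var}_{j,s}(u), \operatorname{Var}_{i,s}(v)$ in $K$-level, each of which I would bound by an analogous STORM-type recursion, all scaling as $O(\sqrt{\beta}+\eta/\sqrt{\beta})$ after the geometric sum; (ii) the variance term $\sqrt{\E_{S,A}[\operatorname{Var}_k(A(S))]/n_k}$ from Theorem~\ref{theorem:general_multi_level}, which I would upper bound by $O(n_k^{-1/2})$ using Assumption~\ref{ass:bound variance}(iii) and the Lipschitz continuity of the outer compositions; and (iii) the extra $L_g\sigma_f\sqrt{(L+\mu)/(L\mu)}\sqrt{\eta}$ term in the two-level stability, which becomes $O(\sqrt{\eta})=O(T^{-3/7})=O(m^{-1/2})$ once $T\asymp\max(n^{7/6},m^{7/6})$. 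Summing the $K$-level stability and the $K$ generalization-variance terms then gives the claimed $O(\sum_k n_k^{-1/2})$ contribution, and the optimization error is handled term-by-term as in the one-level case.

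The main obstacle is the accounting in the $K$-level branch: the stability bound in Theorem~\ref{thm:sta_mul_s_convex} contains a triple sum $\sum_{s}\sum_{i=1}^K\sum_{j=1}^{i-1}(1-c\eta)^{T-s}\eta\,L_f^{K-j+(i-1)i/2}\operatorname{Var}_{j,s}(u)$ with compositions of $L_f$-factors that depend on both $i$ and $j$. I need to verify that, after bounding each $\operatorname{Var}_{j,s}(u)$ via its own recursion, the $s$-summation collapses the $(1-c\eta)^{T-s}\eta$-weights to $O(1/(L\mu))$ uniformly in $i,j$, so that the remaining double sum in $i,j$ is a constant independent of $T$; otherwise a $K$-dependent blow-up could overwhelm the $\sum_k n_k^{-1/2}$ target rate. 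Once that bookkeeping is settled, plugging $\eta=\beta=T^{-6/7}$ and $T\asymp\max_k n_k^{7/6}$ into both the stability and optimization contributions and matching terms to $n_k^{-1/2}$ completes the proof.
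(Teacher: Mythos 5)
Your proposal follows essentially the same route as the paper's proof: the same generalization-plus-optimization decomposition, the same STORM variance recursions with the geometric-sum collapse $\eta\sum_{j}(1-\tfrac{2\eta L\mu}{L+\mu})^{T-j-1}=O(\tfrac{L+\mu}{L\mu})$, the same resolution of the $K$-level bookkeeping you flag as the main obstacle (the paper takes $L_f^{m}=\max(L_f^{K-j+\frac{(i-1)i}{2}},L_f^{K+\frac{(i-3)i}{2}})$ so the $(i,j)$ double sum is a $T$-independent constant after the $s$-weights collapse uniformly), and the same parameter choices $\eta=\beta=T^{-6/7}$, $T\asymp\max_k n_k^{7/6}$, $c$ large. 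The only slip is asserting $\operatorname{Var}(v_j)=O(\sqrt{\beta}+\eta/\sqrt{\beta})$ uniformly in $j$: the unrolled recursion carries an initial transient of order $(j\beta)^{-c/2}\sqrt{V}$ (so $\operatorname{Var}(v_j)$ is $\Theta(1)$ for small $j$), which the paper controls inside the geometric sum via Lemma~\ref{lemma:weighted_avg}, yielding a negligible $(T\beta)^{-c/2}$ contribution for sufficiently large arbitrary $c$ — a patch fully consistent with your outline.
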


\begin{remark}
Theorem \ref{thm:Excess_Risk_Bound_s_convex} demonstrates that, in the case of strong convexity, the generalization error for STORM can attain a rate of $O(\frac{1}{\sqrt{n}})$ by carefully choosing the iteration number $T$, along with constant step sizes $\eta$ and $\beta$. We can find that under the strongly convex setting, we only need iteration $T \asymp n^{\frac{7}{6}}$, however, under the convex setting, we need more iteration $T \asymp n^{\frac{5}{2}}$. Summarizing these three theorems, we can easily discern the relationship between the excess risk bound and the number of levels. This conclusion is very similar to that in the convex setting. Specifically, for each additional level, denoted as \(K+1\), it is necessary to reassess iterations and select the maximum sample size \(T \asymp \max(n_{k}^{7/6})\), $\forall k\in [1,K+1]$. This results in an incremental excess risk increase of \(O(\frac{1}{\sqrt{n_{K+1}}})\) with each level, while \(\eta\) and \(\beta\) remain constant relative to \(T\).
\end{remark}

To make our paper easy to understand, Table~\ref{Tab:Summary} lists all of our theoretical results in Appendix~\ref{sec:algorithm}.

\section{Experiments}
In this section, we carried out a series of experiments using simulated data to validate our theoretical findings, consisting of four separate tests.

First, we examined the performance of STORM versus SGD in fitting a univariate quintic polynomial. We generated 2000 data points based on this polynomial and introduced Gaussian noise with a mean of 0 and variance of 3. The data was divided into a training and testing split of 60/40. Throughout 500 iterations, using a step size of 0.001 and a batch size of 128, we monitored both training and testing losses using the mean squared error metric. Although STORM demonstrated poorer generalization, indicated by a larger discrepancy between training and testing losses, it outperformed SGD in overall loss metrics.

Second, we investigated how varying the number of levels, \(k\), affects generalization error within a two-level optimization framework. We represented our target function as \(F(x) = f(g(\cdot))\), creating two sets of data points, \(S_1\) and \(S_2\), each contaminated with Gaussian noise (mean 0, variance 3). The dataset was split into a 60/40 train-test ratio. The goal was to optimize \(g(\cdot)\) to fit \(S_1\) and \(f(\cdot)\) to fit \(S_2\) using SVMR as the optimizer, with a step size of 0.01, a projection operation \(L_f\) set at 50, and a batch size of 128 over 500 iterations. We recorded the average generalization error during the last 10 iterations while incrementally increasing the level count from 1 to 50. Our results showed a steady rise in generalization error as the number of levels increased, particularly intensifying beyond 35 levels.

\begin{figure}[htp]
\centering
\begin{minipage}{0.49\columnwidth}
  \centering
\includegraphics[width=\columnwidth]{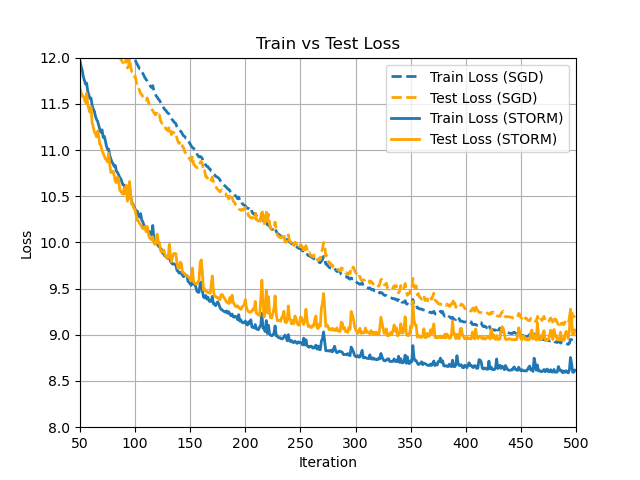}
\caption{SGD VS STORM.}
\label{fig:SGD_VSSTROM}
\end{minipage}%
\hfill
\begin{minipage}{0.49\columnwidth}
  \centering
\includegraphics[width=\columnwidth]{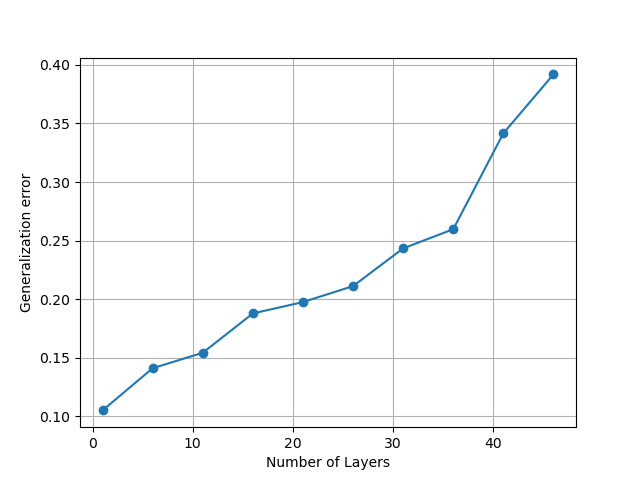}
\caption{Effect of Level.}
\label{fig:k_level_result}
\end{minipage}
\end{figure}

Third, we explored the impact of the initial iteration batch size on generalization. In this experiment, we maintained a fixed number of levels \(k = 10\), with other parameters consistent with above, and varied only the batch size during the first five iterations before stabilizing it at 128. We observed that when the initial batch size is smaller than the standard value of 128, the generalization error is higher than at 128. Conversely, setting the initial batch size to 256 and 512 significantly improved the generalization error. This finding supports our observation that under the same initial conditions, increasing the batch size in the initial few iterations can enhance the generalization performance of SVMR.

Fourth, we investigated the impact of noise on generalization. In this experiment, while keeping the settings consistent with Experiment 2, we set \(k = 10\) and maintained the batch size at 128. However, we varied the variance of Gaussian distribution noise. Specifically, we incrementally increased the Gaussian noise variance from 0.1 to 3 in steps of 0.1 to observe its effects on generalization. Noise can improve generalization by 1) aiding the model in escaping local minima to find lower values, and 2) preventing the model from overfitting the training data. The drawback of noise in terms of generalization is that it challenges an algorithm's stability; excessive noise can compromise this stability, thereby diminishing generalization performance. Our results indicated that when the noise variance does not exceed 1.5, it positively impacts generalization. However, beyond a variance of 1.5, the detrimental effects on algorithm stability outweigh the benefits, leading to poorer generalization outcomes.

\begin{figure}[htp]
\centering
\begin{minipage}{0.49\columnwidth}
  \centering
\includegraphics[width=\textwidth]{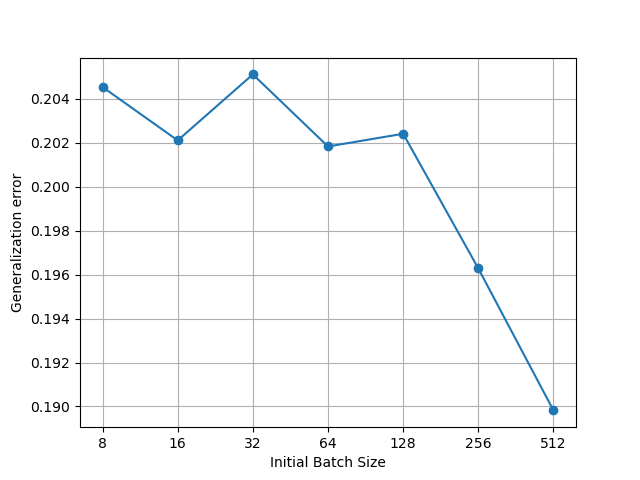}
\caption{Effect of batch size.}
\label{fig:initial_batch_size}
\end{minipage}%
\hfill
\begin{minipage}{0.49\columnwidth}
  \centering
\includegraphics[width=\textwidth]{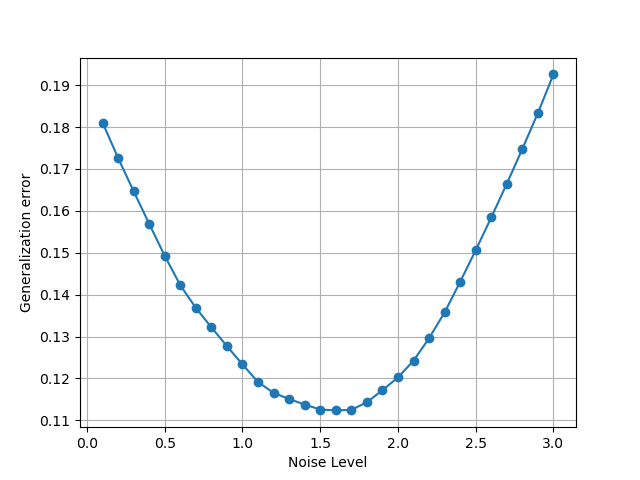}
\caption{Effect of noise.}
\label{fig:noisy_result}
\end{minipage}
\end{figure}

\section{Conclusion}\label{Sec:Conclusion}
This paper conducts a thorough generalization analysis of STORM-based algorithms: STORM, COVER, and SVMR, for one, two, and $K$-level stochastic optimizations. Firstly, for the $K$-level optimization, we introduce a tailored stability notion, paving the way for deeply understanding the relationship between generalization error, stability, and the number of levels. We further investigate their stability and excess risk bounds in both convex and strongly convex settings. Based on our analysis, we have found three observations for STORM-based algorithms: (1) Individual estimators can compromise algorithm stability due to target variances, harming generalization performance. (2) Increasing the number of levels also affects the algorithm's generalization error through stability and gradient variances. (3) Using more initial samples for estimation can boost performance without significantly raising computational costs.

\bibliographystyle{unsrtnat}
\bibliography{template}  

\clearpage
\setcounter{page}{1}
\appendices

\section{Results Summary and Corresponding Algorithms}\label{sec:algorithm}

\subsection{Summary of Results}
\begin{table*}[htp]
\centering
\caption{Summary of our results.}
\resizebox{\textwidth}{!}{%
\begin{tabular}{c|c|c|c|clll}
\cline{1-6}
Setting           & Bound                        & Level & Reference                                                  & \multicolumn{2}{c}{Result} &  &  \\ \cline{1-6}
\multirow{3}{*}{} &
  \multirow{3}{*}{Generation} &
  $1$ &
  \cite{hardt2016train} &
  \multicolumn{2}{c}{$L_f\epsilon$} &
   &
   \\\cline{3-6}
                  &                              & $2$     &   \cite{yang2023stability}                                       & \multicolumn{2}{c}{$L_{f}^2 \epsilon_{2}+4 L_{f}^2 \epsilon_{1}+L_{f} \sqrt{ \mathbb{E}_{S, A}[\operatorname{Var}_{1}(A(S))]/n_1}$}       &  &  \\\cline{3-6}
                  &                              & $K$     &  Theorem \ref{theorem:general_multi_level}                                                           & \multicolumn{2}{c}{$L_f^K\epsilon_K + \sum_{k=1}^{K-1}\Big( 4L_f^K\epsilon_{k}+ L_f\sqrt{\mathbb{E}_{S, A} [\operatorname{Var}_{k}(A(S)]/n_{k}}\Big)$}       &  &  \\ \cline{1-6}
\multirow{6}{*}{C} &
  \multirow{3}{*}{Stability} &
  $1$ &
  Theorem \ref{thm:sta_convex_single_level} &
  \multicolumn{2}{c}{$O\Big( \eta \sum_{j=0}^{T-1}\operatorname{Var}(v_j) + \frac{L_f\eta T}{n}\Big)$} &
   &
   \\\cline{3-6}
                  &                              & $2$     & Theorem \ref{thm:sta_convex_two_level} & \multicolumn{2}{c}{$O\Big( \eta\sum_{j=0}^{T-1}(  \operatorname{Var}(u_j) + \operatorname{Var}(v_j)) + \eta \sqrt{T} + \frac{\eta T}{m} +\frac{\eta T}{n}  \Big)$}       &  &  \\\cline{3-6}
                  &                              & $K$     & Theorem \ref{thm:sta_convex_mul_level} & \multicolumn{2}{c}{$O\Big( \tilde{L}_f^{K,i}\sum_{j=1}^{i-1}  L_f^{i-j} \operatorname{Var}^T (u, v)  + \sum_{k=1}^K\frac{\eta L_f^{K} T}{n_k}\Big)$}       &  &  \\ \cline{2-6}
                  & \multirow{3}{*}{Excess Risk} & $1$     &  Theorem \ref{thm:Excess_Risk_Bound_convex}                                                          & \multicolumn{2}{c}{$O(\frac{1}{\sqrt{n}})$, $~T \asymp n^{5/2}$ }       &  &  \\\cline{3-6}
                  &                              & $2$     & Theorem \ref{thm:Excess_Risk_Bound_convex}                                                            & \multicolumn{2}{c}{$O\Big(\frac{1}{\sqrt{n}} + \frac{1}{\sqrt{m}}\Big)$, $~T \asymp \max(n^{5/2}, m^{5/2})$}       &  &  \\\cline{3-6}
                  &                              & $K$     & Theorem \ref{thm:Excess_Risk_Bound_convex}                                                            & \multicolumn{2}{c}{$O\Big(\sum_{k=1}^K\frac{1}{\sqrt{n_k}}\Big)$, $~T \asymp \max(n_k^{5/2})$, $~\forall k \in [1, K]$}       &  &  \\ \cline{1-6}
\multirow{6}{*}{SC} & \multirow{3}{*}{Stability}   & $1$     &                  Theorem \ref{thm:sta_sconvex_single_convex}                                          & \multicolumn{2}{c}{$O\Big(\eta\sum_{j=0}^{T-1}\tilde{L}^{T-j-1}\operatorname{Var}(v_j) + \frac{L_f(L+\mu)}{L\mu n}\Big)$}       &  &  \\\cline{3-6}
                  &                              & $2$     & Theorem \ref{thm:sta_sconvex_two_convex}                                                           & \multicolumn{2}{c}{$O\Big(  \eta \sum_{j=0}^{T-1}\tilde{L}^{T-j-1} (\operatorname{Var}(u_j) + \operatorname{Var}(v_j)) + \frac{(L+ \mu)L_gL_f}{L\mu m} + \frac{(L+ \mu)L_gL_f}{L\mu n}\Big)$}       &  &  \\\cline{3-6}
                  &                              & $K$     &  Theorem \ref{thm:sta_mul_s_convex}                                                           & \multicolumn{2}{c}{ $O\Big(\eta \sum_{s=1}^{T-1} \tilde{L}^{T-s} \tilde{L}_f^{K,i} \sum_{j=1}^{i-1}  L_f^{i-j}\operatorname{Var}^T (u,v)+ \sum_{k=1}^K\frac{ L_f^{K}(L+\mu)}{L\mu n_k}  \Big)$ }       &  &  \\ \cline{2-6}
                  & \multirow{3}{*}{Excess Risk} & $1$     & Theorem \ref{thm:Excess_Risk_Bound_s_convex}                                                           & \multicolumn{2}{c}{$O\Big(\frac{1}{\sqrt{n}}\Big)$, $~T \asymp n^{7/6}$}       &  &  \\\cline{3-6}
                  &                              & $2$     & Theorem \ref{thm:Excess_Risk_Bound_s_convex}                                                            & \multicolumn{2}{c}{$O\Big(\frac{1}{\sqrt{n}} + \frac{1}{\sqrt{m}})$, $~T \asymp \max(n^{7/6}, m^{7/6})$}       &  &  \\\cline{3-6}
                  &                              & $K$     & Theorem \ref{thm:Excess_Risk_Bound_s_convex}                                                            & \multicolumn{2}{c}{$O\Big(\sum_{k=1}^K\frac{1}{\sqrt{n_k}}\Big)$, $~T \asymp \max(n_k^{5/2})$, $~\forall k \in [1, K]$}       &  &  \\ \cline{1-6}
                  \multicolumn{6}{l}{We use the following parameters to simplify the notations: $\operatorname{Var}^T(u, v) = \sum_{s=1}^{T-1}(\operatorname{Var}_{j,s}(u) + \operatorname{Var}_{i,s}(v)), \tilde{L} = (1-\frac{2\eta L \mu}{L+\mu})$,} \\
                  \multicolumn{6}{l}{and $\tilde{L}_f^{K,i} = \sum_{i=1}^{K}L_f^{K+\frac{(i-3)i}{2}}$}
\end{tabular}%
}
\label{Tab:Summary}
\end{table*}

\subsection{Description of Algorithms}
\begin{algorithm}[htp]
\caption{STORM.}	\label{alg_storm}
\renewcommand{\algorithmicrequire}{\textbf{Inputs:}}
\renewcommand{\algorithmicensure}{\textbf{Output:}}
\begin{algorithmic}[1]
    \REQUIRE Training data $S = \{\nu_i: i =1,\cdots,n\}$; Number of iterations $T$; Parameter $\eta_t$, $\beta_t$
    \STATE Initialize $x_0 \in \X$, $v_0 \in \mathbb{R}^d$
    \STATE Draw a sample $j_{0} \in [1, n]$, obtain $\nabla f _{\nu_{j_{0}}}(x_0)$.
    \FOR{$t=0$ to $T-1$ }
    \STATE $x_{t+1} = x_t - \eta_t v_t$
    \STATE Draw a sample $j_{t+1} \in [1, n]$, obtain $\nabla f _{\nu_{j_{t+1}}}(x_t) $
    \STATE Compute estimators $v_{t+1} = \Pi_{L_f}[\nabla f _{\nu_{j_{t+1}}}(x_{t+1})  +   (1-\beta_{t+1})(v_t - \nabla f_{\nu_{j_{t+1}}}(x_t))] $
    \ENDFOR
    \STATE {\bf Outputs:} $A(S)  =x_{T}$ or $x_\tau \sim \texttt{Unif}(\{x_t\}_{t=1}^T)$
\end{algorithmic}
\end{algorithm}

\begin{algorithm}[htp]
	\caption{COVER.}	\label{alg_cover}
	\renewcommand{\algorithmicrequire}{\textbf{Inputs:}}
	\renewcommand{\algorithmicensure}{\textbf{Output:}}
	\begin{algorithmic}[1]
		\REQUIRE Training data $S_\nu = \{\nu_i: i =1,\cdots,n\}$, $S_\omega = \{\omega_j: j =1,\cdots,m\}$; Number of iterations $T$, Parameter $\eta_t$, $\beta_t$
		\STATE Initialize $x_0 \in \X$, $u_0, v_0 \in \mathbb{R}^d$
		\STATE Draw a sample $j_{0}\in [1, n]$ and $,i_{0} \in [1,m]$, obtain $\nabla g _{\omega_{j_{0}}}(x_0)$ and $\nabla f_{\nu_{i_{0}}}(u_0) $.
		\FOR{$t=0$ to $T-1$ }
		\STATE $x_{t+1} = x_t - \eta_t v_t \nabla f_{\nu_{i_t}}(u_t)$
		\STATE Draw a sample $j_{t+1} \in [1, m]$, obtain $g _{\omega_{j_{t+1}}}(x_{t+1})$ and $ g _{\omega_{j_{t+1}}}(x_{t})$
		\STATE Compute estimators $u_{t+1} = g _{\omega_{j_{t+1}}}(x_{t+1}) + (1-\beta_{t+1})(u_t - g_{\omega_{j_{t+1}}}(x_t))$
		\STATE Draw a sample $j_{t+1} \in [1, m]$ , obtain $\nabla g _{\omega_{j_{t+1}}}(x_{t+1})$ and $\nabla g _{\omega_{j_{t+1}}}(x_t)$
		\STATE Compute estimators $v_{t+1} = \Pi_{L_f}[\nabla g_{\omega_{j_{t+1}}}(x_{t+1})  +   (1-\beta_{t+1})(v_t - \nabla g_{\omega_{j_{t+1}}}(x_{t})]$
		\STATE Draw  samples $i_{t+1} \in [1, n]$,          obtain $\nabla f_{\nu_{i_{t+1}}}(u_{t+1})$
		\ENDFOR
		\STATE {\bf Outputs:} $A(S)  =x_{T}$ or $x_\tau \sim \texttt{Unif}(\{x_t\}_{t=1}^T)$
	\end{algorithmic}
\end{algorithm}

\begin{algorithm}[htp]
	\caption{SVMR.}	\label{alg_svmr_multi}
	\renewcommand{\algorithmicrequire}{\textbf{Inputs:}}
	\renewcommand{\algorithmicensure}{\textbf{Output:}}
	\begin{algorithmic}[1]
		\REQUIRE Training data $S = \{\nu^{(1)}_1,\cdots, \nu^{(1)}_{n_1}, \cdots, \nu^{(K)}_1,\cdots, \nu^{(K)}_{n_K} \}.$; Number of iterations $T$; Parameter $\eta_t$, $\beta_t$
		\STATE Initialize $x_0 \in \X$, $u_0^{(i)}, v_{0}^{(i)} \in \mathbb{R}^d$ for all $i \in [0,K]$
		\STATE Draw a sample $j_{0}\in [1, n]$ and $,i_{0} \in [1,m]$, obtain $\nabla g _{\omega_{j_{0}}}(x_0)$ and $\nabla f_{\nu_{i_{0}}}(u_0) $.
		\FOR{$t=0$ to $T-1$ }
		\STATE $x_{t+1} = x_t - \eta_t \prod_{i=1}^Kv_{t}^{(i)}$ and set $u_t^{(0)} = x_t$
		\FOR{ level $i=1$ to $K$}
		\STATE Draw a sample $\nu_{t+1}^{(i)} \in [1, n_i]$, obtain $f_{\nu_{t+1}^{(i)}}(u_{t+1}^{(i-1)})$, $f_{\nu_{t+1}^{(i)}}(u_{t}^{(i-1)})$, $\nabla f_{\nu_{t+1}^{(i)}}(u_{t+1}^{(i-1)})$ and $\nabla f_{\nu_{t+1}^{(i)}}(u_{t}^{(i-1)})$
		\STATE Compute estimators $u_{t+1}^{(i)} = f_{\nu_{t+1}^{(i)}}(u_{t+1}^{(i-1)}) + (1-\beta_{t+1})(u_t^{(i)} - f_{\nu_{t+1}^{(i)}}(u_{t}^{(i-1)}))$
		\STATE Compute estimators $v_{t+1}^{(i)} = \Pi_{L_f}[\nabla f_{\nu_{t+1}^{(i)}}(u_{t+1}^{(i-1)}) + (1-\beta_{t+1})(u_t^{(i)} - \nabla f_{\nu_{t+1}^{(i)}}(u_{t}^{(i-1)}))]$
		\ENDFOR
		\ENDFOR
		\STATE {\bf Outputs:} $A(S)  =x_{T}$ or $x_\tau \sim \texttt{Unif}(\{x_t\}_{t=1}^T)$
	\end{algorithmic}
\end{algorithm}

\newpage

\section{Useful Lemmas}
lxchen@sjtu.edu.cnBefore giving the detailed proof, we first give some useful lemmas. 
\begin{lemma}[Lemma 4 in \cite{yang2023stability}]\label{lemma:weighted_avg}
Let \(\{a_i\}_{i= 1}^T, \{b_i\}_{i= 1}^T\) be two sequences of positive real numbers such that \(a_i\leq a_{i+ 1}\) and \(b_i\geq b_{i+ 1}\) for all \(i\). Then we have $\frac{\sum_{i= 1}^T a_ib_i}{\sum_{i= 1}^T a_i}\leq \frac{\sum_{i= 1}^T b_i}{T}$.
\end{lemma}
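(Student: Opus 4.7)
The plan is to recognize this as a standard Chebyshev sum inequality for monotone sequences, and prove it by the classical symmetrization trick. Clearing denominators, the desired inequality is equivalent to
\[
T \sum_{i=1}^T a_i b_i \;\leq\; \Bigl(\sum_{i=1}^T a_i\Bigr)\Bigl(\sum_{i=1}^T b_i\Bigr),
\]
which, after writing the right-hand side as the double sum $\sum_{i,j} a_i b_j$ and the left-hand side as $\sum_{i,j} a_i b_i$, reduces to showing
\[
\sum_{i,j} a_i (b_i - b_j) \;\leq\; 0.
\]

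First I would symmetrize the double sum by swapping the roles of $i$ and $j$ in half of the terms:
\[
\sum_{i,j} a_i (b_i - b_j) \;=\; \tfrac{1}{2}\sum_{i,j}\bigl[a_i(b_i-b_j) + a_j(b_j-b_i)\bigr] \;=\; \tfrac{1}{2}\sum_{i,j}(a_i - a_j)(b_i - b_j).
\]
Then I would argue termwise: because $\{a_i\}$ is non-decreasing and $\{b_i\}$ is non-increasing, for every pair $(i,j)$ the factors $a_i - a_j$ and $b_i - b_j$ have opposite signs (or one of them is zero), so $(a_i-a_j)(b_i-b_j)\leq 0$. Summing yields the required inequality, and dividing by $T\sum_i a_i$ (positive by assumption) gives the lemma.

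There is no real obstacle here — the only thing to be careful about is that all $a_i$ are positive so the denominator is nonzero and the direction of the inequality is preserved when dividing; everything else is a one-line symmetrization. The proof is essentially two lines once the inequality is rewritten as the non-positivity of $\sum_{i,j}(a_i - a_j)(b_i - b_j)$.
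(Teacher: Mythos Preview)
Your proof is correct; this is precisely Chebyshev's sum inequality, and the symmetrization argument you give is the standard and cleanest proof. The paper itself does not prove this lemma at all---it is quoted without proof as Lemma~4 from \cite{yang2023stability}---so there is nothing to compare against, and your self-contained argument is a strict addition.
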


\begin{lemma}\label{lemma:general_recursive}
	Consider a sequence $\{\beta_t\}_{t \geq 0} \in (0,1]$ and define $\Upsilon_t = \prod_{i=1}^{t}(1-\beta_i)$, then we can get for any $q_t \leq (1-\beta_t)q_{t-1}+p_{t}$, $
		q_t \leq \Upsilon_t(q_0+ \sum_{i=1}^t\frac{p_i}{\Upsilon_i})$.
\end{lemma}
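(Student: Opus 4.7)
The plan is to prove this by a standard telescoping argument after dividing through by the product $\Upsilon_t$. This is a discrete Gronwall-type inequality, and the key observation is that the multiplicative factor $(1-\beta_t)$ in the recursion is exactly the ratio $\Upsilon_t/\Upsilon_{t-1}$, so normalizing by $\Upsilon_t$ converts the geometric-style recursion into an additive one that telescopes cleanly.

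First I would divide both sides of $q_t \leq (1-\beta_t)q_{t-1} + p_t$ by $\Upsilon_t$. Using the defining identity $\Upsilon_t = (1-\beta_t)\Upsilon_{t-1}$, the factor $(1-\beta_t)/\Upsilon_t$ reduces to $1/\Upsilon_{t-1}$, which yields the normalized recursion
\begin{equation*}
\frac{q_t}{\Upsilon_t} \leq \frac{q_{t-1}}{\Upsilon_{t-1}} + \frac{p_t}{\Upsilon_t}.
\end{equation*}

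Next I would apply this inequality iteratively from step $t$ down to step $0$, so that the left-hand side telescopes and only the boundary term $q_0/\Upsilon_0$ plus the accumulated sum $\sum_{i=1}^t p_i/\Upsilon_i$ survives. By convention the empty product gives $\Upsilon_0 = 1$, hence
\begin{equation*}
\frac{q_t}{\Upsilon_t} \leq q_0 + \sum_{i=1}^{t}\frac{p_i}{\Upsilon_i}.
\end{equation*}
Multiplying both sides by $\Upsilon_t$ (which is positive since each $\beta_i \in (0,1]$ gives $1-\beta_i \geq 0$; if some factor equals zero the claim is trivial or interpreted in the limiting sense) yields the advertised bound. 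A clean way to present this rigorously is by induction on $t$: the base case $t=0$ is immediate, and the inductive step inserts the induction hypothesis into the recursion and uses $\Upsilon_t = (1-\beta_t)\Upsilon_{t-1}$ to collect terms.

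There is no real obstacle here; the only subtlety worth guarding against is the edge case $\beta_i = 1$ for some $i$, which makes $\Upsilon_j = 0$ for $j \geq i$ and therefore renders the quotient $p_i/\Upsilon_i$ formally undefined. This can be handled either by restricting to $\beta_i \in (0,1)$ as implicitly used in the applications of this lemma in the paper, or by interpreting the bound as $q_t \leq \Upsilon_t q_0 + \sum_{i=1}^t (\Upsilon_t/\Upsilon_i) p_i$ with $\Upsilon_t/\Upsilon_i = \prod_{j=i+1}^t (1-\beta_j)$, which is well-defined even when some $\beta_j = 1$ and matches the usage elsewhere in the proofs.
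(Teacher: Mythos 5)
Your proof is correct and follows exactly the paper's own argument: divide the recursion by $\Upsilon_t$, use $\Upsilon_t=(1-\beta_t)\Upsilon_{t-1}$ to telescope, then multiply back. Your added care about the edge case $\beta_i=1$ (where $\Upsilon_i=0$ makes $p_i/\Upsilon_i$ undefined) is a minor refinement the paper silently skips, not a different method.
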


\begin{proof}
	We divide both side of $q_t \leq (1-\beta_t)q_{t-1}+p_{t}$ by $\Upsilon_t$, then we have $
	\frac{q_t}{\Upsilon_t} \leq \frac{q_{t-1}}{\Upsilon_{t-1}} + \frac{p_t}{\Upsilon_t},~ t \geq 1$.
	Summing up the above inequalities, we have $
		q_t \leq \Upsilon_k(q_0+ \sum_{i=1}^t\frac{p_i}{\Upsilon_i})$.
\end{proof}

\begin{lemma}[Lemma 2 in \cite{yang2023stability}] \label{lemma:recursion lemma}
	Assume that the non-negative sequence ${u_t: t\in\mathbb{N} }$ satisfies the following recursive inequality for all $t \in \mathbb{N}$,
	\begin{align*}
		u_t^2 \leq S_t+\sum_{\tau=1}^{t-1} \alpha_\tau u_\tau.
	\end{align*}
	where $\{S_\tau: \tau \in \mathbb{N}\}$ is an increasing sequence, $S_0 \geq u_0^2$ and $\alpha_\tau$ for any $\tau \in \mathbb{N}. $ Then, the following inequality holds true:
	\begin{align*}
		u_t \leq \sqrt{S_t}+\sum_{\tau=1}^{t-1} \alpha_\tau.
	\end{align*}
\end{lemma}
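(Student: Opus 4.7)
The plan is to prove the lemma by strong induction on $t$, with the candidate upper bound $V_t := \sqrt{S_t} + \sum_{\tau=1}^{t-1}\alpha_\tau$. The base case follows directly from the hypothesis $S_0 \geq u_0^2$: since the sum in $V_0$ is empty, $u_0 \leq \sqrt{S_0} = V_0$. Thus the whole argument reduces to propagating this bound through the recursion.

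For the inductive step, suppose $u_\tau \leq V_\tau$ for every $\tau < t$. The key observation is that both ingredients of $V_\tau$ are monotone in $\tau$: $\sqrt{S_\tau} \leq \sqrt{S_t}$ because $S$ is increasing, and $\sum_{k=1}^{\tau-1}\alpha_k \leq \sum_{k=1}^{t-1}\alpha_k$ because the $\alpha_k$ are nonnegative. Hence $V_\tau \leq V_t$ for all $\tau < t$, and I can replace every $u_\tau$ on the right-hand side of the recursive hypothesis by the single constant $V_t$. Setting $A := \sum_{\tau=1}^{t-1}\alpha_\tau$, so that $V_t = \sqrt{S_t} + A$, this gives $u_t^2 \leq S_t + A\,V_t = S_t + A\sqrt{S_t} + A^2$, which is bounded above by $V_t^2 = S_t + 2A\sqrt{S_t} + A^2$ since the missing cross term $A\sqrt{S_t}$ is nonnegative. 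Taking square roots (both sides are nonnegative) closes the induction.

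I do not anticipate a real obstacle. The only subtlety is bookkeeping around the two monotonicity ingredients used to get $V_\tau \leq V_t$: $S_\tau$ must be nondecreasing (stated) and the $\alpha_\tau$ must be nonnegative (implicit from context, since the recursion is meant as an upper-bound estimate). Given these, the entire argument collapses to the single quadratic comparison $S_t + A\sqrt{S_t} + A^2 \leq (\sqrt{S_t} + A)^2$, which is essentially completing the square.
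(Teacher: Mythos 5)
Your proof is correct. Note first that the paper itself gives no proof of this lemma at all: it imports it by citation as Lemma~2 of \cite{yang2023stability}, so the only meaningful comparison is with the cited source's argument. The standard proof there (going back to the classical self-bounding recursion lemma of Schmidt--Le~Roux--Bach) is not inductive: one fixes $t$, picks the index $\tau^* \leq t$ maximizing $u_{\tau^*}$, observes $u_{\tau^*}^2 \leq S_{\tau^*} + \sum_{\tau=1}^{t-1}\alpha_\tau u_{\tau^*} \leq S_t + u_{\tau^*}\sum_{\tau=1}^{t-1}\alpha_\tau$, and solves this quadratic inequality in $u_{\tau^*}$ to get $u_t \leq u_{\tau^*} \leq \frac{1}{2}\sum_{\tau=1}^{t-1}\alpha_\tau + \bigl(\bigl(\frac{1}{2}\sum_{\tau=1}^{t-1}\alpha_\tau\bigr)^2 + S_t\bigr)^{1/2} \leq \sqrt{S_t} + \sum_{\tau=1}^{t-1}\alpha_\tau$. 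Your strong induction with the candidate $V_t = \sqrt{S_t} + \sum_{\tau=1}^{t-1}\alpha_\tau$ avoids the quadratic formula entirely, reducing everything to the completing-the-square comparison $S_t + A\sqrt{S_t} + A^2 \leq (\sqrt{S_t}+A)^2$; the maximal-index argument, by contrast, needs no induction and in fact yields the slightly sharper intermediate bound above before relaxing it. Both routes use the same two hypotheses you correctly flagged: monotonicity of $S_\tau$ and nonnegativity of the $\alpha_\tau$ (the latter is garbled in the paper's statement, "$\alpha_\tau$ for any $\tau \in \mathbb{N}$", but is indeed the intended and necessary reading — without $\alpha_\tau \geq 0$ both your step $u_\tau \alpha_\tau \leq V_t \alpha_\tau$ and the cited proof's quadratic bound fail). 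Your base case is also handled cleanly by the hypothesis $S_0 \geq u_0^2$ and the emptiness of the sum at $t=0,1$.
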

\section{One-level Stochastic Optimizations}\label{App:Single}

\begin{lemma}[Theorem 3.7 in \cite{hardt2016train}]\label{theorem:general_single_level}
If Assumption \ref{ass:Lipschitz continuous}(i), \ref{ass:bound variance} (i) and \ref{ass:Smoothness and Lipschitz continuous gradient} (i) holds true and the randomized algorithm $A$ is $\epsilon$-uniformly stable then
\begin{equation*}
\begin{aligned}
	&\EX_{S,A}[F(A(S)) - F_S(A(S))]\leq L_f\epsilon.
\end{aligned}
\end{equation*}
\end{lemma}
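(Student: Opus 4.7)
The plan is to use the classical symmetry-swap argument of Bousquet--Elisseeff (which is exactly how \cite{hardt2016train} proves this bound). The idea is to rewrite the expected generalization gap as the expected change in the loss when one training point is resampled, then apply Lipschitz continuity and uniform stability pointwise.

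First I would introduce an i.i.d.\ copy $S' = \{\nu'_1, \dots, \nu'_n\}$ of $S$ and, for each $i\in[1,n]$, the neighboring dataset $S^{i}$ that differs from $S$ only in the $i$-th coordinate (where $\nu_i$ is replaced by $\nu'_i$). Since $A(S)$ is independent of $S'$ and each $\nu'_i$ is distributed as the test sample, I can write
\begin{equation*}
\EX_{S,A}[F(A(S))] = \EX_{S,S',A}\Big[\frac{1}{n}\sum_{i=1}^n f_{\nu'_i}(A(S))\Big].
\end{equation*}
Now exploit the key symmetry: since $\nu_i$ and $\nu'_i$ are i.i.d., the joint law of $(S,S')$ is invariant under swapping $\nu_i \leftrightarrow \nu'_i$, which sends $A(S) \mapsto A(S^{i})$ and $f_{\nu'_i} \mapsto f_{\nu_i}$. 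Therefore $\EX_{S,S',A}[f_{\nu'_i}(A(S))] = \EX_{S,S',A}[f_{\nu_i}(A(S^{i}))]$.

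Combining with $\EX_{S,A}[F_S(A(S))] = \EX_{S,A}\big[\frac{1}{n}\sum_{i=1}^n f_{\nu_i}(A(S))\big]$, I obtain the identity
\begin{equation*}
\EX_{S,A}[F(A(S))-F_S(A(S))] = \frac{1}{n}\sum_{i=1}^n \EX_{S,S',A}\big[f_{\nu_i}(A(S^{i})) - f_{\nu_i}(A(S))\big].
\end{equation*}
Applying Lipschitz continuity from Assumption~\ref{ass:Lipschitz continuous}(i) pointwise gives $|f_{\nu_i}(A(S^{i})) - f_{\nu_i}(A(S))| \le L_f \|A(S^{i}) - A(S)\|$, and then invoking the $\epsilon$-uniform stability from Definition~\ref{def:stability}(i) inside the expectation bounds each summand by $L_f\epsilon$, yielding the claim.

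The argument is essentially formal; the only subtle step is the symmetry swap, where one must verify that relabeling $\nu_i$ and $\nu'_i$ transforms the expectation correctly because $(\nu_i,\nu'_i)$ are exchangeable. The smoothness condition from Assumption~\ref{ass:Smoothness and Lipschitz continuous gradient}(i) and the empirical variance bound from Assumption~\ref{ass:bound variance}(i) are not actually used here (they are needed later to establish stability for STORM, but the generalization-to-stability reduction itself only requires Lipschitz continuity). Thus I would not expect any genuine obstacle beyond carefully bookkeeping the expectations.
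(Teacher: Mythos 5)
Your proof is correct and follows exactly the standard symmetry-swap argument by which \cite{hardt2016train} establishes this result; the paper itself does not reprove the lemma but imports it by citation, and its own proof of the $K$-level analogue (Theorem \ref{theorem:general_multi_level}) relies on the same resample-and-exchange technique that you specialize here to one level. Your side remark is also accurate: only Assumption \ref{ass:Lipschitz continuous}(i) is actually used, and the variance and smoothness conditions listed in the lemma statement play no role in this generalization-to-stability reduction.
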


\begin{lemma}[Lemma 2 in \cite{cutkosky2019momentum}]\label{lemma:v_t_recursive_single}
Let Assumption \ref{ass:Lipschitz continuous}(i), \ref{ass:bound variance} (i) and \ref{ass:Smoothness and Lipschitz continuous gradient} (i) holds hold for the empirical risk $F_S$ , and $x_t,v_t$ is generated by Algorithm \ref{alg_storm}, then we have 
\begin{equation*}
    \EX_A [\|v_t -  \nabla f_S(x_t)\|^2 | \F_t] \leq  (1-\beta_{t})\|v_{t-1} - \nabla f_S(x_{t-1})\|^2 | + 2\beta_t^2\sigma_J^2 + 2L_f^2\|x_t-x_{t-1}\|^2.
\end{equation*}
\end{lemma}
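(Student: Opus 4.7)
The plan is to set $d_t := v_t - \nabla f_S(x_t)$ and derive a one-step recursion of the form $d_t = (1-\beta_t)d_{t-1} + Z_t$ with $\mathbb{E}_A[Z_t \mid \mathcal{F}_t] = 0$, then square and take conditional expectation so that the cross term drops out and the residual $\mathbb{E}_A\|Z_t\|^2$ supplies exactly the two stated variance contributions.

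Concretely, I would first momentarily ignore the projection $\Pi_{L_f}$ and work with the raw recursion
$$v_t = \nabla f_{\nu_{j_t}}(x_t) + (1-\beta_t)\bigl(v_{t-1} - \nabla f_{\nu_{j_t}}(x_{t-1})\bigr).$$
Subtracting $\nabla f_S(x_t)$, adding and subtracting $(1-\beta_t)\nabla f_S(x_{t-1})$, and regrouping gives
$$d_t = (1-\beta_t)d_{t-1} + \beta_t\bigl(\nabla f_{\nu_{j_t}}(x_t)-\nabla f_S(x_t)\bigr) + (1-\beta_t)\bigl[\bigl(\nabla f_{\nu_{j_t}}(x_t)-\nabla f_{\nu_{j_t}}(x_{t-1})\bigr) - \bigl(\nabla f_S(x_t)-\nabla f_S(x_{t-1})\bigr)\bigr].$$
Since $j_t$ is drawn uniformly from $[1,n]$ independently of $\mathcal{F}_t$, the two bracketed stochastic pieces have zero conditional mean, so they play the role of $Z_t$ and are orthogonal to the $\mathcal{F}_t$-measurable term $(1-\beta_t)d_{t-1}$ in the $L^2$ sense.

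Squaring and taking $\mathbb{E}_A[\cdot \mid \mathcal{F}_t]$ then eliminates the cross term, leaving
$$\mathbb{E}_A[\|d_t\|^2 \mid \mathcal{F}_t] \le (1-\beta_t)^2\|d_{t-1}\|^2 + \mathbb{E}_A[\|Z_t\|^2 \mid \mathcal{F}_t],$$
and I bound $(1-\beta_t)^2 \le 1-\beta_t$ since $\beta_t \in (0,1)$. For $\mathbb{E}_A\|Z_t\|^2$ I apply $\|a+b\|^2 \le 2\|a\|^2 + 2\|b\|^2$: the first piece yields $2\beta_t^2\,\mathbb{E}_A\|\nabla f_{\nu_{j_t}}(x_t)-\nabla f_S(x_t)\|^2 \le 2\beta_t^2\sigma_J^2$ via Assumption \ref{ass:bound variance}(i); the second piece is a centered random variable, so its variance is bounded by its raw second moment $\mathbb{E}_A\|\nabla f_{\nu_{j_t}}(x_t)-\nabla f_{\nu_{j_t}}(x_{t-1})\|^2$, and by the per-sample smoothness in Assumption \ref{ass:Smoothness and Lipschitz continuous gradient}(i) this is at most $L_f^2\|x_t-x_{t-1}\|^2$ (using $(1-\beta_t)^2 \le 1$). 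Summing gives the claimed $2\beta_t^2\sigma_J^2 + 2L_f^2\|x_t-x_{t-1}\|^2$.

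The one genuine subtlety is the projection $\Pi_{L_f}$. Here I would use that $\nabla f_S$ is $L_f$-bounded (as a consequence of Assumption \ref{ass:Lipschitz continuous}(i), which forces $\|\nabla f_S(x_t)\| \le L_f$), so $\nabla f_S(x_t)$ lies in the projection ball; non-expansiveness of projection onto a convex set then gives $\|\Pi_{L_f}[y] - \nabla f_S(x_t)\| \le \|y - \nabla f_S(x_t)\|$, so applying $\Pi_{L_f}$ only shrinks $\|d_t\|$ and the bound survives. I expect this projection handling to be the main place that requires care, since it is where the analysis departs from the vanilla STORM argument of Cutkosky and Orabona; everything else is bias-variance decomposition plus an $L$-smoothness/Lipschitz step that is essentially routine.
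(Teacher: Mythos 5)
Your proof is correct and takes essentially the same route as the source the paper cites for this lemma (Lemma 2 of \cite{cutkosky2019momentum}): rewrite $d_t=v_t-\nabla f_S(x_t)$ as $(1-\beta_t)d_{t-1}$ plus a conditionally centered noise term, kill the cross term under $\EX_A[\cdot\mid \F_t]$, and bound the two noise pieces by Assumption~\ref{ass:bound variance}(i) and smoothness; your explicit handling of the projection — $\|\nabla f_S(x_t)\|\le L_f$ by Assumption~\ref{ass:Lipschitz continuous}(i), so $\Pi_{L_f}$ is non-expansive toward $\nabla f_S(x_t)$ and the recursion stays in terms of the projected $v_{t-1}$ — correctly supplies the one step the paper's statement leaves implicit. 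One cosmetic caveat: the smoothness bound produces the per-sample constant that Assumption~\ref{ass:Smoothness and Lipschitz continuous gradient}(i) names $L$, so the $L_f^2$ in the stated inequality should be read as that smoothness constant (a notational conflation already present in the paper) rather than as a consequence of the Lipschitz constant of Assumption~\ref{ass:Lipschitz continuous}(i).
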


\begin{lemma}\label{lemma:v_t_bound_single}
Let Assumption \ref{ass:Lipschitz continuous}(i), \ref{ass:bound variance} (i) and \ref{ass:Smoothness and Lipschitz continuous gradient} (i) holds hold for the empirical risk $F_S$ , and $x_t,v_t$ is generated by Algorithm \ref{alg_storm}, then for any  $c>0$, we have 
\begin{equation*}
    \EX_A [\|v_t -  \nabla f_S(x_t)\|^2] \leq (\frac{c}{e})^{c}(t\beta)^{-c}\EX_A[\| v_0 - \nabla f_S(x_0)\|^2] +  2\beta\sigma_J^2 + \frac{L_f^2\eta^2}{\beta}.
\end{equation*}
\end{lemma}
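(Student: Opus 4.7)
The plan is to start from the one-step recursion in Lemma \ref{lemma:v_t_recursive_single} and unroll it, then bound the geometric prefactor via the standard $x^c e^{-x}$ maximization trick. Concretely, since the update rule in Algorithm \ref{alg_storm} sets $x_{t} - x_{t-1} = -\eta_{t-1} v_{t-1}$ and $v_{t-1}$ is the output of the projection $\Pi_{L_f}$, we immediately get $\|x_t - x_{t-1}\|^2 \leq \eta^2 L_f^2$. Substituting this into the conclusion of Lemma \ref{lemma:v_t_recursive_single} with constant parameters $\beta_t = \beta$, $\eta_t = \eta$, and taking total expectation yields
\begin{equation*}
q_t \leq (1-\beta)\, q_{t-1} + 2\beta^2 \sigma_J^2 + 2 L_f^2 \eta^2 L_f^2,
\end{equation*}
where I abbreviate $q_t := \EX_A[\|v_t - \nabla f_S(x_t)\|^2]$.

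Next I would apply Lemma \ref{lemma:general_recursive} (or unroll by hand) with the constant per-step additive term $p := 2\beta^2 \sigma_J^2 + 2 L_f^2 \eta^2 L_f^2$, giving
\begin{equation*}
q_t \leq (1-\beta)^t q_0 + p \sum_{i=0}^{t-1}(1-\beta)^i \leq (1-\beta)^t q_0 + \frac{p}{\beta} = (1-\beta)^t q_0 + 2\beta \sigma_J^2 + \frac{2 L_f^2 \eta^2 L_f^2}{\beta}.
\end{equation*}
This already matches the two additive terms in the claimed bound (up to the constant absorbed into the $L_f^2 \eta^2 / \beta$ notation).

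The remaining step is the pre-factor: I need to replace $(1-\beta)^t$ by $(c/e)^c (t\beta)^{-c}$. Using the elementary inequality $1-\beta \leq e^{-\beta}$ gives $(1-\beta)^t \leq e^{-t\beta}$. Then I would consider the function $h(x) = x^c e^{-x}$ and note that $h'(x) = x^{c-1} e^{-x}(c-x)$, so $h$ attains its maximum at $x = c$ with $h(c) = (c/e)^c$. Setting $x = t\beta$ yields $(t\beta)^c e^{-t\beta} \leq (c/e)^c$, i.e. $e^{-t\beta} \leq (c/e)^c (t\beta)^{-c}$, which combined with the previous display finishes the proof.

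I do not expect any real obstacle here; the only point to be a bit careful with is that $\|v_{t-1}\| \leq L_f$ must come from the projection step (this is why the algorithm uses $\Pi_{L_f}$ rather than the raw STORM estimator), and that the maximization trick gives a tunable polynomial decay rate in $t\beta$ rather than the naive exponential bound, which is what later results in the excess risk analysis appear to rely on.
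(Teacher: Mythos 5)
Your proposal is correct and follows essentially the same route as the paper's proof: one-step recursion from Lemma \ref{lemma:v_t_recursive_single} with $\mathbb{E}_A[\|x_t-x_{t-1}\|^2]\leq L_f^2\eta^2$, unrolling via the geometric sum bounded by $1/\beta$, and converting $(1-\beta)^t\leq e^{-t\beta}$ into the polynomial factor $(c/e)^c(t\beta)^{-c}$ by maximizing $x^c e^{-x}$. If anything, you are slightly more careful than the paper: you correctly ground $\|v_{t-1}\|\leq L_f$ in the projection $\Pi_{L_f}$ and carry the honest constant $2L_f^4\eta^2$, whereas the paper silently writes $L_f^2\eta^2$ — a discrepancy that is immaterial since it is absorbed in the later $O(\cdot)$ bounds.
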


\begin{proof}[proof of lemma \ref{lemma:v_t_bound_single}]
According to Lemma \ref{lemma:v_t_recursive_single}, and note that $\EX_{A}[\|x_t-x_{t-1}\|^2] \leq L_f^2\eta_{t-1}^2$ we have 
\begin{equation*}
    \EX_A [\|v_t -  \nabla f_S(x_t)\|^2]  \leq (1-\beta_t)\EX_{A}[\|v_{t-1} -\nabla f_S(x_{t-1})\|^2]+ 2\beta_t^2\sigma_J^2 + L_f^2\eta_{t-1}^2.
\end{equation*}
Telescoping the above inequality from 1 to $t$, according to Lemma \ref{lemma:general_recursive}, we have
\begin{equation*}
\begin{aligned}
    \EX_A [\|v_t -  \nabla f_S(x_t)\|^2] &\leq \prod_{j=1}^t(1-\beta_j)\EX_A[\| v_0 - \nabla f_S(x_0)\|^2] + \prod_{j=1}^t(1-\beta_j)(\sum_{j=1}^t\frac{2\beta_j\sigma_J^2}{\prod_{i=1}^j(1-\beta_i)}) \\
    &+ \prod_{j=1}^t(1-\beta_j)(\sum_{j=1}^t\frac{L_f^2\eta_{j-1}^2}{\prod_{i=1}^j(1-\beta_i)}).
\end{aligned}
\end{equation*}
Setting $\beta_t =\beta$ and $\eta_t = \eta$, we have 
\begin{equation*}
    \EX_A [\|v_t -  \nabla f_S(x_t)\|^2] \leq \prod_{j=1}^t(1-\beta_j)\EX_A[\| v_0 - \nabla f_S(x_0)\|^2] + \sum_{j=1}^t(1-\beta)^{t-j}( 2\beta^2\sigma_J^2 + L_f^2\eta^2).
\end{equation*}
Note that for all $K\leq N$ and $\beta_i >0$, we have 
\begin{equation}\label{Eq:prod_(1-beta_i)}
    \prod_{i=K}^{N}(1-\beta_i) \leq \exp(-\sum_{i=K}^N\beta_i),
\end{equation}
then we have
\begin{equation*}
    \EX_A [\|v_t -  \nabla f_S(x_t)\|^2] \leq \exp(-\beta t)\EX_A[\| v_0 - \nabla f_S(x_0)\|^2] + \sum_{j=1}^t(1-\beta)^{t-j}( 2\beta^2\sigma_J^2 + L_f^2\eta^2).
\end{equation*}
According to the fact that for any $c >0$, we have 
\begin{equation}\label{Eq:exp(-t beta)}
    e^{-x} \leq (\frac{c}{e})^{c}x^{-c},
\end{equation}
then we can get for any $c >0$
\begin{equation*}
    \EX_A [\|v_t -  \nabla f_S(x_t)\|^2] \leq (\frac{c}{e})^{c}(t\beta)^{-c}\EX_A[\| v_0 - \nabla f_S(x_0)\|^2] + \sum_{j=1}^t(1-\beta)^{t-j}( 2\beta^2\sigma_J^2 + L_f^2\eta^2).
\end{equation*}
Moreover, according to the fact that 
\begin{equation}\label{Eq:sum_(1-a)^t-j}
    \sum_{j=1}^t(1-\beta)^{t-j} \leq \frac{1}{\beta},
\end{equation}
we have $\EX_A [\|v_t -  \nabla f_S(x_t)\|^2] \leq (\frac{c}{e})^{c}(t\beta)^{-c}\EX_A[\| v_0 - \nabla f_S(x_0)\|^2] +  2\beta\sigma_J^2 + \frac{L_f^2\eta^2}{\beta}$.
\end{proof}

We first give some notations used in the one-level optimization to simplify our proof.

For any $k \in [1,n]$, let $S^{k} = \{\nu_{1}, \ldots, \nu_{k-1}, \nu_{k}^{\prime}, \nu_{k+1}, \ldots, \nu_{n}\}$ be formed from $S$ by replacing the $k$-th element.

Let $\{x_{t+1}\}$, and $\{v_{t+1}\}$ be generated by Algorithm \ref{alg_storm} based on $S$. Similarly, $\{x_{t+1}^{k}\}$  and $\{v_{t+1}^{k}\}$ be generated by Algorithm \ref{alg_storm} based on $S^{k}$. Set $x_{0}=x_{0}^{k}$  as starting points in $\mathcal{X}$.

Next, we give the detailed proof of Theorem \ref{thm:sta_convex_single_level}.
\begin{proof}[proof of Theorem \ref{thm:sta_convex_single_level} ]
We will consider two cases, i.e., $i_t \neq k$ and $i_t = k$.

\textbf{\quad Case 1 ($i_t \neq k$).}  We have 
\begin{equation}\label{Eq:sta_origin_single_c}
\begin{aligned}
    \|x_{t+1} - x_{t+1}^{k}\|^2 & = \|x_t - \eta_t v_t - x_t^k + \eta_t v_t^k\|^2 \\
    & \leq \|x_t - x_t^k\|^2 -2\eta_t \langle v_t - v_t^k , x_t -x_t^k\rangle + \eta_t^2\|v_t - v_t^k\|^2.
\end{aligned}
\end{equation}
For the second term on the RHS of \eqref{Eq:sta_origin_single_c}, we have 
\begin{equation*}
\begin{aligned}
    &-2\eta_t \langle v_t - v_t^k , x_t -x_t^k\rangle\\
    & = -2\eta_t \langle v_t - \nabla f_S(x_t), x_t -x_t^k\rangle - 2\eta_t \langle \nabla f_S(x_t) -  \nabla f_S(x_t^k), x_t -x_t^k\rangle -2\eta_t \langle \nabla f_S(x_t^k)-  v_t^k , x_t -x_t^k\rangle.
\end{aligned}
\end{equation*}
 
Smoothness generally suggests that the gradient update of $F$ is constrained from being excessively large. Additionally, the convexity and $L$-smoothness of $F$ indicate co-coercivity in the gradients, leading to the following conclusion $$\left \langle \nabla F\bigl(x\bigr)-\nabla F\bigl(x'\bigr), x-x'\right \rangle\ge \frac{1}{L}\|\nabla F\bigl(x\bigr)-\nabla F\bigl(x'\bigr)\|^2.$$

Then using Assumption \ref{ass:Smoothness and Lipschitz continuous gradient} (i), i.e., the smoothness of $f_S(\cdot)$, we can get
 
\begin{equation*}
\begin{aligned}
    &-2\eta_t \langle v_t - v_t^k , x_t -x_t^k\rangle\\
    & \leq 2\eta_t \|v_t -\nabla f_S(x_t)\| \cdot \|x_t - x_t^k \| -\frac{2\eta_t}{L}\|\nabla f_S(x_t) -  \nabla f_S(x_t^k)\|^2 + 2\eta_t \|v_t^k-f_S(x_t^k)\|\cdot \|x_t - x_t^k \|.
\end{aligned}
\end{equation*}
For the third term on the RHS of \eqref{Eq:sta_origin_single_c}, we have 
\begin{equation*}
\begin{aligned}
    \eta_t^2\|v_t - v_t^k\|^2 &\leq 3\eta_t^2 \|v_t -\nabla f_S(x_t)\|^2 + 3\eta_t^2\|\nabla f_S(x_t) -  \nabla f_S(x_t^k)\|^2 + 3\eta_t^2\|v_t^k-f_S(x_t^k)\|^2.
\end{aligned}
\end{equation*}
Putting above two inequalities into  \eqref{Eq:sta_origin_single_c}, we have 
\begin{equation*}
\begin{aligned}
    \|x_{t+1} - x_{t+1}^{k}\|^2
    &\leq \|x_t - x_t^k\|^2 +  2\eta_t \|v_t -\nabla f_S(x_t)\| \cdot \|x_t - x_t^k \| + 2\eta_t \|v_t^k-f_S(x_t^k)\|\cdot \|x_t - x_t^k \|\\
    &\quad + (3\eta_t^2 - \frac{2\eta_t}{L})\|\nabla f_S(x_t) -  \nabla f_S(x_t^k)\|^2+  3\eta_t^2\|v_t^k-f_S(x_t^k)\|^2.
\end{aligned}
\end{equation*}
By setting $\eta_t \leq \frac{2}{3L}$, we have 
\begin{equation*}
\begin{aligned}
    &\|x_{t+1} - x_{t+1}^{k}\|^2\\
    & \leq \|x_t - x_t^k\|^2 +  2\eta_t \|v_t -\nabla f_S(x_t)\| \cdot \|x_t - x_t^k \| + 2\eta_t \|v_t^k-f_S(x_t^k)\|\cdot \|x_t - x_t^k \|+  3\eta_t^2\|v_t^k-f_S(x_t^k)\|^2.
\end{aligned}
\end{equation*} 

\textbf{\quad Case 2 ($i_t = k$).}  We have
\begin{equation*}
\begin{aligned}
    \|x_{t+1} - x_{t+1}^{k}\| & = \|x_t - \eta_t v_t - x_t^k + \eta_t v_t^k\|\\
    & \leq \|x_t- x_t^k\| +\eta_t \|v_t -v_t^k\| \leq \|x_t- x_t^k\| +\eta_t L_f.
\end{aligned}
\end{equation*} 
Then we can get 
\begin{equation*}
\begin{aligned}
    &\|x_{t+1} - x_{t+1}^{k}\|^2 \leq \|x_t- x_t^k\|^2 + 2\eta_t L_f \|x_t- x_t^k\| + \eta_t^2L_f^2.
\end{aligned}
\end{equation*}
Combining \textbf{Case 1} and \textbf{Case 2} we have
\begin{equation*}
\begin{aligned}
    \|x_{t+1} - x_{t+1}^{k}\|^2 & \leq  \|x_t - x_t^k\|^2 +  2\eta_t \|v_t -\nabla f_S(x_t)\| \cdot \|x_t - x_t^k \| + 2\eta_t \|v_t^k-f_S(x_t^k)\|\cdot \|x_t - x_t^k \| \\
    &\quad + 3\eta_t^2\|v_t^k-f_S(x_t^k)\|^2 + 2\eta_t L_f \|x_t- x_t^k\| \1_{i_t = k} + \eta_t^2L_f^2\1_{i_t = k}.
\end{aligned}
\end{equation*}
Note that 
\begin{equation}\label{Eq:single_sc_1/n_x_t-x_t^k}
\begin{aligned}
    \mathbb{E}_{A}[\|x_{t}-x_{t}^{k}\| \1_{[i_{t}=k]}]=\mathbb{E}_{A}[\|x_{t}-x_{t}^{k}\| \1_{[i_{t}=k]}]=\frac{1}{n} \mathbb{E}_{A}[\|x_{t}-x_{t}^{k}\|] \leq \frac{1}{n}(\mathbb{E}_{A}[\|x_{t}-x_{t}^{k}\|^{2}])^{1 / 2}.
\end{aligned}
\end{equation}
Then using Cauchy-Schwarz inequality,  we can get 
\begin{equation*}
\begin{aligned}
    \mathbb{E}_{A}[\|x_{t+1} - x_{t+1}^{k}\|^2] & \leq 	\mathbb{E}_{A}[\|x_t - x_t^k\|^2] + 2\eta_t (\EX_{A}[\|v_t -\nabla f_S(x_t)\|^2 ])^{1/2}(\EX_{A}[\|x_t -x_t^k\|^2])^{1/2}\\
    &\quad  + 2\eta_t (\EX_{A}[\|v_t^k -\nabla f_S(x_t^k)\|^2 ])^{1/2}(\EX_{A}[\|x_t -x_t^k\|^2])^{1/2}+ 3\eta_t^2\EX_{A}[\|v_t^k-f_S(x_t^k)\|^2]\\
    &\quad + \frac{2L_f \eta_t}{n}(\mathbb{E}_{A}[\|x_{t}-x_{t}^{k}\|^{2}])^{1 / 2} + \frac{\eta_t^2L_f^2}{n}.
\end{aligned}
\end{equation*}
Telescoping the above inequality from 0 to $t$, and combining with $x_0 = x_0^k$, we have 
\begin{equation*}
\begin{aligned}
    \mathbb{E}_{A}[\|x_{t+1} - x_{t+1}^{k}\|^2] & \leq 2 \sum_{j=1}^t\eta_j (\EX_{A}[\|v_j -\nabla f_S(x_j)\|^2 ])^{1/2}(\EX_{A}[\|x_j -x_j^k\|^2])^{1/2}\\
    &\quad + 2 \sum_{j=1}^t\eta_j (\EX_{A}[\|v_j^k -\nabla f_S(x_j^k)\|^2 ])^{1/2}(\EX_{A}[\|x_j -x_j^k\|^2])^{1/2} + 3\sum_{j=1}^t\eta_j^2\EX_{A}[\|v_j^k-f_S(x_j^k)\|^2]\\
    &\quad + \sum_{j=0}^t \frac{2L_f \eta_j}{n}(\mathbb{E}_{A}[\|x_{j}-x_{j}^{k}\|^{2}])^{1 / 2} + \sum_{j=0}^t \frac{\eta_j^2L_f^2}{n}.
\end{aligned}
\end{equation*}
Denote $u_t = (\mathbb{E}_{A}[\|x_{t} - x_{t}^{k}\|^2])^{1/2}$,  then we can get 
\begin{equation*}
\begin{aligned}
    u_t^2 & \leq 2 \sum_{j=1}^{t-1}\eta_j (\EX_{A}[\|v_j -\nabla f_S(x_j)\|^2 ])^{1/2}u_j+ 2 \sum_{j=1}^{t-1}\eta_j (\EX_{A}[\|v_j^k -\nabla f_S(x_j^k)\|^2 ])^{1/2}u_j\\
    &\quad  + 3\sum_{j=1}^{t-1}\eta_j^2\EX_{A}[\|v_j^k-f_S(x_j^k)\|^2] + \sum_{j=0}^{t-1} \frac{2L_f \eta_j}{n}u_j + \sum_{j=0}^{t-1} \frac{\eta_j^2L_f^2}{n}.
\end{aligned}
\end{equation*}
Define 
\begin{equation*}
\begin{aligned}
    S_t \leq 3\sum_{j=1}^{t-1}\eta_j^2\EX_{A}[\|v_j^k-f_S(x_j^k)\|^2] + \sum_{j=0}^{t-1} \frac{\eta_j^2L_f^2}{n},
\end{aligned}
\end{equation*}
and 
\begin{equation*} 
\begin{aligned}
    \alpha_j  = 2 \eta_j (\EX_{A}[\|v_j -\nabla f_S(x_j)\|^2 ])^{1/2}+ 2 \eta_j (\EX_{A}[\|v_j^k -\nabla f_S(x_j^k)\|^2 ])^{1/2} + \frac{2L_f \eta_j}{n}. 
\end{aligned}
\end{equation*}
using Lemma \ref{lemma:general_recursive} we can get 
\begin{equation*}
\begin{aligned}
    u_t &\leq \sqrt{S_t} + \sum_{j=1}^{t-1}\alpha_j \\
    & \leq 2(\sum_{j=1}^{t-1}\eta_j^2\EX_{A}[\|v_j^k -\nabla f_S(x_j^k)\|^2 ])^{1/2}+ (\sum_{j=0}^{t-1} \frac{\eta_j^2L_f^2}{n})^{1/2}+ 2\sum_{j=1}^{t-1} \eta_j( \EX_{A}[\|v_j -\nabla f_S(x_j)\|^2 ])^{1/2} \\
    &\quad + 2 \sum_{j=1}^{t-1}\eta_j (\EX_{A}[\|v_j^k -\nabla f_S(x_j^k)\|^2 ])^{1/2} + \sum_{j=1}^{t-1}\frac{2L_f \eta_j}{n}.
\end{aligned}
\end{equation*}
Furthermore, setting $\eta_t = \eta $, we can get $ \sum_{j=1}^{t-1} \eta_j (\EX_{A}[\|v_j -\nabla f_S(x_j)\|^2 ])^{1/2} \leq \sup_{S} \eta\sum_{j=1}^{t-1} (\EX_{A}[\|v_j -\nabla f_S(x_j)\|^2 ])^{1/2}$ and $ \sum_{j=1}^{t-1} \eta_j (\EX_{A}[\|v_j^k -\nabla f_S(x_j^k)\|^2 ])^{1/2} \leq \sup_{S} \eta \sum_{j=1}^{t-1} (\EX_{A}[\|v_j -\nabla f_S(x_j)\|^2 ])^{1/2}$. Consequently, with $T$ iterations, we obtain that
\begin{equation}\label{Eq:u_T_bound_single}
\begin{aligned}
    u_T & \leq 6\sup_{S} \eta \sum_{j=0}^{T-1}(\EX_{A}[\|v_j -\nabla f_S(x_j)\|^2 ])^{1/2} +  \frac{\eta L_f\sqrt{T}}{\sqrt{n}} + \frac{2L_f\eta T}{n}.
\end{aligned}
\end{equation}
Because often we have $T \geq n$, and $\EX_{A}[\|x_T - x_T^k\|]\leq u_T = (\EX_{A}[\|x_T-x_T^k\|^2])^{1/2} $, then we can get 
\begin{equation*}
\begin{aligned}
    \EX_{A}[\|x_T - x_T^k\|] \leq O(\sup_{S} \eta \sum_{j=0}^{T-1}(\EX_{A}[\|v_j -\nabla f_S(x_j)\|^2 ])^{\frac{1}{2}} + \frac{L_f\eta T}{n}  ).
\end{aligned}
\end{equation*}
This completes the proof.
\end{proof}

\begin{corollary}\label{cor:1_single_level}
Consider STORM in Algorithm~\ref{alg_storm} with $\eta_t = \eta \leq \frac{2}{3L}$, and $\beta_t = \beta \in (0,1) $, for any $t\in [0,T-1]$. With the output $A(S) =x_{T}$, $\epsilon$ satisfies
\begin{equation*}
	\begin{aligned}
		O\bigg( \eta T \big((\beta T)^{-\frac{c}{2}} + \beta^{1/2} + \eta \beta^{-1/2}\big) + \eta T\frac{1}{n} \bigg).
	\end{aligned}
\end{equation*}
\end{corollary}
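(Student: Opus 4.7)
The plan is to combine the stability bound from Theorem~\ref{thm:sta_convex_single_level} with the estimator error bound from Lemma~\ref{lemma:v_t_bound_single}. Theorem~\ref{thm:sta_convex_single_level} already reduces the stability of $x_T$ to the aggregate quantity
\[
\sup_S \eta \sum_{j=0}^{T-1} \operatorname{Var}(v_j) + \frac{L_f \eta T}{n},
\]
so the whole task is to replace the sum by an explicit function of $T$, $\beta$, and $\eta$ using the per-iterate bound on $\EX_A[\|v_j - \nabla f_S(x_j)\|^2]$.

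First I would apply Lemma~\ref{lemma:v_t_bound_single} at time $j$ to obtain
\[
\EX_A[\|v_j - \nabla f_S(x_j)\|^2] \le \left(\tfrac{c}{e}\right)^c (j\beta)^{-c} V + 2\beta \sigma_J^2 + \frac{L_f^2 \eta^2}{\beta}.
\]
Then I would take square roots using the elementary inequality $\sqrt{a+b+c} \le \sqrt{a}+\sqrt{b}+\sqrt{c}$ to get
\[
\operatorname{Var}(v_j) = O\!\left((j\beta)^{-c/2} + \sqrt{\beta} + \frac{\eta}{\sqrt{\beta}}\right),
\]
with the constants depending only on $V, \sigma_J, L_f$, which are absorbed into the $O(\cdot)$.

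Next I would sum this bound for $j = 0, 1, \dots, T-1$. The last two terms give straightforward contributions of order $T\sqrt{\beta}$ and $T\eta/\sqrt{\beta}$, respectively. For the first term, I would use the standard estimate $\sum_{j=1}^{T} j^{-c/2} = O(T^{1-c/2})$ (valid for $c<2$, and otherwise even better), which rewrites as $\sum_{j=1}^{T}(j\beta)^{-c/2} = O\bigl(T(T\beta)^{-c/2}\bigr)$, absorbing the $j=0$ term into the constant by a harmless shift. Multiplying through by $\eta$ then yields
\[
\eta \sum_{j=0}^{T-1} \operatorname{Var}(v_j) = O\!\left(\eta T \left((\beta T)^{-c/2} + \sqrt{\beta} + \frac{\eta}{\sqrt{\beta}}\right)\right),
\]
and combining with the $L_f\eta T/n$ term from Theorem~\ref{thm:sta_convex_single_level} delivers the stated bound.

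There is no single hard step here; the main bookkeeping issue will be choosing the indexing carefully so that the divergence at $j=0$ in $(j\beta)^{-c/2}$ is handled gracefully (either by starting the sum at $j=1$ and bounding $\operatorname{Var}(v_0)$ separately by $\sqrt{V}$, or by invoking the convention in Lemma~\ref{lemma:v_t_bound_single} that already absorbs the first term into $O(\cdot)$). All other manipulations are routine applications of $\sqrt{a+b+c} \le \sqrt{a}+\sqrt{b}+\sqrt{c}$ and summation of a power series, so I expect the proof to be essentially a short calculation rather than a substantive argument.
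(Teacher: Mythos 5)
Your proposal is correct and follows essentially the same route as the paper: the paper's proof likewise combines the intermediate stability bound \eqref{Eq:u_T_bound_single} (equivalently Theorem~\ref{thm:sta_convex_single_level}) with the per-iterate estimate of Lemma~\ref{lemma:v_t_bound_single}, takes square roots term-by-term via subadditivity, and sums using $\sum_{t=1}^{T} t^{-c/2}=O(T^{1-c/2})$ to arrive at $O\big(\eta T(\beta T)^{-c/2}+\eta\beta^{1/2}T+\eta^2\beta^{-1/2}T+\eta T n^{-1}\big)$. Your explicit handling of the $j=0$ term (bounding $\operatorname{Var}(v_0)$ separately by $\sqrt{V}$) is a harmless tidying of an indexing point the paper glosses over.
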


Next, we give the proof of Corollary \ref{cor:1_single_level}.
\begin{proof}[proof of Corollary \ref{cor:1_single_level}]
Combining \eqref{Eq:u_T_bound_single} and Lemma \ref{lemma:v_t_bound_single}, we can get 
\begin{equation*}
\begin{aligned}
    \epsilon &\leq 6\sup_S \eta \sum_{j=0}^T((\frac{c}{e})^{c}(t\beta)^{-c}\EX_A[\| v_0 - \nabla f_S(x_0)\|^2] +  2\beta\sigma_J^2 + \frac{L_f^2\eta^2}{\beta})^{\frac{1}{2}}  +  \frac{\eta L_f\sqrt{T}}{\sqrt{n}} + \frac{2L_f\eta T}{n}\\
    & \leq  6\sup_S \eta \Big( (\frac{c}{e})^{c} \EX_A[\| v_0 - \nabla f_S(x_0)\|^2]\beta^{-\frac{c}{2}} \sum_{j=0}^T t^{-\frac{c}{2}} + 2 \sigma_J \sqrt{\beta} T + L_f\eta\beta^{-\frac{1}{2}}T \Big) + \frac{3L_f\eta T}{n}.
\end{aligned}
\end{equation*}
Then according to 
\begin{equation}\label{Eq:sum_t^-z_bound}
\begin{aligned}
    \sum_{t= 1}^{T} t^{-z}= O(T^{1- z}), \forall z\in (-1, 0)\cup (-\infty, -1),~~ \sum_{t= 1}^{T} t^{-1}= O(\log T),
\end{aligned}
\end{equation}
we have 
\begin{equation*}
\begin{aligned}
    \epsilon = O(\eta (\beta T)^{-\frac{c}{2}}T + \eta \beta^{1/2}T + \eta^2\beta^{-1/2}T + \eta T n^{-1}  )
\end{aligned}
\end{equation*}
\end{proof}

Before giving the proof of Theorem \ref{thm:opt_convex}, we first introduce a useful lemma.
\begin{lemma}\label{lemma:single}
Suppose  Assumption \ref{ass:Lipschitz continuous}(i), \ref{ass:bound variance} (i) and \ref{ass:Smoothness and Lipschitz continuous gradient} (i) holds for  the empirical risk $F_S$. By running Algorithm \ref{alg_storm}, we have for any $\gamma_t >0$
\begin{equation*}
\begin{aligned}
    &\EX_A[\|x_{t+1}-x_*^S\|^2|\F_t]\\
    & \leq (1+\eta_t\gamma_t)\EX_A[\|x_t - x_*^S\|^2|\F_t]-2\eta_t(F_S(x_t) - F_S(x_*^S)) + \eta_t^2L_f^2+ \frac{\eta_t}{\gamma_t}\EX_A[\|\nabla f_S(x_t)-v_t\|^2|\F_t],
\end{aligned}
\end{equation*}
where $\F_t $ is the $\sigma$-field generated by $\{v_{i_0}, \cdots ,v_{i_{t-1}}\}$.
\end{lemma}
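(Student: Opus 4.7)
\textbf{Proof Plan for Lemma \ref{lemma:single}.} The plan is to start from the update rule $x_{t+1} = x_t - \eta_t v_t$ and expand the squared distance to $x_*^S$. Writing
\[
\|x_{t+1}-x_*^S\|^2 = \|x_t - x_*^S\|^2 - 2\eta_t \langle v_t, x_t - x_*^S\rangle + \eta_t^2\|v_t\|^2,
\]
I would bound the cross term by splitting $v_t = \nabla f_S(x_t) + (v_t - \nabla f_S(x_t))$, applying convexity of $F_S$ (so that $\langle \nabla f_S(x_t), x_t - x_*^S\rangle \geq F_S(x_t) - F_S(x_*^S)$) on the gradient piece, and applying Young's inequality with parameter $\gamma_t$ on the noise piece to obtain
\[
-2\eta_t \langle v_t - \nabla f_S(x_t), x_t - x_*^S\rangle \leq \eta_t\gamma_t\|x_t - x_*^S\|^2 + \frac{\eta_t}{\gamma_t}\|v_t - \nabla f_S(x_t)\|^2.
\]
This produces the $(1+\eta_t\gamma_t)\|x_t - x_*^S\|^2$ and $\frac{\eta_t}{\gamma_t}\|v_t - \nabla f_S(x_t)\|^2$ terms in the target bound.

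Next, for the quadratic term $\eta_t^2\|v_t\|^2$, the key observation is that the STORM update in Algorithm~\ref{alg_storm} projects $v_{t+1}$ onto the ball of radius $L_f$ via $\Pi_{L_f}$, which is justified by the Lipschitz continuity of $f_\nu$ in Assumption~\ref{ass:Lipschitz continuous}(i) (so that $\|\nabla f_\nu\|\leq L_f$ and the true gradient lies in this ball). Hence $\|v_t\|\leq L_f$ deterministically, giving $\eta_t^2\|v_t\|^2 \leq \eta_t^2 L_f^2$, which accounts for the $\eta_t^2 L_f^2$ term. Putting the pieces together and taking conditional expectation with respect to $\F_t$ yields the stated inequality.

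The routine part here is the Young's-inequality manipulation; the only potential pitfall is making sure the projection step $\Pi_{L_f}$ in the STORM algorithm genuinely justifies $\|v_t\|\leq L_f$ rather than just an in-expectation bound, and verifying this is what gives the clean deterministic $\eta_t^2 L_f^2$ term rather than something involving variance. Since this is a standard descent lemma used repeatedly in SGD/STORM analyses, I do not anticipate a significant obstacle beyond careful bookkeeping of the conditional expectation and the fact that $x_t$ is $\F_t$-measurable while $v_t$ depends on the fresh sample $\nu_{j_{t+1}}$.
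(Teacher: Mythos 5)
Your proposal is correct and follows essentially the same route as the paper's proof: the identical expansion of $\|x_{t+1}-x_*^S\|^2$, the same split of the cross term into a convexity piece (yielding $-2\eta_t(F_S(x_t)-F_S(x_*^S))$) and a noise piece handled by Young's inequality with parameter $\gamma_t$, and the same bound $\eta_t^2\|v_t\|^2\leq \eta_t^2 L_f^2$, which the paper uses silently and which your observation about the projection $\Pi_{L_f}$ (together with Lipschitzness of $f_\nu$ for the initial estimator) correctly justifies as a deterministic bound.
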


\begin{proof}[proof of Lemma \ref{lemma:single}]
	According to the update rule of Algorithm \ref{alg_storm}, we have 
	\begin{equation*}
		\begin{aligned}
			\|x_{t+1}-x_*^S\|^2 & =  \| x_t - \eta_t v_t -x_*^S\|^2\\
			& = \|x_t - x_*^S\|^2 - 2\eta_t \langle  v_t , x_t - x_*^S\rangle + \eta_t^2\|v_t\|^2 \\
			& = \| x_t - x_*^S\|^2 - 2\eta_t \langle\nabla f_S(x_t), x_t - x_*^S\rangle + \eta_t^2\|v_t\|^2 + 2\eta_t \langle \nabla f_S(x_t) -v_t , x_t - x_*^S\rangle.
		\end{aligned}
	\end{equation*}
	Let $\F_t$ be the $\sigma$-field generated by $\{v_{i_0}, \cdots ,v_{i_{t-1}}\}$, we have 
	\begin{equation*}
		\begin{aligned}
			&\EX_A[\|x_{t+1}-x_*^S\|^2|\F_t]\\
			&= 	\EX_A[\|x_t - x_*^S\|^2|\F_t] -2\eta_t(F_S(x_t) - F_S(x_*^S)) + \eta_t^2L_f^2 + \EX_{A}[2\eta_t\langle \nabla f_S(x_t) -v_t , x_t - x_*^S\rangle| \F_t]\\
			& \leq \EX_A[\|x_t - x_*^S\|^2|\F_t] -2\eta_t(F_S(x_t) - F_S(x_*^S)) + \eta_t^2L_f^2 + 2\eta_t\EX_{A}[\frac{1}{2\gamma_t}\|\nabla f_S(x_t)-v_t\|^2 + \frac{\gamma_t}{2}\|x_t - x_*^S\|^2|\F_t]\\
			& = (1+\eta_t\gamma_t)\EX_A[\|x_t - x_*^S\|^2|\F_t]-2\eta_t(F_S(x_t) - F_S(x_*^S)) + \eta_t^2L_f^2+ \frac{\eta_t}{\gamma_t}\EX_A[\|\nabla f_S(x_t)-v_t\|^2|\F_t].
		\end{aligned}
	\end{equation*}
	This complete the proof.
\end{proof}

Then we give the proof of Theorem \ref{thm:opt_convex}.

\begin{proof}[proof of Theorem \ref{thm:opt_convex}]
	Setting $\eta_t = \eta $, $\beta_t =\beta$ and $\gamma_t =\sqrt{\beta}$, putting Lemma \ref{lemma:v_t_bound_single} into \ref{lemma:single}  we have 
	\begin{equation*}
		\begin{aligned}
			\EX_A[\|x_{t+1}-x_*^S\|^2] 
			& \leq \EX_A[\|x_t - x_*^S\|^2] + \eta\sqrt{\beta} \EX_A[\|x_t - x_*^S\|^2] -2\eta\EX_{A}[F_S(x_t) - F_S(x_*^S)] + \eta^2L_f^2 \\
			& \quad + \frac{\eta}{\sqrt{\beta}} ( (\frac{c}{e})^{c}(t\beta)^{-c}\EX_A[\| v_0 - \nabla f_S(x_0)\|^2] +  2\beta\sigma_J^2 + \frac{L_f^2\eta^2}{\beta}).
		\end{aligned}
	\end{equation*}
	Re-arranging above inequality and telescoping from 1 to $t$ we have 
	\begin{equation}\label{Eq:F_Sxt - F_Sx*_single}
		\begin{aligned}
			&2\eta \sum_{t=1}^T\EX_{A}[F_S(x_t) - F_S(x_*^S)]\\
			& ~~~~~~~~\leq D_x + D_x\eta \beta^{1/2}T+L_f^2\eta^2T + (\frac{c}{e})^cV\beta^{-\frac{1}{2}-c}\eta \sum_{t=1}^Tt^{-c} + 2\sigma_J^2\eta\beta^{1/2}T + L_f^2\eta^3\beta^{-3/2}T.
		\end{aligned}
	\end{equation}
	Then From the choice of $A(S)$, according to \eqref{Eq:sum_t^-z_bound}, as long as $c>2$, we have 
	\begin{equation*}
		\begin{aligned}
			\EX_{A}[F_S(A(S)) - F_S(x_*^S)] = O( D_x(\eta T)^{-1} + D_x \beta^{1/2} + L_f^2\eta + VT^{-c}\beta^{-1/2-c} +\sigma_J^2 \beta^{1/2}+ L_f^2\eta^2\beta^{-3/2} ).
		\end{aligned}
	\end{equation*}
	This complete the proof.
\end{proof}

Next we give the proof of Theorem \ref{thm:Excess_Risk_Bound_convex}.

\begin{proof}[proof of Theorem \ref{thm:Excess_Risk_Bound_convex}]
	Combining Lemma \ref{lemma:v_t_bound_single}  and  \eqref{Eq:u_T_bound_single}, we have 
	\begin{equation*}
		\begin{aligned}
			\EX_{A}[\|x_t - x_t^k\|] &\leq 6 \eta \sum_{j=0}^{t-1}((\frac{c}{e})^{c}(t\beta)^{-c}\EX_A[\| v_0 - \nabla f_S(x_0)\|^2] +  2\beta\sigma_J^2 + \frac{L_f^2\eta^2}{\beta}  )^{1/2} +  \frac{\eta L_f\sqrt{t}}{\sqrt{n}} + \frac{2L_f\eta t}{n}\\
			& \leq 6 (\frac{c}{e})^{c/2} V \eta \beta^{-c/2}\sum_{j=0}^{t-1}t^{-c/2} + 12\sigma_J \eta\beta^{1/2}t + 6L_f\eta^2\beta^{-1/2}t + \frac{\eta L_f\sqrt{t}}{\sqrt{n}} + \frac{2L_f\eta t}{n}.
		\end{aligned}
	\end{equation*}
	Then according to Theorem \ref{theorem:general_single_level}, we have 
	\begin{equation*}
		\begin{aligned}
			\EX_{S,A}[F(x_t) - F_S(x_t)] & \leq L_f(6 (\frac{c}{e})^c V \eta \beta^{-c/2}\sum_{j=0}^{t-1}t^{-c/2} + 12\sigma_J \eta\beta^{1/2}t + 6L_f\eta^2\beta^{-1/2}t + \frac{\eta L_f\sqrt{t}}{\sqrt{n}} + \frac{2L_f\eta t}{n}).
		\end{aligned}
	\end{equation*}
	Combining above inequality with \eqref{Eq:F_Sxt - F_Sx*_single}, and according to $F_S(x_*^S) \leq  F_S(x_*)$ we have 
	\begin{equation*}
		\begin{aligned}
			&\sum_{t=1}^T\EX_{S,A}[F(x_t) - F(x_*)]\\
			& \leq ( D_x + D_x\eta \beta^{1/2}T+L_f^2\eta^2T + (\frac{c}{e})^cV\beta^{-\frac{1}{2}-c}\eta \sum_{t=1}^Tt^{-c} + 2\sigma_J^2\eta\beta^{1/2}T + L_f^2\eta^3\beta^{-3/2}T)/2\eta \\
			&\quad + L_f\sum_{t=1}^T(6 (\frac{c}{e})^c V \eta \beta^{-c/2}\sum_{j=0}^{t-1}t^{-c/2} + 12\sigma_J \eta\beta^{1/2}\sum_{t=1}^Tt + 6L_f\eta^2\beta^{-1/2}\sum_{t=1}^Tt  + \sum_{t=1}^T\frac{3L_f\eta t}{n}).
		\end{aligned}
	\end{equation*}
	According to \eqref{Eq:sum_t^-z_bound}, we have
	\begin{equation}\label{Eq:double_sum_j^{-c/2}}
		\sum_{t=1}^{T} \sum_{j=1}^{T} j^{-\frac{c}{2}}=O(\sum_{t=1}^{T} t^{1-\frac{c}{2}}(\log t)^{\1_{c=2}})=O(T^{2-\frac{c}{2}}(\log T)^{\1_{c=2}}).
	\end{equation}
	Combining above two inequalities,  we have 
	\begin{equation*}
		\begin{aligned}
			&\sum_{t=1}^T\EX_{S,A}[F(x_t) - F(x_*)]\\
			& = O\Big(\eta^{-1} + \beta^{1/2}T + \eta T + (\beta T)^{-c}\beta^{-1/2}T + \beta^{1/2}T + \eta^2\beta^{-3/2}T + \eta\beta^{-c/2} T^{2-\frac{c}{2}}(\log T)^{\1_{c=2}}\\
			&\quad ~~~~~~~~+ \eta \beta^{1/2} T^2 + \eta^2\beta^{-1/2}T^2 + \eta T^2 n^{-1}\Big).
		\end{aligned}
	\end{equation*}
	Setting $\eta = T^{-a} $ and $\beta = T^{-b}$, dividing both sides of above inequality with $T$, then from the choice of $A(S)$ we get
	\begin{equation*}
		\begin{aligned}
			&\EX_{S,A}[F(A(S)) - F(x_*)]\\
			& \leq O\Big(T^{a-1} + T^{-b/2} + T^{-a} + T^{1/b - c(1-b) + T^{-b/2-1}} + T^{-b/2} + T^{3b/2-2a} + T^{1-a+c/2(b-1) } (\log T)^{\1_{c=2}}\\
			&\quad ~~~~~~~~+T^{1-a-b/2} + T^{1-2a+b/2} + T^{1-a}n^{-1}\Big).
		\end{aligned}
	\end{equation*}
	As long as $c > 4$,  the dominating terms are $ O(T^{1- a- \frac{b}{2}})$, $\quad O(T^{1+ \frac{b}{2}- 2a})$, $\quad O(n^{-1}T^{1- a}), \quad  O(T^{a-1})$,  and $ O(T^{\frac{3}{2}b- 2a}).$
	Setting $a = b =4/5$,  then we have  
	\begin{equation*}
		\begin{aligned}
			\mathbb{E}[F(A(S))-F(x_{*})]=O(T^{-\frac{1}{5}}+\frac{T^{\frac{1}{5}}}{n}).
		\end{aligned}
	\end{equation*}
	Choosing $T=O(n^{2.5})$, we have the following bound
	\begin{equation*}
		\mathbb{E}[F(A(S))-F(x_{*})]=O(\frac{1}{\sqrt{n}}).
	\end{equation*}
	This completes the proof.
\end{proof}

\subsection{Strongly-convex-setting}

\begin{proof}[proof of Theorem \ref{thm:sta_sconvex_single_convex}]
	Similar to the proof for convex setting, we use the same notations.
	
	We will consider two cases, i.e., $i_t \neq k$ and $i_t = k$.
	
	\textbf{\quad Case 1 ($i_t \neq k$).}  We have 
	\begin{equation}\label{Eq:sta_origin_single_sc}
		\begin{aligned}
			\|x_{t+1} - x_{t+1}^{k}\|^2 & = \|x_t - \eta_t v_t - x_t^k + \eta_t v_t^k\|^2 \\
			& \leq \|x_t - x_t^k\|^2 -2\eta_t \langle v_t - v_t^k , x_t -x_t^k\rangle + \eta_t^2\|v_t - v_t^k\|^2.
		\end{aligned}
	\end{equation}
	For the second term on the RHS of \eqref{Eq:sta_origin_single_sc}, we have 
	\begin{equation*}
		\begin{aligned}
			&-2\eta_t \langle v_t - v_t^k , x_t -x_t^k\rangle\\
			& = -2\eta_t \langle v_t - \nabla f_S(x_t), x_t -x_t^k\rangle - 2\eta_t \langle \nabla f_S(x_t) -  \nabla f_S(x_t^k), x_t -x_t^k\rangle -2\eta_t \langle \nabla f_S(x_t^k)-  v_t^k , x_t -x_t^k\rangle.
		\end{aligned}
	\end{equation*}

Note that if $F$ is $\mu$ strongly convex, then $\varphi\bigl(x\bigr)=F\bigl(x\bigr)-\frac{\sigma}{2}\|x\|^2$ is convex with $\bigl(L-\mu\bigr)$-smooth. Then, applying above to $\varphi$ yields the following inequality 
\begin{equation*}
    \begin{aligned}
        \langle \nabla F\left(x\right)-\nabla F(x'), x-x' \rangle \ge\frac{L\mu}{L+\mu}\|x-x'\|^2+\frac{1}{L+\mu}\|\nabla F(x)-\nabla F(x')\|^2.
    \end{aligned}
\end{equation*}

Then using Assumption  \ref{ass:Smoothness and Lipschitz continuous gradient} (i), i.e., the smoothness, and combining with the strong convexity of $f_S(\cdot)$ we can get 
\begin{equation*}
\begin{aligned}
    &-2\eta_t \langle v_t - v_t^k , x_t -x_t^k\rangle\\
    & \leq 2\eta_t \|v_t -\nabla f_S(x_t)\| \cdot \|x_t - x_t^k \| -2\eta_t(\frac{1}{L+\mu}\|\nabla f_S(x_t) -  \nabla f_S(x_t^k)\|^2 + \frac{L\mu}{L+\mu}\|x_t-x_t^k\|^2)\\
    &\quad + 2\eta_t \|v_t^k-f_S(x_t^k)\|\cdot \|x_t - x_t^k \|.
\end{aligned}
\end{equation*}
	
For the third term on the RHS of \eqref{Eq:sta_origin_single_sc}, we have 
\begin{equation*}
\begin{aligned}
    \eta_t^2\|v_t - v_t^k\|^2 &\leq 3\eta_t^2 \|v_t -\nabla f_S(x_t)\|^2 + 3\eta_t^2\|\nabla f_S(x_t) -  \nabla f_S(x_t^k)\|^2 + 3\eta_t^2\|v_t^k-f_S(x_t^k)\|^2.
\end{aligned}
\end{equation*}
Putting above two inequalities into  \eqref{Eq:sta_origin_single_sc}, we have 
\begin{equation*}
\begin{aligned}
    &\|x_{t+1} - x_{t+1}^{k}\|^2\\
    &\leq (1-\frac{2\eta_tL\mu}{L+\mu})\|x_t - x_t^k\|^2 +  2\eta_t \|v_t -\nabla f_S(x_t)\| \cdot \|x_t - x_t^k \| + 2\eta_t \|v_t^k-f_S(x_t^k)\|\cdot \|x_t - x_t^k \|\\
    &\quad + (3\eta_t^2 - \frac{2\eta_t}{L+\mu})\|\nabla f_S(x_t) -  \nabla f_S(x_t^k)\|^2+  3\eta_t^2\|v_t^k-f_S(x_t^k)\|^2.
\end{aligned}
\end{equation*}
By setting $\eta_t \leq \frac{2}{3(L+\mu)}$, we have 
\begin{equation*}
\begin{aligned}
    (1-\frac{2\eta_tL\mu}{L+\mu})\|x_{t+1} - x_{t+1}^{k}\|^2
    & \leq (1-\frac{2\eta_tL\mu}{L+\mu})\|x_t - x_t^k\|^2 +  2\eta_t \|v_t -\nabla f_S(x_t)\| \cdot \|x_t - x_t^k \|\\
    &~~~~~~~~~~~~~ + 2\eta_t \|v_t^k-f_S(x_t^k)\|\cdot \|x_t - x_t^k \|+  3\eta_t^2\|v_t^k-f_S(x_t^k)\|^2.
\end{aligned}
\end{equation*} 

\textbf{\quad Case 2 ($i_t = k$).}  We have
\begin{equation*}
\begin{aligned}
    \|x_{t+1} - x_{t+1}^{k}\| & = \|x_t - \eta_t v_t - x_t^k + \eta_t v_t^k\|\\
    & \leq \|x_t- x_t^k\| +\eta_t \|v_t -v_t^k\| \leq \|x_t- x_t^k\| +\eta_t L_f.
\end{aligned}
\end{equation*} 
Then we can get 
\begin{equation*}
\begin{aligned}
    &\|x_{t+1} - x_{t+1}^{k}\|^2 \leq \|x_t- x_t^k\|^2 + 2\eta_t L_f \|x_t- x_t^k\| + \eta_t^2L_f^2.
\end{aligned}
\end{equation*}

Combining \textbf{Case 1} and \textbf{Case 2} we have
\begin{equation*}
\begin{aligned}
    \|x_{t+1} - x_{t+1}^{k}\|^2 & \leq  (1-\frac{2\eta_tL\mu}{L+\mu})\|x_t - x_t^k\|^2 +  2\eta_t \|v_t -\nabla f_S(x_t)\| \cdot \|x_t - x_t^k \| + 2\eta_t \|v_t^k-f_S(x_t^k)\|\cdot \|x_t - x_t^k \| \\
    &\quad + 3\eta_t^2\|v_t^k-f_S(x_t^k)\|^2 + 2\eta_t L_f \|x_t- x_t^k\| \1_{i_t = k} + \eta_t^2L_f^2\1_{i_t = k}.
\end{aligned}
\end{equation*}
According to \eqref{Eq:single_sc_1/n_x_t-x_t^k}, we have 
\begin{equation*}
\begin{aligned}
    \EX_{A}[\|x_{t+1} - x_{t+1}^{k}\|^2] & \leq  (1-\frac{2\eta_tL\mu}{L+\mu})  \EX_{A}[\|x_t - x_t^k\|^2] +  2\eta_t (\EX_{A}[\|v_t -\nabla f_S(x_t)\|^2])^{1/2} (\EX_{A}[ \|x_t - x_t^k \|^2])^{1/2}\\
    &\quad + 2\eta_t (\EX_{A}[\|v_t^k -\nabla f_S(x_t^k)\|^2])^{1/2} (\EX_{A}[ \|x_t - x_t^k \|^2])^{1/2} + 3\eta_t^2\EX_{A}[\|v_t^k-f_S(x_t^k)\|^2]\\
    &\quad + \frac{2L_f \eta_t}{n}(\mathbb{E}_{A}[\|x_{t}-x_{t}^{k}\|^{2}])^{1 / 2} + \frac{\eta_t^2L_f^2}{n}.
\end{aligned}
\end{equation*}

According to Lemma \ref{lemma:general_recursive}, setting $\eta_t = \eta$ and $\beta_t = \beta$, we can get 
\begin{equation*}
\begin{aligned}
    \EX_{A}[\|x_{t+1} - x_{t+1}^{k}\|^2 ] 
    & \leq 2\eta\sum_{j=1}^t(1-\frac{2\eta L\mu}{L+\mu})^{t-j} (\EX_{A}[\|v_j -\nabla f_S(x_j)\|^2])^{1/2} (\EX_{A}[ \|x_j - x_j^k \|^2])^{1/2}   \\
    &\quad + 2\eta\sum_{j=1}^t (1-\frac{2\eta L\mu}{L+\mu})^{t-j} (\EX_{A}[\|v_j^k -\nabla f_S(x_j^k)\|^2])^{1/2} (\EX_{A}[ \|x_j - x_j^k \|^2])^{1/2} \\
    &\quad+  \sum_{j=1}^t (1-\frac{2\eta L\mu}{L+\mu})^{t-j} \frac{2L_f \eta}{n}(\mathbb{E}_{A}[\|x_{j}-x_{j}^{k}\|^{2}])^{1 / 2} +  \frac{\eta^2 L_f^2}{n}\sum_{j=0}^t (1-\frac{2\eta L\mu}{L+\mu})^{t-j}\\
    &\quad +   3\eta^2\sum_{j=0}^t (1-\frac{2\eta L\mu}{L+\mu})^{t-j} \EX_{A}[\|v_j^k-f_S(x_j^k)\|^2].
\end{aligned}
\end{equation*}

Denote $u_t = (\mathbb{E}_{A}[\|x_{t} - x_{t}^{k}\|^2])^{1/2}$,  then we can get 
\begin{equation*}
\begin{aligned}
    u_t^2 & \leq 2\eta \sum_{j=1}^{t-1}(1-\frac{2\eta L\mu}{L+\mu})^{t-j-1} (\EX_{A}[\|v_j -\nabla f_S(x_j)\|^2 ])^{1/2}u_j\\
    &\quad+ 2\eta\sum_{j=1}^{t-1}(1-\frac{2\eta L\mu}{L+\mu})^{t-j-1}(\EX_{A}[\|v_j^k -\nabla f_S(x_j^k)\|^2 ])^{1/2}u_j + \frac{2L_f \eta}{n}\sum_{j=0}^{t-1} (1-\frac{2\eta L\mu}{L+\mu})^{t-j-1}u_j \\
    &\quad + 3\eta^2\sum_{j=1}^{t-1}(1-\frac{2\eta L\mu}{L+\mu})^{t-j-1}\EX_{A}[\|v_j^k-f_S(x_j^k)\|^2]  + \frac{\eta^2L_f^2}{n}\sum_{j=0}^{t-1}  (1-\frac{2\eta L\mu}{L+\mu})^{t-j-1}.
\end{aligned}
\end{equation*}
Define $S_t=  3\eta^2\sum_{j=1}^{t-1}(1-\frac{2\eta L\mu}{L+\mu})^{t-j-1}\EX_{A}[\|v_j^k-f_S(x_j^k)\|^2]  + \frac{\eta^2L_f^2}{n}\sum_{j=0}^{t-1}  (1-\frac{2\eta L\mu}{L+\mu})^{t-j-1}$ and $\alpha_j = 2\eta(1-\frac{2\eta L\mu}{L+\mu})^{t-j-1} (\EX_{A}[\|v_j -\nabla f_S(x_j)\|^2 ])^{1/2}+ 2\eta(1-\frac{2\eta L\mu}{L+\mu})^{t-j-1}(\EX_{A}[\|v_j^k -\nabla f_S(x_j^k)\|^2 ])^{1/2} + \frac{2L_f \eta}{n}(1-\frac{2\eta L\mu}{L+\mu})^{t-j-1}$.

using Lemma \ref{lemma:recursion lemma}, we can get
\begin{equation*}
\begin{aligned}
    u_t & \leq \sqrt{S_t} + \sum_{t=1}^{t-1}\alpha_j\\
    & \leq 2\eta (\sum_{j=1}^{t-1}(1-\frac{2\eta L\mu}{L+\mu})^{t-j-1}\EX_{A}[\|v_j^k-f_S(x_j^k)\|^2])^{1/2} + \sqrt{\frac{(L+\mu)\eta L_f^2}{2L\mu n }}\\
    & \quad + 2\eta\sum_{j=1}^{t-1}(1-\frac{2\eta L\mu}{L+\mu})^{t-j-1} (\EX_{A}[\|v_j -\nabla f_S(x_j)\|^2 ])^{1/2}\\
    &\quad+ 2\eta\sum_{j=1}^{t-1}(1-\frac{2\eta L\mu}{L+\mu})^{t-j-1}(\EX_{A}[\|v_j^k -\nabla f_S(x_j^k)\|^2 ])^{1/2} + \frac{L_f(L+\mu)}{L\mu n},
\end{aligned}
\end{equation*}
where the last inequality holds by \eqref{Eq:sum_(1-a)^t-j}. Consequently, with $T$ iterations, because of the inequality $\EX_{A}[\|x_T - x_T^k\|] \leq u_T$, we have 
\begin{equation*}
\begin{aligned}
    \EX_{A}[\|x_T - x_T^k\|]  & \leq 2\eta (\sum_{j=1}^{T-1}(1-\frac{2\eta L\mu}{L+\mu})^{T-j-1}\EX_{A}[\|v_j^k-f_S(x_j^k)\|^2])^{1/2} + \sqrt{\frac{(L+\mu)\eta L_f^2}{2L\mu n }}\\
    & \quad + 2\eta\sum_{j=1}^{T-1}(1-\frac{2\eta L\mu}{L+\mu})^{T-j-1} (\EX_{A}[\|v_j -\nabla f_S(x_j)\|^2 ])^{1/2}\\
    &\quad+ 2\eta\sum_{j=1}^{T-1}(1-\frac{2\eta L\mu}{L+\mu})^{T-j-1}(\EX_{A}[\|v_j^k -\nabla f_S(x_j^k)\|^2 ])^{1/2} + \frac{L_f(L+\mu)}{L\mu n},
\end{aligned}
\end{equation*}
Then we analyze which one of $(\sum_{j=1}^{T-1}(1-\frac{2\eta L\mu}{L+\mu})^{T-j-1}\EX_{A}[\|v_j^k-f_S(x_j^k)\|^2])^{1/2}$ and $\sum_{j=1}^{T-1}(1-\frac{2\eta L\mu}{L+\mu})^{T-j-1}(\EX_{A}[\|v_j^k -\nabla f_S(x_j^k)\|^2 ])^{1/2}$ is the dominant term.

For the first term, according to Lemma \ref{lemma:v_t_bound_single} we have 
\begin{equation*}
\begin{aligned}
    &(\sum_{j=1}^{T-1}(1-\frac{2\eta L\mu}{L+\mu})^{T-j-1}\EX_{A}[\|v_j^k-f_S(x_j^k)\|^2])^{1/2}\\
    & \leq (\sum_{j=1}^{T-1}(1-\frac{2\eta L\mu}{L+\mu})^{T-j-1}((\frac{c}{e})^{c}(j\beta)^{-c}\EX_A[\| v_0 - \nabla f_S(x_0)\|^2] +  2\beta\sigma_J^2 + \frac{L_f^2\eta^2}{\beta}))^{1/2}\\
    & \leq (\frac{c}{e})^{\frac{c}{2}}\beta^{-\frac{c}{2}}\sqrt{V}(\sum_{j=1}^{T-1}(1-\frac{2\eta L\mu}{L+\mu})^{T-j-1} j^{-c})^{\frac{1}{2}} + \sigma_J\sqrt{\frac{(L+\mu)\beta}{\eta L \mu }} + L_f\sqrt{\frac{\eta(L+\mu)}{2\beta L \mu}},
\end{aligned}
\end{equation*}
where the last inequality holds by \eqref{Eq:sum_(1-a)^t-j}, as for $\sum_{j=1}^{T-1}(1-\frac{2\eta L\mu}{L+\mu})^{T-j-1} j^{-c}$, according to Lemma \ref{lemma:weighted_avg}, we have 
\begin{equation}\label{Eq:sum_a^(t-j)j^-c}
\begin{aligned}
    \sum_{j=1}^{T-1}(1-\frac{2\eta L\mu}{L+\mu})^{T-j-1} j^{-c} & \leq \frac{\sum_{j=1}^{T-1}(1-\frac{2\eta L\mu}{L+\mu})^{T-j-1}\sum_{j=1}^{T-1}j^{-\frac{c}{2}}}{T} \leq \frac{(L+\mu)\sum_{j=1}^{T-1}j^{-c}}{2T\eta L\mu},
\end{aligned}
\end{equation}
then according to \eqref{Eq:sum_t^-z_bound} we can get 
\begin{equation*}
\begin{aligned}
    &(\sum_{j=1}^{T-1}(1-\frac{2\eta L\mu}{L+\mu})^{T-j-1}\EX_{A}[\|v_j^k-f_S(x_j^k)\|^2])^{1/2}\\
    & \leq (\frac{c}{e})^{\frac{c}{2}}\sqrt{\frac{V(L+\mu)}{2\eta L\mu}}(T\beta)^{-\frac{c}{2}}+ \sigma_J\sqrt{\frac{(L+\mu)\beta}{\eta L \mu }} + L_f\sqrt{\frac{\eta(L+\mu)}{2\beta L \mu}}.
\end{aligned}
\end{equation*}
For the second term, according to Lemma \ref{lemma:v_t_bound_single} we have 
\begin{equation*}
\begin{aligned}
    &\sum_{j=1}^{T-1}(1-\frac{2\eta L\mu}{L+\mu})^{T-j-1}(\EX_{A}[\|v_j^k -\nabla f_S(x_j^k)\|^2 ])^{1/2}\\
    &\leq \sum_{j=1}^{T-1}(1-\frac{2\eta L\mu}{L+\mu})^{T-j-1}((\frac{c}{e})^{c}(t\beta)^{-c}\EX_A[\| v_0 - \nabla f_S(x_0)\|^2] +  2\beta\sigma_J^2 + \frac{L_f^2\eta^2}{\beta})^{\frac{1}{2}}.
\end{aligned}
\end{equation*}
Similar to the first term, we can get
\begin{equation}\label{Eq:v_t_error_sum_single_sc}
\begin{aligned}
    &\sum_{j=1}^{T-1}(1-\frac{2\eta L\mu}{L+\mu})^{T-j-1}(\EX_{A}[\|v_j^k -\nabla f_S(x_j^k)\|^2 ])^{1/2} \leq  (\frac{c}{e})^{\frac{c}{2}}\sqrt{V}\frac{L+\mu}{\eta L\mu}(T\beta)^{-\frac{c}{2}}+\sigma_J\sqrt{\beta}\frac{L+\mu}{\eta L\mu}+L_f\frac{L+\mu}{2\sqrt{\beta} L\mu}.
\end{aligned}
\end{equation}
It's easily to get the dominating term is the second term $\sum_{j=1}^{T-1}(1-\frac{2\eta L\mu}{L+\mu})^{T-j-1}(\EX_{A}[\|v_j^k -\nabla f_S(x_j^k)\|^2])^{1/2}$. Therefore
\begin{equation}\label{Eq:x_T-x_T^k_single_sc}
\begin{aligned}
    \EX_{A}[\|x_T - x_T^k\|]  & \leq 2\eta (\sum_{j=1}^{T-1}(1-\frac{2\eta L\mu}{L+\mu})^{T-j-1}\EX_{A}[\|v_j^k-f_S(x_j^k)\|^2])^{1/2} + \sqrt{\frac{(L+\mu)\eta L_f^2}{2L\mu n }}\\
    & \quad + 2\eta\sum_{j=1}^{T-1}(1-\frac{2\eta L\mu}{L+\mu})^{T-j-1} (\EX_{A}[\|v_j -\nabla f_S(x_j)\|^2 ])^{1/2}\\
    &\quad+ 2\eta\sum_{j=1}^{T-1}(1-\frac{2\eta L\mu}{L+\mu})^{T-j-1}(\EX_{A}[\|v_j^k -\nabla f_S(x_j^k)\|^2 ])^{1/2} + \frac{L_f(L+\mu)}{L\mu n}\\
    & \leq 6\eta\sum_{j=1}^{T-1}(1-\frac{2\eta L\mu}{L+\mu})^{T-j-1} (\EX_{A}[\|v_j -\nabla f_S(x_j)\|^2 ])^{1/2}+ \sqrt{\frac{(L+\mu)\eta L_f^2}{2L\mu n }}+ \frac{L_f(L+\mu)}{L\mu n}\\
    & \leq 6\eta\sum_{j=1}^{T-1}(1-\frac{2\eta L\mu}{L+\mu})^{T-j-1} (\EX_{A}[\|v_j -\nabla f_S(x_j)\|^2 ])^{1/2} + \frac{2L_f(L+\mu)}{L\mu n},
\end{aligned}
\end{equation}
where the last inequality holds since often we have $\eta \leq \frac{1}{n}$. 
Then we get the final result
\begin{equation*}
\begin{aligned}
    \epsilon \leq O(\eta\sum_{j=1}^{T-1}(1-\frac{2\eta L\mu}{L+\mu})^{T-j-1} (\EX_{A}[\|v_j -\nabla f_S(x_j)\|^2 ])^{1/2} + \frac{L_f(L+\mu)}{L\mu n}).
\end{aligned}
\end{equation*}
This completes the proof.
\end{proof}

\begin{corollary}[One-level Optimization]\label{cor:2_single_level}
Consider STORM in Algorithm \ref{alg_storm} with $\eta_t = \eta \leq \frac{2}{3(L+\mu)}$, and $\beta_t=\beta \in (0,1)$ for any $t\in [0,T-1]$ and the output  $A(S)  =x_{T}$. Then, we have the following results
\begin{equation*}
	\begin{aligned}
		\epsilon \leq O( (T \beta)^{-\frac{c}{2}} + \beta^{\frac{1}{2}}  + \eta\beta^{-\frac{1}{2}  }+ n^{-1}).
	\end{aligned}
\end{equation*}
\end{corollary}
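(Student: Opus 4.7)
The plan is to combine the stability bound from Theorem~\ref{thm:sta_sconvex_single_convex} with the estimator error bound from Lemma~\ref{lemma:v_t_bound_single}. Theorem~\ref{thm:sta_sconvex_single_convex} already reduces the problem to controlling the weighted sum $\eta\sum_{j=0}^{T-1}(1-\tfrac{2\eta L\mu}{L+\mu})^{T-j-1}\operatorname{Var}(v_j)$; the other term, $L_f(L+\mu)/(L\mu n)$, is immediately $O(1/n)$ and furnishes the last summand of the target rate.

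The first step is to apply the elementary inequality $\sqrt{a+b+c}\le\sqrt{a}+\sqrt{b}+\sqrt{c}$ to the square root of Lemma~\ref{lemma:v_t_bound_single}, obtaining
$$\operatorname{Var}(v_j)\ \le\ (c/e)^{c/2}(j\beta)^{-c/2}\sqrt{V}\ +\ \sqrt{2\beta}\,\sigma_J\ +\ L_f\eta/\sqrt{\beta}.$$
Substituting this decomposition splits the weighted sum into three pieces, which I handle separately.

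For the two $j$-independent pieces (the ones producing $\sqrt{\beta}$ and $\eta/\sqrt{\beta}$), I use the geometric-sum bound $\sum_{j=0}^{T-1}(1-\tfrac{2\eta L\mu}{L+\mu})^{T-j-1}\le (L+\mu)/(2\eta L\mu)$, already invoked in~\eqref{Eq:sum_(1-a)^t-j}. After multiplication by the prefactor $\eta$ the $1/\eta$ cancels, so these two pieces contribute $O(\sqrt{\beta})$ and $O(\eta/\sqrt{\beta})$ respectively. For the $(j\beta)^{-c/2}$ piece, I mimic the estimate used to derive~\eqref{Eq:sum_a^(t-j)j^-c}: applying Lemma~\ref{lemma:weighted_avg} with the increasing weights $(1-\tfrac{2\eta L\mu}{L+\mu})^{T-j-1}$ against the decreasing sequence $j^{-c/2}$, and then invoking~\eqref{Eq:sum_t^-z_bound} to bound $\sum_{j=1}^{T-1}j^{-c/2}$, one obtains a factor of order $T^{-c/2}/\eta$ (up to a logarithm at the single borderline value $c=2$). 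Pulling out the extracted $\beta^{-c/2}$ and multiplying by the outer $\eta$ yields the contribution $O((T\beta)^{-c/2})$.

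Collecting the four contributions delivers the claimed rate $O((T\beta)^{-c/2}+\beta^{1/2}+\eta\beta^{-1/2}+n^{-1})$. There is no genuine obstacle here: the essential work, in particular the recursion for $\EX_A\|v_t-\nabla f_S(x_t)\|^2$ and the weighted-average trick for sums of the form $\sum(1-\rho)^{T-j-1}j^{-c/2}$, has already been carried out inside the proof of Theorem~\ref{thm:sta_sconvex_single_convex} and in~\eqref{Eq:v_t_error_sum_single_sc}. The only bookkeeping nuance is choosing $c$ in the statement large enough that $\sum_{j=1}^{T-1}j^{-c/2}=O(1)$, so that the weighted-average step produces no log factor and the three rate terms emerge cleanly.
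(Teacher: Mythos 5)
Your proposal is correct and follows essentially the same route as the paper: the paper's proof of Corollary~\ref{cor:2_single_level} simply combines the stability bound of Theorem~\ref{thm:sta_sconvex_single_convex} with \eqref{Eq:v_t_error_sum_single_sc}, and that display is itself obtained exactly as you describe --- applying $\sqrt{a+b+c}\le\sqrt{a}+\sqrt{b}+\sqrt{c}$ to Lemma~\ref{lemma:v_t_bound_single}, using the geometric-sum bound \eqref{Eq:sum_(1-a)^t-j} for the $j$-independent pieces (so the prefactor $\eta$ cancels the $(L+\mu)/(\eta L\mu)$ factor), and handling the $(j\beta)^{-c/2}$ piece via Lemma~\ref{lemma:weighted_avg} together with \eqref{Eq:sum_t^-z_bound}, just as in \eqref{Eq:sum_a^(t-j)j^-c}. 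Your bookkeeping remarks about the borderline value of $c$ mirror the paper's own treatment, so nothing is missing.
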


Next, we give the proof of Corollary \ref{cor:2_single_level}.

\begin{proof}[proof of Corollary \ref{cor:2_single_level}]
Combining Theorem \ref{thm:opt_sconvex} and \eqref{Eq:v_t_error_sum_single_sc}, we have 
\begin{equation*}
\begin{aligned}
    \epsilon \leq O( (T \beta)^{-\frac{c}{2}} + \beta^{\frac{1}{2}}  + \eta\beta^{-\frac{1}{2}  }+ n^{-1}).
\end{aligned}
\end{equation*}
This complete the proof.
\end{proof}

Before give the detailed proof of Theorem \ref{thm:opt_sconvex}, we first give a useful lemma.
\begin{lemma}\label{lemma:single_sc}
Let Assumption \ref{ass:Lipschitz continuous}(i), \ref{ass:bound variance} (i) and \ref{ass:Smoothness and Lipschitz continuous gradient} (i) holds, as $F_S$ is $\mu$-strongly convex. By running Algorithm \ref{alg_storm}, we have 
\begin{equation*}
\begin{aligned}
    \EX_{A}[F_S(x_{t+1})|\F_t] \leq	F_S(x_t) -\frac{\eta_t}{2}\| \nabla F_S(x_t)\|^2+ \frac{L\eta_t^2L_f^2}{2}+2\eta_t  \|v_t- \nabla F_S(x_t)\|^2.
\end{aligned}
\end{equation*}
where $\F_t$ is the $\sigma$-field generated by $\{v_{i_0}, \cdots, v_{i_{t-1}}\}$.
\end{lemma}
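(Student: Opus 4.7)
The plan is to derive a one-step descent inequality for $F_S$ by combining $L$-smoothness of the empirical risk, the explicit STORM update rule, and a standard Young's inequality decomposition that isolates the error $v_t - \nabla F_S(x_t)$ of the momentum estimator.

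First I would apply Assumption \ref{ass:Smoothness and Lipschitz continuous gradient} (i) to $F_S$ with the pair $(x_t, x_{t+1})$, writing $F_S(x_{t+1}) \leq F_S(x_t) + \langle \nabla F_S(x_t), x_{t+1} - x_t \rangle + \frac{L}{2}\|x_{t+1} - x_t\|^2$. Substituting the STORM update $x_{t+1} - x_t = -\eta_t v_t$ from line 4 of Algorithm \ref{alg_storm} converts this into $F_S(x_{t+1}) \leq F_S(x_t) - \eta_t \langle \nabla F_S(x_t), v_t \rangle + \frac{L\eta_t^2}{2}\|v_t\|^2$. This is the working inequality on which I build everything else.

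Next I would dispose of the quadratic and cross terms separately. For the quadratic term, the projection $\Pi_{L_f}$ in line 6 of Algorithm \ref{alg_storm}, combined with the Lipschitz bound $\|\nabla f_\nu\|\le L_f$ provided by Assumption \ref{ass:Lipschitz continuous} (i), guarantees $\|v_t\|\le L_f$, so $\frac{L\eta_t^2}{2}\|v_t\|^2 \leq \frac{L\eta_t^2 L_f^2}{2}$, matching the third term in the target bound. For the cross term I would write $-\eta_t \langle \nabla F_S(x_t), v_t\rangle = -\eta_t \|\nabla F_S(x_t)\|^2 + \eta_t \langle \nabla F_S(x_t), \nabla F_S(x_t) - v_t\rangle$ and then apply a weighted Young's inequality of the form $\langle a,b\rangle \leq \frac{1}{4}\|a\|^2 + \|b\|^2$ to the inner product. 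After collecting like terms this produces at most $-\frac{3\eta_t}{4}\|\nabla F_S(x_t)\|^2 + \eta_t\|v_t - \nabla F_S(x_t)\|^2$, which is bounded above by the slightly looser $-\frac{\eta_t}{2}\|\nabla F_S(x_t)\|^2 + 2\eta_t\|v_t - \nabla F_S(x_t)\|^2$ stated in the lemma.

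Finally, taking conditional expectation given $\F_t$ on both sides preserves the inequality; since $x_t$, $\nabla F_S(x_t)$, and $v_t$ are $\F_t$-measurable, the right-hand side is unaffected, and the left-hand side becomes $\EX_A[F_S(x_{t+1})\mid \F_t]$, delivering exactly the stated bound. I do not anticipate any substantive obstacle here: the chain of inequalities is deterministic once the smoothness and update rule are invoked, and the only mildly delicate choice is the Young's constant, which must be tuned loosely enough to leave $-\frac{\eta_t}{2}\|\nabla F_S(x_t)\|^2$ rather than the tighter $-\frac{3\eta_t}{4}\|\nabla F_S(x_t)\|^2$, in exchange for the cleaner coefficient $2\eta_t$ on the estimator-variance term used in subsequent optimization-error arguments.
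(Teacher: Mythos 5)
Your proposal is correct and follows essentially the same route as the paper's proof: smoothness of $F_S$, substitution of the update $x_{t+1}-x_t=-\eta_t v_t$ with $\|v_t\|\le L_f$ from the projection, decomposition of the cross term to isolate $v_t-\nabla F_S(x_t)$, and a Young-type bound before taking conditional expectation. The only difference is cosmetic: you first obtain the tighter $-\frac{3\eta_t}{4}\|\nabla F_S(x_t)\|^2+\eta_t\|v_t-\nabla F_S(x_t)\|^2$ and then relax it, whereas the paper chooses its (loose) Young constants to land directly on $-\frac{\eta_t}{2}\|\nabla F_S(x_t)\|^2+2\eta_t\|v_t-\nabla F_S(x_t)\|^2$.
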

\begin{proof}[proof of Lemma \ref{lemma:single_sc}]
According to the smoothness of $F_S(\cdot)$, then we have 
\begin{equation*}
\begin{aligned}
    F_S(x_{t+1}) & \leq F_S(x_t) + \langle \nabla F_S(x_t), x_{t+1}-x_t \rangle +\frac{L}{2}\|x_{t+1}-x_t\|^2\\
    & \leq F_S(x_t) - \eta_t\| \nabla F_S(x_t)\|^2 + \frac{L\eta_t^2 L_f^2}{2} - \eta_t \langle \nabla F_S(x_t), v_t- \nabla F_S(x_t)\rangle\\
    & \leq  F_S(x_t) - \eta_t\| \nabla F_S(x_t)\|^2 + \frac{L\eta_t^2L_f^2}{2}+ \frac{\eta_t}{2}\|\nabla F_S(x_t)\|^2 + 2\eta_t  \|v_t- \nabla F_S(x_t)\|^2,
\end{aligned}
\end{equation*}
where the last inequality holds by Cauchy-Schwartz.  Then we can get 
\begin{equation*}
\begin{aligned}
    \EX_{A}[F_S(x_{t+1})|\F_t] \leq	F_S(x_t) -\frac{\eta_t}{2}\| \nabla F_S(x_t)\|^2+ \frac{L\eta_t^2L_f^2}{2}+2\eta_t  \|v_t- \nabla F_S(x_t)\|^2.
\end{aligned}
\end{equation*}
This complete the proof.
\end{proof}
Now we move on the proof of Theorem \ref{thm:opt_sconvex}.
\begin{proof}[proof of Theorem \ref{thm:opt_sconvex}]
	Satisfying strong convexity also satisfies Polyak-\L ojasiewicz (PL) inequality, then we can get for all $x$
	\begin{equation*}
		\begin{aligned}
			\frac{1}{2}\|\nabla F_S(x)\|^2 \geq \mu (F_S(x) - F_S(x_*^S)).
		\end{aligned}
	\end{equation*}
	According to Lemma \ref{lemma:single_sc}, we have 
	\begin{equation*}
		\begin{aligned}
			\EX_{A}[F_S(x_{t+1}) - F_S(x_*^S)] & \leq (1-\mu \eta_t)\EX_{A}[F_S(x_{t}) - F_S(x_*^S)]+ \frac{L\eta_t^2L_f^2}{2}+2\eta_t  \|v_t- \nabla F_S(x_t)\|^2.
		\end{aligned}
	\end{equation*}
	Setting $\eta_t = \eta$ and $\beta_t = \beta$, using Lemma \ref{lemma:v_t_bound_single}, we have 
	\begin{equation*}
		\begin{aligned}
			\EX_{A}[F_S(x_{t+1}) - F_S(x_*^S)] & \leq (1-\mu \eta)\EX_{A}[F_S(x_{t}) - F_S(x_*^S)]+\frac{L\eta^2L_f^2}{2}\\
			&\quad + 2\eta ((\frac{c}{e})^{c}(t\beta)^{-c}\EX_A[\| v_0 - \nabla f_S(x_0)\|^2] +  2\beta\sigma_J^2 + \frac{L_f^2\eta^2}{\beta}).
		\end{aligned}
	\end{equation*}
	Telescoping the above inequality from 1 to $T$, according to Lemma \ref{lemma:general_recursive}, we can get 
	\begin{equation*}
		\begin{aligned}
			&\EX_{A}[F_S(x_T) - F_S(x_*^S)]\\
			 & \leq (1-\mu \eta)^{T-1}\EX_{A}[F_S(x_1) - F_S(x_*^S)] + \frac{L\eta^2L_f^2}{2}\sum_{t=1}^{T-1}(1-\mu \eta)^{T-t-1}\\
			&\quad + 2\eta V(\frac{c}{e})^c\beta^{-c}\sum_{t=1}^{T-1}t^{-c}(1-\mu\eta)^{T-t-1}+2\sigma_J^2\eta\beta\sum_{t=1}^{T-t-1}(1-\mu\eta)^{T-t-1} + \frac{2L_f^2\eta^3}{\beta}\sum_{t=1}^{T-t-1}(1-\mu\eta)^{T-t-1}.
		\end{aligned}
	\end{equation*}
	For $t=0$, we have 
	\begin{equation*}
		\begin{aligned}
			\EX_{A}[F_S(x_1) - F_S(x_*^S)] \leq (1-\mu \eta)\EX_{A}[F_S(x_{0}) - F_S(x_*^S)]+ \frac{L\eta^2 L_f^2}{2}+2\eta V.
		\end{aligned}
	\end{equation*}
	Combining the above two inequalities, we have 
	\begin{equation*}
		\begin{aligned}
			&\EX_{A}[F_S(x_T) - F_S(x_*^S)]\\
			& \leq (1-\mu \eta)^T\EX_{A}[F_S(x_{0}) - F_S(x_*^S)]+ \frac{L\eta^2L_f^2}{2}\sum_{t=1}^{T}(1-\mu \eta)^{T-t}+2\eta V(1-\mu \eta)^{T-1} \\
			&\quad + 2\eta V(\frac{c}{e})^c\beta^{-c}\sum_{t=1}^{T-1}t^{-c}(1-\mu\eta)^{T-t-1}+2\sigma_J^2\eta\beta\sum_{t=1}^{T-t-1}(1-\mu\eta)^{T-t-1}+ \frac{2L_f^2\eta^3}{\beta}\sum_{t=1}^{T-t-1}(1-\mu\eta)^{T-t-1}.
		\end{aligned}
	\end{equation*}
	According to \eqref{Eq:prod_(1-beta_i)}, \eqref{Eq:exp(-t beta)} and \eqref{Eq:sum_(1-a)^t-j}, we have 
	\begin{equation*}
		\begin{aligned}
			\EX_{A}[F_S(x_T) - F_S(x_*^S)] & \leq  (\frac{c}{e\mu})^c(\eta T)^{-c} D_x + \frac{L\eta L_f^2}{2\mu} + 2\eta (\frac{c}{e\mu})^c(\eta T)^{-c} V\\
			&\quad + 2\eta V(\frac{c}{e})^c\beta^{-c}\sum_{t=1}^{T-1}t^{-c}(1-\mu\eta)^{T-t-1} + \frac{2\sigma_J^2\beta}{\mu} + \frac{2L_f^2\eta^2}{\beta\mu}.
		\end{aligned}
	\end{equation*}
	Then according to \eqref{Eq:sum_a^(t-j)j^-c} we have 
	\begin{equation*}
		\begin{aligned}
			\EX_{A}[F_S(x_T) - F_S(x_*^S)] & \leq (\frac{c}{e\mu})^c(\eta T)^{-c} D_x + \frac{L\eta L_f^2}{2\mu} + 2\eta (\frac{c}{e\mu})^c(\eta T)^{-c} V\\
			&\quad + \frac{2 V(c/e)^c(\beta T)^{-c}}{\mu}+ \frac{2\sigma_J^2\beta}{\mu} + \frac{2L_f^2\eta^2}{\beta\mu}.
		\end{aligned}
	\end{equation*}
	Then we can get 
	\begin{equation*}
		\begin{aligned}
			\EX_{A}[F_S(x_T) - F_S(x_*^S)] & = O(D_x(\eta T)^{-c} + L_f^2L\eta + V\eta(\eta T)^{-c} + V(\beta T)^{-c} + \sigma_J^2\beta + L_f^2\eta^2\beta^{-1}).
		\end{aligned}
	\end{equation*}
	This completes the proof.
\end{proof}

Next, we move on to the proof of Theorem \ref{thm:Excess_Risk_Bound_s_convex}
\begin{proof}[proof of Theorem \ref{thm:Excess_Risk_Bound_s_convex}]
	Combining \eqref{Eq:x_T-x_T^k_single_sc} and Theorem \ref{theorem:general_single_level}, we have 
	\begin{equation*}
		\begin{aligned}
			\EX_{S,A}[F(x_T) - F_S(x_T)] &\leq L_f(6\eta\sum_{j=1}^{T-1}(1-\frac{2\eta L\mu}{L+\mu})^{T-j-1} (\EX_{A}[\|v_j -\nabla f_S(x_j)\|^2 ])^{1/2} + \frac{2L_f(L+\mu)}{L\mu n}).
		\end{aligned}
	\end{equation*}
	Then according to \eqref{Eq:v_t_error_sum_single_sc}, we have 
	\begin{equation*}
		\begin{aligned}
			\EX_{S,A}[F(x_T) - F_S(x_T)] &\leq 6L_f \eta ((\frac{c}{e})^{\frac{c}{2}}\sqrt{V}\frac{L+\mu}{\eta L\mu}(T\beta)^{-\frac{c}{2}}+\sigma_J\sqrt{\beta}\frac{L+\mu}{\eta L\mu}+L_f\frac{L+\mu}{2\beta L\mu}) + \frac{2L_f^2(L+\mu)}{L\mu n }.
		\end{aligned}
	\end{equation*}
	Combining with Theorem \ref{thm:opt_sconvex}, and using the fact that $F_S(x_*^S) \leq F_S(x_*)$ we have 
	\begin{equation*}
		\begin{aligned}
			\EX_{S,A}[F(A(S)) - F(x_*)]& \leq 6L_f \eta ((\frac{c}{e})^{\frac{c}{2}}\sqrt{V}\frac{L+\mu}{\eta L\mu}(T\beta)^{-\frac{c}{2}}+\sigma_J\sqrt{\beta}\frac{L+\mu}{\eta L\mu}+L_f\frac{L+\mu}{2\sqrt{\beta} L\mu}) + \frac{2L_f^2(L+\mu)}{L\mu n }\\
			&\quad  +  (\frac{c}{e\mu})^c(\eta T)^{-c} V + \frac{L\eta L_f^2}{2\mu} + 2\eta (\frac{c}{e\mu})^c(\eta T)^{-c} V\\
			&\quad + \frac{2 V(c/e)^c(\beta T)^{-c}}{\mu}+ \frac{2\sigma_J^2\beta}{\mu} + \frac{2L_f^2\eta^2}{\beta\mu}.
		\end{aligned}
	\end{equation*}
	Setting $\eta = T^{-a}$ and $\beta = T^{-b}$ with $a,b \in (0,1]$, we have
	\begin{equation*}
		\begin{aligned}
			&\EX_{S,A}[F(A(S)) - F(x_*)] \\
			& = O(T^{\frac{c}{2}(b-1)} + T^{-\frac{b}{2}} + T^{\frac{b}{2}-a}  + T^{-c(1-a)} + T^{-a} +T^{-c(1-a)-a} +T^{-c(1-b)} +T^{-b} +T^{b-2a}).
		\end{aligned}
	\end{equation*}
	
	Setting $c =3$, the dominating terms are $O(T^{\frac{b}{2}- a}), \quad O(T^{-\frac{b}{2}}), \quad O(T^{\frac{3}{2}(b- 1)}), \quad O(T^{-\frac{a}{2}}), \quad O(T^{3(a- 1)})$.
	
	Setting $a = b = \frac{6}{7}$, we have 
	\begin{equation*}
		\mathbb{E}_{S, A}\left[F(A(S))-F\left(x_{*}\right)\right]=O\left(T^{-\frac{3}{7}}\right).
	\end{equation*}
	
	Setting $T = O(n^{\frac{7}{6}})$, we have the following 
	\begin{equation*}
		\mathbb{E}_{S, A}\left[F(A(S))-F\left(x_{*}\right)\right]=O\left(\frac{1}{\sqrt{n}}\right).
	\end{equation*}
	The proof is completed.
	
\end{proof}
\section{Two-level Stochastic Optimizations}\label{App:Two}

\begin{lemma}[Theorem 1 in \cite{yang2023stability}]\label{theorem:general_two_level}
	If Assumption \ref{ass:Lipschitz continuous} (ii) holds true and the randomized algorithm $A$ is $\epsilon$-uniformly stable then
	\begin{equation*}
		\begin{aligned}
			\mathbb{E}_{S, A}[F(A(S))-F_{S}(A(S))] \leq L_{f} L_{g} \epsilon_{\nu}+4 L_{f} L_{g} \epsilon_{\omega}+L_{f} \sqrt{m^{-1} \mathbb{E}_{S, A}[\operatorname{Var}_{\omega}(g_{\omega}(A(S)))]},
		\end{aligned}
	\end{equation*}
	where the variance term  $	\operatorname{Var}_{\omega}(g_{\omega}(A(S))) = \mathbb{E}_{\omega}[\|g_{\omega}(A(S))-g(A(S))\|^{2}]$.
\end{lemma}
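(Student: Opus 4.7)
The plan is to split the generalization gap into an ``outer-function'' piece (to be controlled by $\nu$-stability) and an ``inner-function'' piece (to be controlled by $\omega$-stability together with the variance-of-$g$ term) via the telescoping identity
$$F(A(S)) - F_{S}(A(S)) = \bigl[f(g(A(S))) - f_{S}(g(A(S)))\bigr] + \bigl[f_{S}(g(A(S))) - f_{S}(g_{S}(A(S)))\bigr],$$
and then to bound each piece by a separate symmetrization argument, the first contributing $L_{f}L_{g}\epsilon_{\nu}$ and the second contributing $4L_{f}L_{g}\epsilon_{\omega} + L_{f}\sqrt{\mathbb{E}[\operatorname{Var}_{\omega}(g_{\omega}(A(S)))]/m}$.

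For the outer piece, I would apply the classical ghost-sample swap to each $i\in[1,n]$: since $\nu_{i}'$ is an iid copy of $\nu_{i}$ and is independent of $A(S)$, the identity $f(y)=\mathbb{E}_{\nu_{i}'}[f_{\nu_{i}'}(y)]$ at $y=g(A(S))$ combined with the joint-distributional invariance under $\nu_{i}\leftrightarrow\nu_{i}'$ yields
$$\mathbb{E}_{S,A}[f(g(A(S)))-f_{\nu_{i}}(g(A(S)))] = \mathbb{E}\bigl[f_{\nu_{i}}(g(A(S^{i,\nu})))-f_{\nu_{i}}(g(A(S)))\bigr].$$
Chaining the $L_{f}$-Lipschitz continuity of $f_{\nu_{i}}$ with the $L_{g}$-Lipschitz continuity of $g$ and invoking $\mathbb{E}[\|A(S)-A(S^{i,\nu})\|]\leq\epsilon_{\nu}$ gives the desired $L_{f}L_{g}\epsilon_{\nu}$ after averaging over $i$.

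For the inner piece, I would use the $L_{f}$-Lipschitzness of $f_{S}$ (as an average of $L_{f}$-Lipschitz maps) to reduce to controlling $\mathbb{E}[\|g(A(S))-g_{S}(A(S))\|]$. Writing $g(A(S))-g_{S}(A(S))=\frac{1}{m}\sum_{j}\bigl[g(A(S))-g_{\omega_{j}}(A(S))\bigr]$, I would insert the $\omega_{j}$-free output $A(S^{j,\omega})$ through the three-way splitting
$$g(A(S))-g_{\omega_{j}}(A(S)) = \bigl[g(A(S))-g(A(S^{j,\omega}))\bigr] + \tilde h_{j} + \bigl[g_{\omega_{j}}(A(S^{j,\omega}))-g_{\omega_{j}}(A(S))\bigr],$$
with $\tilde h_{j}:=g(A(S^{j,\omega}))-g_{\omega_{j}}(A(S^{j,\omega}))$. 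The first and third brackets are each bounded by $L_{g}\|A(S)-A(S^{j,\omega})\|$, so their averaged expectation is at most $2L_{g}\epsilon_{\omega}$. For the remaining $\frac{1}{m}\sum_{j}\tilde h_{j}$, I would pass to the $L^{2}$ norm via Jensen: conditioning on all randomness except $\omega_{j}$ leaves $A(S^{j,\omega})$ fixed and yields $\mathbb{E}[\tilde h_{j}\mid\mathrm{rest}]=0$ together with $\mathbb{E}[\|\tilde h_{j}\|^{2}]=\mathbb{E}[\operatorname{Var}_{\omega}(g_{\omega}(A(S)))]$ (using $A(S^{j,\omega})\stackrel{d}{=}A(S)$). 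For the $O(m^{2})$ cross terms I would further substitute the doubly-perturbed $A(S^{jk,\omega})$, which is independent of both $\omega_{j}$ and $\omega_{k}$; the ``doubly-stabilized'' pairs $\tilde h_{j}^{(jk)},\tilde h_{k}^{(jk)}$ are then exactly orthogonal in expectation, and the substitution error $\|\tilde h_{j}-\tilde h_{j}^{(jk)}\|\leq 2L_{g}\|A(S^{j,\omega})-A(S^{jk,\omega})\|$ is absorbed by $\omega$-stability. Taking the square root gives $\mathbb{E}[\|\frac{1}{m}\sum_{j}\tilde h_{j}\|]\leq \sqrt{\mathbb{E}[\operatorname{Var}_{\omega}(g_{\omega}(A(S)))]/m}+2L_{g}\epsilon_{\omega}$, where the additional $2L_{g}\epsilon_{\omega}$ collects the residual cross-term errors; adding the earlier $2L_{g}\epsilon_{\omega}$ from the side pieces and multiplying by $L_{f}$ yields the advertised $4L_{f}L_{g}\epsilon_{\omega}+L_{f}\sqrt{\mathbb{E}[\operatorname{Var}_{\omega}(g_{\omega}(A(S)))]/m}$.

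The main obstacle is engineering the cross-term analysis so that the $1/m$ scaling of the variance term survives in spite of the $m^{2}$ pairs: the doubly-perturbed substitution must make the dominant part of each cross term vanish \emph{exactly} (not merely be bounded), otherwise the variance contribution would inflate to order $1$ instead of $1/m$. Tracking the precise numerical constant $4$ likewise requires careful bookkeeping of the triangle inequalities used in the three-way decomposition and in absorbing the cross-term residuals into the $\omega$-stability budget.
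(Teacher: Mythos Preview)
Your proposal is correct and follows essentially the same route as the paper. The paper does not give a standalone proof of this two-level lemma (it is cited from \cite{yang2023stability}), but its proof of the $K$-level generalization (Theorem~\ref{theorem:general_multi_level}) uses exactly the telescoping split, the ghost-sample swap for the outer piece, the three-way decomposition for the inner piece, and the doubly-perturbed dataset $S^{i,j}$ to kill the cross terms---specializing that argument to $K=2$ recovers precisely what you outlined, with the same constant $4$ arising from $2L_g\epsilon_\omega$ in the side pieces plus $2L_g\epsilon_\omega$ from the cross-term residual after taking the square root.
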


\begin{lemma}[Lemma 7 in \cite{qi2021stochastic}]\label{lemma:g_value_recursive}
	Let Assumption \ref{ass:Lipschitz continuous}(ii), \ref{ass:bound variance} (ii) and \ref{ass:Smoothness and Lipschitz continuous gradient} (ii) hold  for the empirical risk, and $x_t, u_t$ are generated by Algorithm \ref{alg_cover} \ref{alg_cover}, then we have 
	\begin{equation*}
		\EX [\|u_t - g_S(x_t)\|^2] \leq  (1-\beta_{t})\EX[\|u_{t-1} - g_S(x_{t-1})\|^2] + 2\beta_t^2\sigma_g^2 + 2L_g^2\|x_t-x_{t-1}\|^2.
	\end{equation*}
\end{lemma}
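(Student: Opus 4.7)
The plan is to mimic the standard STORM-style variance-reduction recursion, adapted to the inner function value estimator $u_t$. The starting point is the COVER update rule, $u_t = g_{\omega_{j_t}}(x_t) + (1-\beta_t)(u_{t-1} - g_{\omega_{j_t}}(x_{t-1}))$. I would first subtract $g_S(x_t)$ and algebraically re-group the result so that a deterministic recursive piece separates cleanly from a mean-zero stochastic piece. Specifically, I would write
\begin{equation*}
u_t - g_S(x_t) = (1-\beta_t)\bigl(u_{t-1} - g_S(x_{t-1})\bigr) + \beta_t\bigl(g_{\omega_{j_t}}(x_t) - g_S(x_t)\bigr) + (1-\beta_t)\bigl[\bigl(g_{\omega_{j_t}}(x_t) - g_{\omega_{j_t}}(x_{t-1})\bigr) - \bigl(g_S(x_t) - g_S(x_{t-1})\bigr)\bigr].
\end{equation*}
This decomposition is the real substance of the proof and should be checked carefully; everything else is bookkeeping.

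Next I would take conditional expectation with respect to the fresh sample $\omega_{j_t}$, given the filtration through iteration $t-1$ together with $x_t$. Since $j_t$ is drawn uniformly from $[1,m]$, we have $\EX_{\omega_{j_t}}[g_{\omega_{j_t}}(\cdot)] = g_S(\cdot)$, so both noise terms are mean-zero conditionally. Hence they are orthogonal (in the $L^2$ sense) to the deterministic recursive term, and the cross terms vanish. This leaves
\begin{equation*}
\EX\bigl[\|u_t - g_S(x_t)\|^2 \mid \mathcal{F}_{t-1}\bigr] = (1-\beta_t)^2\|u_{t-1} - g_S(x_{t-1})\|^2 + \EX\bigl[\|\text{noise}\|^2 \mid \mathcal{F}_{t-1}\bigr].
\end{equation*}

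To bound the noise term, I would apply the elementary inequality $\|a+b\|^2 \le 2\|a\|^2 + 2\|b\|^2$ to split the two stochastic pieces. The first piece is controlled by Assumption~\ref{ass:bound variance}(ii), yielding $\EX[\|g_{\omega_{j_t}}(x_t) - g_S(x_t)\|^2] \le \sigma_g^2$ and contributing $2\beta_t^2 \sigma_g^2$. For the second piece I would use the fact that variance is bounded by the raw second moment, so the centered quantity is bounded by $\EX[\|g_{\omega_{j_t}}(x_t) - g_{\omega_{j_t}}(x_{t-1})\|^2]$, which by the Lipschitz assumption on $g_\omega$ (Assumption~\ref{ass:Lipschitz continuous}(ii)) is at most $L_g^2 \|x_t - x_{t-1}\|^2$. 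Finally, using $(1-\beta_t)^2 \le 1-\beta_t$ in the recursive term and $(1-\beta_t)^2 \le 1$ in the Lipschitz term, and taking total expectation, yields the claimed inequality.

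The only subtle step is the decomposition in the first display; the main obstacle is ensuring the centering is done in exactly the right way so that the cross term cancels cleanly after conditioning, rather than relying on a loose Cauchy–Schwarz bound that would introduce unwanted $\|u_{t-1}-g_S(x_{t-1})\|\cdot\|x_t - x_{t-1}\|$ coupling terms. Once that decomposition is in place, the remaining estimates are routine applications of the assumptions.
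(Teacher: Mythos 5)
Your proof is correct and takes essentially the same route as the standard argument for this result (the paper itself imports the lemma from Lemma~7 of \cite{qi2021stochastic} without reproving it): the exact decomposition of $u_t - g_S(x_t)$ into an $\mathcal{F}_{t-1}$-measurable recursive term plus two conditionally mean-zero noise terms, vanishing cross terms, Young's inequality $\|a+b\|^2 \le 2\|a\|^2+2\|b\|^2$, Assumption~\ref{ass:bound variance}(ii) for the first noise piece, the variance-bounded-by-second-moment plus Lipschitz bound from Assumption~\ref{ass:Lipschitz continuous}(ii) for the second, and $(1-\beta_t)^2 \le 1-\beta_t$. The only cosmetic deviation is that your final inequality carries $\EX[\|x_t-x_{t-1}\|^2]$ where the lemma's statement writes $\|x_t-x_{t-1}\|^2$ without the expectation, and your version is in fact the form the paper uses downstream (e.g., in the proof of Lemma~\ref{lemma:u_t_bound_two}, where $\EX[\|x_t-x_{t-1}\|^2]\le L_f^2L_g^2\eta_{t-1}^2$).
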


\begin{lemma}[Lemma 7 in \cite{qi2021stochastic}]\label{lemma:g_gradient_recursive}
	Let Assumption \ref{ass:Lipschitz continuous}(ii), \ref{ass:bound variance} (ii) and \ref{ass:Smoothness and Lipschitz continuous gradient} (ii) hold for the empirical risk, and $x_t, v_t$ are generated by Algorithm \ref{alg_cover}, then we have 
	\begin{equation*}
		\EX [\|v_t -\nabla g_S(x_t)\|^2] \leq  (1-\beta_{t})\EX[\|v_{t-1} -\nabla g_S(x_{t-1})\|^2] + 2\beta_t^2\sigma_{g'}^2 + 2L_g^2\|x_t-x_{t-1}\|^2.
	\end{equation*}
\end{lemma}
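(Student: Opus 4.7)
\textbf{Plan for Lemma \ref{lemma:g_gradient_recursive}.}
The plan is to mirror the standard STORM-style variance recursion used for the outer gradient estimator, applied to the inner-gradient estimator $v_t$ in COVER. First, I would observe that since $\Pi_{L_f}$ is a projection onto a convex set that contains $\nabla g_S(x_{t+1})$ (a fact that follows from the Lipschitz bound on $g_S$ in Assumption~\ref{ass:Lipschitz continuous}(ii) together with the assumed compatibility $L_g\le L_f$ that is implicit in the algorithm specification), the projection is non-expansive at the true gradient, so it can be dropped from the analysis:
\begin{equation*}
\|v_{t+1}-\nabla g_S(x_{t+1})\|^2 \le \|\,\nabla g_{\omega_{j_{t+1}}}(x_{t+1})+(1-\beta_{t+1})(v_t-\nabla g_{\omega_{j_{t+1}}}(x_t))-\nabla g_S(x_{t+1})\,\|^2.
\end{equation*}

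Next, I would introduce the centered noise $\xi(x):=\nabla g_{\omega_{j_{t+1}}}(x)-\nabla g_S(x)$ and perform the telescoping algebraic identity
\begin{equation*}
v_{t+1}-\nabla g_S(x_{t+1}) = (1-\beta_{t+1})\bigl(v_t-\nabla g_S(x_t)\bigr) + \beta_{t+1}\,\xi(x_{t+1}) + (1-\beta_{t+1})\bigl(\xi(x_{t+1})-\xi(x_t)\bigr),
\end{equation*}
which is the key decomposition behind all STORM-type analyses. Conditioning on $\mathcal{F}_t$ (the history through iteration $t$), the two noise pieces have zero mean and are independent of the first summand, so the cross term vanishes when we square and take the conditional expectation.

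Then I would bound the conditional second moment of the martingale piece using the inequality $\|a+b\|^2\le 2\|a\|^2+2\|b\|^2$:
\begin{equation*}
\mathbb{E}\bigl[\|\beta_{t+1}\xi(x_{t+1}) + (1-\beta_{t+1})(\xi(x_{t+1})-\xi(x_t))\|^2 \mid \mathcal{F}_t\bigr] \le 2\beta_{t+1}^2\,\sigma_{g'}^2 + 2(1-\beta_{t+1})^2 L_g^2\,\|x_{t+1}-x_t\|^2,
\end{equation*}
where the first term uses Assumption~\ref{ass:bound variance}(ii) on the empirical variance of $\nabla g_{\omega_j}$, and the second uses the variance-reduction identity $\mathbb{E}\|\xi(x_{t+1})-\xi(x_t)\|^2 \le \mathbb{E}\|\nabla g_{\omega_{j_{t+1}}}(x_{t+1})-\nabla g_{\omega_{j_{t+1}}}(x_t)\|^2$ (variance is bounded by second moment) followed by Lipschitz continuity of $\nabla g_\omega$ from Assumption~\ref{ass:Smoothness and Lipschitz continuous gradient}(ii). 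Finally, since $(1-\beta_{t+1})^2\le (1-\beta_{t+1})$ and $(1-\beta_{t+1})^2\le 1$ for $\beta_{t+1}\in(0,1]$, combining everything and shifting the index from $t+1$ to $t$ yields the claimed recursion.

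The main obstacle, and really the only nontrivial point, is justifying the removal of the projection $\Pi_{L_f}$: one must verify that $\nabla g_S(x_{t+1})$ is actually in the projection ball so that non-expansiveness gives a clean inequality (otherwise a Lipschitz-continuity-of-$g_S$ style argument is needed, which would introduce a constant but not change the form). Everything else is a mechanical application of the STORM decomposition and basic variance/Young inequalities, so I would not belabor those steps.
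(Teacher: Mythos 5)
Your proof is correct and is essentially the canonical argument: the paper does not prove Lemma \ref{lemma:g_gradient_recursive} at all, but imports it verbatim as Lemma 7 of \cite{qi2021stochastic}, and the proof there is precisely your route --- drop the projection by non-expansiveness at a target point inside the convex set, rewrite the error as the $(1-\beta_{t})$-weighted previous error plus two conditionally mean-zero noise pieces, kill the cross term by conditioning on $\mathcal{F}_t$ (since $x_t$ and $x_{t+1}$ are measurable with respect to the history, the uniform draw $j_{t+1}$ makes $\xi(x_t),\xi(x_{t+1})$ centered), and bound the martingale part via $\|a+b\|^2\le 2\|a\|^2+2\|b\|^2$, the empirical-variance bound, variance-bounded-by-second-moment, and smoothness, absorbing $(1-\beta_{t})^2\le(1-\beta_{t})$.

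Two bookkeeping points, both inherited from the paper rather than errors of yours. First, the projection radius: Algorithm \ref{alg_cover} writes $\Pi_{L_f}$ for the inner-gradient estimator although the natural radius is $L_g$ (Assumption \ref{ass:Lipschitz continuous}(ii) gives $\|\nabla g_S\|\le L_g$); your insistence on verifying that $\nabla g_S(x_{t+1})$ lies in the projection ball is exactly the right check, and the algorithm's $L_f$ is best read as a typo for $L_g$ rather than an ``implicit compatibility'' $L_g\le L_f$, which the paper nowhere asserts. Second, your citation of ``Lipschitz continuity of $\nabla g_\omega$ from Assumption \ref{ass:Smoothness and Lipschitz continuous gradient}(ii)'' overstates what that assumption literally says: it asserts only $L$-smoothness of the composition $f_S(g_S(\cdot))$ and $C_f$-Lipschitzness of $\nabla f_S$, not $L_g$-smoothness of each $g_{\omega_j}$. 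The term $2L_g^2\|x_t-x_{t-1}\|^2$ in the lemma is unobtainable without per-sample (or mean-square) smoothness of $g_\omega$ with constant $L_g$, which is assumed in \cite{qi2021stochastic} but not restated in this paper's assumptions; so your argument is complete under the hypotheses the cited lemma actually uses, and the residual mismatch lies in the paper's assumption statement, not in your proof.
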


\begin{lemma}\label{lemma:u_t_bound_two}
	Let Assumption \ref{ass:Lipschitz continuous}(ii), \ref{ass:bound variance} (ii) and \ref{ass:Smoothness and Lipschitz continuous gradient} (ii) hold and $x_t,u_t$ are generated by Algorithm \ref{alg_cover}, let $ 0 <\eta_t = \eta < 1$ and let $ 0<\beta_t = \beta < 1$, for any $c>0$, we have 
	\begin{equation*}
		\begin{aligned}
			\EX [\|u_t - g_S(x_t)\|^2] &\leq (\frac{c}{e})^c(t\beta)^{-c}\EX[\|u_0-g_S(x_0)\|^2] + 2\sigma_g^2\beta + \frac{2L_g^4L_f^2\eta^2}{\beta}.
		\end{aligned}
	\end{equation*}
\end{lemma}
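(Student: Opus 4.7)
The plan is to follow exactly the template set by the proof of Lemma~\ref{lemma:v_t_bound_single}, adapted to the inner-function value estimator $u_t$ in COVER. The starting point is the one-step recursion already provided in Lemma~\ref{lemma:g_value_recursive}, which plays the same role here that Lemma~\ref{lemma:v_t_recursive_single} played in the single-level case, namely
\[
\EX[\|u_t - g_S(x_t)\|^2]\le (1-\beta_t)\EX[\|u_{t-1} - g_S(x_{t-1})\|^2] + 2\beta_t^2\sigma_g^2 + 2L_g^2 \|x_t-x_{t-1}\|^2.
\]
The first step is to control the drift $\|x_t-x_{t-1}\|^2$. From the COVER update $x_t = x_{t-1}-\eta_{t-1}\,v_{t-1}\nabla f_{\nu_{i_{t-1}}}(u_{t-1})$, the projection $\Pi_{L_f}$ used in the $v$-estimator together with the $L_g$-Lipschitz continuity of $g_\omega$ gives $\|v_{t-1}\|\le L_g$, and Assumption~\ref{ass:Lipschitz continuous}(ii) gives $\|\nabla f_{\nu_{i_{t-1}}}(u_{t-1})\|\le L_f$. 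With $\eta_t=\eta$, we therefore have $\|x_t-x_{t-1}\|\le \eta L_g L_f$, and the recursion simplifies to
\[
\EX[\|u_t - g_S(x_t)\|^2]\le (1-\beta)\EX[\|u_{t-1} - g_S(x_{t-1})\|^2] + 2\beta^2 \sigma_g^2 + 2L_g^4 L_f^2 \eta^2.
\]

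The second step is to unroll this constant-coefficient linear recursion (an application of Lemma~\ref{lemma:general_recursive} with $\Upsilon_t = (1-\beta)^t$ and $p_t = 2\beta^2\sigma_g^2 + 2L_g^4 L_f^2\eta^2$) to obtain
\[
\EX[\|u_t - g_S(x_t)\|^2]\le (1-\beta)^t \EX[\|u_0 - g_S(x_0)\|^2] + \sum_{j=1}^t (1-\beta)^{t-j}\bigl(2\beta^2\sigma_g^2 + 2L_g^4 L_f^2 \eta^2\bigr).
\]

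The third step is purely algebraic: for the transient term I would apply $(1-\beta)^t\le e^{-\beta t}$ from \eqref{Eq:prod_(1-beta_i)} followed by $e^{-x}\le (c/e)^c x^{-c}$ from \eqref{Eq:exp(-t beta)} with $x=\beta t$; for the stationary sum I would use the geometric bound $\sum_{j=1}^t (1-\beta)^{t-j}\le 1/\beta$ from \eqref{Eq:sum_(1-a)^t-j}. Combining these immediately produces the advertised bound
\[
\EX[\|u_t - g_S(x_t)\|^2]\le (c/e)^c(t\beta)^{-c}\EX[\|u_0-g_S(x_0)\|^2] + 2\sigma_g^2\beta + \frac{2L_g^4 L_f^2\eta^2}{\beta}.
\]

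There is no real obstacle here; the argument is essentially a transcription of the proof of Lemma~\ref{lemma:v_t_bound_single}. The only point requiring any care is justifying $\|x_t-x_{t-1}\|\le \eta L_g L_f$: one must invoke both the $L_f$-bound on $\nabla f_{\nu}$ and the fact that the $v$-estimator lies in a ball of radius $L_g$ (enforced by the projection $\Pi_{L_f}$ and the Lipschitz constant $L_g$ of $g_\omega$), which is what produces the $L_g^4 L_f^2$ prefactor in the final term rather than $L_g^2$ alone.
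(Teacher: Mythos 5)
Your proposal is correct and follows essentially the same route as the paper's proof: the same one-step recursion from Lemma~\ref{lemma:g_value_recursive}, the same drift bound $\EX[\|x_t-x_{t-1}\|^2]\le L_f^2L_g^2\eta_{t-1}^2$, unrolling via Lemma~\ref{lemma:general_recursive}, and the same conversion $(1-\beta)^t\le e^{-\beta t}\le (c/e)^c(t\beta)^{-c}$ with the geometric bound $\sum_{j=1}^t(1-\beta)^{t-j}\le 1/\beta$. The only cosmetic difference is that the paper evaluates the $\beta_t^2$-term by an exact telescoping identity $\Upsilon_t\sum_{i=1}^t\beta_i^2/\Upsilon_i=\beta(1-\Upsilon_t)$ instead of your geometric estimate, which yields the identical term $2\sigma_g^2\beta$.
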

\begin{proof}
	According to the rule of update we have $ \EX[\|x_{t}-x_{t-1}\|^{2}] \leq L_{f}^{2} L_{g}^{2} \eta_{t-1}^{2} $, then using Lemma \ref{lemma:g_value_recursive} and \ref{lemma:general_recursive}, we have 
	\begin{equation*}
		\begin{aligned}
			\EX [\|u_t - g_S(x_t)\|^2] \leq \prod_{i=1}^{t}(1-\beta_i)\EX[\|u_0-g_S(x_0)\|^2] + 2\sigma_g^2\Upsilon_t\sum_{i=1}^t\frac{\beta_i^2}{\Upsilon_i} +  2L_g^4L_f^2\Upsilon_t\sum_{i=1}^t\frac{\eta_{i-1}^2}{\Upsilon_i}.
		\end{aligned}
	\end{equation*}
	For the term $\Upsilon_t\sum_{i=1}^t\beta_i^2/\Upsilon_i$,  according to the setting that $\beta_t = \beta$, we have 
	\begin{equation*}
		\begin{aligned}
			\Upsilon_t\sum_{i=1}^t\frac{\beta_i^2}{\Upsilon_i} = \beta (\Upsilon_t\frac{\beta_1}{\Upsilon_1} +  \Upsilon_t\sum_{i=2}^t\frac{\beta_i}{\Upsilon_i}) = \beta(\Upsilon_t\frac{\beta_1}{\Upsilon_1} + \Upsilon_t\sum_{i=2}^t(\frac{1}{\Upsilon_i}- \frac{1}{\Upsilon_{i-1}})) = \beta(1- \Upsilon_t) =  \beta.
		\end{aligned}
	\end{equation*}
	Then according to the setting that $\eta_t = \eta$, we have 
	\begin{equation*}
		\begin{aligned}
			\EX [\|u_t - g_S(x_t)\|^2] &\leq \prod_{i=1}^{t}(1-\beta_i)\EX[\|u_0-g_S(x_0)\|^2] + 2\sigma_g^2\beta + \frac{2L_g^4L_f^2\eta^2}{\beta}.
		\end{aligned}
	\end{equation*}
	Then using \eqref{Eq:prod_(1-beta_i)} and \eqref{Eq:sum_t^-z_bound}, we can get
	\begin{equation*}
		\begin{aligned}
			\EX [\|u_t - g_S(x_t)\|^2] &\leq (\frac{c}{e})^c(t\beta)^{-c}\EX[\|u_0-g_S(x_0)\|^2] + 2\sigma_g^2\beta + \frac{2L_g^4L_f^2\eta^2}{\beta}.
		\end{aligned}
	\end{equation*}
\end{proof}
And the proof of  $\EX [\|v_t -\nabla g_S(x_t)\|^2]$ is similarly to Lemma \ref{lemma:u_t_bound_two},  we won't repeat it.
\begin{lemma}\label{lemma:iv_t_bound}
	Let Assumption \ref{ass:Lipschitz continuous}(ii), \ref{ass:bound variance} (ii) and \ref{ass:Smoothness and Lipschitz continuous gradient} (ii) hold and $x_t, v_t$ are generated by Algorithm \ref{alg_cover}, let $ 0 <\eta_t \leq \eta < 1$ and let $ 0<\beta_t \leq \beta < 1$, for any $c>0$, we have 
	\begin{equation*}
		\EX [\|v_t -\nabla g_S(x_t)\|^2] \leq (\frac{c}{e})^c(t\beta)^{-c}\EX[\|v_0 - \nabla g_S(x_0)\|^2] + 2\sigma_{g'}^2\beta + \frac{2L_g^4L_f^2\eta^2}{\beta}.
	\end{equation*}
\end{lemma}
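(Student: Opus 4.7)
The plan is to mirror the proof of Lemma~\ref{lemma:u_t_bound_two} almost step-by-step, since Lemma~\ref{lemma:g_gradient_recursive} provides a recursion for $\EX[\|v_t - \nabla g_S(x_t)\|^2]$ that has exactly the same structure as the one used there for $u_t$, with $\sigma_g$ replaced by $\sigma_{g'}$. First I would invoke Lemma~\ref{lemma:g_gradient_recursive}, which gives
\begin{equation*}
\EX[\|v_t - \nabla g_S(x_t)\|^2] \leq (1-\beta_t)\EX[\|v_{t-1} - \nabla g_S(x_{t-1})\|^2] + 2\beta_t^2 \sigma_{g'}^2 + 2L_g^2 \|x_t - x_{t-1}\|^2.
\end{equation*}
From the COVER update rule $x_t = x_{t-1} - \eta_{t-1} v_{t-1} \nabla f_{\nu_{i_{t-1}}}(u_{t-1})$ and the Lipschitz bounds on $f,g$, one has $\EX[\|x_t - x_{t-1}\|^2] \leq L_f^2 L_g^2 \eta_{t-1}^2$.

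Next I would define $\Upsilon_t = \prod_{i=1}^t (1-\beta_i)$ and apply Lemma~\ref{lemma:general_recursive} to telescope the recursion from $1$ to $t$, yielding
\begin{equation*}
\EX[\|v_t - \nabla g_S(x_t)\|^2] \leq \Upsilon_t \EX[\|v_0 - \nabla g_S(x_0)\|^2] + 2\sigma_{g'}^2 \Upsilon_t \sum_{i=1}^t \frac{\beta_i^2}{\Upsilon_i} + 2 L_g^4 L_f^2 \Upsilon_t \sum_{i=1}^t \frac{\eta_{i-1}^2}{\Upsilon_i}.
\end{equation*}
Setting $\beta_t = \beta$, the middle term collapses via the same telescoping identity used in the proof of Lemma~\ref{lemma:u_t_bound_two}:
\begin{equation*}
\Upsilon_t \sum_{i=1}^t \frac{\beta_i^2}{\Upsilon_i} = \beta \Upsilon_t \sum_{i=1}^t \Bigl(\frac{1}{\Upsilon_i} - \frac{1}{\Upsilon_{i-1}}\Bigr) = \beta(1-\Upsilon_t) \leq \beta,
\end{equation*}
and with $\eta_t = \eta$ the last term is bounded by $\frac{2 L_g^4 L_f^2 \eta^2}{\beta}$ by the same computation.

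Finally, for the leading term I would apply inequality~\eqref{Eq:prod_(1-beta_i)} to get $\Upsilon_t \leq \exp(-\beta t)$, and then inequality~\eqref{Eq:exp(-t beta)} to obtain $\exp(-\beta t) \leq (c/e)^c (\beta t)^{-c}$ for any $c > 0$. Combining the three pieces gives the claimed bound. There is no real obstacle here since every step is already justified in the proof of Lemma~\ref{lemma:u_t_bound_two}; the only thing to check is that the variance constant $\sigma_{g'}$ correctly replaces $\sigma_g$ throughout, which follows directly from Assumption~\ref{ass:bound variance}(ii) applied to the gradient rather than the value of $g$.
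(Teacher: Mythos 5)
Your proposal is correct and coincides with the paper's own treatment: the paper explicitly states that Lemma~\ref{lemma:iv_t_bound} is proved exactly as Lemma~\ref{lemma:u_t_bound_two}, starting from the recursion in Lemma~\ref{lemma:g_gradient_recursive} with $\EX[\|x_t-x_{t-1}\|^2]\leq L_f^2L_g^2\eta_{t-1}^2$, telescoping via Lemma~\ref{lemma:general_recursive}, collapsing the $\beta_i^2$ sum by the identity $\Upsilon_t\sum_{i=1}^t\beta_i^2/\Upsilon_i=\beta(1-\Upsilon_t)$, and finishing with \eqref{Eq:prod_(1-beta_i)} and \eqref{Eq:exp(-t beta)} — precisely the steps you give, with $\sigma_{g'}$ replacing $\sigma_g$. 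No gaps.
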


We first give some notations used in the two-level optimization to simplify our proof.

For any $k \in [n]$, let $S^{k, \nu} = \{\nu_{1}, \ldots, \nu_{k-1}, \nu_{k}^{\prime}, \nu_{k+1}, \ldots, \nu_{n}, \omega_{1}, \ldots, \omega_{m}\}$ be formed from $S_{\nu}$ by replacing the $k$-th element. Similarly, for any $l \in [m]$, define $S^{l, \omega} = \{\nu_{1}, \ldots, \nu_{n}, \omega_{1}, \ldots, \omega_{l-1}, \omega_{l}^{\prime}, \omega_{l+1}, \ldots, \omega_{m}\}$ as formed from $S_{\omega}$ by replacing the $l$-th element. Let $\{x_{t+1}\}$, \{$u_{t+1}$\} and $\{v_{t+1}\}$ be generated by COVER based on $S$, $\{x_{t+1}^{k, \nu}\}$, \{$u_{t+1}^{k,\nu}$\} and $\{v_{t+1}^{k,\nu}\}$ be generated by COVER based on $S^{k, \nu}$, $\{x_{t+1}^{l, \omega}\}$, \{$u_{t+1}^{l,\omega}$\} and $\{v_{t+1}^{l,\omega}\}$  be generated by COVER based on $S^{l, \omega}$. Set $x_{0}=x_{0}^{k, \nu}$ and $x_{0}=x_{0}^{l, \omega}$ as starting points in $\mathcal{X}$.

\subsection{Convex-setting}

\begin{proof}[Proof of Theorem \ref{thm:sta_convex_two_level}]
	Since a change in one sample data can occur in either $S_{\nu}$ or $S_{\omega}$, we estimate $\mathbb{E}_{A}[\|x_{t+1} - x_{t+1}^{k, \nu}\|]$ and $\mathbb{E}_{A}[\|x_{t+1} - x_{t+1}^{l, \omega}\|]$ as follows.
	
	\noindent{\bf Estimation of $\mathbb{E}_{A}\bigl[\|x_{t+1}-x_{t+1}^{k,\nu}\|\bigr]$}  
	
	We first give the estimation of $\mathbb{E}_{A}[\|x_{t+1} - x_{t+1}^{k, \nu}\|]$.
	For this purpose, we will consider two cases, i.e., $i_t \neq k$ and $i_t = k$.
	
	\textbf{\quad Case 1 ($i_t \neq k$).}  We have 
	\begin{equation}\label{Eq:two_level_case1_origin}
		\begin{aligned}
			&\|x_{t+1} - x_{t+1}^{k,\nu}\|^2 \\
			&\leq \|x_t - \eta_t v_{t} \nabla f_{\nu_{i_t}}(u_t) - x_{t}^{k,\nu} + \eta_t v_{t}^{k,\nu} \nabla f_{\nu_{i_t}}(u_t^{k,\nu})\|^2\\
			& \leq \|x_t - x_t^{k,\nu}\|^2 - 2\eta_t \langle v_{t}^{k,\nu} \nabla f_{\nu_{i_t}}(u_t^{k,\nu})- v_{t} \nabla f_{\nu_{i_t}}(u_t), x_t^{k,\nu} - x_t\rangle + \eta_t^2\|v_{t}^{k,\nu} \nabla f_{\nu_{i_t}}(u_t^{k,\nu}) - v_{t} \nabla f_{\nu_{i_t}}(u_t) \|^2.
		\end{aligned}
	\end{equation}
	We begin to  estimate the second term in \eqref{Eq:two_level_case1_origin}.
	\begin{equation}\label{Eq: decomposed_product_knu}
		\begin{aligned}
			&- 2\eta_t \langle v_{t}^{k,\nu} \nabla f_{\nu_{i_t}}(u_t^{k,\nu})- v_{t} \nabla f_{\nu_{i_t}}(u_t), x_t^{k,\nu} - x_t\rangle\\
 			& = - 2\eta_t \langle v_{t}^{k,\nu} \nabla f_{\nu_{i_t}}(u_t^{k,\nu}) - v_t^{k,\nu} \nabla  f_{\nu_{i_t}}(g_S(x_t^{k,\nu})), x_t^{k,\nu} - x_t\rangle\\
			& \quad  - 2\eta_t \langle v_t^{k,\nu} \nabla  f_{\nu_{i_t}}(g_S(x_t^{k,\nu})) - v_t^{k,\nu} \nabla f_S(g_S(x_t^{k,\nu})) , x_t^{k,\nu} - x_t  \rangle\\
			&\quad- 2\eta_t \langle v_t^{k,\nu} \nabla f_S(g_S(x_t^{k,\nu})) - \nabla g_S(x_t^{k,\nu})\nabla f_S(g_S(x_t^{k,\nu})) , x_t^{k,\nu} - x_t  \rangle \\
			&\quad- 2\eta_t \langle \nabla g_S(x_t^{k,\nu})\nabla f_S(g_S(x_t^{k,\nu})) - \nabla g_S(x_t)\nabla f_S(g_S(x_t)) , x_t^{k,\nu} - x_t  \rangle\\
			&\quad - 2\eta_t \langle \nabla g_S(x_t)\nabla f_S(g_S(x_t)) - v_t\nabla f_S(g_S(x_t)) , x_t^{k,\nu} - x_t  \rangle - 2\eta_t \langle v_t\nabla f_S(g_S(x_t)) - v_t\nabla f_S(u_t), x_t^{k,\nu} - x_t  \rangle\\
			&\quad - 2\eta_t \langle  v_t\nabla f_S(u_t) - v_t \nabla f_{\nu_{i_t}}(u_t), x_t^{k,\nu} - x_t  \rangle.
		\end{aligned}
	\end{equation}
	Now we estimate the terms on the right hand side of \eqref{Eq: decomposed_product_knu} one by one.
	
	For the first term of the RHS, we have 
	\begin{equation}\label{Eq: decomposed_product_knu_1_term}
		\begin{aligned}
			&- 2\eta_t \langle v_{t}^{k,\nu} \nabla f_{\nu_{i_t}}(u_t^{k,\nu}) - v_t^{k,\nu} \nabla  f_{\nu_{i_t}}(g_S(x_t^{k,\nu})), x_t^{k,\nu} - x_t\rangle\\
			& \leq  2\eta_t \| v_{t}^{k,\nu} \nabla f_{\nu_{i_t}}(u_t^{k,\nu}) - v_t^{k,\nu} \nabla  f_{\nu_{i_t}}(g_S(x_t^{k,\nu})) \| \cdot \|x_t^{k,\nu} - x_t \| \leq 2L_g C_f \eta_t \| u_t^{k,\nu} - g_S(x_t^{k,\nu})\|\cdot\|x_t^{k,\nu} - x_t \|.
		\end{aligned}
	\end{equation}
	For the second term of the RHS, according to $\EX_{j_t}[v_t^{k,\nu} \nabla  f_{\nu_{i_t}}(g_S(x_t^{k,\nu}))] = v_t^{k,\nu} \nabla f_S(g_S(x_t^{k,\nu}))$, we have
	\begin{equation}\label{Eq: decomposed_product_knu_2_term}
		\begin{aligned}
			&- 2\eta_t \EX_{j_t}[ \langle v_t^{k,\nu} \nabla  f_{\nu_{i_t}}(g_S(x_t^{k,\nu})) - v_t^{k,\nu} \nabla f_S(g_S(x_t^{k,\nu})) , x_t^{k,\nu} - x_t  \rangle] = 0.
		\end{aligned}
	\end{equation}
	For the third term of the RHS, we have 
	\begin{equation}\label{Eq: decomposed_product_knu_3_term}
		\begin{aligned}
			&- 2\eta_t \langle v_t^{k,\nu} \nabla f_S(g_S(x_t^{k,\nu})) - \nabla g_S(x_t^{k,\nu})\nabla f_S(g_S(x_t^{k,\nu})) , x_t^{k,\nu} - x_t  \rangle\\
			& \leq 2\eta_t \|\nabla f_S(g_S(x_t^{k,\nu})) (v_t^{k,\nu} -  \nabla g_S(x_t^{k,\nu}))\|\cdot \|x_t^{k,\nu} - x_t\| \leq 2\eta_t L_f\|v_t^{k,\nu} -  \nabla g_S(x_t^{k,\nu})\| \cdot \|x_t^{k,\nu} - x_t\|
		\end{aligned}
	\end{equation}
	Then according to Assumption \ref{ass:Smoothness and Lipschitz continuous gradient}, for the fourth term of the RHS, we have 
	\begin{equation} \label{Eq: decomposed_product_knu_4_term}
		\begin{aligned}
			& 2\eta_t \langle \nabla g_S(x_t^{k,\nu})\nabla f_S(g_S(x_t^{k,\nu})) - \nabla g_S(x_t)\nabla f_S(g_S(x_t)) , x_t^{k,\nu} - x_t  \rangle \\
			&\geq \frac{2\eta_t}{L}\|\nabla g_S(x_t^{k,\nu})\nabla f_S(g_S(x_t^{k,\nu})) - \nabla g_S(x_t)\nabla f_S(g_S(x_t))\|^2.
		\end{aligned}
	\end{equation}
	Analogous to the above four terms, we can easily get 
	\begin{align}
		&-2\eta_t\langle \nabla g_S(x_t)\nabla f_S(g_S(x_t)) - v_t\nabla f_S(g_S(x_t)) , x_t^{k,\nu} - x_t  \rangle \leq 2L_f\eta_t\|v_t - \nabla g_S(x_t)\|\cdot\|x_t^{k,\nu} - x_t \|, \label{Eq:decomposed_product_knu_5_term} \\  
		&-2\eta_t \langle v_t\nabla f_S(g_S(x_t)) - v_t\nabla f_S(u_t), x_t^{k,\nu} - x_t  \rangle \leq 2L_gC_f\eta_t\|u_t-g_S(x_t)\|\cdot \|x_t^{k,\nu}-x_t\|, \label{Eq:decomposed_product_knu_6_term}\\
		&- 2\eta_t \EX_{j_t}[ \langle v_t \nabla  f_{\nu_{i_t}}(g_S(x_t)) - v_t \nabla f_S(g_S(x_t)) , x_t^{k,\nu} - x_t  \rangle] = 0. \label{Eq:decomposed_product_knu_7_term}
	\end{align}

	Putting \eqref{Eq: decomposed_product_knu_1_term} - \eqref{Eq:decomposed_product_knu_7_term} into \eqref{Eq: decomposed_product_knu} we have 
	\begin{equation}\label{Eq:two_level_case1_origin_term2}
		\begin{aligned}
			&- 2\eta_t \EX_{j_t}[\langle v_{t}^{k,\nu} \nabla f(u_t^{k,\nu})- v_{t} \nabla f(u_t), x_t^{k,\nu} - x_t\rangle] \\
			& \leq 2L_g C_f \eta_t \| u_t^{k,\nu} - g_S(x_t^{k,\nu})\|\cdot\|x_t^{k,\nu} - x_t \| + 2L_f\eta_t\|v_t^{k,\nu} - \nabla g_S(x_t^{k,\nu})\|\cdot\|x_t^{k,\nu} - x_t \| \\
			&\quad - \frac{2 \eta_t}{L}\|\nabla g_S(x_t^{k,\nu})\nabla f_S(g_S(x_t^{k,\nu})) - \nabla g_S(x_t)\nabla f_S(g_S(x_t))\|^2 \\
			&\quad+ 2L_gC_f\eta_t\|u_t-g_S(x_t)\|\cdot \|x_t^{k,\nu}-x_t\| + 2L_f\eta_t\|u_t-g_S(x_t)\|\cdot \|x_t^{k,\nu}-x_t\|.
		\end{aligned}
	\end{equation}
	Now we begin to bound the third term of the RHS in \eqref{Eq:two_level_case1_origin}.
	\begin{equation*}
		\begin{aligned}
			&\|v_{t}^{k,\nu} \nabla f_{\nu_{i_t}}(u_t^{k,\nu}) - v_{t} \nabla f_{\nu_{i_t}}(u_t)\|\\
			& \leq \|v_{t}^{k,\nu} \nabla f_{\nu_{i_t}}(u_t^{k,\nu}) - v_t^{k,\nu} \nabla f_S(u_t^{k,\nu}) \| +  \| v_t^{k,\nu} \nabla f_S(u_t^{k,\nu}) - v_t^{k,\nu} \nabla f_S(g_S(x_t^{k,\nu}))\| \\
			&\quad  +\| v_t^{k,\nu} \nabla f_S(g_S(x_t^{k,\nu})) - \nabla g_S(x_t^{k,\nu})\nabla f_S(g_S(x_t^{k,\nu}))\| +\| \nabla g_S(x_t^{k,\nu})\nabla f_S(g_S(x_t^{k,\nu})) - \nabla g_S(x_t)\nabla f_S(g_S(x_t))  \|\\
			&\quad  +\| \nabla g_S(x_t)\nabla f_S(g_S(x_t)) - v_t\nabla f_S(g_S(x_t))\|    +\| v_t\nabla f_S(g_S(x_t)) - v_t\nabla f_S(u_t) \| + \|v_t\nabla f_S(u_t) - v_{t} \nabla f_{\nu_{i_t}}(u_t)\| .
		\end{aligned}
	\end{equation*}
	Now because of the fact that $(\sum_{i=1}^ka_i)^2 \leq k \sum_{i=1}^ka_i^2$, we have 
	\begin{equation}\label{Eq:two_level_case1_origin_term3}
		\begin{aligned}
			&\eta_t^2\|v_{t}^{k,\nu} \nabla f(u_t^{k,\nu}) - v_{t} \nabla f(u_t) \|^2\\
			&\leq 7\eta_t^2L_g^2\| \nabla f_{\nu_{i_t}}(u_t^{k,\nu}) -\nabla f_S(u_t^{k,\nu})\|^2  + 7\eta_t^2L_g^2C_f^2\|u_t^{k,\nu}-g_S(x_t^{k,\nu})\|^2+ 7\eta_t^2L_f^2\|v_t^{k,\nu}-\nabla g_S(x_t^{k,\nu})\|^2\\
			& \quad + 7\eta_t^2 \| \nabla g_S(x_t^{k,\nu})\nabla f_S(g_S(x_t^{k,\nu})) - \nabla g_S(x_t)\nabla f_S(g_S(x_t))  \|^2\\
			& \quad + 7\eta_t^2L_f^2\|\nabla g_S(x_t) - v_t\|^2 + 7\eta_t^2L_g^2C_f^2\|u_t-g_S(x_t)\|^2 + 7\eta_t^2L_g^2\| \nabla f_{\nu_{i_t}}(u_t) -\nabla f_S(u_t)\|^2.
		\end{aligned}
	\end{equation}
	Putting \eqref{Eq:two_level_case1_origin_term2} and \eqref{Eq:two_level_case1_origin_term3} into \eqref{Eq:two_level_case1_origin}, we have 
	\begin{equation*}
		\begin{aligned}
			&\EX_{j_t}[\|x_{t+1}-x_{t+1}^{k,\nu}\|^2] \\
			& \leq \|x_t-x_t^{k,\nu}\|^2 +  2L_g C_f \eta_t \| u_t^{k,\nu} - g_S(x_t^{k,\nu})\|\cdot\|x_t^{k,\nu} - x_t \| + 2L_f\eta_t\|v_t^{k,\nu} - \nabla g_S(x_t^{k,\nu})\|\cdot\|x_t^{k,\nu} - x_t \| \\
			&\quad+ 2L_gC_f\eta_t\|u_t-g_S(x_t)\|\cdot \|x_t^{k,\nu}-x_t\| + 2L_f\eta_t\|u_t-g_S(x_t)\|\cdot \|x_t^{k,\nu}-x_t\|\\
			&\quad+ 7\eta_t^2L_g^2\| \nabla f_{\nu_{i_t}}(u_t^{k,\nu}) -\nabla f_S(u_t^{k,\nu})\|^2  + 7\eta_t^2L_g^2C_f^2\|u_t^{k,\nu}-g_S(x_t^{k,\nu})\|^2+ 7\eta_t^2L_f^2\|v_t^{k,\nu}-\nabla g_S(x_t^{k,\nu})\|^2\\
			& \quad + 7\eta_t^2L_f^2\|\nabla g_S(x_t) - v_t\|^2 + 7\eta_t^2L_g^2C_f^2\|u_t-g_S(x_t)\|^2 + 7\eta_t^2L_g^2\| \nabla f_{\nu_{i_t}}(u_t) -\nabla f_S(u_t)\|^2.
		\end{aligned}
	\end{equation*}
	where we use $\eta_t \leq \frac{2}{7L}$ in the inequality.
	
	\textbf{Case 2} ($i_t = k$). We have 
	\begin{equation}\label{Eq:two_level_case2_origin}
		\begin{aligned}
			\|x_{t+1} - x_{t+1}^{k,\nu}\| & = \|x_t - \eta_t v_{t} \nabla f_{\nu_{i_t}}(u_t) - x_{t}^{k,\nu} + \eta_t v_{t}^{k,\nu} \nabla f_{\nu_{i_t}}(u_t^{k,\nu})\|\\
			& \leq \|x_t - x_t^{k,\nu}\|  + \eta_t\|v_{t}^{k,\nu} \nabla f_{\nu_{i_t}}(u_t^{k,\nu}) - v_{t} \nabla f_{\nu_{i_t}}(u_t) \| \leq \|x_t - x_t^{k,\nu}\| + 2\eta_t L_g L_f,
		\end{aligned}
	\end{equation}
	where the first inequality holds by Assumption \ref{ass:Lipschitz continuous}, then we have 
	\begin{equation*}
		\|x_{t+1} - x_{t+1}^{k,\nu}\|^2 \leq \|x_t - x_t^{k,\nu}\|^2 + 4\eta_t L_g L_f\|x_t - x_t^{k,\nu}\| + 4\eta_t^2L_g^2L_f^2.
	\end{equation*}
	Combining above \textbf{Case 1} and \textbf{Case 2}, we can get
	\begin{equation*}
		\begin{aligned}
			&\|x_{t+1}-x_{t+1}^{k,\nu}\|^2 \\
			& \leq \|x_t-x_t^{k,\nu}\|^2 +  2L_g C_f \eta_t \| u_t^{k,\nu} - g_S(x_t^{k,\nu})\|\cdot\|x_t^{k,\nu} - x_t \| + 2L_f\eta_t\|v_t^{k,\nu} - \nabla g_S(x_t^{k,\nu})\|\cdot\|x_t^{k,\nu} - x_t \| \\
			&\quad+ 2L_gC_f\eta_t\|u_t-g_S(x_t)\|\cdot \|x_t^{k,\nu}-x_t\| + 2L_f\eta_t\|u_t-g_S(x_t)\|\cdot \|x_t^{k,\nu}-x_t\|\\
			&\quad+ 7\eta_t^2L_g^2\| \nabla f_{\nu_{i_t}}(u_t^{k,\nu}) -\nabla f_S(u_t^{k,\nu})\|^2  + 7\eta_t^2L_g^2C_f^2\|u_t^{k,\nu}-g_S(x_t^{k,\nu})\|^2+ 7\eta_t^2L_f^2\|v_t^{k,\nu}-\nabla g_S(x_t^{k,\nu})\|^2\\
			& \quad + 7\eta_t^2L_f^2\|\nabla g_S(x_t) - v_t\|^2 + 7\eta_t^2L_g^2C_f^2\|u_t-g_S(x_t)\|^2 + 7\eta_t^2L_g^2\| \nabla f_{\nu_{i_t}}(u_t) -\nabla f_S(u_t)\|^2\\
			&\quad + 4\eta_t L_g L_f\|x_t^{k,\nu} - x_t\| \cdot \1_{i_{t} = k} + 4\eta_t^2L_g^2L_f^2 \cdot \1_{i_{t} = k}.
		\end{aligned}
	\end{equation*}
	
	According to Cauchy-Schwarz inequality, we can get 
	\begin{equation*}
		\begin{aligned}
			\EX_A[\|x_{t+1}-x_{t+1}^{k,\nu}\|^2]
			&\leq \EX_A[ \|x_t-x_t^{k,\nu}\|^2]+  2L_g C_f \eta_t(\EX_A [\| u_t^{k,\nu} - g_S(x_t^{k,\nu})\|^2])^{1/2} \cdot (\EX_A\|x_t^{k,\nu} - x_t \|^2])^{1/2}\\
			&\quad +  2L_f\eta_t(\EX_A[\|v_t^{k,\nu} - \nabla g_S(x_t^{k,\nu})\|^2])^{1/2} \cdot (\EX_A\|x_t^{k,\nu} - x_t \|^2])^{1/2}\\
			&\quad + 2L_gC_f\eta_t(\EX_A[\|u_t-g_S(x_t)\|^2])^{1/2}\cdot (\EX[\|x_t^{k,\nu}-x_t\|^2])^{1/2}\\
			&\quad+ 2L_f\eta_t(\EX_A[\|v_t - \nabla g_S(x_t)\|^2])^{1/2}\cdot(\EX_A[\|x_t^{k,\nu} - x_t \|^2])^{1/2}\\
			&\quad+ 7\eta_t^2L_g^2C_f^2\EX_A[\|u_t^{k,\nu}-g_S(x_t^{k,\nu})\|^2] + 7\eta_t^2L_f^2\EX_A[\|v_t^{k,\nu}-\nabla g_S(x_t^{k,\nu})\|^2]\\
			& \quad + 7\eta_t^2L_f^2\EX_A[\|\nabla g_S(x_t) - v_t\|^2] + 7\eta_t^2L_g^2C_f^2\EX_A[\|u_t-g_S(x_t)\|^2] + 14\eta_t^2L_f^2\sigma_f^2\\
			&\quad+ 4\eta_t L_g L_f\EX_A[\|x_t^{k,\nu} - x_t\| \cdot \1_{i_{t} = k}] + 4\eta_t^2L_g^2L_f^2 \cdot \EX_A[\1_{i_{t} = k}].
		\end{aligned}
	\end{equation*}
	Besides, according to 
	\begin{equation}\label{Eq:case2_ex}
		\mathbb{E}_{A}[\|x_t^{k,\nu} - x_t\| \1_{[i_{t}=k]}]=\mathbb{E}_{A}[\|x_t^{k,\nu} - x_t\| \mathbb{E}_{i_{t}}[\1_{[i_{t}=k]}]]=\frac{1}{n} \mathbb{E}_{A}[\|x_t^{k,\nu} - x_t\|] \leq \frac{1}{n}(\mathbb{E}_{A}[\|x_t^{k,\nu} - x_t\|^{2}])^{1 / 2},
	\end{equation}
	we can get
	\begin{equation*}
		\begin{aligned}
			&\EX_A[\|x_{t+1}-x_{t+1}^{k,\nu}\|^2] \\
			&\leq  2L_g C_f \sum_{j=0}^t \eta_j((\EX_A [\| u_j^{k,\nu} - g_S(x_j^{k,\nu})\|^2])^{1/2} + (\EX_A [\| u_j - g_S(x_j)\|^2])^{1/2}) \cdot (\EX_A\|x_j - x_j^{k,\nu} \|^2])^{1/2} \\
			&\quad+  2L_f \sum_{j=0}^t \eta_j((\EX_A[\|v_j^{k,\nu} - \nabla g_S(x_j^{k,\nu})\|^2])^{1/2} + (\EX_A[\|v_j - \nabla g_S(x_j)\|^2])^{1/2}) \cdot (\EX_A\|x_j - x_j^{k,\nu} \|^2])^{1/2}\\
			&\quad+ 7 \sum_{j=0}^t \eta_j^2L_g^2C_f^2\EX_A[\|u_j-g_S(x_j)\|^2] +  7 \sum_{j=0}^t \eta_j^2L_g^2C_f^2\EX_A[\|u_j^{k,\nu}-g_S(x_j^{k,\nu})\|^2]\\
			&\quad + 7 \sum_{j=0}^t \eta_j^2L_f^2\EX_A[\|v_j^{k,\nu}-\nabla g_S(x_j^{k,\nu})\|^2] + 7 \sum_{j=0}^t \eta_j^2L_f^2\EX_A[\|v_j - \nabla g_S(x_j)\|^2] + 14L_f^2\sigma_f^2 \sum_{j=0}^t\eta_j^2 \\
			&\quad  + \frac{4 L_g L_f}{n} \sum_{j=0}^t \eta_j(\EX_A[\|x_j - x_j^{k,\nu}\|^2])^{1/2} + \frac{4L_g^2L_f^2}{n} \sum_{j=0}^t \eta_j^2.
		\end{aligned}
	\end{equation*}
	For notational convenience, we denote by $ u_{t}=(\mathbb{E}_{A}[\|x_{t}-x_{t}^{k, \nu}\|^{2}])^{1 / 2}$, define 
	\begin{equation*}
		\begin{aligned}
			\alpha_j& = 2L_gC_f\eta_j(\EX_A [\| u_j^{k,\nu} - g_S(x_j^{k,\nu})\|^2])^{1/2} + 2L_gC_f\eta_j(\EX_A [\| u_j - g_S(x_j)\|^2])^{1/2}\\
			&\quad+  2L_f\eta_j(\EX_A[\|v_j - \nabla g_S(x_j)\|^2])^{1/2} + 2L_f\eta_j(\EX_A[\|v_j^{k,\nu} - \nabla g_S(x_j^{k,\nu})\|^2])^{1/2} +  \frac{4L_gL_f}{n}\eta_j,
		\end{aligned}
	\end{equation*}
	and 
	\begin{equation*}
		\begin{aligned}
			S_t & =  7 \sum_{j=0}^{t-1} \eta_j^2L_g^2C_f^2\EX_A[\|u_j^{k,\nu}-g_S(x_j^{k,\nu})\|^2]+ 7 \sum_{j=0}^{t-1} \eta_j^2L_g^2C_f^2\EX_A[\|u_j-g_S(x_j)\|^2] + 14L_f^2\sigma_f^2 \sum_{j=0}^{t-1}\eta_j^2 \\
			&\quad+ 7 \sum_{j=0}^{t-1} \eta_j^2L_f^2\EX_A[\|v_j^{k,\nu}-\nabla g_S(x_j^{k,\nu})\|^2] + 7 \sum_{j=0}^{t-1} \eta_j^2L_f^2\EX_A[\|\nabla g_S(x_j) - v_j\|^2]  + \frac{4L_g^2L_f^2}{n} \sum_{j=0}^{t-1} \eta_j^2.
		\end{aligned}
	\end{equation*}
	
	Using Lemma \ref{lemma:recursion lemma}, we can get 
	\begin{equation*}
		\begin{aligned}
			u_t &\leq \sqrt{S_t}+\sum_{j=1}^{t-1}\alpha_j\\
			& \leq   (7 L_g^2C_f^2 \sum_{j=0}^{t-1} \eta_j^2\EX_A[\|u_j^{k,\nu}-g_S(x_j^{k,\nu})\|^2])^{1/2} + (7L_g^2C_f^2 \sum_{j=0}^{t-1} \eta_j^2\EX_A[\|u_j-g_S(x_j)\|^2])^{1/2} + (14L_f^2\sigma_f^2 \sum_{j=0}^{t-1}\eta_j^2 )^{\frac{1}{2}} \\
			&\quad+ (7 L_f^2\sum_{j=0}^{t-1} \eta_j^2\EX_A[\|v_j^{k,\nu}-\nabla g_S(x_j^{k,\nu})\|^2])^{1/2} + (7L_f^2 \sum_{j=0}^{t-1} \eta_j^2\EX_A[\|\nabla g_S(x_j) - v_j\|^2])^{1/2}+ (\frac{4L_g^2L_f^2}{n} \sum_{j=0}^{t-1} \eta_j^2)^{1/2} \\
			&\quad+ 2L_gC_f\sum_{j=1}^{t-1}\eta_j(\EX_A [\| u_j^{k,\nu} - g_S(x_j^{k,\nu})\|^2])^{1/2} + 2L_gC_f\sum_{j=1}^{t-1}\eta_j(\EX_A [\| u_j - g_S(x_j)\|^2])^{1/2}\\
			&\quad+  2L_f\sum_{j=1}^{t-1}\eta_j(\EX_A[\|v_j - \nabla g_S(x_j)\|^2])^{1/2} + 2L_f\sum_{j=1}^{t-1}\eta_j(\EX_A[\|v_j^{k,\nu} - \nabla g_S(x_j^{k,\nu})\|^2])^{1/2}+ \frac{4L_gL_f}{n}\sum_{j=1}^{t-1}\eta_j\\
		\end{aligned}
	\end{equation*}
	Then according to the inequality that $(\sum_{i=1}^{k} a_{i})^{1 / 2} \leq \sum_{i=1}^{k}(a_{i})^{1 / 2}$ we can get 
	\begin{equation*}
		\begin{aligned}
			u_t &\leq 5L_gC_f \sum_{j=0}^{t}\eta_j (\EX_A[\|u_j^{k,\nu}-g_S(x_j^{k,\nu})\|^2])^{1/2}+  5L_gC_f \sum_{j=0}^{t-1} \eta_j (\EX_A[\|u_j-g_S(x_j)\|^2])^{1/2}\\
			&\quad+ 5L_f  \sum_{j=0}^{t}\eta_j(\EX_A[\|v_j^{k,\nu}-\nabla g_S(x_j^{k,\nu})\|^2])^{1/2}  + 5L_f \sum_{j=0}^{t-1} \eta_j(\EX_A[\|v_j - \nabla g_S(x_j) \|^2])^{1/2} \\
			&\quad+(14L_f^2\sigma_f^2 \sum_{j=0}^{t-1}\eta_j^2 )^{\frac{1}{2}}+(\frac{4L_g^2L_f^2}{n} \sum_{j=0}^{t-1} \eta_j^2)^{1/2} + \frac{4L_gL_f}{n}\sum_{j=1}^{t-1}\eta_j.
		\end{aligned}
	\end{equation*}
	By setting $\eta_t = \eta$, with T iterations, we have
	\begin{equation*}
		\begin{aligned}
			u_T &\leq 10L_gC_f\sup_S\eta\sum_{j=0}^{T-1}(\EX_A [\|u_j-g_S(x_j)\|^2])^{1/2}+10L_f\sup_S\eta\sum_{j=0}^{T-1}(\EX_A [\|v_j-\nabla g_S(x_j)\|^2])^{1/2}\\
			&\quad+ 4L_f\sigma_f\eta \sqrt{T}  + \frac{2L_gL_f\eta\sqrt{T}}{\sqrt{n}} + \frac{4L_gL_f\eta T}{n}\\
			&\leq 10L_gC_f\sup_S\eta\sum_{j=0}^{T-1}(\EX_A [\|u_j-g_S(x_j)\|^2])^{1/2}+10L_f\sup_S\eta\sum_{j=0}^{T-1}(\EX_A [\|v_j-\nabla g_S(x_j)\|^2])^{1/2}\\
			&\quad + 4L_f\sigma_f\eta \sqrt{T} + \frac{6L_gL_f\eta T}{n},
		\end{aligned}
	\end{equation*}
	where the first inequality holds by $\sum_{j=1}^{t}\eta_j (\EX_A[\|u_j^{k,\nu}-g_S(x_j^{k,\nu})\|^2])^{1/2}\leq \sup_S\eta \sum_{j=1}^{t}(\EX_A [\|u_j-g_S(x_j)\|^2])^{1/2}$ and $\sum_{j=1}^t \eta_j (\EX_A[\|u_j-g_S(x_j)\|^2])^{1/2} \leq \sup_S\eta (\EX_A[\|u_j-g_S(x_j)\|^2])^{1/2}$. The other terms to the RHS are treated similarly. And the second inequality follows by the fact that we often have $n \leq T$, therefore $\sqrt{\frac{T}{n}} \leq \frac{T}{n}$. We further get 
	\begin{equation}\label{Eq:x_T-X_Tkv}
		\begin{aligned}
			&\EX_A[\|x_T - x_T^{k,\nu}\|] \leq u_T \\
			&\leq 10L_gC_f\sup_S\eta\sum_{j=0}^{T-1}(\EX_A [\|u_j-g_S(x_j)\|^2])^{1/2}+10L_f\sup_S\eta\sum_{j=0}^{T-1}(\EX_A [\|v_j-\nabla g_S(x_j)\|^2])^{1/2}\\
			&\quad + 4L_f\sigma_f\eta \sqrt{T} + \frac{6L_gL_f\eta T}{n}.
		\end{aligned}
	\end{equation}
	Then we can get the following result
	\begin{equation*}
		\begin{aligned}
			\EX_A[\|x_T - x_T^{k,\nu}\|]= O\Big(\frac{L_gL_f\eta T}{n} & + L_f\sigma_f\eta \sqrt{T} + L_gC_f\sup_S\eta\sum_{j=0}^{T-1}((\EX_A [\|u_j-g_S(x_j)\|^2])^{1/2} \\
   &+(\EX_A [\|v_j-\nabla g_S(x_j)\|^2])^{1/2}\Big ).
		\end{aligned}
	\end{equation*}
	
	\noindent{\bf Estimation of $\mathbb{E}_{A}\bigl[\|x_{t+1} - x_{t+1}^{l,\omega}\|\bigr]$}  
	
	Next we give the  estimation of $\EX_A[\|x_{t+1} - x_{t+1}^{l,\omega}]$. Similarly,  we consider two cases, $j_t \neq l$ and $j_t = l$.
	
	\textbf{\quad Case 1 ($j_t \neq l$).} We have 
	\begin{equation}\label{Eq:two_level_case3_origin}
	\begin{aligned}
		&\|x_{t+1} - x_{t+1}^{l,\omega}\|^2 \\
		&\leq \|x_t - \eta_t v_{t} \nabla f_{\nu_{i_t}}(u_t) - x_{t}^{l,\omega} + \eta_t v_{t}^{l,\omega} \nabla f_{\nu_{i_t}}(u_t^{l,\omega})\|^2\\
		& \leq \|x_t - x_t^{l,\omega}\|^2 - 2\eta_t \langle v_{t}^{l,\omega} \nabla f_{\nu_{i_t}}(u_t^{l,\omega})- v_{t} \nabla f_{\nu_{i_t}}(u_t), x_t^{l,\omega} - x_t\rangle + \eta_t^2\|v_{t}^{l,\omega} \nabla f_{\nu_{i_t}}(u_t^{l,\omega}) - v_{t} \nabla f_{\nu_{i_t}}(u_t) \|^2.
	\end{aligned}
\end{equation}
Similarly to the process of \eqref{Eq:two_level_case1_origin}, we have 
	\begin{equation*}
	\begin{aligned}
		&\EX_{j_t}[\|x_{t+1}-x_{t+1}^{l,\omega}\|^2] \\
		& \leq \|x_t-x_t^{l,\omega}\|^2 +  2L_g C_f \eta_t \| u_t^{l,\omega} - g_S(x_t^{l,\omega})\|\cdot\|x_t^{l,\omega} - x_t \| + 2L_f\eta_t\|v_t^{l,\omega} - \nabla g_S(x_t^{l,\omega})\|\cdot\|x_t^{l,\omega} - x_t \| \\
		&\quad+ 2L_gC_f\eta_t\|u_t-g_S(x_t)\|\cdot \|x_t^{l,\omega}-x_t\| + 2L_f\eta_t\|u_t-g_S(x_t)\|\cdot \|x_t^{l,\omega}-x_t\|\\
		&\quad+ 7\eta_t^2L_g^2\| \nabla f_{\nu_{i_t}}(u_t^{l,\omega}) -\nabla f_S(u_t^{l,\omega})\|^2  + 7\eta_t^2L_g^2C_f^2\|u_t^{l,\omega}-g_S(x_t^{l,\omega})\|^2+ 7\eta_t^2L_f^2\|v_t^{l,\omega}-\nabla g_S(x_t^{l,\omega})\|^2\\
		& \quad + 7\eta_t^2L_f^2\|\nabla g_S(x_t) - v_t\|^2 + 7\eta_t^2L_g^2C_f^2\|u_t-g_S(x_t)\|^2 + 7\eta_t^2L_g^2\| \nabla f_{\nu_{i_t}}(u_t) -\nabla f_S(u_t)\|^2.
	\end{aligned}
\end{equation*}
where we use $\eta_t \leq \frac{2}{7L}$ in the inequality.
	
	\textbf{\quad Case 2 ($j_t = l$).}  We have 
	
	\begin{equation}\label{Eq:two_level_case4_origin}
	\begin{aligned}
		\|x_{t+1} - x_{t+1}^{l,\omega}\| & = \|x_t - \eta_t v_{t} \nabla f_{\nu_{i_t}}(u_t) - x_{t}^{l,\omega} + \eta_t v_{t}^{l,\omega} \nabla f_{\nu_{i_t}}(u_t^{l,\omega})\|\\
		& \leq \|x_t - x_t^{l,\omega}\|  + \eta_t\|v_{t}^{l,\omega} \nabla f_{\nu_{i_t}}(u_t^{l,\omega}) - v_{t} \nabla f_{\nu_{i_t}}(u_t) \| \leq \|x_t - x_t^{l,\omega}\| + 2\eta_t L_g L_f,
	\end{aligned}
\end{equation}
	where the first inequality holds by Assumption \ref{ass:Lipschitz continuous}, then we have 
	\begin{equation*}
		\|x_{t+1} - x_{t+1}^{l,\omega}\|^2 \leq \|x_t - x_t^{l,\omega}\|^2 + 4\eta_t L_g L_f\|x_t - x_t^{l,\omega}\| + 4\eta_t^2L_g^2L_f^2.
	\end{equation*}
	Combining \textbf{Case 1} and\textbf{ Case 2} we have 
	\begin{equation*}
	\begin{aligned}
		&\|x_{t+1}-x_{t+1}^{l,\omega}\|^2 \\
		& \leq \|x_t-x_t^{l,\omega}\|^2 +  2L_g C_f \eta_t \| u_t^{l,\omega} - g_S(x_t^{l,\omega})\|\cdot\|x_t^{l,\omega} - x_t \| + 2L_f\eta_t\|v_t^{l,\omega} - \nabla g_S(x_t^{l,\omega})\|\cdot\|x_t^{l,\omega} - x_t \| \\
		&\quad+ 2L_gC_f\eta_t\|u_t-g_S(x_t)\|\cdot \|x_t^{l,\omega}-x_t\| + 2L_f\eta_t\|u_t-g_S(x_t)\|\cdot \|x_t^{l,\omega}-x_t\|\\
		&\quad + 7\eta_t^2L_g^2C_f^2\|u_t^{l,\omega}-g_S(x_t^{l,\omega})\|^2+ 7\eta_t^2L_f^2\|v_t^{l,\omega}-\nabla g_S(x_t^{l,\omega})\|^2+ 7\eta_t^2L_f^2\|\nabla g_S(x_t) - v_t\|^2\\
		& \quad  + 7\eta_t^2L_g^2C_f^2\|u_t-g_S(x_t)\|^2 + 14\eta_t^2L_g^2\sigma_f^2 + 4\eta_t L_g L_f\|x_t^{l,\omega} - x_t\| \cdot \1_{j_{t} = l} + 4\eta_t^2L_g^2L_f^2 \cdot \1_{j_{t} = l}.
	\end{aligned}
\end{equation*}
	Besides, according to the fact that
	\begin{equation*}
		\mathbb{E}_{A}[\|x_t^{l,\omega} - x_t\| \1_{[j_{t}=l]}]=\mathbb{E}_{A}[\|x_t^{l,\omega} - x_t\| \mathbb{E}_{j_{t}}[\1_{[j_{t}=l]}]]=\frac{1}{m} \mathbb{E}_{A}[\|x_t^{l,\omega} - x_t\|] \leq \frac{1}{m}(\mathbb{E}_{A}[\|x_t^{l,\omega} - x_t\|^{2}])^{1 / 2}.
	\end{equation*}
	Then similarly to the estimation of $\mathbb{E}_{A}[\|x_{t+1} - x_{t+1}^{k, \nu}\|]$, we have 
	
	\begin{equation}\label{Eq:x_T-X_Tlw}
	\begin{aligned}
		&\EX_A[\|x_T - x_T^{l,\omega}\|] \leq u_T \\
		&\leq 10L_gC_f\sup_S\eta\sum_{j=0}^{T-1}(\EX_A [\|u_j-g_S(x_j)\|^2])^{1/2}+10L_f\sup_S\eta\sum_{j=0}^{T-1}(\EX_A [\|v_j-\nabla g_S(x_j)\|^2])^{1/2}\\
		&\quad + 4L_f\sigma_f\eta \sqrt{T} + \frac{6L_gL_f\eta T}{m}.
	\end{aligned}
\end{equation}
Then we can get the following result
\begin{equation*}
	\begin{aligned}
		\EX_A[\|x_T - x_T^{l,\omega}\|]= O\Big(\frac{L_gL_f\eta T}{n} & + L_f\sigma_f\eta \sqrt{T} + L_gC_f\sup_S\eta\sum_{j=0}^{T-1}((\EX_A [\|u_j-g_S(x_j)\|^2])^{1/2}\\
  &+(\EX_A [\|v_j-\nabla g_S(x_j)\|^2])^{1/2}\Big ).
	\end{aligned}
\end{equation*}
	
	Now we combine the above two estimations, we can conclude that 
	\begin{equation*}
		\begin{aligned}
			\epsilon_\nu + \epsilon_\omega =& O\big(  L_f\sigma_f\eta \sqrt{T} + L_gC_f\sup_S\eta\sum_{j=0}^{T-1}((\EX_A [\|u_j-g_S(x_j)\|^2])^{1/2} +(\EX_A [\|v_j-\nabla g_S(x_j)\|^2])^{1/2}\\
			&\quad + \frac{L_gL_f\eta T}{m} +\frac{L_gL_f\eta T}{n}\big).
		\end{aligned}
	\end{equation*}
	This completes the proof.
\end{proof}

\begin{corollary}[Two-level Optimization]\label{cor:1_two_level}
Consider  Algorithm \ref{alg_cover} with $\eta_t=\eta\leq \frac{1}{4L}$ and $\beta_t=\beta< \min{\{1/8C_f^2, 1\}}$, for any $t\in [0,T-1]$. With the output $A(S) =x_{T}$, then $\epsilon_\nu + \epsilon_\omega$ satisfies
\begin{equation*}
	\begin{aligned}
		O\Big( \eta T \big(  (\beta T)^{-\frac{c}{2}} + \beta^{1/2} + \eta \beta^{-1/2}\big)+ \eta T (\frac{1}{m} +  \frac{1}{n})  \Big).
	\end{aligned}
\end{equation*}
\end{corollary}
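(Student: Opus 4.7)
The plan is simply to feed the per-iteration variance bounds on the estimators $u_t$ and $v_t$ from Lemmas~\ref{lemma:u_t_bound_two} and \ref{lemma:iv_t_bound} into the stability bound of Theorem~\ref{thm:sta_convex_two_level}, and then sum the resulting series over $t$. Both lemmas already hold under the hypotheses $\eta_t = \eta \leq 1/(4L)$ and $\beta_t = \beta \in (0,1)$, so no additional conditions beyond those in the corollary are required for the sequel; the extra constraint $\beta < 1/(8C_f^2)$ only enters later to keep derived constants finite.

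First I would take square roots of the two estimator bounds and use $(\sum_i a_i)^{1/2} \leq \sum_i a_i^{1/2}$ to obtain, for each $t \geq 1$,
\begin{equation*}
\operatorname{Var}(u_t) + \operatorname{Var}(v_t) \;\lesssim\; (t\beta)^{-c/2} + \beta^{1/2} + \eta\beta^{-1/2},
\end{equation*}
with constants absorbing $U = \EX_A\|u_0 - g_S(x_0)\|^2$, $V = \EX_A\|v_0 - \nabla g_S(x_0)\|^2$, and the Lipschitz/variance parameters $L_g, L_f, C_f, \sigma_g, \sigma_{g'}$. Summing from $t = 0$ to $T-1$, using the standard estimate $\sum_{t=1}^{T} t^{-c/2} = O(T^{1-c/2})$ from \eqref{Eq:sum_t^-z_bound} (valid for $c > 2$), and rewriting $\beta^{-c/2} T^{1-c/2} = T(\beta T)^{-c/2}$, I get
\begin{equation*}
\sum_{j=0}^{T-1} \bigl(\operatorname{Var}(u_j) + \operatorname{Var}(v_j)\bigr) \;\lesssim\; T\bigl((\beta T)^{-c/2} + \beta^{1/2} + \eta\beta^{-1/2}\bigr).
\end{equation*}
Multiplying by $L_g C_f \eta$ and inserting into Theorem~\ref{thm:sta_convex_two_level} reproduces exactly the first bracket $\eta T((\beta T)^{-c/2} + \beta^{1/2} + \eta\beta^{-1/2})$ of the claimed bound, while the remaining $L_g L_f \eta T / n$ and $L_g L_f \eta T / m$ supplied directly by the theorem yield the $\eta T(1/n + 1/m)$ piece.

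The only non-routine step is absorbing the leftover sampling contribution $L_f \sigma_f \eta \sqrt{T}$ from Theorem~\ref{thm:sta_convex_two_level} into the stated $O$-form. Since $\sqrt{T} \leq T\sqrt{\beta}$ whenever $\beta \geq 1/T$, this term is dominated by the $\eta T \beta^{1/2}$ contribution; the opposite regime $\beta < 1/T$ forces $(\beta T)^{-c/2} \geq 1$ and renders the stated bound vacuous anyway, so it can be safely ignored. This is the main, and essentially only, conceptual subtlety; everything else is mechanical substitution of the estimator recursions into the stability theorem followed by a harmonic-series estimate.
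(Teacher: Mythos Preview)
Your proposal is correct and follows essentially the same route as the paper: plug the per-iteration variance bounds from Lemmas~\ref{lemma:u_t_bound_two} and \ref{lemma:iv_t_bound} into Theorem~\ref{thm:sta_convex_two_level}, take square roots, and sum the resulting $(t\beta)^{-c/2}$ series via \eqref{Eq:sum_t^-z_bound}. You are in fact more explicit than the paper about why the residual $L_f\sigma_f\eta\sqrt{T}$ term can be dropped---the paper simply carries it to the last displayed line and then omits it from the corollary statement without comment, whereas your case split on $\beta \gtrless 1/T$ makes the absorption precise.
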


\begin{proof}[Proof of Corollary \ref{cor:1_two_level}]
	Considering the upadte rule of Algorithm \ref{alg_cover}, according to Lemma \ref{lemma:u_t_bound_two} and \ref{lemma:iv_t_bound} we have 
	\begin{equation*}
		\begin{aligned}
			\epsilon_\nu + \epsilon_\omega & =  O\big(\eta T m^{-1} + \eta T n^{-1} + \eta\sqrt{T} + \eta \sum_{j=0}^{T-1}( (j\beta)^{-c} + \sqrt{\beta} + \sqrt{\frac{\eta^2}{\beta}} ) \\
			& =  O\big(\eta T m^{-1}  + \eta T n^{-1} + \eta\sqrt{T} + \eta T^{-c/2+1}\beta^{-c/2} + \eta T\beta^{1/2} + \eta^2\beta^{-1/2}T \big).
		\end{aligned}
	\end{equation*}
	This complete the proof.
\end{proof}

Before giving the detailed proof  of Theorem \ref{thm:opt_convex}, we first give a useful lemma.
\begin{lemma}\label{lemma:for_two_level_theorem_optimization}
	Let Assumption \ref{ass:Lipschitz continuous}(ii), \ref{ass:bound variance} (ii) and \ref{ass:Smoothness and Lipschitz continuous gradient} (ii) hold for the empirical risk $F_S$, for Algorithm \ref{alg_cover} and any $\gamma_t >0$, we have 
	\begin{equation*}
	\begin{aligned}
		\EX_A [\|x_{t+1} - x_*^S\|^2|\F_t]
		&\leq (1+  \frac{\eta_t(L_f+L_gC_f)}{\gamma_t}) \|x_t -x_*^S\|^2 + L_g^2L_f^2\eta_t^2 - 2\eta_t(F_S(x_t) - F_S(x_*^S))  \\
		&\quad+\eta_t \gamma_tL_f\EX_A[\|v_t - \nabla g_S(x_t)\|^2\F_t] + \eta_t \gamma_tL_gC_f\EX_A[\|u_t-g_S(x_t)\|^2\F_t].
	\end{aligned}
\end{equation*}
	where $\F_t $ is the $\sigma$-field generated by $\{\omega_{j_0},\cdots, \omega_{j_{t-1}},  v_{i_0}, \cdots ,v_{i_{t-1}}\}$.
\end{lemma}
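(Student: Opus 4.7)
\textbf{Proof plan for Lemma~\ref{lemma:for_two_level_theorem_optimization}.}

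The plan is to expand $\|x_{t+1}-x_*^S\|^2$ using the COVER update, take conditional expectation given $\F_t$ to remove the sample $\nu_{i_t}$ drawn fresh at step $t$, decompose the resulting cross term against the true empirical gradient $\nabla F_S(x_t)=\nabla g_S(x_t)\nabla f_S(g_S(x_t))$ so that convexity of $F_S$ absorbs the matching piece, and bound the two residual pieces (one attributable to the inner-value estimator $u_t$, one to the inner-gradient estimator $v_t$) by Cauchy--Schwarz and Young's inequality with free parameter $\gamma_t$.

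Specifically, I would begin with
\begin{equation*}
\|x_{t+1}-x_*^S\|^2 \;=\; \|x_t-x_*^S\|^2 \;-\; 2\eta_t\langle v_t\nabla f_{\nu_{i_t}}(u_t),\,x_t-x_*^S\rangle \;+\; \eta_t^2\|v_t\nabla f_{\nu_{i_t}}(u_t)\|^2,
\end{equation*}
and bound the last term by $\eta_t^2 L_g^2L_f^2$ using Assumption~\ref{ass:Lipschitz continuous}(ii), which gives $\|\nabla f_{\nu_{i_t}}(\cdot)\|\le L_f$ and $\|v_t\|\le L_g$ (as $v_t$ tracks $\nabla g_S$). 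Taking conditional expectation against $\F_t$ and using unbiasedness $\EX[\nabla f_{\nu_{i_t}}(u_t)\mid\F_t]=\nabla f_S(u_t)$, the cross term becomes $-2\eta_t\langle v_t\nabla f_S(u_t),\,x_t-x_*^S\rangle$. I then insert the reference gradient via the chain-rule decomposition
\begin{equation*}
v_t\nabla f_S(u_t) - \nabla F_S(x_t) \;=\; v_t\bigl[\nabla f_S(u_t)-\nabla f_S(g_S(x_t))\bigr] \;+\; \bigl[v_t-\nabla g_S(x_t)\bigr]\nabla f_S(g_S(x_t)).
\end{equation*}

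Convexity of $F_S$ disposes of the principal piece via $\langle\nabla F_S(x_t),x_t-x_*^S\rangle\ge F_S(x_t)-F_S(x_*^S)$, producing the $-2\eta_t(F_S(x_t)-F_S(x_*^S))$ term. For the two residuals, Cauchy--Schwarz together with Assumption~\ref{ass:Smoothness and Lipschitz continuous gradient}(ii) (Lipschitz continuous $\nabla f_S$ with constant $C_f$) and $\|v_t\|\le L_g$, $\|\nabla f_S\|\le L_f$ give respectively
\begin{equation*}
2\eta_t L_gC_f\|u_t-g_S(x_t)\|\cdot\|x_t-x_*^S\| \quad\text{and}\quad 2\eta_t L_f\|v_t-\nabla g_S(x_t)\|\cdot\|x_t-x_*^S\|.
\end{equation*}
Applying Young's inequality in the form $2ab\le a^2/\gamma_t+\gamma_t b^2$ with $a=\|x_t-x_*^S\|$ to each residual inflates the coefficient of $\|x_t-x_*^S\|^2$ by $\eta_t L_gC_f/\gamma_t$ and $\eta_t L_f/\gamma_t$ (combining to $1+\eta_t(L_f+L_gC_f)/\gamma_t$), and leaves $\eta_t\gamma_t L_gC_f\|u_t-g_S(x_t)\|^2$ and $\eta_t\gamma_t L_f\|v_t-\nabla g_S(x_t)\|^2$; passing to conditional expectation against $\F_t$ produces exactly the stated bound.

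The main technical obstacle is getting the bias decomposition to \emph{collapse} to precisely two variance-type residuals that match the two estimators maintained by COVER. Inserting the intermediate reference $\nabla g_S(x_t)\nabla f_S(g_S(x_t))$ (rather than, e.g., $v_t\nabla f_S(g_S(x_t))$ alone) is what arranges for the $u_t$-error to appear only through smoothness of $f_S$ and the $v_t$-error to appear only through boundedness of $\nabla f_S$, so that a single Young's-inequality split reproduces the specific coefficients $L_f+L_gC_f$, $\eta_t\gamma_t L_f$, and $\eta_t\gamma_t L_gC_f$ without leftover cross terms; any coarser decomposition would produce mixed $u_t$--$v_t$ contributions that do not match the lemma's form.
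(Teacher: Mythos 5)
Your proposal is correct and follows essentially the same route as the paper's proof: expand the squared distance under the COVER update, bound the quadratic term by $\eta_t^2L_g^2L_f^2$, insert the reference $\nabla g_S(x_t)\nabla f_S(g_S(x_t))$ so convexity yields $-2\eta_t(F_S(x_t)-F_S(x_*^S))$, and split the two residuals with Cauchy--Schwarz and Young's inequality with parameter $\gamma_t$, producing the identical coefficients. The only difference is cosmetic but slightly cleaner: you take the conditional expectation over $\nu_{i_t}$ \emph{before} decomposing, so the $u_t$-residual is controlled by smoothness of $f_S$ alone (exactly Assumption~\ref{ass:Smoothness and Lipschitz continuous gradient}(ii)), whereas the paper decomposes through $\nabla f_{\nu_{i_t}}$ and implicitly invokes $C_f$-Lipschitz gradients for each individual $f_\nu$.
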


\begin{proof}
	According to the update rule of Algorithm \ref{alg_cover}, we have 
	\begin{equation*}
		\begin{aligned}
			\|x_{t+1} - x_*^{S}\| 
			& \leq \|x_t - \eta_t v_t \nabla f_{\nu_{i_t}}(u_t) -x_*^S\|^2 \\
			& = \|x_t - x_*^S\|^2 + \eta_t^2\| v_t \nabla f_{\nu_{i_t}}(u_t)\|^2 -2\eta_t \langle x_t - x_*^S,v_t \nabla f_{\nu_{i_t}}(u_t) \rangle\\
			& = \|x_t - x_*^S\|^2 + \eta_t^2\|v_t \nabla f_{\nu_{i_t}}(u_t)\|^2 -2\eta_t \langle x_t - x_*^S, \nabla g_S(x_t) \nabla f_{\nu_{i_t}}(g_S(x_t)) \rangle + \theta_t,
		\end{aligned}
	\end{equation*}
	where 
	\begin{equation*}
		\theta_t = 2\eta_t \langle x_t - x_*^S, \nabla g_S(x_t) \nabla f_{\nu_{i_t}}(g_S(x_t)) - v_t \nabla f_{\nu_{i_t}}(u_t) \rangle.
	\end{equation*}
	Let $\F_t$ be the $\sigma$-field generated by $\{\omega_{j_0},\cdots, \omega_{j_{t-1}},  v_{i_0}, \cdots ,v_{i_{t-1}}\}$. Taking expectation  to the above inequality and using Assumption \ref{ass:Lipschitz continuous}, we have 
	\begin{equation*}
		\begin{aligned}
			\EX_A [\|x_{t+1} - x_*^S\|^2|\F_t]
			&\leq \|x_t -x_*^S\|^2 + L_g^2L_f^2\eta_t^2 - 2\eta_t\EX_A[ \langle x_t - x_*^S, \nabla g_S(x_t) \nabla f_{\nu_{i_t}}(g_S(x_t)) \rangle\F_t] + \EX_A[\theta_t|\F_t]\\
			&\leq \|x_t -x_*^S\|^2 + L_g^2L_f^2\eta_t^2 - 2\eta_t \langle x_t - x_*^S, \nabla F_S(x_t) \rangle  +  \EX_A[\theta_t|\F_t]\\
			&\leq \|x_t -x_*^S\|^2 + L_g^2L_f^2\eta_t^2 - 2\eta_t(F_S(x_t) - F_S(x_*^S))  +  \EX_A[\theta_t|\F_t],
		\end{aligned}
	\end{equation*}
	where the last inequality holds by the convexity of $F_S$. As for the term $\EX_A[\theta_t|\F_t]$, we have 
	\begin{equation*}
		\begin{aligned}
			\theta_t &= 2\eta_t \langle x_t - x_*^S, \nabla g_S(x_t)\nabla f_{\nu_{i_t}}(g_S(x_t)) -  v_t \nabla f_{\nu_{i_t}}(u_t) \rangle \\
			& = 2\eta_t \langle x_t - x_*^S,\nabla g_S(x_t)\nabla f_{\nu_{i_t}}(g_S(x_t)) - v_t \nabla f_{\nu_{i_t}}(g_S(x_t)) \rangle+ 2\eta_t \langle x_t - x_*^S, v_t \nabla f_{\nu_{i_t}}(g_S(x_t)) -  v_t \nabla f_{\nu_{i_t}}(u_t) \rangle \\
			&\leq 2\eta_t L_f\|x_t - x_*^S\| \cdot \|v_t - \nabla g_S(x_t)\| + 2\eta_t L_g C_f \|x_t - x_*^S\| \cdot \|u_t - g_S(x_t)\| \\
			&\leq  \frac{\eta_t(L_f+L_gC_f)}{\gamma_t}\|x_t - x_*^S\|^2 + \eta_t \gamma_t L_f\|v_t - \nabla g_S(x_t)\|^2 + \eta_t \gamma_t L_gC_f\|u_t-g_S(x_t)\|^2,
		\end{aligned}
	\end{equation*}
	where the last inequality holds by Cauchy-Schwartz inequality.
	
	Combining above two inequalities, we have 
	\begin{equation*}
		\begin{aligned}
			\EX_A [\|x_{t+1} - x_*^S\|^2|\F_t]
			&\leq (1+  \frac{\eta_t(L_f+L_gC_f)}{\gamma_t}) \|x_t -x_*^S\|^2 + L_g^2L_f^2\eta_t^2 - 2\eta_t(F_S(x_t) - F_S(x_*^S))  \\
			&\quad+\eta_t \gamma_tL_f\EX_A[\|v_t - \nabla g_S(x_t)\|^2\F_t] + \eta_t \gamma_tL_gC_f\EX_A[\|u_t-g_S(x_t)\|^2\F_t].
		\end{aligned}
	\end{equation*}
	This complete the proof.
\end{proof}

\begin{proof}[Proof of Theorem \ref{thm:opt_convex}]
	Now we begin to proof the Theorem \ref{thm:opt_convex}. 
	According to Lemma \ref{lemma:for_two_level_theorem_optimization}, setting  $\eta_t = \eta$, $\beta_t = \beta$ and let $\gamma_t = \frac{1}{\sqrt{\beta}}$, by rearranging and adding up, we get
	\begin{equation*}
		\begin{aligned}
			2\eta \sum_{t=1}^T \EX_{A}[F_S(x_t) - F_S(x_*^S)]
			&\leq \EX_{A}[\|x_1 - x_*^S\|^2] + \eta \sqrt{\beta} \sum_{t=1}^T(L_f+L_gC_f)  \|x_t -x_*^S\|^2 + L_g^2L_f^2\eta^2 T \\
			&\quad + \frac{\eta}{\sqrt{\beta}} \sum_{t=1}^T(L_f\|v_t - \nabla g_S(x_t)\|^2 + L_gC_f\sum_{t=1}^T\|u_t-g_S(x_t)\|^2).
		\end{aligned}
	\end{equation*}
	Then according to the definition that $\EX[\|x_t - x_*^S\|^2]$ is  bounded by
	$D_x$, and Lemma \ref{lemma:u_t_bound_two} and  Lemma \ref{lemma:iv_t_bound}, we have 
	\begin{equation}\label{Eq:fs_t-fst*}
		\begin{aligned}
			&2\eta \sum_{t=1}^T \EX_{A}[F_S(x_t) - F_S(x_*^S)]\\
			& \leq  D_x + \eta \sqrt{\beta} (L_f+L_gC_f) D_x T+ L_g^2L_f^2\eta^2 T\\
			& \quad+ \frac{\eta}{\sqrt{\beta}} L_gC_f(\frac{c}{e})^c\sum_{t=1}^T(t\beta)^{-c}\EX_{A}[\|u_0-g_S(x_0)\|^2] + 2\sigma_g^2\eta \beta^{1/2} L_gC_fT + 2L_g^5L_f^2C_f\frac{\eta^3}{\beta^{3/2}} T\\ 
			&\quad + \frac{\eta}{\sqrt{\beta}} L_f(\frac{c}{e})^c\sum_{t=1}^T(t\beta)^{-c}\EX_{A}[\|v_0 - \nabla g_S(x_0)\|^2] + 2\sigma_{g'}^2\eta \beta^{1/2} L_fT + 2L_g^4L_f^3\frac{\eta^3}{\beta^{3/2}} T.
		\end{aligned}
	\end{equation}
	According to \eqref{Eq:sum_t^-z_bound}, without losing generality, let $ c \neq 1$, we have 
	\begin{equation*}
		\begin{aligned}
			&\EX_{A}[F_S(A(S)) - F_S(x_*^S)]\\
			& \leq O\Big( D_x(\eta T)^{-1} + (L_gC_f U + L_f V)(\beta T)^{-c}\beta^{-\frac{1}{2}} + (L_gC_f\sigma_g^2 + L_f \sigma_{g'}^2 + (L_f+L_gC_f) D_x)\beta^{1/2}  \\
			&\qquad \quad + L_g^2L_f^2\eta + (L_g^5L_f^2C_f + L_g^4L_f^3)\eta^2\beta^{-\frac{3}{2}}  \Big),
		\end{aligned}
	\end{equation*}
	where   $\EX[\|u_0-g_S(x_0)\|^2] \leq U$ and $\EX[\|v_0 - \nabla g_S(x_0)\|^2] \leq V$ .
	
	This complete the proof.
\end{proof}

\begin{proof}[proof of Theorem \ref{thm:Excess_Risk_Bound_convex}]
	Putting   Lemma \ref{lemma:u_t_bound_two} and \ref{lemma:iv_t_bound}  into \eqref{Eq:x_T-X_Tkv} and \eqref{Eq:x_T-X_Tlw}, for any $c > 0$, we have 
	\begin{equation*}
		\begin{aligned}
			&\EX_A[\|x_t - x_t^{k,\nu}\|] \\
			&\leq 10L_gC_f\sup_S\eta\sqrt{(\frac{c}{e})^cU}\beta^{-\frac{c}{2}}\sum_{j=0}^{t-1}\sqrt{j^{-c}} + 10L_gC_f\eta  \sqrt{ 2\sigma_g^2 \beta + \frac{2L_g^4L_f^2\eta^2}{\beta}}t   \\
			&\quad + 10L_f\sup_S\eta \sqrt{(\frac{c}{e})^cV}\beta^{-\frac{c}{2}} \sum_{j=0}^{t-1}\sqrt{j^{-c}}+ 10L_f\eta \sqrt{2\beta\sigma_{g'}^2 + \frac{2L_g^4L_f^2\eta^2}{\beta}} t + \frac{6L_gL_f\eta t}{n} + 4L_f\sigma_f\eta \sqrt{t}.
		\end{aligned}
	\end{equation*}
	Similarly, we can get 
	\begin{equation*}
		\begin{aligned}
			&\EX_A[\|x_t - x_t^{l,\omega}\|] \\
			&\leq 10L_gC_f\sup_S\eta\sqrt{(\frac{c}{e})^cU}\beta^{-\frac{c}{2}}\sum_{j=0}^{t-1}\sqrt{j^{-c}} + 10L_gC_f\eta  \sqrt{ 2\sigma_g^2 \beta + \frac{2L_g^4L_f^2\eta^2}{\beta}}t   \\
			&\quad + 10L_f\sup_S\eta \sqrt{(\frac{c}{e})^cV}\beta^{-\frac{c}{2}} \sum_{j=0}^{t-1}\sqrt{j^{-c}}+ 10L_f\eta \sqrt{2\beta\sigma_{g'}^2 + \frac{2L_g^4L_f^2\eta^2}{\beta}} t + \frac{6L_gL_f\eta t}{m} + 4L_f\sigma_f\eta \sqrt{t}.
		\end{aligned}
	\end{equation*}
	Combining above two inequalities, we have 
	\begin{equation*}
		\begin{aligned}
			&\EX_A[\|x_t - x_t^{k,\nu}\|]  + 4\EX_A[\|x_t - x_t^{l,\omega}\|]\\
			&\leq   \frac{24L_gL_f\eta t}{m} + \frac{6L_gL_f\eta t}{n} +  50L_gC_f\sup_S\eta\sqrt{(\frac{c}{e})^cU}\beta^{-\frac{c}{2}}\sum_{j=0}^{t-1}\sqrt{j^{-c}} + 50L_gC_f\eta \sqrt{ 2\sigma_g^2 \beta + \frac{2L_g^4L_f^2\eta^2}{\beta}}t  \\
			&\quad + 50L_f\sup_S\eta \sqrt{(\frac{c}{e})^cV}\beta^{-\frac{c}{2}} \sum_{j=0}^{t-1}\sqrt{j^{-c}}+ 50L_f\eta \sqrt{2\beta\sigma_{g'}^2 + \frac{2L_g^4L_f^2\eta^2}{\beta}}t + 20L_f\sigma_f\eta\sqrt{t}.
		\end{aligned}
	\end{equation*}
	Putting above inequality into Lemma \ref{theorem:general_two_level}, we have 
	\begin{equation}\label{Eq:F_x_t-F_Sx_t}
		\begin{aligned}
			\EX_{S,A} [F(x_t) - F_S(x_t)]
			& \leq 50L_g^2L_fC_f\sup_S\eta\sqrt{(\frac{c}{e})^cU}\beta^{-\frac{c}{2}}\sum_{j=0}^{t-1}\sqrt{j^{-c}} + 50L_g^2L_fC_f\eta\sqrt{ 2\sigma_g^2 \beta + \frac{2L_g^4L_f^2\eta^2}{\beta}} t  \\
			&\quad + 50L_gL_f^2\sup_S\eta \sqrt{(\frac{c}{e})^cV}\beta^{-\frac{c}{2}} \sum_{j=0}^{t-1}\sqrt{j^{-c}}+ 50L_gL_f^2\eta\sqrt{2\beta\sigma_{g'}^2 + \frac{2L_g^4L_f^2\eta^2}{\beta}} t\\
			&\quad+ \frac{24L_g^2L_f^2\eta t}{m} + \frac{6L_g^2L_f^2\eta t}{n} +  L_{f} \sqrt{m^{-1} \mathbb{E}_{S, A}[\operatorname{Var}_{\omega}(g_{\omega}(A(S)))]}.
		\end{aligned}
	\end{equation}
	Due to 
	\begin{equation*}
		\begin{aligned}
			\sum_{t=1}^{T} \mathbb{E}_{S,A}[F(x_{t})-F(x_{*})] \leq \sum_{t=1}^{T} \mathbb{E}_{S,A}[F(x_{t})-F_S(x_{*})] = \sum_{t=1}^{T} \EX_{S,A} [F(x_t) - F_S(x_t) + F_S(x_t) - F_S(x_*^S)],
		\end{aligned}
	\end{equation*}
	then combining \eqref{Eq:F_x_t-F_Sx_t} and \eqref{Eq:fs_t-fst*}, we have 
	\begin{equation*}
		\begin{aligned}
			&\sum_{t=1}^{T} \mathbb{E}_{S,A}[F(x_{t})-F(x_{*})] \\
			& \leq  \frac{1}{2\sqrt{\beta}} L_gC_f(\frac{c}{e})^c\sum_{t=1}^T(t\beta)^{-c}\EX[\|u_0-g_S(x_0)\|^2] + \sigma_g^2 \beta^{1/2} L_gC_fT + L_g^5L_f^2C_f\frac{\eta^2}{\beta^{3/2}} T\\ 
			&\quad + \frac{1}{2\sqrt{\beta}} L_gL_f(\frac{c}{e})^c\sum_{t=1}^T(t\beta)^{-c}\EX[\|v_0 - \nabla g_S(x_0)\|^2] + \sigma_{g'}^2 \beta^{1/2} L_gL_fT + L_g^5L_f^2C_f\frac{\eta^2}{\beta^{3/2}} T\\
			&\quad + \frac{D_x}{2\eta } +  \sqrt{\beta} (L_gL_f+L_gC_f + L_g) D_x T /2 + L_g^2L_f^2\eta T/2+ \frac{24L_g^2L_f^2\eta}{m}\sum_{t=1}^T t + \frac{6L_g^2L_f^2\eta}{n} \sum_{t=1}^Tt\\
			&\quad + 50L_g^2L_fC_f\sup_S\eta\sqrt{(\frac{c}{e})^cU}\beta^{-\frac{c}{2}}\sum_{t=1}^T\sum_{j=0}^{t-1}\sqrt{j^{-c}} + 50L_g^2L_fC_f\eta \sqrt{ 2\sigma_g^2 \beta + \frac{2L_g^4L_f^2\eta^2}{\beta}} \sum_{t=1}^T t  \\
			&\quad + 50L_gL_f^2\sup_S\eta \sqrt{(\frac{c}{e})^cU_{\iv}}\beta^{-\frac{c}{2}} \sum_{t=1}^T\sum_{j=0}^{t-1}\sqrt{j^{-c}}   +50L_gL_f^2\eta \sqrt{2\beta\sigma_{g'}^2 + \frac{2L_g^4L_f^2\eta^2}{\beta}} \sum_{t=1}^T t\\
			&\quad  +  L_{f} T \sqrt{m^{-1} \mathbb{E}_{S, A}[\operatorname{Var}_{\omega}(g_{\omega}(A(S)))]}.
		\end{aligned}
	\end{equation*}
	Using \eqref{Eq:double_sum_j^{-c/2}} we have, for any $c>0$,
	\begin{equation*}
		\begin{aligned}
			\sum_{t=1}^{T} \mathbb{E}_{S,A}[F(x_{t})-F(x_{*})] 
			& \leq O\Big(\beta^{-\frac{1}{2}-c}T^{1-c}(\log T)^{\1_{c=1}} + \beta^{\frac{1}{2}}T  + \frac{\eta^2 T}{\beta^{\frac{3}{2}}} + \frac{1}{\eta} +   \eta T+ T^2 \eta (\frac{1}{n} + \frac{1}{m})  \\
			&\qquad~~~~~~~ +  \eta \beta^{-\frac{c}{2}} T^{2-\frac{c}{2}}(\log T)^{\1_{c=2}}  + T^2\eta \sqrt{\beta}+ \frac{\eta^2 T^2}{\sqrt{\beta}}  + \frac{T}{\sqrt{m}}\Big).
		\end{aligned}
	\end{equation*}
	Dividing both sides of the above inequality, setting $\eta = T^{-a}$ and $\beta = T^{-b}$, and from the choice of $A(S)$, we have 
	\begin{equation*}
		\begin{aligned}
			&\mathbb{E}_{S,A}[F(A(S))-F(x_{*})] \\
			& \leq O\Big(T^{- (1- b)c+ \frac{b}{2}} (\log T)^{\1_{c= 1}} + T^{-\frac{b}{2}} + T^{\frac{3}{2}b-2a} + T^{a-1} + T^{-a}+ T^{1-a}(\frac{1}{n}+\frac{1}{m} )  \\
			&\qquad + T^{1-a-\frac{c}{2}(b-1) }(\log T)^{\1_{c=2}}+ T^{1-a-\frac{b}{2}} + T^{1+\frac{b}{2}-2a}  + \frac{1}{\sqrt{m}}\Big). 
		\end{aligned}
	\end{equation*}
	Since $a,b \in (0,1]$, as long as we have $c >4$, the  dominating terms are the following $ O(T^{1- a- \frac{b}{2}})$, $\quad O(T^{1+ \frac{b}{2}- 2a})$, $\quad O(n^{-1}T^{1- a}), \quad O(m^{-1}T^{1- a}), \quad  O(T^{a-1})$,  and $ O(T^{\frac{3}{2}b- 2a}).$
	Setting $a = b =4/5$,  then we have  
	\begin{equation*}
		\begin{aligned}
			\mathbb{E}_{S,A}[F(A(S))-F(x_{*})]=O(T^{-\frac{1}{5}}+\frac{T^{\frac{1}{5}}}{n}+\frac{T^{\frac{1}{5}}}{m}+\frac{1}{\sqrt{m}}).
		\end{aligned}
	\end{equation*}
	Choosing $T=O(\max \{n^{2.5}, m^{2.5}\})$, we have the following bound
	\begin{equation*}
		\mathbb{E}_{S,A}[F(A(S))-F(x_{*})]=O(\frac{1}{\sqrt{n}}+\frac{1}{\sqrt{m}}).
	\end{equation*}
	This complete the proof.
\end{proof}

\subsection{Strongly-convex-setting}

\begin{proof}[proof of Theorem \ref{thm:sta_sconvex_two_convex}]
	Similar to the proof for convex setting, we use the same notations. Since changing one sample data can happen in either $S_\nu$ or $S_\omega$, we estimate $\EX[\|x_{t+1}-x_{t+1}^{k,\nu}\|]$ and $\EX[\|x_{t+1}-x_{t+1}^{l,\omega}\|]$.
	
	\noindent{\bf Estimation of $\mathbb{E}_{A}\bigl[\|x_{t+1}-x_{t+1}^{k,\nu}\|\bigr]$}  
	
	we will consider two cases: $i_t \neq k$ and $i_t = k$.
	
	\textbf{\quad Case 1 ($i_t \neq k $). } We have 
	
		\begin{equation}\label{Eq:two_level_case1_origin_sc}
		\begin{aligned}
			&\|x_{t+1} - x_{t+1}^{k,\nu}\|^2 \\
			&\leq \|x_t - \eta_t v_{t} \nabla f_{\nu_{i_t}}(u_t) - x_{t}^{k,\nu} + \eta_t v_{t}^{k,\nu} \nabla f_{\nu_{i_t}}(u_t^{k,\nu})\|^2\\
			& \leq \|x_t - x_t^{k,\nu}\|^2 - 2\eta_t \langle v_{t}^{k,\nu} \nabla f_{\nu_{i_t}}(u_t^{k,\nu})- v_{t} \nabla f_{\nu_{i_t}}(u_t), x_t^{k,\nu} - x_t\rangle + \eta_t^2\|v_{t}^{k,\nu} \nabla f_{\nu_{i_t}}(u_t^{k,\nu}) - v_{t} \nabla f_{\nu_{i_t}}(u_t) \|^2.
		\end{aligned}
	\end{equation}
	We begin to  estimate the second term in \eqref{Eq:two_level_case1_origin_sc}.
	\begin{equation}\label{Eq: decomposed_product_knu_sc}
		\begin{aligned}
			&- 2\eta_t \langle v_{t}^{k,\nu} \nabla f_{\nu_{i_t}}(u_t^{k,\nu})- v_{t} \nabla f_{\nu_{i_t}}(u_t), x_t^{k,\nu} - x_t\rangle\\
			& = - 2\eta_t \langle v_{t}^{k,\nu} \nabla f_{\nu_{i_t}}(u_t^{k,\nu}) - v_t^{k,\nu} \nabla  f_{\nu_{i_t}}(g_S(x_t^{k,\nu})), x_t^{k,\nu} - x_t\rangle\\
			& \quad  - 2\eta_t \langle v_t^{k,\nu} \nabla  f_{\nu_{i_t}}(g_S(x_t^{k,\nu})) - v_t^{k,\nu} \nabla f_S(g_S(x_t^{k,\nu})) , x_t^{k,\nu} - x_t  \rangle\\
			&\quad- 2\eta_t \langle v_t^{k,\nu} \nabla f_S(g_S(x_t^{k,\nu})) - \nabla g_S(x_t^{k,\nu})\nabla f_S(g_S(x_t^{k,\nu})) , x_t^{k,\nu} - x_t  \rangle \\
			&\quad- 2\eta_t \langle \nabla g_S(x_t^{k,\nu})\nabla f_S(g_S(x_t^{k,\nu})) - \nabla g_S(x_t)\nabla f_S(g_S(x_t)) , x_t^{k,\nu} - x_t  \rangle\\
			&\quad - 2\eta_t \langle \nabla g_S(x_t)\nabla f_S(g_S(x_t)) - v_t\nabla f_S(g_S(x_t)) , x_t^{k,\nu} - x_t  \rangle - 2\eta_t \langle v_t\nabla f_S(g_S(x_t)) - v_t\nabla f_S(u_t), x_t^{k,\nu} - x_t  \rangle\\
			&\quad - 2\eta_t \langle  v_t\nabla f_S(u_t) - v_t \nabla f_{\nu_{i_t}}(u_t), x_t^{k,\nu} - x_t  \rangle.
		\end{aligned}
	\end{equation}

	Changing the setting from convex to strongly convex will only affect the fourth item on the RHS of \eqref{Eq: decomposed_product_knu}, and the other items will remain the same as before. Now we estimate the fourth term on the RHS of \eqref{Eq: decomposed_product_knu_sc}.
	\begin{equation*}
		\begin{aligned}
			&\langle  \nabla g_{S}(x_{t}^{k, \nu}) \nabla f_S(g_{S}(x_{t}^{k, \nu})  -\nabla g_{S}(x_{t}) \nabla f_S(g_{S}(x_{t})), x_{t}^{k, \nu} - x_{t}\rangle \\
			&\geq \frac{L \mu}{L+\mu}\|x_{t}^{k, \nu} - x_{t}\|^{2}+\frac{1}{L+\mu}\|\nabla g_{S}(x_{t}^{k, \nu}) \nabla f_S(g_{S}(x_{t}^{k, \nu}))- \nabla g_{S}(x_{t}) \nabla f_S(g_{S}(x_{t}))\|^{2} .
		\end{aligned}
	\end{equation*}
	Then substituting above inequality into \eqref{Eq:two_level_case1_origin_sc} we have 
	\begin{equation*}
		\begin{aligned}
			&\|x_{t+1} - x_{t+1}^{k,\nu}\|\\
			& \leq (1- \frac{2L\mu \eta_t}{L+\mu})\|x_t-x_t^{k,\nu}\|^2  +  2L_g C_f \eta_t \| u_t^{k,\nu} - g_S(x_t^{k,\nu})\|\cdot\|x_t^{k,\nu} - x_t \| + 2L_f\eta_t\|v_t^{k,\nu} - \nabla g_S(x_t^{k,\nu})\|\cdot\|x_t^{k,\nu} - x_t \| \\
			&\quad+ 2L_gC_f\eta_t\|u_t-g_S(x_t)\|\cdot \|x_t^{k,\nu}-x_t\| + 2L_f\eta_t\|u_t-g_S(x_t)\|\cdot \|x_t^{k,\nu}-x_t\|\\
			&\quad+ 7\eta_t^2L_g^2\| \nabla f_{\nu_{i_t}}(u_t^{k,\nu}) -\nabla f_S(u_t^{k,\nu})\|^2  + 7\eta_t^2L_g^2C_f^2\|u_t^{k,\nu}-g_S(x_t^{k,\nu})\|^2+ 7\eta_t^2L_f^2\|v_t^{k,\nu}-\nabla g_S(x_t^{k,\nu})\|^2\\
			& \quad + 7\eta_t^2L_f^2\|\nabla g_S(x_t) - v_t\|^2 + 7\eta_t^2L_g^2C_f^2\|u_t-g_S(x_t)\|^2 + 7\eta_t^2L_g^2\| \nabla f_{\nu_{i_t}}(u_t) -\nabla f_S(u_t)\|^2.
		\end{aligned}
	\end{equation*}
	where the inequality holds by $\eta_t \leq \frac{2}{7(L+\mu)}$.
	
	\textbf{\quad Case 2 ($i_t = k $). } We have 
	\begin{equation}\label{Eq:two_level_case4_origin_sc}
	\begin{aligned}
		\|x_{t+1} - x_{t+1}^{l,\omega}\| & = \|x_t - \eta_t v_{t} \nabla f_{\nu_{i_t}}(u_t) - x_{t}^{l,\omega} + \eta_t v_{t}^{l,\omega} \nabla f_{\nu_{i_t}}(u_t^{l,\omega})\|\\
		& \leq \|x_t - x_t^{l,\omega}\|  + \eta_t\|v_{t}^{l,\omega} \nabla f_{\nu_{i_t}}(u_t^{l,\omega}) - v_{t} \nabla f_{\nu_{i_t}}(u_t) \| \leq \|x_t - x_t^{l,\omega}\| + 2\eta_t L_g L_f,
	\end{aligned}
\end{equation}
where the first inequality holds by Assumption \ref{ass:Lipschitz continuous}, then we have 
\begin{equation*}
	\|x_{t+1} - x_{t+1}^{l,\omega}\|^2 \leq \|x_t - x_t^{l,\omega}\|^2 + 4\eta_t L_g L_f\|x_t - x_t^{l,\omega}\| + 4\eta_t^2L_g^2L_f^2.
	\end{equation*}
	Combining above two cases, we have 
	\begin{equation*}
		\begin{aligned}
			\EX_A[\|x_{t+1} - x_{t+1}^{k,\nu}\|^2] 
			&\leq (1- \frac{2L\mu \eta_t}{L+\mu})\EX_A[\|x_t-x_t^{k,\nu}\|^2]  +  2L_g C_f \eta_t \EX_A[\| u_t^{k,\nu} - g_S(x_t^{k,\nu})\|\cdot\|x_t^{k,\nu} - x_t \|]\\
			&\quad + 2L_f\eta_t\EX_A[\|v_t^{k,\nu} - \nabla g_S(x_t^{k,\nu})\|\cdot\|x_t^{k,\nu} - x_t \|] \\
			&\quad+ 2L_gC_f\eta_t\EX_A[\|u_t-g_S(x_t)\|\cdot \|x_t^{k,\nu}-x_t\|] + 2L_f\eta_t\EX_A[\|u_t-g_S(x_t)\|\cdot \|x_t^{k,\nu}-x_t\|]\\
			&\quad  + 7\eta_t^2L_g^2C_f^2\EX_A[\|u_t^{k,\nu}-g_S(x_t^{k,\nu})\|^2]+ 7\eta_t^2L_f^2\EX_A[\|v_t^{k,\nu}-\nabla g_S(x_t^{k,\nu})\|^2]\\
			& \quad + 7\eta_t^2L_f^2\EX_A[\|\nabla g_S(x_t) - v_t\|^2] + 7\eta_t^2L_g^2C_f^2\EX_A[\|u_t-g_S(x_t)\|^2] + 14\eta_t^2L_g^2 \sigma_f^2\\
			&\quad + 4 \eta_{t} L_{g} L_{f} \mathbb{E}_{A}[\|x_{t}-x_{t}^{k, \nu}\| \1_{[i_{t}=k]}]+4 \eta_{t}^{2} L_{f}^{2} L_{g}^{2} \mathbb{E}_{A}[\1_{[i_{t}=k]}].
		\end{aligned}
	\end{equation*}
	By setting $\eta_t = \eta$, we have 
	\begin{equation*}
		\begin{aligned}
			&\EX_A[\|x_{t+1} - x_{t+1}^{k,\nu}\|^2] \\
			&\leq  2L_gC_f\eta\sum_{j=0}^{t-1}(1-\frac{2L\mu \eta}{L+\mu})^{t-j}((\EX_A[\| u_j^{k,\nu} - g_S(x_j^{k,\nu})\|^2])^{1/2} + (\EX_A[\| u_j - g_S(x_j)\|^2])^{1/2}) (\EX_A[\|x_j - x_j^{k,\nu}\|^2])^{1/2}\\
			&\quad + 2L_f\eta\sum_{j=0}^{t-1}(1-\frac{2L\mu \eta}{L+\mu})^{t-j}((\EX_A\|v_j^{k,\nu} - \nabla g_S(x_j^{k,\nu})\|^2)^{1/2} + (\EX_A\|v_j - \nabla g_S(x_j)\|^2)^{1/2})(\EX_A[\|x_j - x_j^{k,\nu}\|^2])^{1/2}\\
			&\quad  + 7\eta^2 L_g^2 C_f^2\sum_{j=1}^t(1-\frac{2L\mu \eta}{L+\mu})^{t-j} \EX_A[\|u_j^{k,\nu}-g_S(x_j^{k,\nu})\|^2]+7\eta^2 L_g^2 C_f^2\sum_{j=1}^t(1-\frac{2L\mu \eta}{L+\mu})^{t-j} \EX_A[\|u_j-g_S(x_j)\|^2]\\
			&\quad + 7\eta^2 L_f^2\sum_{j=1}^t(1-\frac{2L\mu \eta}{L+\mu})^{t-j} \EX_A[\|v_j^{k,\nu}-\nabla g_S(x_j^{k,\nu})\|^2] +  7\eta^2 L_f^2 \sum_{j=1}^t(1-\frac{2L\mu \eta}{L+\mu})^{t-j} \EX_A[\|\nabla g_S(x_j) - v_j\|^2]\\
			&\quad+\frac{4\eta L_gL_f}{n}\sum_{j=1}^t(1-\frac{2L\mu \eta}{L+\mu})^{t-j} \EX_A[\|x_j - x_{j}^{k,\nu}\|] + \frac{4\eta^2 L_f^2 L_g^2}{n}\sum_{j=1}^t(1-\frac{2L\mu \eta}{L+\mu})^{t-j} + 14\eta^2 L_g^2\sigma_f^2\sum_{j=1}^{t-1}(1-\frac{2L\mu \eta}{L+\mu})^{t-j},
		\end{aligned}
	\end{equation*}
where the inequality holds by Lemma \ref{lemma:general_recursive},  Cauchy-Schwarz inequality, and the fact that $x_0 = x_0^{k,\nu}$. Define  $u_t = (\EX_A[\|x_t - x_t^{k,\nu}\|^2])^{1/2}$, we have 
\begin{equation*}
\begin{aligned}
    u_t^2 &\leq  2L_gC_f\eta\sum_{j=0}^{t-1}(1-\frac{2L\mu \eta}{L+\mu})^{t-j}((\EX_A[\| u_j^{k,\nu} - g_S(x_j^{k,\nu})\|^2])^{1/2} + (\EX_A[\| u_j - g_S(x_j)\|^2])^{1/2}) u_j\\
    &\quad + 2L_f\eta\sum_{j=0}^{t-1}(1-\frac{2L\mu \eta}{L+\mu})^{t-j}((\EX_A\|v_t^{k,\nu} - \nabla g_S(x_j^{k,\nu})\|^2)^{1/2} + (\EX_A\|v_j - \nabla g_S(x_j)\|^2)^{1/2})u_j\\
    &\quad  + 7\eta^2 L_g^2 C_f^2\sum_{j=1}^{t-1}(1-\frac{2L\mu \eta}{L+\mu})^{t-j} \EX_A[\|u_j^{k,\nu}-g_S(x_j^{k,\nu})\|^2]+7\eta^2 L_g^2 C_f^2\sum_{j=1}^{t-1}(1-\frac{2L\mu \eta}{L+\mu})^{t-j} \EX_A[\|u_j-g_S(x_j)\|^2]\\
    &\quad + 7\eta^2 L_f^2\sum_{j=1}^{t-1}(1-\frac{2L\mu \eta}{L+\mu})^{t-j} \EX_A[\|v_j^{k,\nu}-\nabla g_S(x_j^{k,\nu})\|^2] +  7\eta^2 L_f^2 \sum_{j=1}^{t-1}(1-\frac{2L\mu \eta}{L+\mu})^{t-j} \EX_A[\|\nabla g_S(x_j) - v_j\|^2]\\
    &\quad+\frac{4\eta L_gL_f}{n}\sum_{j=1}^{t-1}(1-\frac{2L\mu \eta}{L+\mu})^{t-j} \EX_A[\|x_j - x_{j}^{k,\nu}\|] + \frac{4\eta^2 L_f^2 L_g^2}{n}\sum_{j=1}^{t-1}(1-\frac{2L\mu \eta}{L+\mu})^{t-j} + 14\eta^2 L_g^2\sigma_f^2\sum_{j=1}^{t-1}(1-\frac{2L\mu \eta}{L+\mu})^{t-j} ,
\end{aligned}
\end{equation*}

Furthermore, define 
\begin{equation*}
\begin{aligned}
    \alpha_j &\leq 2L_gC_f\eta\sum_{j=0}^{t-1}(1-\frac{2L\mu \eta}{L+\mu})^{t-j}((\EX_A[\| u_j^{k,\nu} - g_S(x_j^{k,\nu})\|^2])^{1/2} + (\EX_A[\| u_j - g_S(x_j)\|^2])^{1/2}) \\
    &\quad + 2L_f\eta\sum_{j=0}^{t-1}(1-\frac{2L\mu \eta}{L+\mu})^{t-j}((\EX_A\|v_t^{k,\nu} - \nabla g_S(x_j^{k,\nu})\|^2)^{1/2} + (\EX_A\|v_j - \nabla g_S(x_j)\|^2)^{1/2})\\
    &+\frac{4\eta L_gL_f}{n}(1-\frac{2L\mu \eta}{L+\mu})^{t-j-1},
\end{aligned}
\end{equation*}
	and 
	\begin{equation*}
		\begin{aligned}
			S_t &\leq  7\eta^2 L_g^2 C_f^2\sum_{j=1}^{t-1}(1-\frac{2L\mu \eta}{L+\mu})^{t-j} \EX_A[\|u_j^{k,\nu}-g_S(x_j^{k,\nu})\|^2]+7\eta^2 L_g^2 C_f^2\sum_{j=1}^{t-1}(1-\frac{2L\mu \eta}{L+\mu})^{t-j} \EX_A[\|u_j-g_S(x_j)\|^2]\\
			&\quad + 7\eta^2 L_f^2\sum_{j=1}^{t-1}(1-\frac{2L\mu \eta}{L+\mu})^{t-j} \EX_A[\|v_j^{k,\nu}-\nabla g_S(x_j^{k,\nu})\|^2] +  7\eta^2 L_f^2 \sum_{j=1}^{t-1}(1-\frac{2L\mu \eta}{L+\mu})^{t-j} \EX_A[\|\nabla g_S(x_j) - v_j\|^2]\\
			&\quad + \frac{4\eta^2 L_f^2 L_g^2}{n}\sum_{j=1}^t(1-\frac{2L\mu \eta}{L+\mu})^{t-j-1} + 14\eta^2 L_g^2\sigma_f^2\sum_{j=1}^{t-1}(1-\frac{2L\mu \eta}{L+\mu})^{t-j} ,
		\end{aligned}
	\end{equation*}
	using Lemma \ref{lemma:recursion lemma}, we have 
	\begin{equation*}
		\begin{aligned}
			u_t & \leq 3\eta L_g C_f(\sum_{j=1}^{t-1}(1-\frac{2L\mu \eta}{L+\mu})^{t-j} \EX_A[\|u_j^{k,\nu}-g_S(x_j^{k,\nu})\|^2])^{\frac{1}{2}}+3\eta L_g C_f(\sum_{j=1}^{t-1}(1-\frac{2L\mu \eta}{L+\mu})^{t-j} \EX_A[\|u_j-g_S(x_j)\|^2])^{\frac{1}{2}}\\
			&\quad + 3\eta L_f(\sum_{j=1}^{t-1}(1-\frac{2L\mu \eta}{L+\mu})^{t-j} \EX_A[\|v_j^{k,\nu}-\nabla g_S(x_j^{k,\nu})\|^2])^{\frac{1}{2}} +  3\eta L_f( \sum_{j=1}^{t-1}(1-\frac{2L\mu \eta}{L+\mu})^{t-j} \EX_A[\|\nabla g_S(x_j) - v_j\|^2])^{\frac{1}{2}}\\
			&\quad + 2L_gC_f\eta\sum_{j=0}^{t-1}(1-\frac{2L\mu \eta}{L+\mu})^{t-j}((\EX_A[\| u_j^{k,\nu} - g_S(x_j^{k,\nu})\|^2])^{1/2} + (\EX_A[\| u_j - g_S(x_j)\|^2])^{1/2}) \\
			&\quad + 2L_f\eta\sum_{j=0}^{t-1}(1-\frac{2L\mu \eta}{L+\mu})^{t-j}((\EX_A\|v_j^{k,\nu} - \nabla g_S(x_j^{k,\nu})\|^2)^{1/2} + (\EX_A\|v_j - \nabla g_S(x_j)\|^2)^{1/2})\\
			&\quad + \frac{2L_gL_f(L+\mu)}{n} +  \sqrt{\frac{2L_f^2L_g^2\eta(L+\mu)}{L\mu n }} + \sqrt{\frac{14L_g^2\sigma_f^2\eta(L+\mu)}{L\mu}},
		\end{aligned}
	\end{equation*}
	where we use the inequality that $(\sum_{i=1}^ka_i)^{1/2} \leq \sum_{i=1}^k(a_i)^{1/2}$ and \eqref{Eq:sum_(1-a)^t-j}. Then, with $T$ iterations, we have 
	\begin{equation*}
		\begin{aligned}
			\mathbb{E}_{A}[\|x_{T}-x_{T}^{k, \nu}\|]
			&\leq  6L_g C_f \eta \sup_S (\sum_{j=0}^{T-1}(1-\frac{2L\mu \eta}{L+\mu})^{T-j-1}\EX_A[\|u_j-g_S(x_j)\|^2])^{1/2}\\
			&\quad+ 6L_f\eta \sup_S(\sum_{j=0}^{T-1}(1-\frac{2L\mu \eta}{L+\mu})^{T-j-1}\EX_A[\|v_j - \nabla g_S(x_j)\|^2])^{1/2}\\
			&\quad + 4L_g C_f \eta \sup_S \sum_{j=0}^{T-1}(1-\frac{2L\mu \eta}{L+\mu})^{T-j-1}(\EX_A[\|u_j-g_S(x_j)\|^2])^{1/2}\\
			&\quad + 4L_f\eta \sup_S\sum_{j=0}^{T-1}(1-\frac{2L\mu \eta}{L+\mu})^{T-j-1}(\EX_A[\|v_j - \nabla g_S(x_j)\|^2])^{1/2} \\
			&\quad +  \frac{2L_gL_f(L+\mu)}{n} +  \sqrt{\frac{2L_f^2L_g^2\eta(L+\mu)}{L\mu n }} + \sqrt{\frac{14L_g^2\sigma_f^2\eta(L+\mu)}{L\mu}}.
		\end{aligned}
	\end{equation*}
	
	\noindent{\bf Estimation of $\mathbb{E}_{A}\bigl[\|x_{t+1}-x_{t+1}^{l,\omega}\|\bigr]$}  
	
	Similarly, we will consider two cases: $j_t \neq l$ and $j_t = l$.
	
	\textbf{\quad Case 1 ($j_t \neq l $). } 

	Similarly, we have 
	\begin{equation*}
	\begin{aligned}
		&\|x_{t+1} - x_{t+1}^{l,\omega}\|\\
		& \leq (1- \frac{2L\mu \eta_t}{L+\mu})\|x_t-x_t^{l,\omega}\|^2  +  2L_g C_f \eta_t \| u_t^{l,\omega} - g_S(x_t^{l,\omega})\|\cdot\|x_t^{l,\omega} - x_t \| + 2L_f\eta_t\|v_t^{l,\omega} - \nabla g_S(x_t^{l,\omega})\|\cdot\|x_t^{l,\omega} - x_t \| \\
		&\quad+ 2L_gC_f\eta_t\|u_t-g_S(x_t)\|\cdot \|x_t^{l,\omega}-x_t\| + 2L_f\eta_t\|u_t-g_S(x_t)\|\cdot \|x_t^{l,\omega}-x_t\|\\
		&\quad+ 7\eta_t^2L_g^2\| \nabla f_{\nu_{i_t}}(u_t^{l,\omega}) -\nabla f_S(u_t^{l,\omega})\|^2  + 7\eta_t^2L_g^2C_f^2\|u_t^{l,\omega}-g_S(x_t^{l,\omega})\|^2+ 7\eta_t^2L_f^2\|v_t^{l,\omega}-\nabla g_S(x_t^{l,\omega})\|^2\\
		& \quad + 7\eta_t^2L_f^2\|\nabla g_S(x_t) - v_t\|^2 + 7\eta_t^2L_g^2C_f^2\|u_t-g_S(x_t)\|^2 + 7\eta_t^2L_g^2\| \nabla f_{\nu_{i_t}}(u_t) -\nabla f_S(u_t)\|^2.
	\end{aligned}
\end{equation*}
where the inequality holds by $\eta_t \leq \frac{2}{7(L+\mu)}$.

	\textbf{\quad Case 2 ($j_t = l $). } We have 
	\begin{equation*}
		\|x_{t+1} - x_{t+1}^{l,\omega}\|^2 \leq \|x_t - x_t^{l,\omega}\|^2 + 4\eta_t L_g L_f\|x_t - x_t^{l,\omega}\| + 4\eta_t^2L_g^2L_f^2.
	\end{equation*}
	Combining the above two cases, we have 
	\begin{equation*}
		\begin{aligned}
			&\EX_A[\|x_{t+1} - x_{t+1}^{l,\omega}\|^2] \\
			&\leq (1- \frac{2L\mu \eta_t}{L+\mu})\|x_t-x_t^{l,\omega}\|^2  +  2L_g C_f \eta_t \| u_t^{l,\omega} - g_S(x_t^{l,\omega})\|\cdot\|x_t^{l,\omega} - x_t \| + 2L_f\eta_t\|v_t^{l,\omega} - \nabla g_S(x_t^{l,\omega})\|\cdot\|x_t^{l,\omega} - x_t \| \\
			&\quad+ 2L_gC_f\eta_t\|u_t-g_S(x_t)\|\cdot \|x_t^{l,\omega}-x_t\| + 2L_f\eta_t\|u_t-g_S(x_t)\|\cdot \|x_t^{l,\omega}-x_t\|\\
			&\quad+ 7\eta_t^2L_g^2\| \nabla f_{\nu_{i_t}}(u_t^{l,\omega}) -\nabla f_S(u_t^{l,\omega})\|^2  + 7\eta_t^2L_g^2C_f^2\|u_t^{l,\omega}-g_S(x_t^{l,\omega})\|^2+ 7\eta_t^2L_f^2\|v_t^{l,\omega}-\nabla g_S(x_t^{l,\omega})\|^2\\
			& \quad + 7\eta_t^2L_f^2\|\nabla g_S(x_t) - v_t\|^2 + 7\eta_t^2L_g^2C_f^2\|u_t-g_S(x_t)\|^2 + 7\eta_t^2L_g^2\| \nabla f_{\nu_{i_t}}(u_t) -\nabla f_S(u_t)\|^2\\
			&\quad + 4 \eta_{t} L_{g} L_{f} \mathbb{E}_{A}[\|x_{t}-x_{t}^{l, \omega}\| \1_{[j_{t}=l]}]+4 \eta_{t}^{2} L_{f}^{2} L_{g}^{2} \mathbb{E}_{A}[\1_{[j_{t}=l]}].
		\end{aligned}
	\end{equation*}
	Setting $\eta_t = \eta$,  telescoping above inequality from 1 to $t$ we have 
	\begin{equation*}
		\begin{aligned}
			&\EX_A[\|x_{t+1} - x_{t+1}^{l,\omega}\|^2] \\
			&\leq  2L_gC_f\eta\sum_{j=0}^{t-1}(1-\frac{2L\mu \eta}{L+\mu})^{t-j}((\EX_A[\| u_j^{l,\omega} - g_S(x_j^{l,\omega})\|^2])^{1/2} + (\EX_A[\| u_j - g_S(x_j)\|^2])^{1/2}) (\EX_A[\|x_j - x_j^{l,\omega}\|^2])^{1/2}\\
			&\quad + 2L_f\eta\sum_{j=0}^{t-1}(1-\frac{2L\mu \eta}{L+\mu})^{t-j}((\EX_A\|v_j^{l,\omega} - \nabla g_S(x_j^{l,\omega})\|^2)^{1/2} + (\EX_A\|v_j - \nabla g_S(x_j)\|^2)^{1/2})(\EX_A[\|x_j - x_j^{l,\omega}\|^2])^{1/2}\\
			&\quad  + 7\eta^2 L_g^2 C_f^2\sum_{j=1}^t(1-\frac{2L\mu \eta}{L+\mu})^{t-j} \EX_A[\|u_j^{l,\omega}-g_S(x_j^{l,\omega})\|^2]+7\eta^2 L_g^2 C_f^2\sum_{j=1}^t(1-\frac{2L\mu \eta}{L+\mu})^{t-j} \EX_A[\|u_j-g_S(x_j)\|^2]\\
			&\quad + 7\eta^2 L_f^2\sum_{j=1}^t(1-\frac{2L\mu \eta}{L+\mu})^{t-j} \EX_A[\|v_j^{l,\omega}-\nabla g_S(x_j^{l,\omega})\|^2] +  7\eta^2 L_f^2 \sum_{j=1}^t(1-\frac{2L\mu \eta}{L+\mu})^{t-j} \EX_A[\|\nabla g_S(x_j) - v_j\|^2]\\
			&\quad+\frac{4\eta L_gL_f}{m}\sum_{j=1}^t(1-\frac{2L\mu \eta}{L+\mu})^{t-j} \EX_A[\|x_j - x_{j}^{l,\omega}\|] + \frac{4\eta^2 L_f^2 L_g^2}{m}\sum_{j=1}^t(1-\frac{2L\mu \eta}{L+\mu})^{t-j}+ 14\eta^2 L_g^2\sigma_f^2\sum_{j=1}^{t-1}(1-\frac{2L\mu \eta}{L+\mu})^{t-j} ,
		\end{aligned}
	\end{equation*}

Then with $T$ iterations, we have 
\begin{equation}
\begin{aligned}
	\mathbb{E}_{A}[\|x_{T}-x_{T}^{l, \omega}\|]
	&\leq  6L_g C_f \eta \sup_S (\sum_{j=0}^{T-1}(1-\frac{2L\mu \eta}{L+\mu})^{T-j-1}\EX_A[\|u_j-g_S(x_j)\|^2])^{1/2}\\
 &+ 6L_f\eta \sup_S(\sum_{j=0}^{T-1}(1-\frac{2L\mu \eta}{L+\mu})^{T-j-1}\EX_A[\|v_j - \nabla g_S(x_j)\|^2])^{1/2}\\
	&\quad + 4L_g C_f \eta \sup_S \sum_{j=0}^{T-1}(1-\frac{2L\mu \eta}{L+\mu})^{T-j-1}(\EX_A[\|u_j-g_S(x_j)\|^2])^{1/2}\\
 &+ 4L_f\eta \sup_S\sum_{j=0}^{T-1}(1-\frac{2L\mu \eta}{L+\mu})^{T-j-1}(\EX_A[\|v_j - \nabla g_S(x_j)\|^2])^{1/2} \\
			&\quad +  \frac{2L_gL_f(L+\mu)}{m} +  \sqrt{\frac{2L_f^2L_g^2\eta(L+\mu)}{L\mu m }} + \sqrt{\frac{14L_g^2\sigma_f^2\eta(L+\mu)}{L\mu}}.
		\end{aligned}
	\end{equation}

	Now we combine the above results  for estimating $\mathbb{E}_{A}[\|x_{T}-x_{T}^{k, \nu}\|]$ and $\mathbb{E}_{A}[\|x_{T}-x_{T}^{l, \omega}\|]$, we have 
	\begin{equation}\label{Eq:ineq_holds_two_level_sc}
		\begin{aligned}
			\epsilon_\nu + \epsilon_\omega  
			& \leq 20L_g C_f \eta \sup_S \sum_{j=0}^{T-1}(1-\frac{2L\mu \eta}{L+\mu})^{T-j-1}(\EX_A[\|u_j-g_S(x_j)\|^2])^{1/2}\\
			& \quad + 20L_f\eta \sup_S\sum_{j=0}^{T-1}(1-\frac{2L\mu \eta}{L+\mu})^{T-j-1}(\EX_A[\|v_j - \nabla g_S(x_j)\|^2])^{1/2}+ \sqrt{\frac{2L_f^2L_g^2\eta(L+\mu)}{L\mu m }} \\
			&\quad  + \frac{2(L+ \mu)L_gL_f}{L\mu m} + \sqrt{\frac{2L_f^2L_g^2\eta(L+\mu)}{L\mu n }} + \frac{2(L+ \mu)L_gL_f}{L\mu n} + 8\sqrt{\frac{L_g^2\sigma_f^2\eta(L+\mu)}{L\mu}}.
		\end{aligned}
	\end{equation}
	Now we will illustrate why the second inequality of above holds true. According to Lemma \ref{lemma:u_t_bound_two}, we have 
	\begin{equation*}
		\begin{aligned}
			&(\sum_{j=0}^{T-1}(1-\frac{2L\mu \eta}{L+\mu})^{T-j-1}\EX_A[\|u_j-g_S(x_j)\|^2])^{1/2}\\
			& \leq (\sum_{j=0}^{T-1}(1-\frac{2L\mu \eta}{L+\mu})^{T-j-1}((\frac{c}{e})^{c}(j\beta)^{-c})\EX[\|u_0-g_S(x_0)\|^2]  + 2\sigma_g^2\beta + \frac{2L_g^4L_f^2\eta^2}{\beta} )^{1/2}\\
			& \leq (\sum_{j=0}^{T-1}(1-\frac{2L\mu \eta}{L+\mu})^{T-j-1}(2\sigma_g^2\beta + \frac{2L_g^4L_f^2\eta^2}{\beta} ))^{\frac{1}{2}} + ((\sum_{j=0}^{T-1}(1-\frac{2L\mu \eta}{L+\mu})^{T-j-1}(\frac{c}{e})^{c}(j\beta)^{-c})\EX[\|u_0-g_S(x_0)\|^2])^{\frac{1}{2}}\\
			& \leq \frac{2\sigma_g\sqrt{\beta(L+\mu)}}{\sqrt{ L \mu \eta}} + \frac{2L_g^2L_f\eta\sqrt{L+\mu}}{\sqrt{L \mu \eta\beta}} + (\frac{c}{e})^{\frac{c}{2}} \frac{\sqrt{U_u}(L+\mu)}{L \mu} T^{-\frac{c}{2}} \beta^{-\frac{c}{2}}.
		\end{aligned}
	\end{equation*}
	Likewise,
	\begin{equation}\label{Eq:106}
		\begin{aligned}
			&\sum_{j=0}^{T-1}(1-\frac{2L\mu \eta}{L+\mu})^{T-j-1}(\EX_A[\|u_j-g_S(x_j)\|^2])^{1/2}\\
			& \leq \sum_{j=0}^{T-1}(1-\frac{2L\mu \eta}{L+\mu})^{T-j-1}((\frac{c}{e})^{c}(j\beta)^{-c})\EX[\|u_0-g_S(x_0)\|^2]  + 2\sigma_g^2\beta + \frac{2L_g^4L_f^2\eta^2}{\beta})^{\frac{1}{2}}\\
			& \leq \sum_{j=0}^{T-1}(1-\frac{2L\mu \eta}{L+\mu})^{T-j-1}(\sigma_g\sqrt{2\beta} + \frac{L_g^2L_f\eta\sqrt{2}}{\sqrt{\beta}}) + (\frac{c}{e})^{\frac{c}{2}}\sqrt{U_u}\sum_{j=0}^{T-1}(1-\frac{2L\mu \eta}{L+\mu})^{T-j-1}(j\beta)^{-\frac{c}{2}}\\
			& \leq \frac{(L+\mu)\sigma_g\sqrt{2\beta}}{2L\mu\eta} + \frac{(L+\mu)\sqrt{2}L_g^2L_f}{2L\mu\sqrt{\beta}} + (\frac{c}{e})^{\frac{c}{2}} \frac{\sqrt{U_u}(L+\mu)}{L \mu\eta} T^{-\frac{c}{2}} \beta^{-\frac{c}{2}}.
		\end{aligned}
	\end{equation}
	According to the above two inequalities, we can get the dominating term is $\sum_{j=0}^{T-1}(1-\frac{2L\mu \eta}{L+\mu})^{T-j-1}(\EX_A[\|u_j-g_S(x_j)\|^2])^{1/2}$, then the inequality \eqref{Eq:ineq_holds_two_level_sc} holds true. The treatment of the other items is similar, so we won't go into details. Since often we have $\eta \leq \min (\frac{1}{n}, \frac{1}{m})$, then we have
	\begin{equation}\label{Eq:107}
		\begin{aligned}
			\epsilon_\nu + \epsilon_\omega  \leq O\Big(&  L_g C_f \eta \sup_S \sum_{j=0}^{T-1}(1-\frac{2L\mu \eta}{L+\mu})^{T-j-1}(\EX_A[\|u_j-g_S(x_j)\|^2])^{1/2}\\
			& \quad + L_f\eta \sup_S\sum_{j=0}^{T-1}(1-\frac{2L\mu \eta}{L+\mu})^{T-j-1}(\EX_A[\|v_j - \nabla g_S(x_j)\|^2])^{1/2} \\
			&\quad  + \frac{(L+ \mu)L_gL_f}{L\mu m} + \frac{(L+ \mu)L_gL_f}{L\mu n} + L_g\sigma_f \sqrt{\frac{L+\mu}{L\mu}}\sqrt{\eta}\Big).
		\end{aligned}
	\end{equation}
	This completes the proof.
\end{proof}

\begin{corollary}[Two-level Optimization]\label{cor:2_two_level}Consider Algorithm \ref{alg_cover} with $\eta_t = \eta \leq1/\bigl(4L+4\mu\bigr)$, and $\beta_t=\beta< \min{\{1/8C_f^2, 1\}}$ for any $t\in [0,T-1]$ and the output  $A(S)  =x_{T}$.  Then, we have the following results
\begin{equation*}
	\epsilon_\nu + \epsilon_\omega  \leq O( (T \beta)^{-\frac{c}{2}} + \beta^{\frac{1}{2}}+ \eta^{\frac{1}{2}}  + \eta\beta^{-\frac{1}{2}  }+ n^{-1} +m^{-1}).
\end{equation*}
\end{corollary}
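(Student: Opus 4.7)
The plan is to derive Corollary \ref{cor:2_two_level} as a direct consequence of Theorem \ref{thm:sta_sconvex_two_convex} by substituting the variance bounds from Lemmas \ref{lemma:u_t_bound_two} and \ref{lemma:iv_t_bound}. Theorem \ref{thm:sta_sconvex_two_convex} already exhibits the desired free-standing contributions $\frac{(L+\mu)L_gL_f}{L\mu n}$, $\frac{(L+\mu)L_gL_f}{L\mu m}$ and $L_g\sigma_f\sqrt{(L+\mu)/(L\mu)}\sqrt{\eta}$, which account for the $n^{-1}$, $m^{-1}$ and $\eta^{1/2}$ terms in the target bound. Hence the entire task reduces to controlling the two geometrically weighted sums
$$\eta\sum_{j=0}^{T-1}\Big(1-\tfrac{2L\mu\eta}{L+\mu}\Big)^{T-j-1}\operatorname{Var}(u_j) \quad\text{and}\quad \eta\sum_{j=0}^{T-1}\Big(1-\tfrac{2L\mu\eta}{L+\mu}\Big)^{T-j-1}\operatorname{Var}(v_j).$$

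First, I would insert the bounds of Lemmas \ref{lemma:u_t_bound_two} and \ref{lemma:iv_t_bound} for $\EX[\|u_j-g_S(x_j)\|^2]$ and $\EX[\|v_j-\nabla g_S(x_j)\|^2]$, then apply $(a+b+c)^{1/2}\le a^{1/2}+b^{1/2}+c^{1/2}$ to split each square-rooted variance into three pieces: a polynomially decaying piece $(c/e)^{c/2}(j\beta)^{-c/2}\sqrt{U}$, a constant-order-$\sqrt{\beta}$ piece $\sqrt{2}\sigma_g\sqrt{\beta}$, and a piece $\sqrt{2}L_g^2L_f\eta/\sqrt{\beta}$. The proof then distributes the geometric sum across these three summands.

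Second, for the two $j$-independent pieces I would use the elementary bound $\sum_{j=0}^{T-1}(1-a)^{T-j-1}\le 1/a$ with $a=\frac{2L\mu\eta}{L+\mu}$, so that multiplying by the leading $\eta$ cancels the $1/a$ and yields contributions of order $\sqrt{\beta}$ and $\eta/\sqrt{\beta}$, respectively. For the $(j\beta)^{-c/2}$ piece, the key step is Lemma \ref{lemma:weighted_avg}: since $(1-a)^{T-j-1}$ is increasing in $j$ while $j^{-c/2}$ is decreasing, the weighted sum is bounded by $\tfrac{1}{T-1}\sum_{j=1}^{T-1}(1-a)^{T-j-1}\cdot\sum_{j=1}^{T-1}j^{-c/2}$. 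Combined with \eqref{Eq:sum_t^-z_bound} and $\sum(1-a)^{T-j-1}\le 1/a$, this gives $O(T^{-c/2}/a)$; multiplying by $\eta\beta^{-c/2}$ produces the final contribution of order $(T\beta)^{-c/2}$, since $\eta/a$ is a universal constant.

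Collecting all the contributions and using the constant-order coefficients, the sum yields exactly $O((T\beta)^{-c/2}+\sqrt{\beta}+\eta/\sqrt{\beta}+\sqrt{\eta}+1/n+1/m)$, matching the claimed bound. The only genuinely subtle step is the application of Lemma \ref{lemma:weighted_avg}, as one must verify the monotonicity of both sequences and confirm that the $\eta/a$ cancellation leaves no residual dependence on the strong-convexity parameters that would spoil the stated rate; everything else is routine bookkeeping paralleling the computations already carried out in equations \eqref{Eq:106}--\eqref{Eq:107} for the one-step-before conclusion in the proof of Theorem \ref{thm:sta_sconvex_two_convex}.
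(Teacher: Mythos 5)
Your proposal is correct and follows essentially the same route as the paper: the paper's own proof of Corollary \ref{cor:2_two_level} is precisely "combine \eqref{Eq:106} and \eqref{Eq:107}," where \eqref{Eq:106} is obtained exactly as you describe — inserting Lemmas \ref{lemma:u_t_bound_two} and \ref{lemma:iv_t_bound}, splitting the square root termwise, bounding the $j$-independent pieces via $\sum_{j}(1-a)^{T-j-1}\le 1/a$ so the outer $\eta$ cancels $1/a$, and handling the $(j\beta)^{-c/2}$ piece with the weighted-average Lemma \ref{lemma:weighted_avg} (the paper's analogue of \eqref{Eq:sum_a^(t-j)j^-c}). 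You correctly identified that the $n^{-1}$, $m^{-1}$ and $\eta^{1/2}$ terms pass through unchanged from Theorem \ref{thm:sta_sconvex_two_convex}, so no further comment is needed.
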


\begin{proof}[proof of Corollary \ref{cor:2_two_level}]
	Next, we move on to the Corollary \ref{cor:2_two_level}. Combining \eqref{Eq:106} and \eqref{Eq:107} we have 
	\begin{equation*}
		\epsilon_\nu + \epsilon_\omega  \leq O(n^{-1}+m^{-1} + \beta^{1/2} + \eta^{1/2}+ \eta\beta^{-1/2}+T^{-\frac{c}{2}} \beta^{-\frac{c}{2}}).
	\end{equation*}
The proof is completed.
\end{proof}

Before giving the detailed proof, we first give a useful lemma.
\begin{lemma}\label{lemma:for_two_level_theorem_optimization_sc}
	Let Assumption \ref{ass:Lipschitz continuous}(ii), \ref{ass:bound variance} (ii) and \ref{ass:Smoothness and Lipschitz continuous gradient} (ii) hold for the empirical risk $F_S$, and $F_S$ is $\mu$-strongly convex, for Algorithm \ref{alg_cover}, we have 
	\begin{equation*}
	\begin{aligned}
		\EX_A[F_{S}(x_{t+1})|\F_t]& \leq \EX_A[F_{S}(x_{t})|\F_t] -\frac{\eta_t}{2}\|\nabla F_S(x_t)\|^2   + L_f^2\eta_t\EX_A[\|v_t - \nabla g_S(x_t)\|^2|\F_t] \\
		&\quad+ L_g^2C_f^2\eta_t\EX_A[\|u_t-g_S(x_t)\|^2|\F_t] + \frac{LL_g^2L_f^2\eta_t^2}{2}.
	\end{aligned}
\end{equation*}
	where $\EX_{A} $  denotes  the expectation taken with respect to the randomness of the algorithm, and $\F_t$ is the $\sigma$-field generated by \(\{\omega_{j_0}, \ldots, \omega_{j_{t-1}}, \nu_{i_0}, \ldots, \nu_{i_{t- 1}}\}\).
\end{lemma}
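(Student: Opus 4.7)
The plan is to derive a descent-type inequality for $F_S$ along the COVER iterates, exploiting the $L$-smoothness of $F_S$ stated in Assumption~\ref{ass:Smoothness and Lipschitz continuous gradient}(ii) together with an error decomposition of the stochastic direction $v_t\nabla f_{\nu_{i_t}}(u_t)$ used in Algorithm~\ref{alg_cover} against the true empirical gradient $\nabla F_S(x_t)=\nabla g_S(x_t)\nabla f_S(g_S(x_t))$. The structure is identical in spirit to Lemma~\ref{lemma:single_sc} from the one-level case, but the compositional nature forces an additional layer of splitting.

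First I would invoke $L$-smoothness to obtain
\begin{equation*}
F_S(x_{t+1})\le F_S(x_t)+\langle \nabla F_S(x_t),\,x_{t+1}-x_t\rangle+\tfrac{L}{2}\|x_{t+1}-x_t\|^2,
\end{equation*}
then plug in the update $x_{t+1}-x_t=-\eta_t v_t\nabla f_{\nu_{i_t}}(u_t)$. The quadratic term is handled by the Lipschitz bounds from Assumption~\ref{ass:Lipschitz continuous}(ii), which give $\|v_t\nabla f_{\nu_{i_t}}(u_t)\|\le L_gL_f$ (after projection onto the $L_f$-ball for $v_t$), yielding the $\tfrac{L L_g^2 L_f^2\eta_t^2}{2}$ term directly.

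Next I would take conditional expectation with respect to $\F_t$. Since $v_t$ and $u_t$ are $\F_t$-measurable while $\nu_{i_t}$ is independent of $\F_t$, one has $\EX_A[v_t\nabla f_{\nu_{i_t}}(u_t)\mid\F_t]=v_t\nabla f_S(u_t)$. Writing
\begin{equation*}
-\eta_t\langle \nabla F_S(x_t),\, v_t\nabla f_S(u_t)\rangle=-\eta_t\|\nabla F_S(x_t)\|^2+\eta_t\langle\nabla F_S(x_t),\,\nabla F_S(x_t)-v_t\nabla f_S(u_t)\rangle,
\end{equation*}
I would control the cross term by Young's inequality with parameter $1$ to absorb half of $\|\nabla F_S(x_t)\|^2$ and keep $\tfrac{\eta_t}{2}\|\nabla F_S(x_t)-v_t\nabla f_S(u_t)\|^2$.

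The key step is the error decomposition. Writing $\nabla F_S(x_t)=\nabla g_S(x_t)\nabla f_S(g_S(x_t))$ and inserting $v_t\nabla f_S(g_S(x_t))$, the triangle inequality followed by $\|a+b\|^2\le 2\|a\|^2+2\|b\|^2$ yields
\begin{equation*}
\|\nabla F_S(x_t)-v_t\nabla f_S(u_t)\|^2\le 2L_f^2\|\nabla g_S(x_t)-v_t\|^2+2L_g^2 C_f^2\|g_S(x_t)-u_t\|^2,
\end{equation*}
where the first bound uses Lipschitz continuity of $f_S$ (so $\|\nabla f_S(g_S(x_t))\|\le L_f$) and the second uses Lipschitz continuity of $\nabla f_S$ from Assumption~\ref{ass:Smoothness and Lipschitz continuous gradient}(ii) together with $\|v_t\|\le L_g$ (via projection and Assumption~\ref{ass:Lipschitz continuous}(ii)). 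Multiplying by $\tfrac{\eta_t}{2}$ produces exactly the coefficients $L_f^2\eta_t$ and $L_g^2 C_f^2 \eta_t$ required, and combining all pieces gives the claimed bound. The main obstacle is keeping the bookkeeping clean in the two-stage splitting, since naive Cauchy-Schwarz with a free parameter would instead yield asymmetric coefficients that would propagate through Theorem~\ref{thm:opt_sconvex} in a form incompatible with the stated $\Psi_3$.
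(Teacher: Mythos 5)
Your proposal is correct and follows essentially the same route as the paper's proof: $L$-smoothness of $F_S$, the crude bound $\|v_t\nabla f_{\nu_{i_t}}(u_t)\|\le L_gL_f$ for the quadratic term, conditional expectation over the fresh sample $\nu_{i_t}$ to replace $\nabla f_{\nu_{i_t}}$ by $\nabla f_S$, and the insertion of $v_t\nabla f_S(g_S(x_t))$ to split the gradient error into the $v_t$-estimator part (weight $L_f$) and the $u_t$-estimator part (weight $L_gC_f$). The only cosmetic difference is that you apply Young's inequality once to the aggregated cross term and then use $\|a+b\|^2\le 2\|a\|^2+2\|b\|^2$, whereas the paper splits the cross term into two inner products first and applies Young to each; both yield the identical coefficients $L_f^2\eta_t$, $L_g^2C_f^2\eta_t$, and the $\tfrac{\eta_t}{2}\|\nabla F_S(x_t)\|^2$ absorption.
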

\begin{proof}
	According to the smoothness of $F_S$, we have 
	\begin{equation*}
		\begin{aligned}
			F_{S}(x_{t+1})& \leq F_{S}(x_{t})+\langle\nabla F_{S}(x_{t}), x_{t+1}-x_{t}\rangle+\frac{L}{2}\|x_{t+1}-x_{t}\|^{2}\\
			& \leq F_{S}(x_{t}) - \eta_t\langle\nabla F_{S}(x_{t}), \nabla g_S(x_t)\nabla f_S(g_S(x_t))\rangle + \frac{L\eta_t^2}{2}\| v_t \nabla f_{\nu_{j_t}} (u_t)\|^2 + \theta_t,
		\end{aligned}
	\end{equation*}
	where $\theta_t = \eta_t \langle \nabla F_S(x_t), g_S(x_t)\nabla f_S(g_S(x_t)) - v_t \nabla f_{\nu_{j_t}} (u_t) \rangle$.
	As for the term $\theta_t$, we have 
	\begin{equation*}
		\begin{aligned}
			\EX_A[\theta_t|\F_t] & =  \eta_t\EX_A[\langle  \nabla F_S(x_t),\nabla g_S(x_t)\nabla f_S(g_S(x_t)) - v_t \nabla f_{\nu_{j_t}} (u_t) \rangle|\F_t] \\
			& = \eta_t \langle \nabla F_S(x_t),\nabla g_S(x_t)\nabla f_S(g_S(x_t)) -v_t \nabla f_S (g_S(x_t)) \rangle|\F_t]\\
			&\quad+ \eta_t\EX_A[ \langle \nabla F_S(x_t), v_t \nabla f_S (g_S(x_t)) - v_t \nabla f_S(u_t)\rangle|\F_t] + \eta_t\EX_A[ \langle \nabla F_S(x_t), v_t \nabla f_S(u_t) - v_t \nabla f_{\nu_{j_t}}(u_t) \rangle|\F_t]\\
			&\leq \eta_t \|\nabla F_S(x_t)\| \cdot \|\nabla f_S(g_S(x_t)) \| \cdot \| v_t - \nabla g_S(x_t)\| + \eta_t \|\nabla F_S(x_t)\| \cdot \| v_t \| \cdot \|\nabla f_S(g_S(x_t)) -\nabla f_S(u_t)\| \\
			&\leq \eta_t L_f\|\nabla F_S(x_t)\| \cdot \|v_t - \nabla g_S(x_t)\| + \eta_t L_g C_f \|\nabla F_S(x_t)\| \cdot \|u_t - g_S(x_t)\| \\
			&\leq \frac{\eta_t}{2}\|\nabla F_S(x_t)\|^2 + L_f^2\eta_t\|v_t - \nabla g_S(x_t)\|^2 + L_g^2C_f^2\eta_t\|u_t-g_S(x_t)\|^2,
		\end{aligned}
	\end{equation*}
	where the last inequality holds by Cauchy-Schwarz inequality. Combining above two inequalities,  let $\F_t$ be the $\sigma$-field generated by \(\{\omega_{j_0}, \ldots, \omega_{j_{t-1}}, \nu_{i_0}, \ldots, \nu_{i_{t- 1}}\}\),  we have 
	\begin{equation*}
		\begin{aligned}
			\EX_A[F_{S}(x_{t+1})|\F_t]& \leq \EX_A[F_{S}(x_{t})|\F_t] -\frac{\eta_t}{2}\|\nabla F_S(x_t)\|^2   + L_f^2\eta_t\EX_A[\|v_t - \nabla g_S(x_t)\|^2|\F_t] \\
			&\quad+ L_g^2C_f^2\eta_t\EX_A[\|u_t-g_S(x_t)\|^2|\F_t] + \frac{LL_g^2L_f^2\eta_t^2}{2}.
		\end{aligned}
	\end{equation*}
	Then we complete the proof.
\end{proof}

\begin{proof}[proof of Theorem \ref{thm:opt_sconvex}]
	We begin to give the detailed proof of Theorem \ref{thm:opt_sconvex}. Note that strong convexity implies the Polyak-\L ojasiewicz (PL) inequality
	\begin{equation*}
		\frac{1}{2}\|\nabla F_{S}(x)\|^{2} \geq \mu(F_{S}(x)-F_{S}(x_{*}^{S})), \quad \forall x.
	\end{equation*}
	Then according to Lemma \ref{lemma:for_two_level_theorem_optimization_sc} and PL condition,  subtracting both sides with $F_S(x_*^S)$  we have  
	\begin{equation*}
		\begin{aligned}
			\EX_A[F_{S}(x_{t+1}) - F_{S}(x_*^S)]& \leq  (1-\mu\eta_t) \EX_A[F_{S}(x_{t}) - F_{S}(x_*^S)] + L_f^2\eta_t\EX_A[\|v_t - \nabla g_S(x_t)\|^2|\F_t] \\
			&\quad+ L_g^2C_f^2\eta_t\EX_A[\|u_t-g_S(x_t)\|^2|\F_t] + \frac{LL_g^2L_f^2\eta_t^2}{2}.
		\end{aligned}
	\end{equation*}
	Setting $\eta_t = \eta$ and $\beta_t = \beta$, using Lemma \ref{lemma:u_t_bound_two} and \ref{lemma:iv_t_bound} , we have 
	\begin{equation*}
		\begin{aligned}
			&\EX_A[F_{S}(x_{t+1}) - F_{S}(x_*^S)]\\
			& \leq  (1-\mu\eta) \EX_A[F_{S}(x_{t}) - F_{S}(x_*^S)]  + \frac{LL_g^2L_f^2\eta^2}{2}  + L_g^2C_f^2\eta((\frac{c}{e})^{c}U(t\beta)^{-c} + 2\sigma_g^2\beta + \frac{2L_g^4L_f^2\eta^2}{\beta}) \\
			&\quad+  L_f^2\eta((\frac{c}{e})^{c}V(t\beta)^{-c} + 2\sigma_{g'}^2\beta + \frac{2L_g^4L_f^2\eta^2}{\beta}).
		\end{aligned}
	\end{equation*}
	Telescoping the above inequality from 1 to $T-1$ we have 
	\begin{equation*}
		\begin{aligned}
			&\EX[F_{S}(x_{T}) - F_{S}(x_*^S)]\\
			& \leq (1-\mu\eta)^{T-1} \EX[F_{S}(x_{1}) - F_{S}(x_*^S)]  + \frac{LL_g^2L_f^2\eta^2}{2}\sum_{t=1}^{T-1}(1-\mu \eta)^{T-t-1} \\
			&\quad +(\frac{c}{e})^{c}\beta^{-c}(L_g^2C_f^2\eta U+ L_f^2\eta V) \sum_{t=1}^{T-1} t^{-c}(1-\mu \eta)^{T-t-1} \\
			&\quad + (2L_g^2C_f^2\sigma_{g}^2\beta\eta+ 2L_f^2\sigma_{g'}^2\beta\eta + \frac{2L_g^6C_f^2L_f^2\eta^3}{\beta} + \frac{2L_g^4L_f^4\eta^3}{\beta} )\sum_{t=1}^{T-1} (1-\mu \eta)^{T-t-1}.
		\end{aligned}
	\end{equation*}
	For $t=0$, we have 
	\begin{equation*}
		\begin{aligned}
			\EX_A[F_S(x_1) - F_S(x_*^S)]\leq (1-\mu \eta)\EX_A[F_S(x_0) - F_S(x_*^S)] + L_g^2C_f^2\eta U  + L_f^2\eta V  + \frac{LL_g^2L_f^2\eta^2}{2}.
		\end{aligned}
	\end{equation*}
	Then combining the above two inequality we have 
	\begin{equation*}
		\begin{aligned}
			&\EX[F_{S}(x_{T}) - F_{S}(x_*^S)]\\
			& \leq (1-\mu \eta)^T\EX_A[F_S(x_0) - F_S(x_*^S)] + \frac{LL_g^2L_f^2\eta^2}{2}\sum_{t=1}^{T}(1-\mu \eta)^{T-t} + (L_g^2C_f^2\eta U  + L_f^2\eta V )(1-\mu\eta)^{T-1}\\
			&\quad +(\frac{c}{e})^{c}\beta^{-c}(L_g^2C_f^2\eta U+ L_f^2\eta V) \sum_{t=1}^{T-1} t^{-c}(1-\mu \eta)^{T-t-1} \\
			&\quad + (2L_g^2C_f^2\sigma_{g}^2\beta\eta+ 2L_f^2\sigma_{g'}^2\beta\eta + \frac{2L_g^6C_f^2L_f^2\eta^3}{\beta} + \frac{2L_g^4L_f^4\eta^3}{\beta} )\sum_{t=1}^{T-1} (1-\mu \eta)^{T-t-1}.
		\end{aligned}
	\end{equation*}
	According to the fact that $\sum_{t=1}^{T}(1-\mu \eta)^{T-t} \leq \frac{1}{\mu \eta}$, using  Lemma \ref{lemma:weighted_avg}, we have 
	
	\begin{equation*}
		\sum_{t=1}^{T-1}\left(1-\mu \eta\right)^{T-t-1} t^{-c} \leq \frac{\sum_{t=1}^{T-1}\left(1-\mu \eta\right)^{T-t-1}}{T-1} \sum_{t=1}^{T-1} t^{-c} \leq \frac{1}{T \mu \eta} \sum_{t=1}^{T-1} t^{-c}.
	\end{equation*}
Then we can get
	\begin{equation}\label{Eq:F_s-F_s*_sc}
		\begin{aligned}
			\EX[F_{S}(x_{T}) - F_{S}(x_*^S)]
			& \leq (\frac{c}{e \mu})^{c}(\eta T)^{-c} D_x +  \frac{LL_g^2L_f^2\eta}{2\mu} +(L_g^2C_f^2\eta U  + L_f^2\eta V )(\frac{c}{e\mu})^c(\eta T)^{-c}\\
			&\quad  + (\frac{c}{e})^{c}\beta^{-c}(L_g^2C_f^2\eta U+ L_f^2\eta V)T^{-1}\mu^{-1} \sum_{t=1}^{T-1} t^{-c}\\
			& \quad + \frac{2L_g^2C_f^2\sigma_{g}^2\beta}{\mu} + \frac{2L_f^2\sigma_{g'}^2\beta}{\mu} + \frac{2L_g^6C_f^2L_f^2\eta^2}{\beta\mu}+ \frac{2L_g^4L_f^4\eta^2}{\beta\mu}.
		\end{aligned}
	\end{equation}
	According to $\sum_{t=1}^Tt^{-z} = O(T^{1-z})$ for $z \in (-1,0) \cup(-\infty,-1)$ and $\sum_{t=1}^Tt^{-1} = O(\log T)$, as long as $c \neq 1$ we have 
	\begin{equation*}
		\begin{aligned}
			\EX[F_{S}(x_{T}) - F_{S}(x_*^S)]
			& \leq O\Big(D_x (\eta T)^{-c}  + LL_g^2L_f^2\eta + (L_g^2C_f^2 U  + L_f^2 V )(\eta T)^{-c}\eta \\
			&~~~~~~~~~~~+ (L_g^2C_f^2\eta U+ L_f^2\eta V) (\beta T)^{-c} + (L_g^2C_f^2\sigma_g^2 + L_f^2\sigma_{g'}^2)\beta+ (L_g^6C_f^2L_f^2 +L_f^4L_g^4) \eta^2\beta^{-1}\Big).
		\end{aligned}
	\end{equation*}
	The proof is completed.
\end{proof}

\begin{proof}[proof of Theorem \ref{thm:Excess_Risk_Bound_s_convex}]
	Putting \eqref{Eq:106} into \eqref{Eq:ineq_holds_two_level_sc}, we have 
	\begin{equation*}
		\begin{aligned}
			&\mathbb{E}_{A}[\|x_{T}-x_{T}^{k, \nu}\|]+ 4\mathbb{E}_{A}[\|x_{T}-x_{T}^{l, \omega}\|]\\
			& \leq 50L_g C_f \eta \sup_S (\frac{(L+\mu)\sigma_g\sqrt{2\beta}}{2L\mu\eta} + \frac{(L+\mu)\sqrt{2}L_g^2L_f}{2L\mu\sqrt{\beta}} + (\frac{c}{e})^{\frac{c}{2}} \frac{\sqrt{U}(L+\mu)}{L \mu\eta} T^{-\frac{c}{2}} \beta^{-\frac{c}{2}})\\
			& \quad + 50L_f\eta \sup_S(\frac{(L+\mu)\sigma_{g'}\sqrt{2\beta}}{2L\mu\eta} + \frac{(L+\mu)\sqrt{2}L_g^2L_f}{2L\mu\sqrt{\beta}} + (\frac{c}{e})^{\frac{c}{2}} \frac{\sqrt{V}(L+\mu)}{L \mu\eta} T^{-\frac{c}{2}} \beta^{-\frac{c}{2}}) \\
			&\quad + 4\sqrt{\frac{2L_f^2L_g^2\eta(L+\mu)}{L\mu m }} + \frac{8(L+ \mu)L_gL_f}{L\mu m} + \sqrt{\frac{2L_f^2L_g^2\eta(L+\mu)}{L\mu n }} + \frac{2(L+ \mu)L_gL_f}{L\mu n}.
		\end{aligned}
	\end{equation*}
From Theorem \ref{theorem:general_two_level} we have 
\begin{equation*}
\begin{aligned}
    &\EX [F(x_T) - F_S(x_T)]\\
    & \leq 50 C_fL_fL_g^2 \eta \sup_S (\frac{(L+\mu)\sigma_g\sqrt{2\beta}}{2L\mu\eta} + \frac{(L+\mu)\sqrt{2}L_g^2L_f}{2L\mu\sqrt{\beta}} + (\frac{c}{e})^{\frac{c}{2}} \frac{\sqrt{U_u}(L+\mu)}{L \mu\eta} T^{-\frac{c}{2}} \beta^{-\frac{c}{2}})\\
    & \quad + 50L_f^2L_g\eta \sup_S(\frac{(L+\mu)\sigma_{g'}\sqrt{2\beta}}{2L\mu\eta} + \frac{(L+\mu)\sqrt{2}L_g^2L_f}{2L\mu\sqrt{\beta}} + (\frac{c}{e})^{\frac{c}{2}} \frac{\sqrt{U_{\iv}}(L+\mu)}{L \mu\eta} T^{-\frac{c}{2}} \beta^{-\frac{c}{2}}) \\
    &\quad + 4L_f^2L_g^2\sqrt{\frac{2\eta(L+\mu)}{L\mu m }} + \frac{8(L+ \mu)L_g^2L_f^2}{L\mu m} + L_f^2L_g^2\sqrt{\frac{2\eta(L+\mu)}{L\mu n }} + \frac{2(L+ \mu)L_g^2L_f^2}{L\mu n}\\
    & + L_{f} \sqrt{m^{-1} \mathbb{E}_{S, A}[\operatorname{Var}_{\omega}(g_{\omega}(A(S)))]}.
\end{aligned}
\end{equation*}
	Combining  \eqref{Eq:F_s-F_s*_sc} and above inequality,  using $F_S(x_*^S) \leq F_S(x_*)$ we have 
	
	\begin{equation*}
		\begin{aligned}
			&\mathbb{E}_{S, A}\left[F(A(S))-F\left(x_{*}\right)\right]\\
			& \leq 50 C_fL_fL_g^2 \eta \sup_S (\frac{(L+\mu)\sigma_g\sqrt{2\beta}}{2L\mu\eta} + \frac{(L+\mu)\sqrt{2}L_g^2L_f}{2L\mu\sqrt{\beta}} + (\frac{c}{e})^{\frac{c}{2}} \frac{\sqrt{U_u}(L+\mu)}{L \mu\eta} T^{-\frac{c}{2}} \beta^{-\frac{c}{2}})\\
			& \quad + 50L_f^2L_g\eta \sup_S(\frac{(L+\mu)\sigma_{g'}\sqrt{2\beta}}{2L\mu\eta} + \frac{(L+\mu)\sqrt{2}L_g^2L_f}{2L\mu\sqrt{\beta}} + (\frac{c}{e})^{\frac{c}{2}} \frac{\sqrt{U_{\iv}}(L+\mu)}{L \mu\eta} T^{-\frac{c}{2}} \beta^{-\frac{c}{2}}) \\
			&\quad + 4L_f^2L_g^2\sqrt{\frac{2\eta(L+\mu)}{L\mu m }} + \frac{8(L+ \mu)L_g^2L_f^2}{L\mu m} + L_f^2L_g^2\sqrt{\frac{2\eta(L+\mu)}{L\mu n }} + \frac{2(L+ \mu)L_g^2L_f^2}{L\mu n}\\
			&\quad +  L_{f} \sqrt{m^{-1} \mathbb{E}_{S, A}[\operatorname{Var}_{\omega}(g_{\omega}(A(S)))]} + (\frac{c}{e \mu})^{c}(\eta T)^{-c} D_x +  \frac{LL_g^2L_f^2\eta}{2\mu} +(L_g^2C_f^2\eta U  + L_f^2\eta V )(\frac{c}{e\mu})^c(\eta T)^{-c}\\
			&\quad  + (\frac{c}{e})^{c}\beta^{-c}(L_g^2C_f^2\eta U+ L_f^2\eta V)T^{-1}\mu^{-1} \sum_{t=1}^{T-1} t^{-c} + \frac{2L_g^2C_f^2\sigma_{g}^2\beta}{\mu} + \frac{2L_f^2\sigma_{g'}^2\beta}{\mu} + \frac{2L_g^6C_f^2L_f^2\eta^2}{\beta\mu}+ \frac{2L_g^4L_f^4\eta^2}{\beta\mu}.
		\end{aligned}
	\end{equation*}
	Setting $\eta = T^{-a}, \beta = T^{-b}$,  since often we have $\eta \leq \min (\frac{1}{n}, \frac{1}{m})$, then we have 
	\begin{equation*}
		\begin{aligned}
			\mathbb{E}_{S, A}\left[F(A(S))-F\left(x_{*}\right)\right]
			& \leq O(T^{-\frac{b}{2}} + T^{\frac{b}{2}-a} + T^{\frac{c}{2}(b-1)}+ m^{-\frac{1}{2}} + m^{-1} + T^{-c(1-a)} + T^{-c(1-b)}\\
			&\quad + T^{-a} + T^{-b} + T^{b-a} + T^{b-2a} + m^{-\frac{1}{2}}T^{-\frac{a}{2}} + n^{-\frac{1}{2}}T^{-\frac{a}{2}} ).
		\end{aligned}
	\end{equation*}
Setting $c =3$, the dominating terms are $\mathcal{O}(T^{\frac{b}{2}- a}), \mathcal{O}(T^{-\frac{b}{2}}), \quad \mathcal{O}(T^{3(b- 1)}), \mathcal{O}(T^{-\frac{a}{2}})$, and$\mathcal{O}(T^{6(a- 1)})$.
  
Setting $a = b = \frac{6}{7}$, we have 
\begin{equation*}
	\mathbb{E}_{S, A}\left[F(A(S))-F\left(x_{*}\right)\right]=O\left(T^{-\frac{3}{7}}\right).
\end{equation*}
	
Setting $T = O(\max\{n^{7/6},m^{7,6}\})$, we have the following 
\begin{equation*}
	\mathbb{E}_{S, A}\left[F(A(S))-F\left(x_{*}\right)\right]=O\left(\frac{1}{\sqrt{n}}+\frac{1}{\sqrt{m}}\right).
\end{equation*}
The proof is completed.
\end{proof}

\section{$K$-level Stochastic Optimizations}\label{App:Multi}
\begin{lemma}[lemma 6 in \cite{jiang2022optimal}]\label{lemma:v_t_recursive_mul}
	Let Assumption \ref{ass:Lipschitz continuous}(iii), \ref{ass:bound variance} (iii) and \ref{ass:Smoothness and Lipschitz continuous gradient} (iii) hold for the empirical risk.  $x_t, u_t^{(i)}$ and $v_t^{(i)}$ are generated by Algorithm \ref{alg_svmr_multi} for any $i \in [1,K]$, then we have 
	\begin{equation}
		\mathbb{E}[\|v_{t}^{(i)}-\nabla f_{i,S}(u_{t}^{(i-1)})\|^{2}] \leq(1-\beta_{t}) \mathbb{E}[\|v_{t-1}^{(i)}-\nabla f_{i,S}(u_{t-1}^{(i-1)})\|^{2}]+2 \beta_{t}^{2} \sigma_{J}^{2}+2 L_f^{2} \mathbb{E}[\|u_{t}^{(i-1)}-u_{t-1}^{(i-1)}\|^{2}]
	\end{equation}
\end{lemma}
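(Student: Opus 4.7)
The plan is to follow the standard STORM analysis template, mirroring the one-level and two-level analogues Lemma~\ref{lemma:v_t_recursive_single} and Lemma~\ref{lemma:g_gradient_recursive}. The SVMR update for the $i$-th level gradient estimator reads
\[
v_t^{(i)} \;=\; \Pi_{L_f}\!\Bigl[\nabla f_{\nu_t^{(i)}}(u_t^{(i-1)}) + (1-\beta_t)\bigl(v_{t-1}^{(i)} - \nabla f_{\nu_t^{(i)}}(u_{t-1}^{(i-1)})\bigr)\Bigr].
\]
The first move is to invoke non-expansiveness of the projection $\Pi_{L_f}$ onto the ball of radius $L_f$: Assumption~\ref{ass:Lipschitz continuous}(iii) gives $\|\nabla f_{i,S}(u_t^{(i-1)})\|\le L_f$, so the deterministic target itself is a fixed point of $\Pi_{L_f}$, and the projection may be discarded in the subsequent norm estimate.

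\textbf{Key decomposition.} Adding and subtracting $\nabla f_{i,S}(u_t^{(i-1)})$ and $\nabla f_{i,S}(u_{t-1}^{(i-1)})$ appropriately, I would rewrite
\begin{align*}
v_t^{(i)} - \nabla f_{i,S}(u_t^{(i-1)}) &= (1-\beta_t)\bigl[v_{t-1}^{(i)} - \nabla f_{i,S}(u_{t-1}^{(i-1)})\bigr] \\
&\quad + \beta_t\bigl[\nabla f_{\nu_t^{(i)}}(u_t^{(i-1)}) - \nabla f_{i,S}(u_t^{(i-1)})\bigr] \\
&\quad + (1-\beta_t)\bigl[\nabla f_{\nu_t^{(i)}}(u_t^{(i-1)}) - \nabla f_{\nu_t^{(i)}}(u_{t-1}^{(i-1)}) - \nabla f_{i,S}(u_t^{(i-1)}) + \nabla f_{i,S}(u_{t-1}^{(i-1)})\bigr].
\end{align*}
Condition on the natural filtration $\mathcal{F}_{t-1}$ together with $u_t^{(i-1)}$. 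Since $u_t^{(i-1)}$ is built from $\nu_t^{(i-1)}$ and earlier randomness but not from $\nu_t^{(i)}$, the sample $\nu_t^{(i)}$ is still uniform over $[1,n_i]$ conditional on this enlarged filtration, so $\mathbb{E}_{\nu_t^{(i)}}\nabla f_{\nu_t^{(i)}}(\cdot)=\nabla f_{i,S}(\cdot)$. Consequently the first bracket is deterministic while the last two are centered, so the cross terms vanish upon squaring:
\[
\mathbb{E}_{\nu_t^{(i)}}\bigl\|v_t^{(i)}-\nabla f_{i,S}(u_t^{(i-1)})\bigr\|^2 = (1-\beta_t)^2\bigl\|v_{t-1}^{(i)}-\nabla f_{i,S}(u_{t-1}^{(i-1)})\bigr\|^2 + \mathbb{E}_{\nu_t^{(i)}}\|\text{noise}\|^2.
\]

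\textbf{Bounding the noise and finishing.} I would apply $\|a+b\|^2\le 2\|a\|^2+2\|b\|^2$ to split the noise. For the $\beta_t$-weighted piece, Assumption~\ref{ass:bound variance}(iii) gives $\mathbb{E}_{\nu_t^{(i)}}\|\nabla f_{\nu_t^{(i)}}(u_t^{(i-1)})-\nabla f_{i,S}(u_t^{(i-1)})\|^2\le \sigma_J^2$, contributing $2\beta_t^2\sigma_J^2$. For the $(1-\beta_t)$-weighted piece, dominate the centered variance by the raw second moment and invoke the $L_f$-Lipschitz-gradient property (Assumption~\ref{ass:Smoothness and Lipschitz continuous gradient}(iii)) applied to $\nabla f_{\nu_t^{(i)}}$, yielding $2(1-\beta_t)^2 L_f^2\|u_t^{(i-1)}-u_{t-1}^{(i-1)}\|^2$. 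Using $(1-\beta_t)^2\le 1-\beta_t$ and taking a full expectation produces the stated recursion.

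\textbf{Main obstacle.} The only delicate point is the conditional-independence argument underlying the cross-term cancellation: I must be careful that enlarging the conditioning by $u_t^{(i-1)}$ does not disturb the uniform distribution of $\nu_t^{(i)}$, which relies on the fact that Algorithm~\ref{alg_svmr_multi} draws an \emph{independent} fresh sample $\nu_t^{(k)}$ at each level within iteration $t$. A minor caveat is that Assumption~\ref{ass:Smoothness and Lipschitz continuous gradient}(iii) is stated for $f_{k,S}$, whereas the argument uses per-sample smoothness of $\nabla f_k^{\nu^{(k)}}$; as in the one- and two-level analogues this is read as a standard per-sample strengthening in the STORM convention.
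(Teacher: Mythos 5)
Your proof is correct and follows the standard STORM recursion argument: the paper itself does not prove this lemma but imports it as Lemma 6 of \cite{jiang2022optimal}, and your steps (non-expansiveness of $\Pi_{L_f}$ around the fixed point $\nabla f_{i,S}(u_t^{(i-1)})$, which has norm at most $L_f$ by Assumption~\ref{ass:Lipschitz continuous}(iii); the three-term decomposition whose last two brackets are conditionally centered so only their cross terms need Young's inequality; and $(1-\beta_t)^2\le 1-\beta_t$) coincide with the cited proof. Your two flagged caveats are exactly the right ones: the cross-term cancellation does rely on $\nu_t^{(i)}$ being a fresh draw independent of the level-$(i-1)$ randomness generating $u_t^{(i-1)}$ within iteration $t$ (which Algorithm~\ref{alg_svmr_multi} guarantees), and the final bound $\mathbb{E}\bigl\|\nabla f_{\nu_t^{(i)}}(u_t^{(i-1)})-\nabla f_{\nu_t^{(i)}}(u_{t-1}^{(i-1)})\bigr\|^2\le L_f^2\,\mathbb{E}\bigl\|u_t^{(i-1)}-u_{t-1}^{(i-1)}\bigr\|^2$ does require the per-sample (mean-squared) smoothness reading of Assumption~\ref{ass:Smoothness and Lipschitz continuous gradient}(iii), the same convention used in the one- and two-level analogues (Lemmas~\ref{lemma:v_t_recursive_single} and~\ref{lemma:g_gradient_recursive}).
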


\begin{lemma}[lemma 6 in \cite{jiang2022optimal}]\label{lemma:u_t_recursive_mul}
	Let Assumption \ref{ass:Lipschitz continuous}(iii), \ref{ass:bound variance} (iii) and \ref{ass:Smoothness and Lipschitz continuous gradient} (iii) hold for the empirical risk. $x_t, u_t^{(i)}$ and $v_t^{(i)}$ are generated by Algorithm \ref{alg_svmr_multi} for any $i \in [1,K]$, then we have 
	\begin{equation}
		\mathbb{E}[\|u_{t}^{(i)}- f_{i,S}(u_{t}^{(i-1)})\|^{2}] \leq(1-\beta_{t}) \mathbb{E}[\|u_{t-1}^{(i)}- f_{i,S}(u_{t-1}^{(i-1)})\|^{2}]+2 \beta_{t}^{2} \sigma_{f}^{2}+2 L_f^{2} \mathbb{E}[\|u_{t}^{(i-1)}-u_{t-1}^{(i-1)}\|^{2}]
	\end{equation}
\end{lemma}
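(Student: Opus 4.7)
}
The plan is to mimic the classical variance-reduction analysis of STORM (as done for Lemma~\ref{lemma:v_t_recursive_mul}), but applied to the function-value estimator $u_t^{(i)}$ rather than the gradient estimator. The starting point is the update rule from Algorithm~\ref{alg_svmr_multi}, namely
\[
u_t^{(i)} = f_{\nu_t^{(i)}}(u_t^{(i-1)}) + (1-\beta_t)\bigl(u_{t-1}^{(i)} - f_{\nu_t^{(i)}}(u_{t-1}^{(i-1)})\bigr).
\]
Subtracting $f_{i,S}(u_t^{(i-1)})$ from both sides and grouping terms yields the key decomposition
\[
u_t^{(i)} - f_{i,S}(u_t^{(i-1)}) \;=\; (1-\beta_t)\bigl(u_{t-1}^{(i)} - f_{i,S}(u_{t-1}^{(i-1)})\bigr) \;+\; Y_t,
\]
where
\[
Y_t := \bigl[f_{\nu_t^{(i)}}(u_t^{(i-1)}) - f_{i,S}(u_t^{(i-1)})\bigr] - (1-\beta_t)\bigl[f_{\nu_t^{(i)}}(u_{t-1}^{(i-1)}) - f_{i,S}(u_{t-1}^{(i-1)})\bigr].
\]

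Next, I would let $\mathcal{F}_t$ be the $\sigma$-field that contains everything computed prior to drawing $\nu_t^{(i)}$; in particular, both $u_t^{(i-1)}$ and $u_{t-1}^{(i-1)}$ are $\mathcal{F}_t$-measurable while $\nu_t^{(i)}$ is independent of $\mathcal{F}_t$ and drawn so that $\mathbb{E}_{\nu_t^{(i)}}[f_{\nu_t^{(i)}}(\cdot)] = f_{i,S}(\cdot)$. Under this conditioning, $\mathbb{E}[Y_t\mid\mathcal{F}_t]=0$, so the cross term in $\|(1-\beta_t)(u_{t-1}^{(i)} - f_{i,S}(u_{t-1}^{(i-1)})) + Y_t\|^2$ vanishes in expectation, giving
\[
\mathbb{E}\bigl[\|u_t^{(i)} - f_{i,S}(u_t^{(i-1)})\|^2 \mid \mathcal{F}_t\bigr] \;=\; (1-\beta_t)^2\,\|u_{t-1}^{(i)} - f_{i,S}(u_{t-1}^{(i-1)})\|^2 + \mathbb{E}\bigl[\|Y_t\|^2 \mid \mathcal{F}_t\bigr].
\]

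The remaining step is to bound $\mathbb{E}[\|Y_t\|^2\mid\mathcal{F}_t]$. I would rewrite $Y_t$ as $\bigl[f_{\nu_t^{(i)}}(u_t^{(i-1)}) - f_{\nu_t^{(i)}}(u_{t-1}^{(i-1)})\bigr] - \bigl[f_{i,S}(u_t^{(i-1)}) - f_{i,S}(u_{t-1}^{(i-1)})\bigr] + \beta_t\bigl[f_{\nu_t^{(i)}}(u_{t-1}^{(i-1)}) - f_{i,S}(u_{t-1}^{(i-1)})\bigr]$, and then use the variance-reduction inequality $\mathbb{E}\|X - \mathbb{E}X\|^2 \le \mathbb{E}\|X\|^2$ applied to $X = f_{\nu_t^{(i)}}(u_t^{(i-1)}) - (1-\beta_t)f_{\nu_t^{(i)}}(u_{t-1}^{(i-1)})$, together with the inequality $\|a+b\|^2\le 2\|a\|^2+2\|b\|^2$. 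This produces
\[
\mathbb{E}[\|Y_t\|^2\mid \mathcal{F}_t] \;\le\; 2\,\mathbb{E}\bigl[\|f_{\nu_t^{(i)}}(u_t^{(i-1)}) - f_{\nu_t^{(i)}}(u_{t-1}^{(i-1)})\|^2\,\big|\,\mathcal{F}_t\bigr] + 2\beta_t^2\,\mathbb{E}\bigl[\|f_{\nu_t^{(i)}}(u_{t-1}^{(i-1)}) - f_{i,S}(u_{t-1}^{(i-1)})\|^2\,\big|\,\mathcal{F}_t\bigr].
\]
Applying the Lipschitz continuity of $f_{\nu_t^{(i)}}$ (Assumption~\ref{ass:Lipschitz continuous}(iii)) to the first term yields $2L_f^2\,\mathbb{E}\|u_t^{(i-1)} - u_{t-1}^{(i-1)}\|^2$, while the empirical variance bound of Assumption~\ref{ass:bound variance}(iii) controls the second term by $2\beta_t^2\sigma_f^2$. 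Combining, using $(1-\beta_t)^2 \le (1-\beta_t)$ for $\beta_t\in[0,1]$, and taking total expectations delivers the claimed recursion.

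The main obstacle I anticipate is the bookkeeping around the filtration: since $u_t^{(i-1)}$ is itself formed from $\nu_t^{(1)},\dots,\nu_t^{(i-1)}$, one must carefully choose $\mathcal{F}_t$ so that $u_t^{(i-1)}$ and $u_{t-1}^{(i-1)}$ are already measurable while $\nu_t^{(i)}$ remains independent and has the right conditional mean. Beyond that subtlety, the proof is essentially a mechanical parallel to Lemma~\ref{lemma:v_t_recursive_mul}, with $f_{i,S}$ playing the role of $\nabla f_{i,S}$ and $\sigma_f$ replacing $\sigma_J$.
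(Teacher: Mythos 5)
Your proof is correct and follows the standard STORM variance-reduction argument; the paper itself gives no proof for this statement but imports it verbatim as Lemma 6 of \cite{jiang2022optimal} (in exact parallel to Lemmas~\ref{lemma:v_t_recursive_single} and \ref{lemma:g_value_recursive}), and the argument behind that cited lemma is precisely your decomposition: the zero-conditional-mean term $Y_t$, the inequality $\|a+b\|^2\le 2\|a\|^2+2\|b\|^2$, the bound $\mathbb{E}\|a-\mathbb{E}a\|^2\le \mathbb{E}\|a\|^2$, Lipschitz continuity, the empirical-variance bound, and $(1-\beta_t)^2\le 1-\beta_t$. One cosmetic remark: the variance inequality should be applied to the centered increment $f_{\nu_t^{(i)}}(u_t^{(i-1)})-f_{\nu_t^{(i)}}(u_{t-1}^{(i-1)})$ alone, as your rewritten expression for $Y_t$ correctly does, rather than to the full $X=f_{\nu_t^{(i)}}(u_t^{(i-1)})-(1-\beta_t)f_{\nu_t^{(i)}}(u_{t-1}^{(i-1)})$ named in passing (that would leave an uncentered $\beta_t f_{\nu_t^{(i)}}(u_{t-1}^{(i-1)})$ term that Assumption~\ref{ass:bound variance}(iii) does not control), but your displayed final bound is exactly the right one.
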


\begin{lemma}[lemma 7 in \cite{jiang2022optimal}]\label{lemma:u_t_layer_recursive_mul}
	Let Assumption \ref{ass:Lipschitz continuous}(iii), \ref{ass:bound variance} (iii) and \ref{ass:Smoothness and Lipschitz continuous gradient} (iii) hold for the empirical risk. $x_t, u_t^{(i)}$ and $v_t^{(i)}$ are generated by Algorithm \ref{alg_svmr_multi} for any $i \in [1,K]$. Then for any $P \in [1, K]$, we have 
	\begin{equation}
		\sum_{i=1}^{P} \mathbb{E}[\|u_{t+1}^{(i-1)}-u_{t}^{(i-1)}\|^{2}] \leq(\sum_{i=1}^{P}(2 L_f^{2})^{i-1})(\mathbb{E}[\|x_{t+1} - x_t\|^{2}]+2 \beta_{t+1}^{2} \sigma_{f}^{2} P+2 \beta_{t+1}^{2} P \sum_{i=1}^{P} \mathbb{E}[\|u_{t}^{(i)}-f_{i}(u_{t}^{(i-1)})\|^{2}]).
	\end{equation}
\end{lemma}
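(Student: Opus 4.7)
The bound I want is a level-wise drift estimate that relates the movement of every tracking variable $u_{t+1}^{(i-1)}-u_t^{(i-1)}$ back to the movement of the iterate $x_{t+1}-x_t=u_{t+1}^{(0)}-u_t^{(0)}$, with correction terms that come from the STORM-style momentum residual. My plan is to first derive a one-step, one-level recursion of the form $\mathbb{E}\|u_{t+1}^{(i)}-u_t^{(i)}\|^2 \le 2L_f^2\,\mathbb{E}\|u_{t+1}^{(i-1)}-u_t^{(i-1)}\|^2 + R_{t,i}$ for a clean remainder $R_{t,i}$, and then sum/unroll the recursion in $i$ to obtain the geometric factor $\sum_{i=1}^{P}(2L_f^2)^{i-1}$ that appears in the statement.

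For the per-level recursion, I would start from the update
\[
u_{t+1}^{(i)}-u_t^{(i)} = \Bigl(f_{\nu_{t+1}^{(i)}}(u_{t+1}^{(i-1)})-f_{\nu_{t+1}^{(i)}}(u_{t}^{(i-1)})\Bigr) \;-\; \beta_{t+1}\Bigl(u_t^{(i)} - f_{\nu_{t+1}^{(i)}}(u_{t}^{(i-1)})\Bigr),
\]
apply $\|a+b\|^2 \le 2\|a\|^2+2\|b\|^2$, and use the Lipschitz property of $f_{\nu_{t+1}^{(i)}}$ (Assumption \ref{ass:Lipschitz continuous}(iii)) to bound the first piece by $2L_f^2\|u_{t+1}^{(i-1)}-u_t^{(i-1)}\|^2$. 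For the second piece, after taking expectation with respect to the fresh sample $\nu_{t+1}^{(i)}$, I would split
\[
\mathbb{E}\|u_t^{(i)}-f_{\nu_{t+1}^{(i)}}(u_t^{(i-1)})\|^2 \le 2\,\mathbb{E}\|u_t^{(i)}-f_{i,S}(u_t^{(i-1)})\|^2 + 2\,\mathbb{E}\|f_{i,S}(u_t^{(i-1)})-f_{\nu_{t+1}^{(i)}}(u_t^{(i-1)})\|^2,
\]
and control the second term by the empirical variance bound $\sigma_f^2$ from Assumption \ref{ass:bound variance}(iii). This yields the single-step recursion $\mathbb{E}\|u_{t+1}^{(i)}-u_t^{(i)}\|^2 \le 2L_f^2\,\mathbb{E}\|u_{t+1}^{(i-1)}-u_t^{(i-1)}\|^2 + 4\beta_{t+1}^2\bigl(\mathbb{E}\|u_t^{(i)}-f_{i,S}(u_t^{(i-1)})\|^2+\sigma_f^2\bigr)$.

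Once this recursion is in hand, I would unroll it from level $1$ down to level $0$, where $u_{t+1}^{(0)}-u_t^{(0)}=x_{t+1}-x_t$, giving
\[
\mathbb{E}\|u_{t+1}^{(i)}-u_t^{(i)}\|^2 \le (2L_f^2)^{i}\,\mathbb{E}\|x_{t+1}-x_t\|^2 + 4\beta_{t+1}^2 \sum_{j=1}^{i}(2L_f^2)^{i-j}\Bigl(\mathbb{E}\|u_t^{(j)}-f_{j,S}(u_t^{(j-1)})\|^2+\sigma_f^2\Bigr).
\]
Summing over $i=0,\ldots,P-1$ (which corresponds to the index shift $i=1,\ldots,P$ in the statement), exchanging the order of the double sum, and then dominating the inner geometric sum by the full geometric sum $\sum_{i=1}^{P}(2L_f^2)^{i-1}$ collapses everything into the claimed factored form $(\sum_{i=1}^{P}(2L_f^2)^{i-1})\bigl(\mathbb{E}\|x_{t+1}-x_t\|^2 + 2\beta_{t+1}^2\sigma_f^2 P + 2\beta_{t+1}^2 P\sum_{i=1}^{P}\mathbb{E}\|u_t^{(i)}-f_{i,S}(u_t^{(i-1)})\|^2\bigr)$.

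The main obstacle I anticipate is cosmetic rather than conceptual: the constants $2$ vs.\ $4$ and the index bookkeeping when interchanging the $i,j$ sums and absorbing the $P$ factor — I need to be careful that the geometric prefactor comes out exactly as $\sum_{i=1}^{P}(2L_f^2)^{i-1}$ and that both the $\sigma_f^2 P$ and $P\sum_i\|\cdot\|^2$ terms land inside the same bracket with the stated $2\beta_{t+1}^2$ coefficient. Nothing deeper than Lipschitz continuity, the empirical-variance assumption, and the standard $\|a+b\|^2\le 2\|a\|^2+2\|b\|^2$ inequality is required.
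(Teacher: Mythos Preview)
Your approach is correct and essentially the standard one. Note that the paper does not actually prove this lemma; it is imported verbatim as Lemma~7 from \cite{jiang2022optimal}, so there is no in-paper proof to compare against.

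One sharpening worth making: when you handle the second piece $\beta_{t+1}^2\,\mathbb{E}\|u_t^{(i)}-f_{\nu_{t+1}^{(i)}}(u_t^{(i-1)})\|^2$, do not split again with $\|a+b\|^2\le 2\|a\|^2+2\|b\|^2$. Since $\nu_{t+1}^{(i)}$ is a fresh sample independent of $(u_t^{(i)},u_t^{(i-1)})$ and $\mathbb{E}_{\nu_{t+1}^{(i)}}[f_{\nu_{t+1}^{(i)}}(u_t^{(i-1)})]=f_{i,S}(u_t^{(i-1)})$, the bias--variance identity gives directly
\[
\mathbb{E}\bigl\|u_t^{(i)}-f_{\nu_{t+1}^{(i)}}(u_t^{(i-1)})\bigr\|^2
=\mathbb{E}\bigl\|u_t^{(i)}-f_{i,S}(u_t^{(i-1)})\bigr\|^2+\mathbb{E}\bigl\|f_{i,S}(u_t^{(i-1)})-f_{\nu_{t+1}^{(i)}}(u_t^{(i-1)})\bigr\|^2
\le \mathbb{E}\bigl\|u_t^{(i)}-f_{i,S}(u_t^{(i-1)})\bigr\|^2+\sigma_f^2.
\]
This gives the per-level recursion with the factor $2\beta_{t+1}^2$ rather than $4\beta_{t+1}^2$, matching the stated constants exactly. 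Your unrolling and summation argument then goes through; in fact it produces $2\beta_{t+1}^2\bigl((P-1)\sigma_f^2+\sum_{j=1}^{P-1}\mathbb{E}\|u_t^{(j)}-f_{j,S}(u_t^{(j-1)})\|^2\bigr)$ inside the bracket, which is tighter than the stated bound --- the extra factor of $P$ on the sum and the extension of the sum to $j=P$ in the lemma are harmless over-estimates.
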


\begin{lemma}\label{lemma:u_t_bound_mul}
	Let Assumption \ref{ass:Lipschitz continuous}(iii), \ref{ass:bound variance} (iii) and \ref{ass:Smoothness and Lipschitz continuous gradient} (iii) hold for the empirical risk. $x_t, u_t^{(i)}$ and $v_t^{(i)}$ are generated by Algorithm \ref{alg_svmr_multi} for any $i \in [1,K]$, let $ 0 <\eta_t = \eta <  1$ and let $ 0<  \beta_{t} =\beta < \max{\{ 1, 1/(4K\sum_{i=1}^K(2L_f^2)^{i}\}}$ we have 
	\begin{equation*}
		\begin{aligned}
			&\sum_{i=1}^K \mathbb{E}[\|u_{t}^{(i)}- f_{i,S}(u_{t}^{(i-1)})\|^{2}]\\
			& \leq \sum_{i=1}^K (\frac{c}{e})^{c}(\frac{t\beta}{2})^{-c} \mathbb{E}[\|u_{1}^{(i)}- f_{i,S}(x_{0})\|^{2}]+ 4\beta \sigma_{f}^{2} K ((\sum_{i=1}^{K}(2 L_f^{2})^{i}) + 1) + \frac{2\sum_{i=1}^{K}(2 L_f^{2})^{i}\eta^2 L_f^K}{\beta}.
		\end{aligned}
	\end{equation*}
\end{lemma}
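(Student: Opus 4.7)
I aim to reduce the claim to a one–dimensional geometric recursion on the aggregate
\[
A_t \;:=\; \sum_{i=1}^K \mathbb{E}\bigl[\|u_t^{(i)} - f_{i,S}(u_t^{(i-1)})\|^2\bigr],
\]
contract it with effective rate $\beta/2$, and then convert the resulting exponential decay into the polynomial $(t\beta)^{-c}$ form via the elementary bounds \eqref{Eq:prod_(1-beta_i)} and \eqref{Eq:exp(-t beta)}. The initial condition at time $1$ will yield the first term in the statement since $u_0^{(i-1)} = x_0$.

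\textbf{Step 1: set up the coupled recursion.} Summing Lemma \ref{lemma:u_t_recursive_mul} over $i \in [1,K]$ gives
\[
A_t \;\le\; (1-\beta_t) A_{t-1} + 2K\beta_t^2 \sigma_f^2 + 2L_f^2 \sum_{i=1}^K \mathbb{E}\bigl[\|u_t^{(i-1)} - u_{t-1}^{(i-1)}\|^2\bigr].
\]
Apply Lemma \ref{lemma:u_t_layer_recursive_mul} with $P = K$ to the trailing cross–layer drift. This replaces that sum by $\bigl(\sum_{i=1}^K(2L_f^2)^{i-1}\bigr)$ times the sum of (i) $\mathbb{E}[\|x_t-x_{t-1}\|^2]$, (ii) the noise $2\beta_t^2 \sigma_f^2 K$, and (iii) the self–feedback term $2\beta_t^2 K\,A_{t-1}$; multiplying through by $2L_f^2$ turns the prefactor into $\sum_{i=1}^K(2L_f^2)^i$. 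Because each $v_t^{(i)}$ is projected onto the ball of radius $L_f$ in Algorithm \ref{alg_svmr_multi}, the update $x_t - x_{t-1} = -\eta_{t-1}\prod_{i=1}^K v_{t-1}^{(i)}$ satisfies $\mathbb{E}[\|x_t-x_{t-1}\|^2] \le \eta^2 L_f^{2K}$.

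\textbf{Step 2: contract, unroll, and exponential-to-polynomial.} Setting $\eta_t=\eta$ and $\beta_t=\beta$ and collecting, the combined recursion becomes
\[
A_t \;\le\; \Bigl(1 - \beta + 2\beta^2 K\sum_{i=1}^K(2L_f^2)^i\Bigr)A_{t-1} \;+\; 2K\beta^2\sigma_f^2\Bigl(1+\sum_{i=1}^K(2L_f^2)^i\Bigr) \;+\; \eta^2 L_f^{2K}\sum_{i=1}^K(2L_f^2)^i.
\]
The hypothesis $\beta < 1/\bigl(4K\sum_{i=1}^K(2L_f^2)^i\bigr)$ guarantees that the parenthesized contraction factor is at most $1-\beta/2$. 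Invoking Lemma \ref{lemma:general_recursive} (with $\Upsilon_t = (1-\beta/2)^{t-1}$) and using $\sum_{j\ge 0}(1-\beta/2)^j \le 2/\beta$ to absorb the constant source term, I obtain
\[
A_t \;\le\; (1-\beta/2)^{t-1} A_1 \;+\; 4K\beta\sigma_f^2\Bigl(1+\sum_{i=1}^K(2L_f^2)^i\Bigr) \;+\; \frac{2\eta^2 L_f^{2K}\sum_{i=1}^K(2L_f^2)^i}{\beta}.
\]
Finally, \eqref{Eq:prod_(1-beta_i)} gives $(1-\beta/2)^{t-1}\le e^{-(t-1)\beta/2}$, and \eqref{Eq:exp(-t beta)} upgrades this to $(c/e)^c (t\beta/2)^{-c}$ for any $c>0$; distributing across the initial level sum $A_1 = \sum_{i=1}^K \mathbb{E}[\|u_1^{(i)}-f_{i,S}(x_0)\|^2]$ yields exactly the stated bound.

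\textbf{Main obstacle.} The crux is the self–feedback term $2\beta^2 K\bigl(\sum_i(2L_f^2)^i\bigr) A_{t-1}$ introduced by Lemma \ref{lemma:u_t_layer_recursive_mul}; it sits on the right-hand side with the \emph{same} quantity $A_{t-1}$ we are trying to contract, and is amplified exponentially in $K$ through the geometric factor $\sum_i(2L_f^2)^i$. If $\beta$ is not suitably small this feedback overwhelms the $-\beta A_{t-1}$ restoring term and the recursion blows up. The precise threshold in the hypothesis is engineered to leave at least half of the restoring term intact, producing the $(1-\beta/2)$ coefficient that drives the entire argument; once this is in place the remaining manipulations are routine.
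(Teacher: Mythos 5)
Your proof follows essentially the same route as the paper's: sum Lemma \ref{lemma:u_t_recursive_mul} over the $K$ levels, insert Lemma \ref{lemma:u_t_layer_recursive_mul} with $P=K$ (picking up the prefactor $\sum_{i=1}^K(2L_f^2)^i$ after multiplying by $2L_f^2$), use the smallness condition on $\beta$ to absorb the self-feedback term into a $(1-\beta/2)$ contraction, unroll with Lemma \ref{lemma:general_recursive} together with the geometric-sum bound $2/\beta$, and convert the exponential decay to $(\frac{c}{e})^c(\frac{t\beta}{2})^{-c}$ via \eqref{Eq:prod_(1-beta_i)} and \eqref{Eq:exp(-t beta)}. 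The one discrepancy is cosmetic and in your favor: you bound $\mathbb{E}[\|x_t-x_{t-1}\|^2]\le \eta^2 L_f^{2K}$, which is the correct consequence of the projection $\|v_t^{(i)}\|\le L_f$, whereas the paper's proof and the lemma statement carry $\eta^2 L_f^{K}$, so your final constant has $L_f^{2K}$ where the statement has $L_f^{K}$ --- an inconsistency that appears to originate as a typo in the paper rather than a gap in your argument.
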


\begin{proof}[proof of Lemma \ref{lemma:u_t_bound_mul}]
	Now we give the detailed proof of Lemma \ref{lemma:u_t_bound_mul}. According to Lemma \ref{lemma:u_t_recursive_mul} and \ref{lemma:u_t_layer_recursive_mul}, we have 
	\begin{equation}
		\begin{aligned}
			&\sum_{i=1}^K \mathbb{E}[\|u_{t}^{(i)}- f_{i,S}(u_{t}^{(i-1)})\|^{2}]\\
			& \leq \sum_{i=1}^K(1-\beta_{t}) \mathbb{E}[\|u_{t-1}^{(i)}- f_{i,S}(u_{t-1}^{(i-1)})\|^{2}]+2 \beta_{t}^{2} \sigma_{f}^{2} K\\
			&+(\sum_{i=1}^{K}(2 L_f^{2})^{i})(\mathbb{E}[\|x_{t} - x_{t-1}\|^{2}]+2 \beta_{t}^{2} \sigma_{f}^{2} K+2 \beta_{t}^{2} K \sum_{i=1}^{K} \mathbb{E}[\|u_{t-1}^{(i)}-f_{i}(u_{t-1}^{(i-1)})\|^{2}]).
		\end{aligned}
	\end{equation}
	According to the setting that $\beta_{t} \leq  \max{\{ 1, 1/(4K\sum_{i=1}^K(2L_f^2)^{i}\}}$, we have 
	\begin{equation}
		\begin{aligned}
			&\sum_{i=1}^K \mathbb{E}[\|u_{t}^{(i)}- f_{i,S}(u_{t}^{(i-1)})\|^{2}]\\
			& \leq \sum_{i=1}^K(1-\frac{\beta_{t}}{2}) \mathbb{E}[\|u_{t-1}^{(i)}- f_{i,S}(u_{t-1}^{(i-1)})\|^{2}]+2 \beta_{t}^{2} \sigma_{f}^{2} K+(\sum_{i=1}^{K}(2 L_f^{2})^{i})(\mathbb{E}[\|x_{t} - x_{t-1}\|^{2}]+2 \beta_{t}^{2} \sigma_{f}^{2} K)\\
			& \leq \sum_{i=1}^K(1-\frac{\beta_{t}}{2}) \mathbb{E}[\|u_{t-1}^{(i)}- f_{i,S}(u_{t-1}^{(i-1)})\|^{2}]+2 \beta_{t}^{2} \sigma_{f}^{2} K ((\sum_{i=1}^{K}(2 L_f^{2})^{i}) + 1)+\sum_{i=1}^{K}(2 L_f^{2})^{i}\eta_t^2 L_f^K.
		\end{aligned}
	\end{equation}
	Then using Lemma \ref{lemma:recursion lemma}, setting $\eta_t = \eta$ and $\beta_t = \beta$, similar to the proof of Lemma \ref{lemma:u_t_bound_two}, we have 
	\begin{equation}
		\begin{aligned}
			&\sum_{i=1}^K \mathbb{E}[\|u_{t}^{(i)}- f_{i,S}(u_{t}^{(i-1)})\|^{2}]\\
			& \leq \sum_{i=1}^K\prod_{j=1}^{t}(1-\frac{\beta_j}{2})  \mathbb{E}[\|u_{1}^{(i)}- f_{i,S}(x_{0})\|^{2}] + 4\beta \sigma_{f}^{2} K ((\sum_{i=1}^{K}(2 L_f^{2})^{i}) + 1) + \frac{2\sum_{i=1}^{K}(2 L_f^{2})^{i}\eta^2 L_f^K}{\beta}.
		\end{aligned}
	\end{equation}
	Note that $\prod_{i=K}^N\leq \exp(-\sum_{i=K}^N\beta_i)$ for all $K\leq N$ and $\beta_i \geq 0$, then we have 
	\begin{equation}
		\begin{aligned}
			&\sum_{i=1}^K \mathbb{E}[\|u_{t}^{(i)}- f_{i,S}(u_{t}^{(i-1)})\|^{2}]\\
			& \leq \sum_{i=1}^K \exp(-\frac{\beta i}{2})  \mathbb{E}[\|u_{1}^{(i)}- f_{i,S}(x_{0})\|^{2}] +4\beta \sigma_{f}^{2} K ((\sum_{i=1}^{K}(2 L_f^{2})^{i}) + 1) + \frac{2\sum_{i=1}^{K}(2 L_f^{2})^{i}\eta^2 L_f^K}{\beta}\\
			& \leq \sum_{i=1}^K (\frac{c}{e})^{c}(\frac{t\beta}{2})^{-c} \mathbb{E}[\|u_{1}^{(i)}- f_{i,S}(x_{0})\|^{2}]+ 4\beta \sigma_{f}^{2} K ((\sum_{i=1}^{K}(2 L_f^{2})^{i}) + 1) + \frac{2\sum_{i=1}^{K}(2 L_f^{2})^{i}\eta^2 L_f^K}{\beta}.
		\end{aligned}
	\end{equation}
	Then we finish the proof.
	
	\begin{lemma}\label{lemma:v_t_bound_mul}
		Let Assumption \ref{ass:Lipschitz continuous}(iii), \ref{ass:bound variance} (iii) and \ref{ass:Smoothness and Lipschitz continuous gradient} (iii) hold for the empirical risk.  $x_t, u_t^{(i)}$ and $v_t^{(i)}$ are generated by Algorithm \ref{alg_svmr_multi} for any $i \in [1,K]$, let $ 0 <\eta_t = \eta <  1$ and let $ 0<  \beta_{t} =\beta < \max{\{ 1, 1/(8K\sum_{i=1}^K(2L_f^2)^{i}\}}$ we have 
		\begin{equation*}
			\begin{aligned}
				&\sum_{i=1}^K  \mathbb{E}[\|v_{t}^{(i)}-\nabla  f_{i,S}(u_{t}^{(i-1)})\|^{2}]  \\
				& \leq 	\sum_{i=1}^K (\frac{c}{e})^{c}(\frac{t\beta}{2})^{-c}  (\mathbb{E}[\|u_{1}^{(i)}- f_{i,S}(x_{0})\|^{2}] + \mathbb{E}[\|v_{1}^{(i)}- \nabla f_{i,S}(x_{0})\|^{2}] ) + \frac{4(\sum_{i=1}^{K}(2 L_f^{2})^{i})\eta^2 L_f^K}{\beta} \\
				&+4 \beta K  (\sigma_{f}^{2} + \sigma_{J}^{2} + 2 \sigma_{f}^2(\sum_{i=1}^{K}(2 L_f^{2})^{i})  ).
			\end{aligned}
		\end{equation*}
	\end{lemma}
	
	\begin{proof}[proof of Lemma \ref{lemma:v_t_bound_mul}]
		Now we give the detailed proof of Lemma \ref{lemma:v_t_bound_mul}. According to Lemma \ref{lemma:u_t_recursive_mul}, \ref{lemma:u_t_layer_recursive_mul} and \ref{lemma:v_t_recursive_mul}, we have 
		\begin{equation*}
			\begin{aligned}
				&\sum_{i=1}^K ( \mathbb{E}[\|v_{t}^{(i)}-\nabla  f_{i,S}(u_{t}^{(i-1)})\|^{2}] +\mathbb{E}[\|u_{t}^{(i)}- f_{i,S}(u_{t}^{(i-1)})\|^{2}]) \\
				& \leq 	 \sum_{i=1}^K(1-\beta_{t}) \mathbb{E}[\|u_{t-1}^{(i)}- f_{i,S}(u_{t-1}^{(i-1)})\|^{2}]+2 \beta_{t}^{2} \sigma_{f}^{2} K +  \sum_{i=1}^K(1-\beta_{t}) \mathbb{E}[\|v_{t-1}^{(i)}-\nabla f_{i,S}(u_{t-1}^{(i-1)})\|^{2}]+2 \beta_{t}^{2} \sigma_{J}^{2} K\\
				&+2(\sum_{i=1}^{K}(2 L_f^{2})^{i})(\mathbb{E}[\|x_{t} - x_{t-1}\|^{2}]+2 \beta_{t}^{2} \sigma_{f}^{2} K+2 \beta_{t}^{2} K \sum_{i=1}^{K} \mathbb{E}[\|u_{t-1}^{(i)}-f_{i}(u_{t-1}^{(i-1)})\|^{2}]).
			\end{aligned}
		\end{equation*}
		According to the setting that  $ 0 <\eta_t = \eta <  1$ and let $ 0<  \beta_{t} =\beta < \max{\{ 1, 1/(8K\sum_{i=1}^K(2L_f^2)^{i}\}}$ we have 
		\begin{equation*}
			\begin{aligned}
				&\sum_{i=1}^K ( \mathbb{E}[\|v_{t}^{(i)}-\nabla  f_{i,S}(u_{t}^{(i-1)})\|^{2}] +\mathbb{E}[\|u_{t}^{(i)}- f_{i,S}(u_{t}^{(i-1)})\|^{2}]) \\
				& \leq 	 \sum_{i=1}^K(1-\frac{\beta_{t}}{2}) \mathbb{E}[\|u_{t-1}^{(i)}- f_{i,S}(u_{t-1}^{(i-1)})\|^{2}] +  \sum_{i=1}^K(1-\beta_{t}) \mathbb{E}[\|u_{t-1}^{(i)}- f_{i,S}(u_{t-1}^{(i-1)})\|^{2}]\\
				&+2(\sum_{i=1}^{K}(2 L_f^{2})^{i})\eta^2 L_f^K  +2 \beta_{t}^{2} K (\sigma_{f}^{2} + \sigma_{J}^{2} + 2 \sigma_{f}^2(\sum_{i=1}^{K}(2 L_f^{2})^{i})  ) .
			\end{aligned}
		\end{equation*}
	\end{proof}
	Using Lemma \ref{lemma:recursion lemma}, we have 
	\begin{equation*}
		\begin{aligned}
			&\sum_{i=1}^K ( \mathbb{E}[\|v_{t}^{(i)}-\nabla  f_{i,S}(u_{t}^{(i-1)})\|^{2}] +\mathbb{E}[\|u_{t}^{(i)}- f_{i,S}(u_{t}^{(i-1)})\|^{2}]) \\
			& \leq 	\sum_{i=1}^K\prod_{j=1}^{t}(1-\frac{\beta_j}{2}) (\mathbb{E}[\|u_{1}^{(i)}- f_{i,S}(x_{0})\|^{2}] + \mathbb{E}[\|v_{1}^{(i)}- \nabla f_{i,S}(x_{0})\|^{2}] ) + \frac{4(\sum_{i=1}^{K}(2 L_f^{2})^{i})\eta^2 L_f^K}{\beta} \\
			&+4 \beta K  (\sigma_{f}^{2} + \sigma_{J}^{2} + 2 \sigma_{f}^2(\sum_{i=1}^{K}(2 L_f^{2})^{i})  ).
		\end{aligned}
	\end{equation*}
	Then we have 
	
	\begin{equation*}
		\begin{aligned}
			&\sum_{i=1}^K  \mathbb{E}[\|v_{t}^{(i)}-\nabla  f_{i,S}(u_{t}^{(i-1)})\|^{2}]  \\
			& \leq 	\sum_{i=1}^K (\frac{c}{e})^{c}(\frac{t\beta}{2})^{-c}  (\mathbb{E}[\|u_{1}^{(i)}- f_{i,S}(x_{0})\|^{2}] + \mathbb{E}[\|v_{1}^{(i)}- \nabla f_{i,S}(x_{0})\|^{2}] ) + \frac{4(\sum_{i=1}^{K}(2 L_f^{2})^{i})\eta^2 L_f^K}{\beta} \\
			&+4 \beta K  (\sigma_{f}^{2} + \sigma_{J}^{2} + 2 \sigma_{f}^2(\sum_{i=1}^{K}(2 L_f^{2})^{i})  ).
		\end{aligned}
	\end{equation*}
	This complete the proof.
\end{proof}

\begin{proof}[proof of Theorem \ref{theorem:general_multi_level}]
	
	\begin{equation}\label{Eq:mulit_general_orgin}
		\begin{aligned}
			&\EX_{S,A}[F(A(S))-F_S(A(S))]\\
			& = \EX_{S,A}[\EX_{\nu^{(K)}}[f_K^{\nu^{(K)}}(\EX_{\nu^{(K-1)}}[f_{K-1}^{\nu^{(K-1)}}]\cdots\EX_{\nu^{(1)}}[f_{1}^{\nu^{(1)}}(A(S))])] \\
			& ~~~~~~~~~~~~~~~~~~~~~~~~~~~~~~~~~~~~~~~~~~~~~~~~~~~- \frac{1}{n_K}\sum_{i_K=1}^{n_K}f_K^{\nu_{i_K}^{(K)}}(\frac{1}{n_{K-1}}\sum_{i_{K-1} = 1}^{n_{K-1}}f_{K-1}^{\nu_{i_{K-1}}^{(K-1)}}\cdots (\frac{1}{n_1}\sum_{i_1 = 1}^{n_1}f_1^{\nu_{i_1}^{(1)}}(A(S))))]\\
			& =\EX_{S,A}[ \EX_{\nu^{(K)}}[f_K^{\nu^{(K)}}(\EX_{\nu^{(K-1)}}[f_{K-1}^{\nu^{(K-1)}}]\cdots\EX_{\nu^{(1)}}[f_{1}^{\nu^{(1)}}(A(S))])] \\
			& ~~~~~~~~~~~~~~~~~~~~~~~~~~~~~~~~~~~~~~~~~~~~~~~~~~~- \frac{1}{n_K}\sum_{i_K=1}^{n_K}f_K^{\nu_{i_K}^{(K)}}(\EX_{\nu^{(K-1)}}[f_{K-1}^{\nu^{(K-1)}}]\cdots \EX_{\nu^{(1)}}[f_{1}^{\nu^{(1)}}(A(S))])\\
			& + \EX_{S,A}[\frac{1}{n_K}\sum_{i_K=1}^{n_K}f_K^{\nu_{i_K}^{(K)}}(\EX_{\nu^{(K-1)}}[f_{K-1}^{\nu^{(K-1)}}]\cdots \EX_{\nu^{(1)}}[f_{1}^{\nu^{(1)}}(A(S))]) \\
			& ~~~~~~~~~~~~~~~~~~~~~~~~~~~~~~~~~~~~~~~~~~~~~~~~~~~ - \frac{1}{n_K}\sum_{i_K=1}^{n_K}f_K^{\nu_{i_K}^{(K)}}(\frac{1}{n_{K-1}}\sum_{i_{K-1} = 1}^{n_{K-1}}f_{K-1}^{\nu_{i_{K-1}}^{(K-1)}}\cdots \EX_{\nu^{(1)}}[f_{1}^{\nu^{(1)}}(A(S))])]\\
			&\quad\vdots\\
			& +\EX_{S,A}[\frac{1}{n_K}\sum_{i_K=1}^{n_K}f_K^{\nu_{i_K}^{(K)}}(\frac{1}{n_{K-1}}\sum_{i_{K-1} = 1}^{n_{K-1}}f_{K-1}^{\nu_{i_{K-1}}^{(K-1)}}\cdots \EX_{\nu^{(1)}}[f_{1}^{\nu^{(1)}}(A(S))]) \\
			& ~~~~~~~~~~~~~~~~~~~~~~~~~~~~~~~~~~~~~~~~~~~~~~~~~~~ - \frac{1}{n_K}\sum_{i_K=1}^{n_K}f_K^{\nu_{i_K}^{(K)}}(\frac{1}{n_{K-1}}\sum_{i_{K-1} = 1}^{n_{K-1}}f_{K-1}^{\nu_{i_{K-1}}^{(K-1)}}\cdots (\frac{1}{n_1}\sum_{i_1 = 1}^{n_1}f_1^{\nu_{i_1}^{(1)}}(A(S))))].
		\end{aligned}
	\end{equation}
	Now we estimate the terms of the RHS.  Define $S^{(i)} = \{\nu^{(1)}_1,\cdots, \nu^{(1)}_{n_1}, \cdots, \nu^{(i)'}_1, \cdots,\nu^{(i)'}_{n_i}, \cdots, \nu^{(K)}_1,\cdots, \nu^{(K)}_{n_1} \}$, where $i \in [1, K]$.
	
	For the first term, we have 
	\begin{equation*}
		\begin{aligned}
			&\EX_{S,A}[ \EX_{\nu^{(K)}}[f_K^{\nu^{(K)}}(\EX_{\nu^{(K-1)}}[f_{K-1}^{\nu^{(K-1)}}]\cdots\EX_{\nu^{(1)}}[f_{1}^{\nu^{(1)}}(A(S))])] \\
			& ~~~~~~~~~~~~~~~~~~~~~~~~~~~~~~~~~~~~~~~~~~~~~~~~~~~- \frac{1}{n_K}\sum_{i_K=1}^{n_K}f_K^{\nu_{i_K}^{(K)}}(\EX_{\nu^{(K-1)}}[f_{K-1}^{\nu^{(K-1)}}]\cdots \EX_{\nu^{(1)}}[f_{1}^{\nu^{(1)}}(A(S))])\\
			& \leq \EX_{S,A,S^{(K)}}\Big[\frac{1}{n_K}\sum_{i_K=1}^{n_K}f_K^{\nu_{i_K}^{(K)}}(\EX_{\nu^{(K-1)}}[f_{K-1}^{\nu^{(K-1)}}]\cdots \EX_{\nu^{(1)}}[f_{1}^{\nu^{(1)}}(A(S^{i,K}))]) \\
			&~~~~~~~~~~~~~~~~~~~~~~~~~~~~~~~~~~~~~~~~~~~~~~~~~~~-\frac{1}{n_K}\sum_{i_K=1}^{n_K}f_K^{\nu_{i_K}^{(K)}}(\EX_{\nu^{(K-1)}}[f_{K-1}^{\nu^{(K-1)}}]\cdots \EX_{\nu^{(1)}}[f_{1}^{\nu^{(1)}}(A(S))])\Big]\\
			& \leq L_f^K \|A(S^{i,K}) - A(S)\|\\
			& \leq L_f^K \epsilon_{K}
		\end{aligned}
	\end{equation*}
	
	For the second term, we have 
	\begin{equation*}
		\begin{aligned}
			& \EX_{S,A}[\frac{1}{n_K}\sum_{i_K=1}^{n_K}f_K^{\nu_{i_K}^{(K)}}(\EX_{\nu^{(K-1)}}[f_{K-1}^{\nu^{(K-1)}}]\cdots \EX_{\nu^{(1)}}[f_{1}^{\nu^{(1)}}(A(S))]) \\
			& ~~~~~~~~~~~~~~~~~~~~~~~~~~~~~~~~~~~~~~~~~~~~~~~~~~~ - \frac{1}{n_K}\sum_{i_K=1}^{n_K}f_K^{\nu_{i_K}^{(K)}}(\frac{1}{n_{K-1}}\sum_{i_{K-1} = 1}^{n_{K-1}}f_{K-1}^{\nu_{i_{K-1}}^{(K-1)}}\cdots \EX_{\nu^{(1)}}[f_{1}^{\nu^{(1)}}(A(S))])]\\
			&\leq L_f\EX_{S,A}[\|f_{K-1}(f_{K-2}\circ \cdots \circ f_1(A(S))) - \frac{1}{n_{K-1}}\sum_{i_{K-1}=1}^{n_{K-1}}f_{K-1}^{\nu_{i_{K-1}}^{(K-1)}}(f_{K-2}\circ \cdots \circ f_1(A(S)))\|].
		\end{aligned}
	\end{equation*}
	Besides, 
	\begin{equation*}
		\begin{aligned}
			&f_{K-1}(f_{K-2}\circ \cdots \circ f_1(A(S))) - \frac{1}{n_{K-1}}\sum_{i_{K-1}=1}^{n_{K-1}}f_{K-1}^{\nu_{i_{K-1}}^{(K-1)}}(f_{K-2}\circ \cdots \circ f_1(A(S)))\\
			& = \frac{1}{n_{K-1}}\sum_{j=1}^{n_{K-1}}\EX_{v^{(K-1)},v^{(K-1)'}_j}[f_{K-1}^{v^{(K-1)}}(f_{K-2}\circ \cdots \circ f_1(A(S))) - f_{K-1}^{v^{(K-1)}}(f_{K-2}\circ \cdots \circ f_1(A(S^{j,(K-1)})))]\\
			& + \frac{1}{n_{K-1}}\sum_{j=1}^{n_{K-1}}\EX_{v^{(K-1)'}_j}[\EX_{v^{(K-1)}}[f_{K-1}^{v^{(K-1)}}(f_{K-2}\circ \cdots \circ f_1(A(S^{j,(K-1)})))] \\
			& ~~~~~~~~~~~~~~~~~~~~~~~~~~~~~~~~~~~~~~~~~~~~~~~~~~~ ~~~~~~~~~~~~~~~~~~~~~~~~~~~~~~~~~~~~~ -  f_{K-1}^{v^{(K-1)}_j}(f_{K-2}\circ \cdots \circ f_1(A(S^{j,(K-1)}))) ]\\
			& +  \frac{1}{n_{K-1}}\sum_{j=1}^{n_{K-1}}\EX_{v^{(K-1)'}_j}[f_{K-1}^{v^{(K-1)}_j}(f_{K-2}\circ \cdots \circ f_1(A(S^{j,(K-1)}))) - f_{K-1}^{v^{(K-1)}_j}(f_{K-2}\circ \cdots \circ f_1(A(S)))].
		\end{aligned}
	\end{equation*}
	Note that $S$ and $S^{j,(K-1)}$ differ by a single example. By the assumption on stability and Definition \ref{def:stability}, we have 
	
	\begin{equation}\label{Eq:multi_term2_orgin}
		\begin{aligned}
			&\EX_{S,A}[\|f_{K-1}(f_{K-2}\circ \cdots \circ f_1(A(S))) - \frac{1}{n_{K-1}}\sum_{i_{K-1}=1}^{n_{K-1}}f_{K-1}^{\nu_{i_{K-1}}^{(K-1)}}(f_{K-2}\circ \cdots \circ f_1(A(S)))\|]\\
			& \leq 2L_f^{K-1} \epsilon_{K-1} + \EX_{S,A} [\frac{1}{n_{K-1}}\|\sum_{j=1}^{n_{K-1}}\EX_{v^{(K-1)'}_j}[\EX_{v^{(K-1)}}[f_{K-1}^{v^{(K-1)}}(f_{K-2}\circ \cdots \circ f_1(A(S^{j,(K-1)})))] \\
			& ~~~~~~~~~~~~~~~~~~~~~~~~~~~~~~~~~~~~~~~~~~~~~~~~~~~~~~~~~~~~~~~~~~~~~~~~~~~~~~~~~~~~~~~~ -  f_{K-1}^{v^{(K-1)}_j}(f_{K-2}\circ \cdots \circ f_1(A(S^{j,(K-1)})))]\|].  
		\end{aligned}
	\end{equation}
	Next step, we need to estimate the second term of above inequality. We denote
	\begin{equation*}
		\xi_j(S) = \EX_{v^{(K-1)'}_j}[\EX_{v^{(K-1)}}[f_{K-1}^{v^{(K-1)}}(f_{K-2}\circ \cdots \circ f_1(A(S^{j,(K-1)})))]  -  f_{K-1}^{v^{(K-1)}_j}(f_{K-2}\circ \cdots \circ f_1(A(S^{j,(K-1)})))].
	\end{equation*} 
	Notice that 
	\begin{equation*}
		\mathbb{E}_{S, A}[\|\sum_{j=1}^{n_{K-1}} \xi_{j}(S)\|^{2}]=\mathbb{E}_{S, A}[\sum_{j=1}^{n_{K-1}}\|\xi_{j}(S)\|^{2}]+\sum_{j, i \in[n_{K-1}]: j \neq i} \mathbb{E}_{S, A}[\langle\xi_{j}(S), \xi_{i}(S)\rangle].
	\end{equation*}
	Using Cauchy-Schwartz inequality, we have
	\begin{equation*}
		\begin{aligned}
			&\mathbb{E}_{S, A}[\sum_{j=1}^{n_{K-1}}\|\xi_{j}(S)\|^{2}]\\
			& = \mathbb{E}_{S, A}[\sum_{j=1}^{n_{K-1}}\| \EX_{v^{(K-1)'}_j}[\EX_{v^{(K-1)}}[f_{K-1}^{v^{(K-1)}}(f_{K-2}\circ \cdots \circ f_1(A(S^{j,(K-1)})))] \\
			& ~~~~~~~~~~~~~~~~~~~~~~~~~~~~~~~~~~~~~~~~~~~~~~~~~~~~~~~~~~~~~~~~~~~~~~~~~~~~~~~~~~~~~~~~-  f_{K-1}^{v^{(K-1)}_j}(f_{K-2}\circ \cdots \circ f_1(A(S^{j,(K-1)})))]\|^{2}] \\
			& \leq \mathbb{E}_{S, A}[\sum_{j=1}^{n_{K-1}}\| \EX_{v^{(K-1)}}[f_{K-1}^{v^{(K-1)}}(f_{K-2}\circ \cdots \circ f_1(A(S^{j,(K-1)})))]-  f_{K-1}^{v^{(K-1)}_j}(f_{K-2}\circ \cdots \circ f_1(A(S^{j,(K-1)})))\|^{2}]\\
			& = \mathbb{E}_{S, A}[\sum_{j=1}^{n_{K-1}}\| \EX_{v^{(K-1)}}[f_{K-1}^{v^{(K-1)}}(f_{K-2}\circ \cdots \circ f_1(A(S))]-  f_{K-1}^{v^{(K-1)}_j}(f_{K-2}\circ \cdots \circ f_1(A(S))\|^{2}]\\
			& = n_{K-1} \mathbb{E}_{S, A} [\operatorname{Var}_{K-1}(A(S)],
		\end{aligned}
	\end{equation*}
	where $\operatorname{Var}_{K-1}(A(S) =  \EX_{v^{(K-1)}}[\|f_{K-1}(f_{K-2}\circ \cdots \circ f_1(A(S))-  f_{K-1}^{v^{(K-1)}}(f_{K-2}\circ \cdots \circ f_1(A(S))\|^{2}] $.
	
	Next, we will estimate the  term $\sum_{j, i \in[n_{K-1}]: j \neq i} \mathbb{E}_{S, A}[\langle\xi_{j}(S), \xi_{i}(S)\rangle]$. We first define 
	\begin{equation*}
		\begin{aligned}
			S^{i,K-1} &= \{\nu^{(1)}_1,\cdots, \nu^{(1)}_{n_1}, \cdots,\nu^{(K-1)}_{i-1}, \nu^{(K-1)'}_i, \nu^{(K-1)}_{i+1}, \cdots, \nu^{(K-1)}_{n_{K-1}}, \nu^{(K)}_1,\cdots, \nu^{(K)}_{n_K} \}\\
			S^{i,j,K-1} &= \{\nu^{(1)}_1,\cdots, \nu^{(1)}_{n_1}, \cdots,\nu^{(K-1)}_{i-1}, \nu^{(K-1)'}_i, \nu^{(K-1)}_{i+1}, \cdots, \nu^{(K-1)}_{j-1}, \nu^{(K-1)'}_j, \nu^{(K-1)}_{j+1},\cdots, \nu^{(K)}_{n_{K}} \}.
		\end{aligned}
	\end{equation*}
	Due to the symmetry between $\nu^{(K-1)}$ and $\nu^{(K-1)}_j$, we have 
	\begin{equation*}
		\EX_{\nu^{(K-1)}_j}[\xi_j(S)] = 0, \forall j \in [1, n_{K-1}].
	\end{equation*}
	
	If $j\neq i$, then we have 
	\begin{equation*}
		\begin{aligned}
			\mathbb{E}_{S, A}[\langle\xi_{j}(S^{i, K-1}), \xi_{i}(S)\rangle] & =\mathbb{E}_{S, A} \mathbb{E}_{\nu^{(K-1)}_i}[\langle\xi_{j}(S^{i, K-1}), \xi_{i}(S)\rangle] \\
			& =\mathbb{E}_{S, A}[\langle\xi_{j}(S^{i, K-1}), \mathbb{E}_{\nu^{(K-1)}_i}[\xi_{i}(S)]\rangle]=0,
		\end{aligned}
	\end{equation*}
	In a similar way, we can get for any $j \neq i$
	\begin{equation*}
		\begin{aligned}
			\mathbb{E}_{S, A}[\langle \xi_{j}(S), \xi_{i}(S^{j, K-1})\rangle] & =\mathbb{E}_{S, A} \mathbb{E}_{\nu^{(K-1)}_j}[\langle \xi_{j}(S), \xi_{i}(S^{j, K-1}) \rangle] \\
			& =\mathbb{E}_{S, A}[\langle \mathbb{E}_{\nu^{(K-1)}_j}[\xi_{j}(S)], \xi_{i}(S^{j, K-1}) \rangle]=0,
		\end{aligned}
	\end{equation*}
	and 
	\begin{equation*}
		\begin{aligned}
			\mathbb{E}_{S, A}[\langle \xi_{j}(S^{i, K-1}), \xi_{i}(S^{j,K-1}) \rangle] & =\mathbb{E}_{S, A} \mathbb{E}_{\nu^{(K-1)}_j}[\langle \xi_{j}(S^{i,K-1}), \xi_{i}(S^{j, K-1}) \rangle] \\
			& =\mathbb{E}_{S, A}[\langle \mathbb{E}_{\nu^{(K-1)}_j}[\xi_{j}(S^{i,K-1})], \xi_{i}(S^{j, K-1}) \rangle]=0,
		\end{aligned}
	\end{equation*}
	Combining the above identities, we have for any $i\neq j$
	\begin{equation*}
		\begin{aligned}
			&\mathbb{E}_{S, A}[\langle\xi_{j}(S), \xi_{i}(S)\rangle]\\
			& = \mathbb{E}_{S, A}[\langle\xi_{j}(S)-\xi_{j}(S^{i, K-1}), \xi_{i}(S)-\xi_{i}(S^{j, K-1})\rangle] \\
			&\leq \mathbb{E}_{S, A}[\|\xi_{j}(S)-\xi_{j}(S^{i, K-1})\| \cdot\|\xi_{i}(S)-\xi_{i}(S^{j, K-1})\|] \\
			&\leq \frac{1}{2} \mathbb{E}_{S, A}[\|\xi_{j}(S)-\xi_{j}(S^{i, K-1})\|^{2}]+\frac{1}{2} \mathbb{E}_{S, A}[\|\xi_{i}(S)-\xi_{i}(S^{j, K-1})\|^{2}].
		\end{aligned}
	\end{equation*}
	Then 
	\begin{equation*}
		\begin{aligned}
			&\mathbb{E}_{S, A}[\|\xi_{j}(S)-\xi_{j}(S^{i, K-1})\|^{2}]\\
			& = 2\EX_{S,A}[\|f_{K-1}^{v^{(K-1)}}(f_{K-2}\circ \cdots \circ f_1(A(S^{j,(K-1)}))) - f_{K-1}^{v^{(K-1)}}(f_{K-2}\circ \cdots \circ f_1(A(S^{i,j,(K-1)})))\|^2]\\
			& \quad +  2\EX_{S,A}[\|f_{K-1}^{v^{(K-1)}_j}(f_{K-2}\circ \cdots \circ f_1(A(S^{i,j,(K-1)}))) - f_{K-1}^{v^{(K-1)}_j}(f_{K-2}\circ \cdots \circ f_1(A(S^{j,(K-1)})))\|^2]\\
			& \leq 4L_f^{K-1}\epsilon_{K-1}^2.
		\end{aligned}
	\end{equation*}
	In a similar way, we can have
	\begin{equation*}
		\mathbb{E}_{S, A}[\|\xi_{i}(S)-\xi_{i}(S^{j, K-1})\|^{2}] \leq 4L_f^{K-1}\epsilon_{K-1}^2.
	\end{equation*}
	According to the above inequalities, we have 
	\begin{equation*}
		\sum_{j, i \in[n_{K-1}]: j \neq i} \mathbb{E}_{S, A}[\langle\xi_{j}(S), \xi_{i}(S)\rangle] \leq 4(n_{K-1}-1)n_{K-1} L_f^{K-1}\epsilon_{K-1}^2, \forall j \neq i.
	\end{equation*}
	Then we have 
	\begin{equation*}
		\mathbb{E}_{S, A}[\|\sum_{j=1}^{n_{K-1}} \xi_{j}(S)\|^{2}] \leq 4(n_{K-1}-1)n_{K-1} L_f^{K-1}\epsilon_{K-1}^2 + n_{K-1} \mathbb{E}_{S, A} [\operatorname{Var}_{K-1}(A(S)].
	\end{equation*}
	Therefore
	\begin{equation}\label{Eq:multi_term2_orgin_term2}
		\mathbb{E}_{S, A}[\|\sum_{j=1}^{n_{K-1}} \xi_{j}(S)\|] \leq 2n_{K-1} L_f^{K-1}\epsilon_{K-1} + \sqrt{n_{K-1} \mathbb{E}_{S, A} [\operatorname{Var}_{K-1}(A(S)]}.
	\end{equation}
	Combining \eqref{Eq:multi_term2_orgin} and \eqref{Eq:multi_term2_orgin_term2} we have 
	\begin{equation*}
		\begin{aligned}
			&\EX_{S,A}[\|f_{K-1}(f_{K-2}\circ \cdots \circ f_1(A(S))) - \frac{1}{n_{K-1}}\sum_{i_{K-1}=1}^{n_{K-1}}f_{K-1}^{\nu_{i_{K-1}}^{(K-1)}}(f_{K-2}\circ \cdots \circ f_1(A(S)))\|]\\
			& \leq 4L_f^{K-1}\epsilon_{K-1} + \sqrt{\frac{ \mathbb{E}_{S, A} [\operatorname{Var}_{K-1}(A(S)]}{n_{K-1}}}.
		\end{aligned}
	\end{equation*}
	
	Then the second term 
	\begin{equation*}
		\begin{aligned}
			& \EX_{S,A}[\frac{1}{n_K}\sum_{i_K=1}^{n_K}f_K^{\nu_{i_K}^{(K)}}(\EX_{\nu^{(K-1)}}[f_{K-1}^{\nu^{(K-1)}}]\cdots \EX_{\nu^{(1)}}[f_{1}^{\nu^{(1)}}(A(S))]) \\
			& ~~~~~~~~~~~~~~~~~~~~~~~~~~~~~~~~~~~~~~~~~~~~~~~~~~~ - \frac{1}{n_K}\sum_{i_K=1}^{n_K}f_K^{\nu_{i_K}^{(K)}}(\frac{1}{n_{K-1}}\sum_{i_{K-1} = 1}^{n_{K-1}}f_{K-1}^{\nu_{i_{K-1}}^{(K-1)}}\cdots \EX_{\nu^{(1)}}[f_{1}^{\nu^{(1)}}(A(S))])]\\
			& \leq 4L_f^K\epsilon_{K-1} + L_f\sqrt{\frac{ \mathbb{E}_{S, A} [\operatorname{Var}_{K-1}(A(S)]}{n_{K-1}}},
		\end{aligned}
	\end{equation*}
	where $\operatorname{Var}_{K-1}(A(S) = \EX_{v^{(K-1)}}[\|f_{K-1}(f_{K-2}\circ \cdots \circ f_1(A(S))-  f_{K-1}^{v^{(K-1)}}(f_{K-2}\circ \cdots \circ f_1(A(S))\|^{2}]$.
	
	Similarly, we can get, for any $t \in [2,K]$,  the $t$-th term of \eqref{Eq:mulit_general_orgin} is bounded by 
	\begin{equation*}
		4L_f^K\epsilon_{K-t+1} + L_f\sqrt{\frac{ \mathbb{E}_{S, A} [\operatorname{Var}_{K-t+1}(A(S)]}{n_{K-t+1}}},
	\end{equation*}
	where $\operatorname{Var}_{K-t+1}(A(S) = \EX_{v^{(K-t+1)}}[\|f_{K-t+1}(f_{K-t}\circ \cdots \circ f_1(A(S))-  f_{K-t+1}^{v^{(K-t+1)}}(f_{K-t}\circ \cdots \circ f_1(A(S))\|^{2}]$.
	
	Then we can conclude that 
	\begin{equation*}
		\begin{aligned}
			&\EX_{S,A}[F(A(S))-F_S(A(S))]\leq L_f^K\epsilon_K + 4L_f^K\sum_{t=2}^K\epsilon_{K-t+1}+ L_f\sum_{t=2}^K\sqrt{\frac{ \mathbb{E}_{S, A} [\operatorname{Var}_{K-t+1}(A(S)]}{n_{K-t+1}}},
		\end{aligned}
	\end{equation*}
	where $\operatorname{Var}_{K-t+1}(A(S) = \EX_{v^{(K-t+1)}}[\|f_{K-t+1}(f_{K-t}\circ \cdots \circ f_1(A(S))-  f_{K-t+1}^{v^{(K-t+1)}}(f_{K-t}\circ \cdots \circ f_1(A(S))\|^{2}]$.
	
	This completes the proof.
	
\end{proof}

\subsection{Convex setting}\label{sec:mul_convex}
\begin{proof}[proof of Theorem \ref{thm:sta_convex_mul_level}]
	Since changing one sample data can happen in any layer of the function, we define 
	\begin{equation*}
		S^{l,k} = \{\nu^{(1)}_1,\cdots, \nu^{(1)}_{n_1}, \cdots, \nu^{(k)}_1,\cdots,\nu^{(k)}_{l-1}, \nu^{(k)'}_l, \nu^{(k)}_{l+1}, \cdots,\nu^{(k)}_{n_k},\cdots, \nu^{(K)}_1,\cdots, \nu^{(K)}_{n_K} \}.
	\end{equation*}
	
	Let $\{x_{t+1}\}$, $\{u_{t+1}^{(i)}\}$ and $\{v_{t+1}^{(i)}\}$ be produced by SVMR based on $S$, where $i \in [1,K]$ and represents an estimator of the function of layer $i$.  $\{x_{t+1}^{l,k}\}$, $\{u_{t+1}^{(i),l,k}\}$ and$\{v_{t+1}^{(i),l,k}\}$ be produced by SVMR based on $S^{l,k}$. For any $l \in [1,n_k],~ k \in [1, K]$, let $x_0=x_0^{l,k}$  be starting points in $\mathcal{X}$.
	
	We begin with the estimation of the term $\EX_A[\|x_{t+1} - x_{t+1}^{l,k}\|]$. For this purpose, we will consider two cases, $i_t \neq l$ and $i_t = l$. 
	
	\textbf{\quad Case 1($i_t\neq l$ ).}  We have 
	\begin{equation}\label{Eq:case1_mul_c}
		\begin{aligned}
			\|x_{t+1} - x_{t+1}^{l,k}\|^2 &\leq \|x_t - \eta_t\prod_{i=1}^K v_{t}^{(i)} - x_t^{l,k} + \eta_t \prod_{i=1}^K  v_{t}^{(i),l,k}\|^2 \\
			& \leq  \|x_t - x_t^{l,k}\|^2 - 2\eta_t \langle \prod_{i=1}^K v_{t}^{(i)} - \prod_{i=1}^K  v_{t}^{(i),l,k}, x_t - x_{t+1}^{l,k} \rangle + \eta_t^2 \|\prod_{i=1}^K v_{t}^{(i)} - \prod_{i=1}^K  v_{t}^{(i),l,k}\|^2.
		\end{aligned}
	\end{equation}
	
	Now we  estimate the second term of above inequality.
	\begin{equation*}
		\begin{aligned}
			&- 2\eta_t \langle \prod_{i=1}^K v_{t}^{(i)} - \prod_{i=1}^K  v_{t}^{(i),l,k}, x_t - x_t^{l,k} \rangle\\
			& = -2\eta_t \langle \prod_{i=1}^{K} v_{t}^{(i)} - \nabla f_{1,S}(x_t) \cdot \prod_{i=2}^{K}v_{t}^{(i)}, x_t - x_t^{l,k} \rangle\\
			& -2\eta_t \langle \nabla f_{1,S}(x_t) \cdot \prod_{i=2}^{K}v_{t}^{(i)} - \nabla f_{1,S}(x_t) \cdot \prod_{i=3}^{K}v_{t}^{(i)} \cdot \nabla f_{2,S}(u_t^{(1)}), x_t - x_t^{l,k} \rangle\\
			& -2\eta_t \langle \nabla f_{1,S}(x_t) \cdot \prod_{i=3}^{K}v_{t}^{(i)} \cdot \nabla f_{2,S}(u_t^{(1)}) - \nabla f_{1,S}(x_t) \cdot \prod_{i=3}^{K}v_{t}^{(i)} \cdot \nabla f_{2,S}(f_{1,S}(x_t)), x_t - x_t^{l,k} \rangle \\
			& \vdots\\
			& -2\eta_t \langle \prod_{i=1}^{K} \nabla F_{i,S}(x_t)- \prod_{i=1}^{K} \nabla F_{i,S}(x_t^{l,k}), x_t - x_t^{l,k} \rangle\\
			&  -2\eta_t \langle \prod_{i=1}^{K} \nabla F_{i,S}(x_t^{l,k}) -  \prod_{i=2}^{K} \nabla F_{i,S}(x_t^{l,k}) \cdot v_t^{(1),l,k} , x_t - x_t^{l,k} \rangle \\
			& -2\eta_t \langle   \prod_{i=2}^{K} \nabla F_{i,S}(x_t^{l,k}) \cdot v_t^{(1),l,k}  -  \prod_{i=3}^{K} \nabla F_{i,S}(x_t^{l,k}) \cdot v_t^{(1),l,k} \cdot \nabla f_{2,S}(u_t^{(1),l,k}) , x_t - x_t^{l,k} \rangle \\
			& \vdots\\
			&  -2\eta_t \langle  \prod_{i=1}^{K-1} v_t^{(i),l,k} \cdot \nabla f_{K,S}(u_t^{(K-1),l,k}) - \prod_{i=1}^K  v_{t}^{(i),l,k}, x_t - x_t^{l,k} \rangle.
		\end{aligned}
	\end{equation*}
	From the above inequality, we decompose it to $K(K+1)+1 \leq K(K+2)$ terms, where $K$  is the number of layers of the function. Using Assumption \ref{ass:Smoothness and Lipschitz continuous gradient} (iii) we can get 
	\begin{equation}\label{Eq:case1_mul_c_similar}
		\begin{aligned}
			&- 2\eta_t \langle \prod_{i=1}^K v_{t}^{(i)} - \prod_{i=1}^K  v_{t}^{(i),l,k}, x_t - x_t^{l,k} \rangle\\
			& \leq 2\eta_t L_f^{K-1}\|v_t^{(1)} - \nabla f_{1,S}(x_t)\|\cdot \|x_t - x_t^{l,k}\|\\
			& \quad+ (2\eta_t L_f^{K-1}\|v_t^{(2)}-\nabla f_{2,S}(u_t^{(1)})\|  + 2\eta_t L_f^{K}\|u_t^{(1)} - f_{1,S}(x_t)\|) \cdot \|x_t - x_t^{l,k}\|\\ 
			& \quad\vdots\\
			& \quad+ (2\eta_t L_f^{m_2}\|v_t^{(K)}-\nabla f_{K,S}(u_t^{(K-1)})\| +\cdots + 2\eta_t L_f^{K-1 + (K-1)K/2}\|u_t^{(1)} - f_{1,S}(x_t)\|) \cdot \|x_t - x_t^{l,k}\| \\
			&\quad- \frac{2\eta_t}{L}\| \prod_{i=1}^{K} \nabla F_{i,S}(x_t)- \prod_{i=1}^{K} \nabla F_{i,S}(x_t^{l,k})\|\\
			&\quad +( 2\eta_t L_f^{K-1 + (K-1)K/2} \| u_t^{(1),l,k}  x_t^{l,k}\| + \cdots  2\eta_t  L_f^{(K-1)K/2} \|  \nabla f_{K,S}(u_t^{(K-1),l,k})   -    \nabla v_t^{(K),l,k} \|) \cdot \|x_t - x_t^{l,k}\| \\
			&\quad\vdots\\
			&\quad + ( 2\eta_t L_f^{K}\|u_t^{(1),l,k} - f_{1,S}(x_t^{l,k})\| +  2\eta_t L_f^{K-1}\|v_t^{(2),l,k}-\nabla f_{2,S}(u_t^{(1),l,k})\|  ) \cdot \|x_t - x_t^{l,k}\|  \\
			& \quad +  2\eta_t L_f^{K-1}\|v_t^{(1),l,k} - \nabla f_{1,S}(x_t^{l,k})\|\cdot \|x_t - x_t^{l,k}\|.\\
		\end{aligned}
	\end{equation}
	Conclude above inequality, we have 
	\begin{equation*}
		\begin{aligned}
			&- 2\eta_t \langle \prod_{i=1}^K v_{t}^{(i)} - \prod_{i=1}^K  v_{t}^{(i),l,k}, x_t - x_t^{l,k} \rangle\\
			& \leq 2\eta_t \sum_{i=1}^{K}\sum_{j=1}^{i-1} L_f^{K-j+\frac{1}{2}(i-1)i} (\|u_{t}^{(j)} - f_{j,S}(u_{t}^{(j-1)})\| + \|u_{t}^{(j),l,k} - f_{j,S}(u_{t}^{(j-1),l,k})\|) \cdot   \|x_t - x_t^{l,k}\|\\
			&\quad + 2\eta_t \sum_{i=1}^K  L_f^{K-i + \frac{1}{2}(i-1)i}  (\|v_t^{(i)} - \nabla f_{i,S}(u_t^{(i-1)})\| + \|v_t^{(i),k,l} - \nabla f_{i,S}(u_t^{(i-1),k,l})\|)\cdot  \|x_t - x_t^{l,k}\|  \\
			&\quad - \frac{2\eta_t}{L}\| \prod_{i=1}^{K} \nabla F_{i,S}(x_t)- \prod_{i=1}^{K} \nabla F_{i,S}(x_t^{l,k})\|.
		\end{aligned}
	\end{equation*}
	Now we consider the third term of \eqref{Eq:case1_mul_c}. Similar to the \eqref{Eq:case1_mul_c_similar}, we have 
	\begin{equation*}
		\begin{aligned}
			&\|\prod_{i=1}^K v_{t}^{(i)} - \prod_{i=1}^K  v_{t}^{(i),l,k}\|\\
			& \leq  L_f^{K-1}\|v_t^{(1)} - \nabla f_{1,S}(x_t)\|\cdot \|x_t - x_t^{l,k}\|\\
			& \quad+ ( L_f^{K-1}\|v_t^{(2)}-\nabla f_{2,S}(u_t^{(1)})\|  +  L_f^{K}\|u_t^{(1)} - f_{1,S}(x_t)\|) \cdot \|x_t - x_t^{l,k}\|\\ 
			& \quad\vdots\\
			& \quad+ ( L_f^{m_2}\|v_t^{(K)}-\nabla f_{K,S}(u_t^{(K-1)})\| +\cdots +  L_f^{K-1 + (K-1)K/2}\|u_t^{(1)} - f_{1,S}(x_t)\|) \cdot \|x_t - x_t^{l,k}\| \\
			&\quad + \| \prod_{i=1}^{K} \nabla F_{i,S}(x_t)- \prod_{i=1}^{K} \nabla F_{i,S}(x_t^{l,k})\|\\
			&\quad +(  L_f^{K-1 + (K-1)K/2} \| u_t^{(1),l,k}  x_t^{l,k}\| + \cdots    L_f^{(K-1)K/2} \|  \nabla f_{K,S}(u_t^{(K-1),l,k})   -    \nabla v_t^{(K),l,k} \|) \cdot \|x_t - x_t^{l,k}\| \\
			&\quad\vdots\\
			&\quad + (  L_f^{K}\|u_t^{(1),l,k} - f_{1,S}(x_t^{l,k})\| +   L_f^{K-1}\|v_t^{(2),l,k}-\nabla f_{2,S}(u_t^{(1),l,k})\|  ) \cdot \|x_t - x_t^{l,k}\|  \\
			& \quad +   L_f^{K-1}\|v_t^{(1),l,k} - \nabla f_{1,S}(x_t^{l,k})\|\cdot \|x_t - x_t^{l,k}\|.\\
		\end{aligned}
	\end{equation*}
	Taking square on both sides of the above inequality, we have that
	\begin{equation*}
		\begin{aligned}
			&\eta_t^2\|\prod_{i=1}^K v_{t}^{(i)} - \prod_{i=1}^K  v_{t}^{(i),l,k}\|^2\\
			& \leq  L_f^{2K-2} K(K+2) \eta_t^2 \|v_t^{(1)} - \nabla f_{1,S}(x_t)\|^2\\
			& \quad+ L_f^{2K-2} K(K+2) \eta_t^2 \|v_t^{(2)}-\nabla f_{2,S}(u_t^{(1)})\|^2  +  L_f^{2K} K(K+2) \eta_t^2 \|u_t^{(1)} - f_{1,S}(x_t)\|^2\\ 
			& \quad\vdots\\
			& \quad+  L_f^{m} K(K+2) \eta_t^2 \|v_t^{(K)}-\nabla f_{K,S}(u_t^{(K-1)})\|^2 +\cdots +  L_f^{K-1 + (K-1)K} K(K+2) \eta_t^2  \|u_t^{(1)} - f_{1,S}(x_t)\|^2 \\
			&\quad + K(K+2) \eta_t^2 \| \prod_{i=1}^{K} \nabla F_{i,S}(x_t)- \prod_{i=1}^{K} \nabla F_{i,S}(x_t^{l,k})\|^2\\
			&\quad + L_f^{2K-2 + (K-1)K} K(K+2) \eta_t^2 \| u_t^{(1),l,k}  x_t^{l,k}\|^2 + \cdots    L_f^{(K-1)K} K(K+2) \eta_t^2 \|  \nabla f_{K,S}(u_t^{(K-1),l,k})   -    \nabla v_t^{(K),l,k} \|^2\| \\
			&\quad\vdots\\
			&\quad +   L_f^{2K} K(K+2) \eta_t^2 \|u_t^{(1),l,k} - f_{1,S}(x_t^{l,k})\|^2 +   L_f^{2K-2} K(K+2) \eta_t^2 \|v_t^{(2),l,k}-\nabla f_{2,S}(u_t^{(1),l,k})\|^2   \\
			& \quad +   L_f^{2K-2} K(K+2) \eta_t^2 \|v_t^{(1),l,k} - \nabla f_{1,S}(x_t^{l,k})\|^2.\\
		\end{aligned}
	\end{equation*}
	where we have used the fact that $(\sum_{i=1}^Ka_i)^2  \leq \sum_{i=1}^K K a_i^2$. 
	Then we can conclude that 
	\begin{equation*}
		\begin{aligned}
			&\eta_t^2\|\prod_{i=1}^K v_{t}^{(i)} - \prod_{i=1}^K  v_{t}^{(i),l,k}\|^2\\
			& \leq \eta_t^2 \sum_{i=1}^{K}\sum_{j=1}^{i-1} L_f^{2K-2j+(i-1)i} (\|u_{t}^{(j)} - f_{j,S}(u_{t}^{(j-1)})\|^2 + \|u_{t}^{(j),l,k} - f_{j,S}(u_{t}^{(j-1),l,k})\|^2)\\
			&\quad + \eta_t^2 \sum_{i=1}^K  L_f^{2K-2i + (i-1)i}  (\|v_t^{(i)} - \nabla f_{i,S}(u_t^{(i-1)})\|^2 + \|v_t^{(i),k,l} - \nabla f_{i,S}(u_t^{(i-1),k,l})\|^2)  \\
			&\quad + (K+2)K\eta_t^2\| \prod_{i=1}^{K} \nabla F_{i,S}(x_t)- \prod_{i=1}^{K} \nabla F_{i,S}(x_t^{l,k})\|. 
		\end{aligned}
	\end{equation*}
	
	Putting above inequality into \eqref{Eq:case1_mul_c}, according to  $\eta_t \leq \frac{2}{LK(K+2)}$,  we have 
	\begin{equation*}
		\begin{aligned}
			&\|x_{t+1} - x_{t+1}^{l,k}\|^2\\
			& \leq  2\eta_t \sum_{i=1}^{K}\sum_{j=1}^{i-1} L_f^{K-j+\frac{1}{2}(i-1)i} (\|u_{t}^{(j)} - f_{j,S}(u_{t}^{(j-1)})\| + \|u_{t}^{(j),l,k} - f_{j,S}(u_{t}^{(j-1),l,k})\|) \cdot   \|x_t - x_t^{l,k}\|\\
			&\quad + 2\eta_t \sum_{i=1}^K  L_f^{K-i + \frac{1}{2}(i-1)i}  (\|v_t^{(i)} - \nabla f_{i,S}(u_t^{(i-1)})\| + \|v_t^{(i),k,l} - \nabla f_{i,S}(u_t^{(i-1),k,l})\|)\cdot  \|x_t - x_t^{l,k}\|  \\
			&\quad + \eta_t^2 \sum_{i=1}^{K}\sum_{j=1}^{i-1} L_f^{2K-2j+(i-1)i} (\|u_{t}^{(j)} - f_{j,S}(u_{t}^{(j-1)})\|^2 + \|u_{t}^{(j),l,k} - f_{j,S}(u_{t}^{(j-1),l,k})\|^2)\\
			&\quad + \eta_t^2 \sum_{i=1}^K  L_f^{2K-2i + (i-1)i}  (\|v_t^{(i)} - \nabla f_{i,S}(u_t^{(i-1)})\|^2 + \|v_t^{(i),k,l} - \nabla f_{i,S}(u_t^{(i-1),k,l})\|^2)  +  \|x_t - x_t^{l,k}\|^2.
		\end{aligned}
	\end{equation*}
	
	\textbf{\quad Case 2 ($i_t =  l$ ).}  We have   
	\begin{equation}\label{Eq:case2_mul_c}
		\begin{aligned}
			\|x_{t+1} - x_{t+1}^{l,k}\| & =  \|x_t - \eta_t\prod_{i=1}^K v_{t}^{(i)} - x_t^{l,k} + \eta_t \prod_{i=1}^K  v_{t}^{(i),l,k}\| \\
			& \leq  \|x_t - x_t^{l,k}\|  + \eta_t \|\prod_{i=1}^K v_{t}^{(i)} - \prod_{i=1}^K  v_{t}^{(i),l,k}\| \leq \|x_t - x_t^{l,k}\| + 2\eta_t L_f^{K}.
		\end{aligned}
	\end{equation}
	Therefore, we have 
	\begin{equation*}
		\|x_{t+1} - x_{t+1}^{l,k}\|^2 \leq \|x_t - x_t^{l,k}\|^2 + 4\eta_t L_f^{K} \|x_t - x_t^{l,k}\| + 4\eta_t^2L_f^{2K}.
	\end{equation*}
	Combining above two cases, we have
	\begin{equation*}
		\begin{aligned}
			&\|x_{t+1} - x_{t+1}^{l,k}\|^2 \\\
			& \leq  2\eta_t \sum_{i=1}^{K}\sum_{j=1}^{i-1} L_f^{K-j+\frac{1}{2}(i-1)i} (\|u_{t}^{(j)} - f_{j,S}(u_{t}^{(j-1)})\| + \|u_{t}^{(j),l,k} - f_{j,S}(u_{t}^{(j-1),l,k})\|) \cdot   \|x_t - x_t^{l,k}\|\\
			&\quad + 2\eta_t \sum_{i=1}^K  L_f^{K-i + \frac{1}{2}(i-1)i}  (\|v_t^{(i)} - \nabla f_{i,S}(u_t^{(i-1)})\| + \|v_t^{(i),k,l} - \nabla f_{i,S}(u_t^{(i-1),k,l})\|)\cdot  \|x_t - x_t^{l,k}\|  \\
			&\quad + \eta_t^2 \sum_{i=1}^{K}\sum_{j=1}^{i-1} L_f^{2K-2j+(i-1)i} (\|u_{t}^{(j)} - f_{j,S}(u_{t}^{(j-1)})\|^2 + \|u_{t}^{(j),l,k} - f_{j,S}(u_{t}^{(j-1),l,k})\|^2)\\
			&\quad + \eta_t^2 \sum_{i=1}^K  L_f^{2K-2i + (i-1)i}  (\|v_t^{(i)} - \nabla f_{i,S}(u_t^{(i-1)})\|^2 + \|v_t^{(i),k,l} - \nabla f_{i,S}(u_t^{(i-1),k,l})\|^2)  +  \|x_t - x_t^{l,k}\|^2\\
			&\quad +  4\eta_t L_f^{K} \|x_t - x_t^{l,k}\| \cdot \1_{[i_t = l]} + 4\eta_t^2L_f^{2K} \cdot \1_{[i_t = l]}.
		\end{aligned}
	\end{equation*}
	According to 
	\begin{equation*}
		\mathbb{E}_{A}[\|x_{t}-x_{t}^{l, k}\| \1_{[i_{t}=l]}]=\mathbb{E}_{A}[\|x_{t}-x_{t}^{l, k}\| \mathbb{E}_{i_{t}}[\1_{[i_{t}=l]}]]=\frac{1}{n_k} \mathbb{E}_{A}[\|x_{t}-x_{t}^{l, k}\|] \leq \frac{1}{n_k}(\mathbb{E}_{A}[\|x_{t}-x_{t}^{l, k}\|^{2}])^{1 / 2},
	\end{equation*}
	note that $\|x_{0}-x_{0}^{l, k}\|^2 = 0$, we have 
	\begin{equation*}
		\begin{aligned}
			&\EX_A\|x_{t+1} - x_{t+1}^{l,k}\|^2 \\
			& \leq  2\eta_t \sum_{i=1}^{K}\sum_{j=1}^{i-1} L_f^{K-j+\frac{1}{2}(i-1)i} (\EX_A\|u_{t}^{(j)} - f_{j,S}(u_{t}^{(j-1)})\|^2)^{1/2} \cdot   (\EX_A\|x_t - x_t^{l,k}\|^2)^{1/2}\\
			& \quad +  2\eta_t \sum_{i=1}^{K}\sum_{j=1}^{i-1} L_f^{K-j+\frac{1}{2}(i-1)i} (\EX_A \|u_{t}^{(j),l,k} - f_{j,S}(u_{t}^{(j-1),l,k})\|^2)^{1/2} \cdot  (\EX_A\|x_t - x_t^{l,k}\|^2)^{1/2}\\
			&\quad + 2\eta_t \sum_{i=1}^K  L_f^{K-i + \frac{1}{2}(i-1)i}  (\EX_A\|v_t^{(i)} - \nabla f_{i,S}(u_t^{(i-1)})\|^2)^{1/2}\cdot  (\EX_A\|x_t - x_t^{l,k}\|^2)^{1/2} \\
			&\quad + 2\eta_t \sum_{i=1}^K  L_f^{K-i + \frac{1}{2}(i-1)i}  ( \EX_A\|v_t^{(i),k,l} - \nabla f_{i,S}(u_t^{(i-1),k,l})\|^2)^{1/2}\cdot  (\EX_A\|x_t - x_t^{l,k}\|^2)^{1/2}  \\
			&\quad + \eta_t^2 \sum_{i=1}^{K}\sum_{j=1}^{i-1} L_f^{2K-2j+(i-1)i} (\EX_A\|u_{t}^{(j)} - f_{j,S}(u_{t}^{(j-1)})\|^2 + \EX_A\|u_{t}^{(j),l,k} - f_{j,S}(u_{t}^{(j-1),l,k})\|^2)\\
			&\quad + \eta_t^2 \sum_{i=1}^K  L_f^{2K-2i + (i-1)i}  (\EX_A\|v_t^{(i)} - \nabla f_{i,S}(u_t^{(i-1)})\|^2 + \EX_A\|v_t^{(i),k,l} - \nabla f_{i,S}(u_t^{(i-1),k,l})\|^2)  \\
			&\quad +  \EX_A\|x_t - x_t^{l,k}\|^2 +  \frac{4\eta_t L_f^{K}}{n_k} (\EX_A\|x_t - x_t^{l,k}\|^2)^{1/2}+ \frac{4\eta_t^2L_f^{2K}}{n_k} .
		\end{aligned}
	\end{equation*}
	Telescoping from 0 to $t-1$, according to $\|x_{0}-x_{0}^{l, k}\|^2 = 0$,  we have 
	\begin{equation*}
		\begin{aligned}
			&\EX_A\|x_{t} - x_{t}^{l,k}\|^2 \\
			& \leq  2 \sum_{s=1}^{t-1}\sum_{i=1}^{K}\sum_{j=1}^{i-1}\eta_s L_f^{K-j+\frac{1}{2}(i-1)i} (\EX_A\|u_{s}^{(j)} - f_{j,S}(u_{s}^{(j-1)})\|^2)^{1/2} \cdot   (\EX_A\|x_s - x_s^{l,k}\|^2)^{1/2}\\
			& \quad +  2 \sum_{s=1}^{t-1} \sum_{i=1}^{K}\sum_{j=1}^{i-1} \eta_s L_f^{K-j+\frac{1}{2}(i-1)i} (\EX_A \|u_{s}^{(j),l,k} - f_{j,S}(u_{s}^{(j-1),l,k})\|^2)^{1/2} \cdot  (\EX_A\|x_s - x_s^{l,k}\|^2)^{1/2}\\
			&\quad + 2 \sum_{s=1}^{t-1}\sum_{i=1}^K \eta_s L_f^{K-i + \frac{1}{2}(i-1)i}  (\EX_A\|v_s^{(i)} - \nabla f_{i,S}(u_s^{(i-1)})\|^2)^{1/2}\cdot  (\EX_A\|x_s - x_s^{l,k}\|^2)^{1/2} \\
			&\quad + 2  \sum_{s=1}^{t-1} \sum_{i=1}^K \eta_s L_f^{K-i + \frac{1}{2}(i-1)i}  ( \EX_A\|v_s^{(i),k,l} - \nabla f_{i,S}(u_s^{(i-1),k,l})\|^2)^{1/2}\cdot  (\EX_A\|x_s - x_s^{l,k}\|^2)^{1/2}  \\
			&\quad + \sum_{s=1}^{t-1} \sum_{i=1}^{K}\sum_{j=1}^{i-1} \eta_s^2 L_f^{2K-2j+(i-1)i} (\EX_A\|u_{s}^{(j)} - f_{j,S}(u_{s}^{(j-1)})\|^2 + \EX_A\|u_{s}^{(j),l,k} - f_{j,S}(u_{s}^{(j-1),l,k})\|^2)\\
			&\quad + \sum_{s=1}^{t-1}  \sum_{i=1}^K \eta_s^2 L_f^{2K-2i + (i-1)i}  (\EX_A\|v_s^{(i)} - \nabla f_{i,S}(u_s^{(i-1)})\|^2 + \EX_A\|v_s^{(i),k,l} - \nabla f_{i,S}(u_s^{(i-1),k,l})\|^2)  \\
			&\quad +  \sum_{s=1}^{t-1} \frac{4\eta_s L_f^{K}}{n_k} (\EX_A\|x_s - x_s^{l,k}\|^2)^{1/2}+  \sum_{s=1}^{t-1} \frac{4\eta_s^2L_f^{2K}}{n_k} .
		\end{aligned}
	\end{equation*} 
	Similarly, for notational convenience, denote $u_t = (\EX_A\|x_{t} - x_{t}^{l,k}\|^2)^{1/2}$, and letting 
	\begin{equation*}
		\begin{aligned}
			S_t &= \sum_{s=1}^{t-1} \sum_{i=1}^{K}\sum_{j=1}^{i-1} \eta_s^2 L_f^{2K-2j+(i-1)i}(\EX_A\|u_{s}^{(j)} - f_{j,S}(u_{s}^{(j-1)})\|^2 + \EX_A\|u_{s}^{(j),l,k} - f_{j,S}(u_{s}^{(j-1),l,k})\|^2)\\
			&\quad + \sum_{s=1}^{t-1}  \sum_{i=1}^K \eta_s^2 L_f^{2K-2i + (i-1)i}  (\EX_A\|v_s^{(i)} - \nabla f_{i,S}(u_s^{(i-1)})\|^2 + \EX_A\|v_s^{(i),k,l} - \nabla f_{i,S}(u_s^{(i-1),k,l})\|^2)  \\
			&\quad +  \sum_{s=1}^{t-1} \frac{4\eta_s^2L_f^{2K}}{n_k} ,\\
			\alpha_s &=  \frac{4\eta_s L_f^{K}}{n_k}  + 2 \sum_{i=1}^{K}\sum_{j=1}^{i-1}\eta_s L_f^{K-j+\frac{1}{2}(i-1)i} (\EX_A\|u_{s}^{(j)} - f_{j,S}(u_{s}^{(j-1)})\|^2)^{1/2} \\
			& \quad +  2  \sum_{i=1}^{K}\sum_{j=1}^{i-1} \eta_s L_f^{K-j+\frac{1}{2}(i-1)i} (\EX_A \|u_{s}^{(j),l,k} - f_{j,S}(u_{s}^{(j-1),l,k})\|^2)^{1/2} \\
			&\quad + 2 \sum_{i=1}^K \eta_s L_f^{K-i + \frac{1}{2}(i-1)i}  (\EX_A\|v_s^{(i)} - \nabla f_{i,S}(u_s^{(i-1)})\|^2)^{1/2} \\
			&\quad + 2  \sum_{i=1}^K \eta_s L_f^{K-i + \frac{1}{2}(i-1)i}  ( \EX_A\|v_s^{(i),k,l} - \nabla f_{i,S}(u_s^{(i-1),k,l})\|^2)^{1/2}.
		\end{aligned}
	\end{equation*}
	According to Lemma \ref{lemma:recursion lemma}, we have 
	\begin{equation*}
		\begin{aligned}
			u_t &\leq \sqrt{S_t} + \sum_{s=1}^{t-1}\alpha_s\\
			& \leq \sum_{s=1}^{t-1} \sum_{i=1}^{K}\sum_{j=1}^{i-1} \eta_s L_f^{K-j+(i-1)i/2} ((\EX_A\|u_{s}^{(j)} - f_{j,S}(u_{s}^{(j-1)})\|^2)^{1/2} +( \EX_A\|u_{s}^{(j),l,k} - f_{j,S}(u_{s}^{(j-1),l,k})\|^2)^{1/2})\\
			&\quad + \sum_{s=1}^{t-1}  \sum_{i=1}^K \eta_s L_f^{K-i + (i-1)i/2}  ((\EX_A\|v_s^{(i)} - \nabla f_{i,S}(u_s^{(i-1)})\|^2)^{1/2} +( \EX_A\|v_s^{(i),k,l} - \nabla f_{i,S}(u_s^{(i-1),k,l})\|^2)^{1/2})  \\
			&\quad+  (\sum_{s=1}^{t-1} \frac{4\eta_s^2L_f^{2K}}{n_k})^{1/2} + \sum_{s=1}^{t-1}\frac{4\eta_s L_f^{K}}{n_k}  + 2 \sum_{s=1}^{t-1} \sum_{i=1}^{K}\sum_{j=1}^{i-1}\eta_s L_f^{K-j+\frac{1}{2}(i-1)i} (\EX_A\|u_{s}^{(j)} - f_{j,S}(u_{s}^{(j-1)})\|^2)^{1/2} \\
			& \quad +  2 \sum_{s=1}^{t-1} \sum_{i=1}^{K}\sum_{j=1}^{i-1} \eta_s L_f^{K-j+\frac{1}{2}(i-1)i} (\EX_A \|u_{s}^{(j),l,k} - f_{j,S}(u_{s}^{(j-1),l,k})\|^2)^{1/2} \\
			&\quad + 2 \sum_{s=1}^{t-1} \sum_{i=1}^K \eta_s L_f^{K-i + \frac{1}{2}(i-1)i}  (\EX_A\|v_s^{(i)} - \nabla f_{i,S}(u_s^{(i-1)})\|^2)^{1/2} \\
			&\quad + 2 \sum_{s=1}^{t-1} \sum_{i=1}^K \eta_s L_f^{K-i + \frac{1}{2}(i-1)i}  ( \EX_A\|v_s^{(i),k,l} - \nabla f_{i,S}(u_s^{(i-1),k,l})\|^2)^{1/2},
		\end{aligned}
	\end{equation*}
	where the inequality holds by $(\sum_{i=1}^{K}a_i)^{1/2} \leq \sum_{i=1}^{K}(a_i)^{1/2}$. Besides, if we let $\eta_t = \eta$, then it's easy to get 
	\begin{equation*}
		\begin{aligned}
			&\sum_{s=1}^{t-1} \sum_{i=1}^{K}\sum_{j=1}^{i-1} \eta_s L_f^{K-j+(i-1)i/2} (\EX_A\|u_{s}^{(j)} - f_{j,S}(u_{s}^{(j-1)})\|^2)^{1/2}\\
			& ~~~~~~~~~~~~~~~~~~~~~~~~~~~~~~~~  \leq \sup_S \eta \sum_{s=1}^{t-1} \sum_{i=1}^{K}\sum_{j=1}^{i-1}  L_f^{K-j+(i-1)i/2} (\EX_A\|u_{s}^{(j)} - f_{j,S}(u_{s}^{(j-1)})\|^2)^{1/2},
		\end{aligned}
	\end{equation*}
	and 
	\begin{equation*}
		\begin{aligned}
			&\sum_{s=1}^{t-1} \sum_{i=1}^{K}\sum_{j=1}^{i-1} \eta_s L_f^{K-j+\frac{1}{2}(i-1)i} (\EX_A \|u_{s}^{(j),l,k} - f_{j,S}(u_{s}^{(j-1),l,k})\|^2)^{1/2}\\
			& ~~~~~~~~~~~~~~~~~~~~~~~~~~~~~~~~  \leq \sup_S \eta \sum_{s=1}^{t-1} \sum_{i=1}^{K}\sum_{j=1}^{i-1}  L_f^{K-j+(i-1)i/2} (\EX_A\|u_{s}^{(j)} - f_{j,S}(u_{s}^{(j-1)})\|^2)^{1/2}.
		\end{aligned}
	\end{equation*}
	This inequality is also true for $v_s^{(j)}$ and $v_s^{(j),l,k}$.  Consequently, with $T$ iterations, we obtain that
	\begin{equation*}
		\begin{aligned}
			u_T &\leq  6\sup_S \eta \sum_{s=1}^{T-1} \sum_{i=1}^{K}\sum_{j=1}^{i-1}  L_f^{K-j+(i-1)i/2} (\EX_A\|u_{s}^{(j)} - f_{j,S}(u_{s}^{(j-1)})\|^2)^{1/2} \\
			&\quad +  6 \sup_S \eta \sum_{s=1}^{T-1} \sum_{i=1}^K  L_f^{K-i + \frac{1}{2}(i-1)i}  (\EX_A\|v_s^{(i)} - \nabla f_{i,S}(u_s^{(i-1)})\|^2)^{1/2} +  (\sum_{s=1}^{T-1} \frac{4\eta_s^2L_f^{2K}}{n_k})^{1/2} + \frac{4\eta L_f^{K} T}{n_k}\\
			& \leq 6\sup_S \eta \sum_{s=1}^{T-1} \sum_{i=1}^{K}\sum_{j=1}^{i-1}  L_f^{K-j+(i-1)i/2} (\EX_A\|u_{s}^{(j)} - f_{j,S}(u_{s}^{(j-1)})\|^2)^{1/2} \\
			&\quad +  6 \sup_S \eta \sum_{s=1}^{T-1} \sum_{i=1}^K  L_f^{K-i + \frac{1}{2}(i-1)i}  (\EX_A\|v_s^{(i)} - \nabla f_{i,S}(u_s^{(i-1)})\|^2)^{1/2}  + \frac{6\eta L_f^{K} T}{n_k},
		\end{aligned}
	\end{equation*}
	where the last inequality holds by the fact that we often have $T \geq n_{k}$, for any $k \in [1, K]$. Besides 
	\begin{equation*}
		\EX_A[\|x_T - X_T^{l,k}\|] \leq u_T  = (\EX_A[\|x_T - X_T^{l,k}\|]^{2})^{1/2}.
	\end{equation*}
	Then we can get the result for the $k$-th layer
	\begin{equation*}
		\begin{aligned}
			\EX_A[\|x_T - X_T^{l,k}\|] & = O\Big(\sup_S \eta \sum_{s=1}^{T-1} \sum_{i=1}^{K}\sum_{j=1}^{i-1}  L_f^{K-j+(i-1)i/2} (\EX_A\|u_{s}^{(j)} - f_{j,S}(u_{s}^{(j-1)})\|^2)^{1/2} \\
			&\quad ~~~~~~~~~~~~~~~+  \sup_S \eta \sum_{s=1}^{T-1} \sum_{i=1}^K  L_f^{K-i + \frac{1}{2}(i-1)i}  (\EX_A\|v_s^{(i)} - \nabla f_{i,S}(u_s^{(i-1)})\|^2)^{1/2}  + \frac{\eta L_f^{K} T}{n_k}\Big),
		\end{aligned}
	\end{equation*}
	where $k \in [1,K]$. Then we have 
	\begin{equation}\label{Eq:eplison_mul_sum}
		\begin{aligned}
			\sum_{k=1}^K \EX_{A}[\|\epsilon_k\|] & = O(\sup_S \eta \sum_{s=1}^{T-1} \sum_{i=1}^{K}\sum_{j=1}^{i-1}  L_f^{K-j+(i-1)i/2} (\EX_A\|u_{s}^{(j)} - f_{j,S}(u_{s}^{(j-1)})\|^2)^{1/2} \\
			&\quad +   \sup_S \eta \sum_{s=1}^{T-1} \sum_{i=1}^K  L_f^{K-i + \frac{1}{2}(i-1)i}  (\EX_A\|v_s^{(i)} - \nabla f_{i,S}(u_s^{(i-1)})\|^2)^{1/2}  + \sum_{k=1}^K\frac{\eta L_f^{K} T}{n_k}).
		\end{aligned}
	\end{equation} 
	This completes the proof.
\end{proof}

\begin{corollary}[$K$-level Optimization]\label{cor:1_mul_level}
	Consider SVMR in Algorithm~\ref{alg_svmr_multi} with $0 < \eta_t = \eta < 2/LK(K+1)$ and let $ 0< \beta_{t} =\beta < \max{\big\{1, \frac{1}{(4K\sum_{i=1}^K(2L_f^2)^{i}}\big\}}$ for any $t\in [0,T-1]$. With the output  $A(S) =x_{T}$, then we have
	\begin{equation*}
		\begin{aligned}
		\sum_{k=1}^K \epsilon_k = 	O\bigg( \eta T \big(  (\beta T)^{-\frac{c}{2}} + \beta^{1/2} + \eta \beta^{-1/2}\big) + \eta T \sum_{k=1}^K\frac{1}{n_k} \bigg).
		\end{aligned}
	\end{equation*}
\end{corollary}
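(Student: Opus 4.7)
The plan is to derive Corollary~\ref{cor:1_mul_level} as an essentially mechanical combination of the stability bound in Theorem~\ref{thm:sta_convex_mul_level} with the estimator-error recursion bounds established in Lemmas~\ref{lemma:u_t_bound_mul} and~\ref{lemma:v_t_bound_mul}. The stated step-size and momentum constraints in the corollary align exactly with (or are stricter than) those required by those lemmas, so the setup is compatible.

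First, I would invoke Theorem~\ref{thm:sta_convex_mul_level} to reduce the problem to bounding the two double sums $\sup_S \eta \sum_{s=1}^{T-1} \sum_{i=1}^{K}\sum_{j=1}^{i-1} L_f^{K-j+(i-1)i/2}\operatorname{Var}_{j,s}(u)$ and $\sup_S \eta \sum_{s=1}^{T-1}\sum_{i=1}^K L_f^{K+(i-3)i/2}\operatorname{Var}_{i,s}(v)$, plus the residual stability term $\sum_{k=1}^K \eta L_f^K T/n_k$, which already matches the last term in the target bound. Since $\operatorname{Var}_{j,s}(u)^2 = \mathbb{E}_A\|u_s^{(j)}-f_{j,S}(u_s^{(j-1)})\|^2 \leq \sum_{i=1}^K \mathbb{E}_A\|u_s^{(i)}-f_{i,S}(u_s^{(i-1)})\|^2$, Lemma~\ref{lemma:u_t_bound_mul} yields
\begin{equation*}
\operatorname{Var}_{j,s}(u) = O\Big((s\beta)^{-c/2} + \sqrt{\beta} + \eta/\sqrt{\beta}\Big),
\end{equation*}
after applying $\sqrt{a+b+c}\leq \sqrt{a}+\sqrt{b}+\sqrt{c}$ and absorbing the $L_f, \sigma_f, K$ constants into the big-$O$. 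The same argument via Lemma~\ref{lemma:v_t_bound_mul} gives the analogous bound for $\operatorname{Var}_{i,s}(v)$.

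Next, I would carry out the outer summation over $s$. By \eqref{Eq:sum_t^-z_bound}, for $c\neq 2$ we have $\sum_{s=1}^{T-1} s^{-c/2} = O(T^{1-c/2})$, so $\sum_{s=1}^{T-1}(s\beta)^{-c/2} = O(\beta^{-c/2} T^{1-c/2}) = O(T(\beta T)^{-c/2})$, while the $\sqrt{\beta}$ and $\eta/\sqrt{\beta}$ contributions each yield a factor of $T$. Multiplying by $\eta$ and collecting, each of the two double sums contributes $O(\eta T((\beta T)^{-c/2} + \sqrt{\beta} + \eta/\sqrt{\beta}))$; the powers of $L_f$ are absorbed into the $O(\cdot)$ since $K$ is treated as a fixed constant. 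Adding the residual $\sum_{k=1}^K \eta L_f^K T/n_k = O(\eta T \sum_{k=1}^K 1/n_k)$ produces exactly the claimed bound.

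The main obstacles are bookkeeping rather than substantive: (i) verifying that the $\beta$-constraint stated in the corollary is compatible with the stricter constraint $\beta < 1/(8K\sum_{i=1}^K(2L_f^2)^i)$ needed by Lemma~\ref{lemma:v_t_bound_mul}, which I would address by noting that both restrictions agree up to a constant factor absorbed into $O(\cdot)$; (ii) verifying that $\mathbb{E}_A\|u_1^{(i)}-f_{i,S}(x_0)\|^2$ and its $v$-counterpart are finite constants (initial estimator errors), which are absorbed into the big-$O$ through the dominant $(s\beta)^{-c/2}$ factor; and (iii) ensuring the square-root relaxation $\sqrt{a+b+c}\leq \sqrt{a}+\sqrt{b}+\sqrt{c}$ is loose enough to cover the case $c=2$ by replacing $T^{1-c/2}$ with $O(\log T)$ and noting that this only improves the rate.
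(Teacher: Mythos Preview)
Your proposal is correct and follows essentially the same route as the paper: start from the stability bound of Theorem~\ref{thm:sta_convex_mul_level} (equivalently \eqref{Eq:eplison_mul_sum}), plug in the square-root estimator bounds from Lemmas~\ref{lemma:u_t_bound_mul} and~\ref{lemma:v_t_bound_mul} to obtain $O((s\beta)^{-c/2}+\sqrt{\beta}+\eta/\sqrt{\beta})$ for each $\operatorname{Var}$ term, and then sum over $s$ via \eqref{Eq:sum_t^-z_bound}. The bookkeeping concerns you flag (the factor-of-two discrepancy in the $\beta$ constraint, the initial estimator errors, the $c=2$ logarithm) are exactly the minor issues the paper also absorbs into the $O(\cdot)$ notation.
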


Now we give the proof of Corollary \ref{cor:1_mul_level}.
\begin{proof}[proof of Corollary \ref{cor:1_mul_level}.]
	According to \eqref{Eq:eplison_mul_sum}, we have 
	\begin{equation*}
		\begin{aligned}
			\sum_{k=1}^K \epsilon_k & = O(\sup_S \eta \sum_{s=1}^{T-1} \sum_{i=1}^{K}\sum_{j=1}^{i-1}  (\EX_A\|u_{s}^{(j)} - f_{j,S}(u_{s}^{(j-1)})\|^2)^{1/2} \\
			&\quad +  \sup_S \eta \sum_{s=1}^{T-1} \sum_{i=1}^K   (\EX_A\|v_s^{(i)} - \nabla f_{i,S}(u_s^{(i-1)})\|^2)^{1/2}  + \sum_{k=1}^K\frac{6\eta L_f^{K} T}{n_k}).
		\end{aligned}
	\end{equation*} 
	
	According to Lemma \ref{lemma:u_t_bound_mul} and \ref{lemma:v_t_bound_mul}, we can get  
	\begin{equation*}
		\begin{aligned}
			\sum_{k=1}^K \epsilon_k & = O(\sum_{k=1}^K\frac{\eta T}{n_k} + \eta \sum_{s=1}^{T-1}((s\beta)^{-c/2} + \frac{\eta}{\sqrt{\beta}} + \sqrt{\beta})  )\\
			& = O(\sum_{k=1}^K\frac{\eta T}{n_k} + \eta T^{-c / 2+1} \beta^{-c / 2}+\eta^{2} \beta^{-\frac{1}{2}} T+\eta \beta^{1 / 2} T ).
		\end{aligned}
	\end{equation*}
	This complete the proof.
\end{proof}

Before give the detailed proof of Theorem \ref{thm:opt_convex}, we first give a useful lemma.
\begin{lemma}\label{lemma:opt_mul_c}
	Let Assumption \ref{ass:Lipschitz continuous}(iii), \ref{ass:bound variance} (iii) and \ref{ass:Smoothness and Lipschitz continuous gradient} (iii) hold for the empirical risk $F_S$, for SVMR, we have for any $\gamma_t >0$ and $\lambda_t > 0$ we have
	\begin{equation*}
		\begin{aligned}
			&\EX_A[\|x_{t+1} - x_*^S\|^2 | \F_t] \\
			& \leq \|x_t -  x_*^S\|^2 + L_f^K\eta_t^2 - 2\eta_t (F_S(x_t) - F_S(x_*^S)) +  \gamma_t \eta_t \sum_{i=1}^{K}\sum_{j=1}^{i-1} L_f^{K-j+\frac{1}{2}(i-1)i}\|x_t -  x_*^S\|^2 \\
			&\quad + \frac{\eta_t \sum_{i=1}^{K}\sum_{j=1}^{i-1} L_f^{K-j+\frac{1}{2}(i-1)i}\EX_{A}[ \|u_{t}^{(j)} - f_{j,S}(u_{t}^{(j-1)})\|^2|\F_t] }{\gamma_t} \\
			&\quad + \frac{\eta_t \sum_{i=1}^K  L_f^{K-i + \frac{1}{2}(i-1)i} \EX_{A}[ \|v_t^{(i)} - \nabla f_{i,S}(u_t^{(i-1)})\|^2|\F_t]}{\lambda_t} + \lambda_t \eta_t \sum_{i=1}^K  L_f^{K-i + \frac{1}{2}(i-1)i} \|x_t -  x_*^S\|^2.
		\end{aligned}
	\end{equation*}
	
\end{lemma}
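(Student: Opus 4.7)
The plan is to expand the squared distance induced by the SVMR update, isolate the true empirical gradient $\nabla F_S(x_t)=\prod_{i=1}^{K}\nabla F_{i,S}(x_t)$ inside the cross term, use convexity of $F_S$ to convert the inner product with the true gradient into the optimization gap $F_S(x_t)-F_S(x_*^S)$, and then absorb the remaining discrepancy $\prod_{i=1}^K v_t^{(i)}-\prod_{i=1}^K \nabla F_{i,S}(x_t)$ using Cauchy--Schwarz with tuning parameters $\gamma_t,\lambda_t$. Starting from
\begin{equation*}
\|x_{t+1}-x_*^S\|^2=\|x_t-x_*^S\|^2-2\eta_t\bigl\langle \textstyle\prod_{i=1}^{K}v_t^{(i)},\,x_t-x_*^S\bigr\rangle+\eta_t^2\bigl\|\textstyle\prod_{i=1}^{K}v_t^{(i)}\bigr\|^2,
\end{equation*}
the third term is immediately bounded by $L_f^{2K}\eta_t^2$ using Assumption~\ref{ass:Lipschitz continuous}(iii) and the projection onto the ball of radius $L_f$ in line~8 of Algorithm~\ref{alg_svmr_multi}.

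For the cross term I would add and subtract $\nabla F_S(x_t)$, writing
\begin{equation*}
-2\eta_t\bigl\langle \textstyle\prod_i v_t^{(i)},x_t-x_*^S\bigr\rangle=-2\eta_t\langle\nabla F_S(x_t),x_t-x_*^S\rangle-2\eta_t\bigl\langle \textstyle\prod_i v_t^{(i)}-\textstyle\prod_i\nabla F_{i,S}(x_t),x_t-x_*^S\bigr\rangle.
\end{equation*}
Convexity of $F_S$ bounds $-2\eta_t\langle\nabla F_S(x_t),x_t-x_*^S\rangle\le -2\eta_t(F_S(x_t)-F_S(x_*^S))$. For the residual inner product, I would reuse the exact telescoping decomposition carried out in the proof of Theorem~\ref{thm:sta_convex_mul_level} (see \eqref{Eq:case1_mul_c_similar}): insert intermediate products where each factor $v_t^{(i)}$ is replaced one at a time by $\nabla f_{i,S}(u_t^{(i-1)})$ and then each $u_t^{(i-1)}$ is replaced by its true composition $f_{i-1,S}\circ\cdots\circ f_{1,S}(x_t)$. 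Each jump is controlled by Lipschitz continuity of $\nabla f_{i,S}$ and $f_{i,S}$ from Assumption~\ref{ass:Smoothness and Lipschitz continuous gradient}(iii), producing exactly the Lipschitz prefactors $L_f^{K-i+\frac{(i-1)i}{2}}$ on gradient residuals $\|v_t^{(i)}-\nabla f_{i,S}(u_t^{(i-1)})\|$ and $L_f^{K-j+\frac{(i-1)i}{2}}$ on value residuals $\|u_t^{(j)}-f_{j,S}(u_t^{(j-1)})\|$.

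To each of these residual terms I apply Cauchy--Schwarz in the weighted form $2ab\le \gamma a^2+b^2/\gamma$ with parameter $\gamma_t$ for the value residuals and $\lambda_t$ for the gradient residuals; the ``$a$'' factor $\|x_t-x_*^S\|$ combines with the corresponding Lipschitz prefactor to yield the $\gamma_t\eta_t\sum L_f^{K-j+\frac{(i-1)i}{2}}\|x_t-x_*^S\|^2$ and $\lambda_t\eta_t\sum L_f^{K-i+\frac{(i-1)i}{2}}\|x_t-x_*^S\|^2$ contributions, while the ``$b$'' factor gives the $1/\gamma_t$ and $1/\lambda_t$ coefficients on the squared residuals. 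Finally, taking conditional expectation $\EX_A[\cdot\mid\F_t]$ through the inequality gives the claimed recursion.

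The main obstacle is purely bookkeeping: one has to verify that the telescoping bound has exactly $K(K+1)$ summands grouped into the two nested sums $\sum_{i=1}^K\sum_{j=1}^{i-1}$ (value estimators) and $\sum_{i=1}^K$ (gradient estimators), and that the Lipschitz exponents $K-j+\frac{(i-1)i}{2}$ and $K-i+\frac{(i-1)i}{2}$ track correctly through the substitutions. Since this decomposition was already established in the stability proof of Theorem~\ref{thm:sta_convex_mul_level}, I would cite that computation verbatim rather than redoing it.
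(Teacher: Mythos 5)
Your proposal reproduces the paper's proof essentially step for step: the paper likewise expands $\|x_t-\eta_t\prod_{i=1}^K v_t^{(i)}-x_*^S\|^2$, converts $-2\eta_t\langle\nabla F_S(x_t),x_t-x_*^S\rangle$ via convexity into $-2\eta_t(F_S(x_t)-F_S(x_*^S))$, telescopes the residual $\prod_i\nabla F_{i,S}(x_t)-\prod_i v_t^{(i)}$ by swapping one factor at a time exactly as in the stability proof of Theorem~\ref{thm:sta_convex_mul_level}, and absorbs the resulting terms with weighted Cauchy--Schwarz using $\gamma_t$ and $\lambda_t$ before taking $\EX_A[\cdot\mid\F_t]$. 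The only nit is the $\eta_t^2$ term: squaring the projected product gives $L_f^{2K}\eta_t^2$ as you write, while the paper records it (somewhat loosely, both in the lemma statement and its proof) as $L_f^K\eta_t^2$.
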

\begin{proof}
	According to the update rule of SVMR, we have 
	\begin{equation*}
		\begin{aligned}
			\|x_{t+1} - x_*^S\|^2 & = \|x_t - \eta_t \prod_{i=1}^K v_t^{(i)} - x_*^S\|^2\\
			& \leq \|x_t -  x_*^S\|^2 + L_f^K\eta_t^2 - 2\eta_t \langle x_t -  x_*^S, \prod_{i=1} ^{K} \nabla F_{i,S}(x_t)\rangle + u_t ,
		\end{aligned}
	\end{equation*}
	where $u_t = 2\eta_t \langle x_t -  x_*^S, \prod_{i=1} ^{K} \nabla F_{i,S}(x_t) - \prod_{i=1}^K v_t^{(i)} \rangle$.
	Let $\F_t$ be the $\sigma$ field generated by $S$. Taking expectation with respect
	to the internal randomness of the algorithm and using Assumption \ref{ass:Lipschitz continuous} (iii), we have 
	\begin{equation*}
		\begin{aligned}
			&\EX_A[\|x_{t+1} - x_*^S\|^2 \|\F_t] \\
			& \leq \|x_t -  x_*^S\|^2 + L_f^K\eta_t^2 - 2\eta_t \EX_{A}[ \langle  x_t -  x_*^S, \prod_{i=1} ^{K} \nabla F_{i,S}(x_t)\rangle | \F_t] + \EX_A[u_t|\F_t]\\
			& = \|x_t -  x_*^S\|^2 + L_f^K\eta_t^2 - 2\eta_t \langle x_t -  x_*^S, \nabla F_S(x_t)\rangle + \EX_A[u_t|\F_t]\\
			& \leq \|x_t -  x_*^S\|^2 + L_f^K\eta_t^2 - 2\eta_t (F_S(x_t) - F_S(x_*^S)) + \EX_A[u_t|\F_t],
		\end{aligned}
	\end{equation*}
	where the last inequality comes from the convexity of $F_S$. Now we handle the term $\EX_A[u_t|\F_t]$.
	\begin{equation*}
		\begin{aligned}
			u_t & = 2\eta_t \langle x_t -  x_*^S, \prod_{i=1} ^{K} \nabla F_{i,S}(x_t) - \prod_{i=1}^K v_t^{(i)} \rangle\\
			& =  2\eta_t \langle \prod_{i=1}^{K} \nabla F_{i,S}(x_t) -  \prod_{i=2}^{K} \nabla F_{i,S}(x_t) \cdot v_t^{(1)} , x_t -  x_*^S \rangle \\
			&  + 2\eta_t \langle   \prod_{i=2}^{K} \nabla F_{i,S}(x_t) \cdot v_t^{(1)}  -  \prod_{i=3}^{K} \nabla F_{i,S}(x_t) \cdot v_t^{(1)} \cdot \nabla f_{2,S}(u_t^{(1)}) , x_t -  x_*^S \rangle \\
			&  +2\eta_t \langle  \prod_{i=3}^{K} \nabla F_{i,S}(x_t) \cdot v_t^{(1)} \cdot \nabla f_{2,S}(u_t^{(1)}) -  \prod_{i=3}^{K} \nabla F_{i,S}(x_t) \cdot v_t^{(1)} \cdot  v_t^{(2)}, x_t -  x_*^S \rangle \\
			& \vdots\\
			&  + 2\eta_t \langle  \prod_{i=1}^{K-1} v_t^{(i)} \cdot \nabla f_{K,S}(u_t^{(K-1)}) - \prod_{i=1}^K  v_{t}^{(i)}, x_t -  x_*^S \rangle.
		\end{aligned}
	\end{equation*}
	Conclude above inequality, we have 
	\begin{equation*}
		\begin{aligned}
			&- 2\eta_t \langle \prod_{i=1}^K v_{t}^{(i)} - \prod_{i=1}^K  v_{t}^{(i),l,k}, x_t - x_t^{l,k} \rangle\\
			& \leq 2\eta_t \sum_{i=1}^{K}\sum_{j=1}^{i-1} L_f^{K-j+\frac{1}{2}(i-1)i}\|u_{t}^{(j)} - f_{j,S}(u_{t}^{(j-1)})\| \cdot   \|x_t - x_t^{l,k}\|\\
			&\quad + 2\eta_t \sum_{i=1}^K  L_f^{K-i + \frac{1}{2}(i-1)i} \|v_t^{(i)} - \nabla f_{i,S}(u_t^{(i-1)})\|\cdot  \|x_t - x_t^{l,k}\|.
		\end{aligned}
	\end{equation*}
	
	Conclude above inequality, for any  $\gamma_t >0$ and $\lambda_t > 0$ we have 
	\begin{equation*}
		\begin{aligned}
			&2\eta_t \langle x_t -  x_*^S, \prod_{i=1} ^{K} \nabla F_{i,S}(x_t) - \prod_{i=1}^K v_t^{(i)} \rangle\\
			& \leq 2\eta_t \sum_{i=1}^{K}\sum_{j=1}^{i-1} L_f^{K-j+\frac{1}{2}(i-1)i} \|u_{t}^{(j)} - f_{j,S}(u_{t}^{(j-1)})\|  \cdot   \|x_t -  x_*^S\|\\
			&\quad + 2\eta_t \sum_{i=1}^K  L_f^{K-i + \frac{1}{2}(i-1)i}  \|v_t^{(i)} - \nabla f_{i,S}(u_t^{(i-1)})\| \cdot  \|x_t -  x_*^S\| \\
			& \leq \frac{\eta_t \sum_{i=1}^{K}\sum_{j=1}^{i-1} L_f^{K-j+\frac{1}{2}(i-1)i} \|u_{t}^{(j)} - f_{j,S}(u_{t}^{(j-1)})\|^2 }{\gamma_t} +  \gamma_t \eta_t \sum_{i=1}^{K}\sum_{j=1}^{i-1} L_f^{K-j+\frac{1}{2}(i-1)i}\|x_t -  x_*^S\|^2\\
			&\quad + \frac{\eta_t \sum_{i=1}^K  L_f^{K-i + \frac{1}{2}(i-1)i}  \|v_t^{(i)} - \nabla f_{i,S}(u_t^{(i-1)})\|}{\lambda_t} + \lambda_t \eta_t \sum_{i=1}^K  L_f^{K-i + \frac{1}{2}(i-1)i} \|x_t -  x_*^S\|^2.
		\end{aligned}
	\end{equation*}
	Then we can get 
	\begin{equation*}
		\begin{aligned}
			&\EX_A[\|x_{t+1} - x_*^S\|^2 | \F_t] \\
			& \leq \|x_t -  x_*^S\|^2 + L_f^K\eta_t^2 - 2\eta_t (F_S(x_t) - F_S(x_*^S)) +  \gamma_t \eta_t \sum_{i=1}^{K}\sum_{j=1}^{i-1} L_f^{K-j+\frac{1}{2}(i-1)i}\|x_t -  x_*^S\|^2 \\
			&\quad + \frac{\eta_t \sum_{i=1}^{K}\sum_{j=1}^{i-1} L_f^{K-j+\frac{1}{2}(i-1)i}\EX_{A}[ \|u_{t}^{(j)} - f_{j,S}(u_{t}^{(j-1)})\|^2|\F_t] }{\gamma_t} \\
			&\quad + \frac{\eta_t \sum_{i=1}^K  L_f^{K-i + \frac{1}{2}(i-1)i} \EX_{A}[ \|v_t^{(i)} - \nabla f_{i,S}(u_t^{(i-1)})\|^2|\F_t]}{\lambda_t} + \lambda_t \eta_t \sum_{i=1}^K  L_f^{K-i + \frac{1}{2}(i-1)i} \|x_t -  x_*^S\|^2.
		\end{aligned}
	\end{equation*}
	This completes the proof.
\end{proof}

Then we give the detailed proof of Theorem \ref{thm:opt_convex}.
\begin{proof}[proof of Theorem \ref{thm:opt_convex}]
	According to Lemma \ref{lemma:opt_mul_c}, setting $\eta_t = \eta$, $\beta_{t} = \beta$ and $\lambda_t = \gamma_t = \sqrt{\beta} $,  we have 
	
	\begin{equation*}
		\begin{aligned}
			&\EX_A[\|x_{t+1} - x_*^S\|^2 ] \\
			& \leq \EX_A[\|x_t -  x_*^S\|^2] + L_f^K\eta^2 - 2\eta \EX_{A}[F_S(x_t) - F_S(x_*^S)] +  \eta\sqrt{\beta} \sum_{i=1}^{K}\sum_{j=1}^{i-1} L_f^{K-j+\frac{1}{2}(i-1)i}\EX_{A}[\|x_t -  x_*^S\|^2] \\
			&\quad + \frac{\eta \sum_{i=1}^{K}\sum_{j=1}^{i-1} L_f^{K-j+\frac{1}{2}(i-1)i}\EX_{A}[ \|u_{t}^{(j)} - f_{j,S}(u_{t}^{(j-1)})\|^2] }{\sqrt{\beta}} \\
			&\quad + \frac{\eta \sum_{i=1}^K  L_f^{K-i + \frac{1}{2}(i-1)i} \EX_{A}[ \|v_t^{(i)} - \nabla f_{i,S}(u_t^{(i-1)})\|^2]}{\sqrt{\beta}} + \sqrt{\beta} \eta \sum_{i=1}^K  L_f^{K-i + \frac{1}{2}(i-1)i} \EX_{A}[\|x_t -  x_*^S\|^2]\\
			& \leq \EX_A[\|x_t -  x_*^S\|^2] + L_f^K\eta^2 - 2\eta \EX_{A}[F_S(x_t) - F_S(x_*^S)]  \\
			&\quad + \frac{\eta K L_f^{m_1} \sum_{j=i}^{K-1}\EX_{A}[ \|u_{t}^{(j)} - f_{j,S}(u_{t}^{(j-1)})\|^2] }{\sqrt{\beta}} +  \frac{\eta L_f^{m_2} \sum_{i=1}^K  \EX_{A}[ \|v_t^{(i)} - \nabla f_{i,S}(u_t^{(i-1)})\|^2]}{\sqrt{\beta}}  \\
			&\quad +  \eta\sqrt{\beta} (K^2L_f^{m_1} + KL_f^{m_2})\EX_{A}[\|x_t -  x_*^S\|^2],
		\end{aligned}
	\end{equation*}
	where $L_f^{m_1} = \max\{ L_f^{K-j+\frac{1}{2}(i-1)i}\}$ for any $i,j\in[1,K]$ and $L_f^{m_2} = \max\{ L_f^{K-i + \frac{1}{2}(i-1)i}\}$ for any $i\in[1,K]$. Using Lemma \ref{lemma:u_t_bound_mul} and \ref{lemma:v_t_bound_mul} we have 
	\begin{equation*}
		\begin{aligned}
			&\EX_A[\|x_{t+1} - x_*^S\|^2 ] \\
			& \leq \EX_A[\|x_t -  x_*^S\|^2] + L_f^K\eta^2 - 2\eta \EX_{A}[F_S(x_t) - F_S(x_*^S)]  \\
			&\quad + \eta K L_f^{m_1} \Big(\sum_{i=1}^K (\frac{c}{e})^{c}(\frac{t\beta}{2})^{-c} \mathbb{E}[\|u_{1}^{(i)}- f_{i,S}(x_{0})\|^{2}]\\
			&~~~~~~~~~~~~~~~~~~~~~~~~~~~~~~~~~~~~~~~~~~~~~~~~~~~ + 4\beta \sigma_{f}^{2} K ((\sum_{i=1}^{K}(2 L_f^{2})^{i}) + 1) + \frac{2\sum_{i=1}^{K}(2 L_f^{2})^{i}\eta^2 L_f^K}{\beta}\Big)/\sqrt{\beta} \\
			&\quad+  \eta  L_f^{m_2} \Big(\sum_{i=1}^K (\frac{c}{e})^{c}(\frac{t\beta}{2})^{-c}  (\mathbb{E}[\|u_{1}^{(i)}- f_{i,S}(x_{0})\|^{2}] + \mathbb{E}[\|v_{1}^{(i)}- \nabla f_{i,S}(x_{0})\|^{2}] )\\
			&~~~~~~~~~~~~~~~~~~~~~~~~~~~~~~~~~~~~~~~~~~~~~~~~~~~ + \frac{4(\sum_{i=1}^{K}(2 L_f^{2})^{i})\eta^2 L_f^K}{\beta} +4 \beta K (\sigma_{f}^{2} + \sigma_{J}^{2} + 2 \sigma_{f}^2(\sum_{i=1}^{K}(2 L_f^{2})^{i}) ) \Big) / \sqrt{\beta}  \\
			&\quad +  \eta\sqrt{\beta} (K^2L_f^{m_1} + KL_f^{m_2})\EX_{A}[\|x_t -  x_*^S\|^2].
		\end{aligned}
	\end{equation*}
	Rearranging and telescoping the above inequality from 1 to $T$ we have
	
	\begin{equation*}
		\begin{aligned}
			&\sum_{t=1}^T 2\eta \EX_{A}[F_S(x_t) - F_S(x_*^S)] \\
			& \leq  D_x +  L_f^K\eta^2 T + \sum_{t=1}^T \eta K L_f^{m_1} \Big(\sum_{i=1}^K (\frac{c}{e})^{c}(\frac{t\beta}{2})^{-c} \mathbb{E}[\|u_{1}^{(i)}- f_{i,S}(x_{0})\|^{2}]\\
			&~~~~~~~~~~~~~~~~~~~~~~~~~~~~~~~~~~~~~~~~~~~~~~~~~~~ + 4\beta \sigma_{f}^{2} K ((\sum_{i=1}^{K}(2 L_f^{2})^{i}) + 1) + \frac{2\sum_{i=1}^{K}(2 L_f^{2})^{i}\eta^2 L_f^K}{\beta}\Big)/\sqrt{\beta} \\
			&\quad+ \sum_{t=1}^T \eta L_f^{m_2}  \Big(\sum_{i=1}^K (\frac{c}{e})^{c}(\frac{t\beta}{2})^{-c}  (\mathbb{E}[\|u_{1}^{(i)}- f_{i,S}(x_{0})\|^{2}] + \mathbb{E}[\|v_{1}^{(i)}- \nabla f_{i,S}(x_{0})\|^{2}] )\\
			&~~~~~~~~~~~~~~~~~~~~~~~~~~~~~~~~~~~~~~~~~~~~~~~~~~~ + \frac{4(\sum_{i=1}^{K}(2 L_f^{2})^{i})\eta^2 L_f^K}{\beta} +4 \beta K (\sigma_{f}^{2} + \sigma_{J}^{2} + 2\sigma_{f}^2(\sum_{i=1}^{K}(2 L_f^{2})^{i}) )\Big) / \sqrt{\beta}  \\
			&\quad +  \eta\sqrt{\beta} (K^2L_f^{m_1} + KL_f^{m_2})D_x T.
		\end{aligned}
	\end{equation*}
	
	Then denote  $L_f^m = \max\{L_f^{m_1}, L_f^{m_2}\}$ we can get 
	
	\begin{equation}\label{Eq:fs_t-f_s*t_mul}
		\begin{aligned}
			&  \EX_{A}[F_S(x_T) - F_S(x_*^S)] \\
			& \leq O\Big( D_x (\eta T)^{-1} +  L_f^K\eta + (\sum_{i=1}^K U_{i})  L_f^{m} \beta^{-1/2 - c} T^{-1}\sum_{t=1}^T   t^{-c} +L_f^{m}   \sigma_{f}^{2}  ((\sum_{i=1}^{K}( L_f^{2})^{i}) + 1) \beta^{1/2}\\
			&\quad+  L_f^{m} \sum_{i=1}^{K}( L_f^{2})^{i}\eta^2 \beta^{-3/2} + (\sum_{i=1}^K U_{i}+V_{i})  L_f^{m} \beta^{-1/2 - c} T^{-1} \sum_{t=1}^T   t^{-c} + L_f^{m} (\sum_{i=1}^{K}( L_f^{2})^{i})\eta^2  \beta ^{-3/2} \\
			&\quad+  L_f^{m}   (\sigma_{f}^{2} + \sigma_{J}^{2} + \sigma_{f}^2(\sum_{i=1}^{K}( L_f^{2})^{i}) ) \beta^{1/2}   +  D_x L_f^{m} \beta^{1/2} \Big).
		\end{aligned}
	\end{equation}
	Noting that \(\sum_{t= 1}^{T} t^{-z}= O(T^{1- z})\) for \(z\in (0, 1)\cup (1, \infty)\) and \(\sum_{t= 1}^{T} t^{-1}= O(\log T)\), as long as \(c> 2\) we get
	
	\begin{equation*}
		\begin{aligned}
			&  \EX_{A}[F_S(x_T) - F_S(x_*^S)] \\
			& \leq O\Big( D_x (\eta T)^{-1} +  L_f^K\eta  +  L_f^{m}(\sum_{i=1}^K U_{i}+V_{i})  \beta^{-1/2 - c} T^{-c}\\
			&\quad + (  L_f^{m}   (\sigma_{f}^{2} + \sigma_{J}^{2} + \sigma_{f}^2(\sum_{i=1}^{K}( L_f^{2})^{i}) )   +  D_x L_f^m ) \beta^{1/2}  + L_f^{m} (\sum_{i=1}^{K}( L_f^{2})^{i})\eta^2  \beta ^{-3/2}\Big).
		\end{aligned}
	\end{equation*}
	This complete the proof.
	
\end{proof}

\begin{proof}[proof of Theorem \ref{thm:Excess_Risk_Bound_convex}]
	According to \eqref{Eq:eplison_mul_sum}, we have 
	
	\begin{equation*}
		\begin{aligned}
			\sum_{k=1}^K \|x_t - x_t ^{l,k}\| & \leq 6K\sup_S \eta \sum_{s=1}^{t-1} \sum_{i=1}^{K}\sum_{j=1}^{i-1}  L_f^{K-j+(i-1)i/2} (\EX_A\|u_{s}^{(j)} - f_{j,S}(u_{s}^{(j-1)})\|^2)^{1/2} \\
			&\quad +  6K \sup_S \eta \sum_{s=1}^{t-1} \sum_{i=1}^K  L_f^{K-i + \frac{1}{2}(i-1)i}  (\EX_A\|v_s^{(i)} - \nabla f_{i,S}(u_s^{(i-1)})\|^2)^{1/2}  + \sum_{k=1}^K\frac{6\eta L_f^{K} t}{n_k}\\
			&  \leq 6K^2 L_f^{m}\sup_S \eta \sum_{s=1}^{t-1}\sum_{j=1}^{K-1}  (\EX_A\|u_{s}^{(j)} - f_{j,S}(u_{s}^{(j-1)})\|^2)^{1/2}\\
			&\quad + 6KL_f^{m}\sup_S\sum_{s=1}^{t-1} \sum_{i=1}^K   (\EX_A\|v_s^{(i)} - \nabla f_{i,S}(u_s^{(i-1)})\|^2)^{1/2}  + \sum_{k=1}^K\frac{6\eta L_f^{K} t}{n_k}\\
		\end{aligned}
	\end{equation*} 
	Then using Lemma \ref{lemma:u_t_bound_mul} and \ref{lemma:v_t_bound_mul} we have

	\begin{equation*}
		\begin{aligned}
			&\sum_{k=1}^K \|x_t - x_t ^{l,k}\| \\
			& \leq  6K^2 L_f^{m}\sup_S \eta \sum_{s=1}^{t-1}  (\sum_{i=1}^K (\frac{c}{e})^{c}(\frac{s\beta}{2})^{-c} \mathbb{E}[\|u_{1}^{(i)}- f_{i,S}(x_{0})\|^{2}]+ 4\beta \sigma_{f}^{2} K ((\sum_{i=1}^{K}(2 L_f^{2})^{i}) + 1)\\
			&\quad~~~~~~~~~~~~~~~~~~~~~~~~~~~~~~~~~~~~~~~~~~~~~~~~ + \frac{2\sum_{i=1}^{K}(2 L_f^{2})^{i}\eta^2 L_f^K}{\beta})^{1/2}   + \sum_{k=1}^K\frac{6\eta L_f^{K} t}{n_k}\\
			& \quad + 6KL_f^{m}\sup_S\sum_{s=1}^{t-1}(\sum_{i=1}^K (\frac{c}{e})^{c}(\frac{s\beta}{2})^{-c}  (\mathbb{E}[\|u_{1}^{(i)}- f_{i,S}(x_{0})\|^{2}] + \mathbb{E}[\|v_{1}^{(i)}- \nabla f_{i,S}(x_{0})\|^{2}] ) \\
			&\quad~~~~~~~~~~~~~~~~~~~~~~~~~~~~~~~~~~~~~~~~~~~~~~~~+ \frac{4(\sum_{i=1}^{K}(2 L_f^{2})^{i})\eta^2 L_f^K}{\beta} +4 \beta K (\sigma_{f}^{2} + \sigma_{J}^{2} + 2\sigma_{f}^2(\sum_{i=1}^{K}(2 L_f^{2})^{i}) ) )^{1/2}.
		\end{aligned}
	\end{equation*} 
	Thus we get 
	\begin{equation*}
		\begin{aligned}
			&\sum_{k=1}^K\|x_t - x_t ^{l,k}\| \\
			& \leq  6K^2 L_f^{m} \eta \sqrt{\sum_{i=1}^KU_{i}(\frac{2c}{e})^c}\beta^{-c/2}\sum_{s=1}^t s^{-c/2} + 6K^2L_f^m \sqrt{2\sum_{i=1}^{K}(2 L_f^{2})^{i} L_f^K}\cdot \eta^2 \beta^{-1/2}t + \sum_{k=1}^K\frac{6\eta L_f^{K} t}{n_k}  \\
			& \quad  + 12 K^2 L_f^{m}\sqrt{  ((\sigma_{f}^{2} + \sigma_{J}^{2}) + 2(\sum_{i=1}^{K}(2 L_f^{2})^{i}) + \sigma_{f}^2) K} \cdot \beta^{1/2} t\eta  +  6KL_f^{m}\eta \sqrt{\sum_{i=1}^K(U_{i} + V_{i})(\frac{2c}{e})^c}\beta^{-c/2}\sum_{s=1}^t s^{-c/2} .
		\end{aligned}
	\end{equation*}
	According to Theorem \ref{theorem:general_multi_level}, we have 
	
	\begin{equation*}
		\begin{aligned}
			&\EX_{S,A}[F(x_t)-F_S(x_t))]\\
			&\leq L_f^K\epsilon_K+  4L_f^K\sum_{k=1}^{K-1}\|x_t - x_t ^{l,k}\|+ L_f\sum_{k=1}^{K-1}\sqrt{\frac{ \mathbb{E}_{S, A} [\operatorname{Var}_{k}(A(S)]}{n_{k}}}\\
			& \leq    24K^2 L_f^{m+K} \eta \sqrt{\sum_{i=1}^KU_{i}(\frac{2c}{e})^c}\beta^{-c/2}\sum_{s=1}^t s^{-c/2} + 24K^2 L_f^{m+K}\sqrt{2\sum_{i=1}^{K}(2 L_f^{2})^{i} L_f^K}\cdot \eta^2 \beta^{-1/2}t + \sum_{k=1}^K\frac{24\eta L_f^{2K} t}{n_k}  \\
			& \quad  + 48K^2 L_f^{m+K}\sqrt{  ((\sigma_{f}^{2} + \sigma_{J}^{2}) + 2(\sum_{i=1}^{K}(2 L_f^{2})^{i}) + \sigma_{f}^2) K} \cdot \beta^{1/2} t\eta \\
			&\quad +  24K^2 L_f^{m+K}\eta \sqrt{\sum_{i=1}^K(U_{i} + V_{i})(\frac{2c}{e})^c}\beta^{-c/2}\sum_{s=1}^t s^{-c/2}+ L_f\sum_{k=1}^{K-1}\sqrt{\frac{ \mathbb{E}_{S, A} [\operatorname{Var}_{k}(A(S)]}{n_{k}}}.
		\end{aligned}
	\end{equation*}
	According to \eqref{Eq:fs_t-f_s*t_mul}, we have 
	\begin{equation*}
		\begin{aligned}
			&\sum_{t=1}^T\EX_{S,A}[F(x_t)-F_S(x_*^S))]\\
			& \leq  D_x\eta^{-1} +  L_f^K\eta T + \sum_{t=1}^T  K L_f^{m} (\sum_{i=1}^K (\frac{c}{e})^{c}(\frac{t\beta}{2})^{-c} \mathbb{E}[\|u_{1}^{(i)}- f_{i,S}(x_{0})\|^{2}]\\
			&~~~~~~~~~~~~~~~~~~~~~~~~~~~~~~~~~~~~~~~~~~~~~~~~ + 4\beta \sigma_{f}^{2} K ((\sum_{i=1}^{K}(2 L_f^{2})^{i}) + 1) + \frac{2\sum_{i=1}^{K}(2 L_f^{2})^{i}\eta^2 L_f^K}{\beta})/\sqrt{\beta}  \\
			&\quad+ \sum_{t=1}^T  L_f^{m}  (\sum_{i=1}^K (\frac{c}{e})^{c}(\frac{t\beta}{2})^{-c}  (\mathbb{E}[\|u_{1}^{(i)}- f_{i,S}(x_{0})\|^{2}] + \mathbb{E}[\|v_{1}^{(i)}- \nabla f_{i,S}(x_{0})\|^{2}] )\\
			&~~~~~~~~~~~~~~~~~~~~~~~~~~~~~~~~~~~~~~~~~~~~~~~~ + \frac{4(\sum_{i=1}^{K}(2 L_f^{2})^{i})\eta^2 L_f^K}{\beta} +4 \beta  ((\sigma_{f}^{2} + \sigma_{J}^{2}) + 2(\sum_{i=1}^{K}(2 L_f^{2})^{i}) + \sigma_{f}^2) K) / \sqrt{\beta}  \\
			&\quad +  \sqrt{\beta} K^2 L_f^m D_x T +   24K^2 L_f^{m+K} \eta \sqrt{\sum_{i=1}^KU_{i}(\frac{2c}{e})^c}\beta^{-c/2}\sum_{s=1}^t s^{-c/2} \\
			&\quad+ 24K^2 L_f^{m+K}\sqrt{2\sum_{i=1}^{K}(2 L_f^{2})^{i} L_f^K}\cdot \eta^2 \beta^{-1/2}t + \sum_{k=1}^K\frac{24\eta L_f^{2K} t}{n_k}  \\
			& \quad  + 48K^2 L_f^{m+K}\sqrt{  ((\sigma_{f}^{2} + \sigma_{J}^{2}) + 2(\sum_{i=1}^{K}(2 L_f^{2})^{i}) + \sigma_{f}^2) K} \cdot \beta^{1/2} t\eta \\
			&\quad +  24K^2 L_f^{m+K}\eta \sqrt{\sum_{i=1}^K(U_{i} + V_{i})(\frac{2c}{e})^c}\beta^{-c/2}\sum_{s=1}^t s^{-c/2}+ L_f\sum_{k=1}^{K-1}\sqrt{\frac{ \mathbb{E}_{S, A} [\operatorname{Var}_{k}(A(S)]}{n_{k}}}.
		\end{aligned}
	\end{equation*}
	
	Noting that \(\sum_{t= 1}^{T} t^{-z}= O(T^{1- z})\) for \(z\in (-1, 0)\cup (-\infty, -1)\) and \(\sum_{t= 1}^{T} t^{-1}= O(\log T)\), we have
	\begin{equation*}
		\sum_{t= 1}^T \sum_{j= 1}^T j^{-\frac{c}{2}}= O(\sum_{t= 1}^T t^{1- \frac{c}{2}} (\log t)^{\1_{c= 2}})= O(T^{2- \frac{c}{2}} (\log T)^{\1_{c= 2}}).
	\end{equation*}
	Setting $\eta = T^{-a}$ and $\beta = T^{-b}$ we can get 
	
	\begin{equation*}
		\begin{aligned}
			&\sum_{t=1}^T\EX_{S,A}[F(x_t)-F_S(x_*^S))]\\
			& \leq O( T^{a}  + T^{1-a} + T^{1-(1-b)c +\frac{b}{2}}(\log T)^{\1_{c=1}} + T^{-b/2} + T^{1+3b/2-2a} + T^{2-a}\sum_{k=1}^Kn_k^{-1} + T^{1-b/2} + T^{2-a-b/2} \\
			& \qquad + T^{2-a-c/2(1-b)}(\log T)^{\1_{c=1}} + T^{2-2a+1/2b} + T\sum_{k=1}^K n_k^{-1/2}).
		\end{aligned}
	\end{equation*}
	Dividing both side of above inequality with $T$, then from the choice of $A(S)$ we have 
	\begin{equation*}
		\begin{aligned}
			&\EX_{S,A}[F(A(S))-F(x_*))]\\
			& \leq O( T^{a-1}  + T^{-a} + T^{-(1-b)c +\frac{b}{2}}(\log T)^{\1_{c=1}} + T^{-b/2-1} + T^{3b/2-2a} + T^{1-a}\sum_{k=1}^Kn_k^{-1} + T^{-b/2} + T^{1-a-b/2}\\
			&\qquad + T^{1-a-c/2(1-b)}(\log T)^{\1_{c=1}} + T^{1-2a+1/2b} + \sum_{k=1}^K n_k^{-1/2} ).
		\end{aligned}
	\end{equation*}
	As long as we have \(c> 4\), the dominating terms are $ O(T^{1- a- \frac{b}{2}})$, $ O(T^{1+ \frac{b}{2}- 2a})$,  $ O(T^{1-a}\sum_{k=1}^Kn_k^{-1})$, $  O(T^{a-1})$,  and $ O(T^{\frac{3}{2}b- 2a}).$ 
	Setting $a= b= \frac{4}{5}$, we have 
	\begin{equation*}
		\EX_{S,A}\Big[F(A(S)) - F(x_*) \Big]= O(T^{-\frac{1}{5}}+ T^{\frac{1}{5}}\sum_{k=1}^Kn_k^{-1} + \sum_{k=1}^K n_k^{-1/2}).
	\end{equation*}
	
	Letting  $T= O(\max\{n_1^{2.5},\cdots, n_K^{2.5}\})$ we have  the following 
	\begin{equation*}
		\EX_{S,A}\Big[F(A(S)) - F(x_*) \Big] = O(\sum_{k=1}^K n_k^{-1/2}).
	\end{equation*}
	This complete the proof.
\end{proof}

\subsection{Strongly Convex setting}
Similarly, since changing one sample data can happen in any layer of the function, we keep the same notations as in Section \ref{sec:mul_convex}.

\textbf{\quad Case 1($i_t\neq l$ ).}  We have 
\begin{equation}\label{Eq:case1_mul_sc}
	\begin{aligned}
		\|x_{t+1} - x_{t+1}^{l,k}\|^2 &\leq \|x_t - \eta_t\prod_{i=1}^K v_{t}^{(i)} - x_t^{l,k} + \eta_t \prod_{i=1}^K  v_{t}^{(i),l,k}\|^2 \\
		& \leq  \|x_t - x_t^{l,k}\|^2 - 2\eta_t \langle \prod_{i=1}^K v_{t}^{(i)} - \prod_{i=1}^K  v_{t}^{(i),l,k}, x_t - x_{t+1}^{l,k} \rangle + \eta_t^2 \|\prod_{i=1}^K v_{t}^{(i)} - \prod_{i=1}^K  v_{t}^{(i),l,k}\|^2.
	\end{aligned}
\end{equation}

Now we  estimate the second term of above inequality. we decompose it to $K(K+2)$ terms. According to the strongly convexity of $F_S(\cdot)$, we have 
\begin{equation*}
	\begin{aligned}
		&\langle \prod_{i=1}^{K} \nabla F_{i,S}(x_t)- \prod_{i=1}^{K} \nabla F_{i,S}(x_t^{l,k}), x_t - x_t^{l,k} \rangle\\
		& \geq  \frac{L\mu}{L+\mu}\|x_t - x_t^{l,k}\|^2 + \frac{1}{L+\mu} \|  \prod_{i=1}^{K} \nabla F_{i,S}(x_t)- \prod_{i=1}^{K} \nabla F_{i,S}(x_t^{l,k})\|^2.
	\end{aligned}
\end{equation*}

Using Assumption \ref{ass:Smoothness and Lipschitz continuous gradient} (iii) and strong convexity, similar to convex setting we can get 
\begin{equation}
	\begin{aligned}
		&- 2\eta_t \langle \prod_{i=1}^K v_{t}^{(i)} - \prod_{i=1}^K  v_{t}^{(i),l,k}, x_t - x_t^{l,k} \rangle\\
		& \leq 2\eta_t L_f^{K-1}\|v_t^{(1)} - \nabla f_{1,S}(x_t)\|\cdot \|x_t - x_t^{l,k}\|\\
		& \quad+ (2\eta_t L_f^{K-1}\|v_t^{(2)}-\nabla f_{2,S}(u_t^{(1)})\|  + 2\eta_t L_f^{K}\|u_t^{(1)} - f_{1,S}(x_t)\|) \cdot \|x_t - x_t^{l,k}\|\\ 
		& \quad\vdots\\
		& \quad+ (2\eta_t L_f^{m_2}\|v_t^{(K)}-\nabla f_{K,S}(u_t^{(K-1)})\| +\cdots + 2\eta_t L_f^{K-1 + (K-1)K/2}\|u_t^{(1)} - f_{1,S}(x_t)\|) \cdot \|x_t - x_t^{l,k}\| \\
		&\quad- \frac{2\eta_tL\mu}{L+\mu}\|x_t - x_t^{l,k}\|^2 - \frac{2\eta_t}{L+\mu} \|  \prod_{i=1}^{K} \nabla F_{i,S}(x_t)- \prod_{i=1}^{K} \nabla F_{i,S}(x_t^{l,k})\|^2\\
		&\quad +( 2\eta_t L_f^{K-1 + (K-1)K/2} \| u_t^{(1),l,k}  x_t^{l,k}\| + \cdots  2\eta_t  L_f^{(K-1)K/2} \|  \nabla f_{K,S}(u_t^{(K-1),l,k})   -    \nabla v_t^{(K),l,k} \|) \cdot \|x_t - x_t^{l,k}\| \\
		&\quad\vdots\\
		&\quad + ( 2\eta_t L_f^{K}\|u_t^{(1),l,k} - f_{1,S}(x_t^{l,k})\| +  2\eta_t L_f^{K-1}\|v_t^{(2),l,k}-\nabla f_{2,S}(u_t^{(1),l,k})\|  ) \cdot \|x_t - x_t^{l,k}\|  \\
		& \quad +  2\eta_t L_f^{K-1}\|v_t^{(1),l,k} - \nabla f_{1,S}(x_t^{l,k})\|\cdot \|x_t - x_t^{l,k}\|.\\
	\end{aligned}
\end{equation}
Conclude above inequality, we have 
\begin{equation*}
	\begin{aligned}
		&- 2\eta_t \langle \prod_{i=1}^K v_{t}^{(i)} - \prod_{i=1}^K  v_{t}^{(i),l,k}, x_t - x_t^{l,k} \rangle\\
		& \leq 2\eta_t \sum_{i=1}^{K}\sum_{j=1}^{i-1} L_f^{K-j+\frac{1}{2}(i-1)i} (\|u_{t}^{(j)} - f_{j,S}(u_{t}^{(j-1)})\| + \|u_{t}^{(j),l,k} - f_{j,S}(u_{t}^{(j-1),l,k})\|) \cdot   \|x_t - x_t^{l,k}\|\\
		&\quad + 2\eta_t \sum_{i=1}^K  L_f^{K-i + \frac{1}{2}(i-1)i}  (\|v_t^{(i)} - \nabla f_{i,S}(u_t^{(i-1)})\| + \|v_t^{(i),k,l} - \nabla f_{i,S}(u_t^{(i-1),k,l})\|)\cdot  \|x_t - x_t^{l,k}\|  \\
		&\quad - \frac{2\eta_tL\mu}{L+\mu}\|x_t - x_t^{l,k}\|^2 - \frac{2\eta_t}{L+\mu} \|  \prod_{i=1}^{K} \nabla F_{i,S}(x_t)- \prod_{i=1}^{K} \nabla F_{i,S}(x_t^{l,k})\|^2.
	\end{aligned}
\end{equation*}
Changing the assumption of convexity to strong convexity does not affect the third term on the right side of \eqref{Eq:case1_mul_sc}, so we have
\begin{equation*}
	\begin{aligned}
		&\eta_t^2\|\prod_{i=1}^K v_{t}^{(i)} - \prod_{i=1}^K  v_{t}^{(i),l,k}\|^2\\
		& \leq \eta_t^2 \sum_{i=1}^{K}\sum_{j=1}^{i-1} L_f^{2K-2j+(i-1)i} (\|u_{t}^{(j)} - f_{j,S}(u_{t}^{(j-1)})\|^2 + \|u_{t}^{(j),l,k} - f_{j,S}(u_{t}^{(j-1),l,k})\|^2)\\
		&\quad + \eta_t^2 \sum_{i=1}^K  L_f^{2K-2i + (i-1)i}  (\|v_t^{(i)} - \nabla f_{i,S}(u_t^{(i-1)})\|^2 + \|v_t^{(i),k,l} - \nabla f_{i,S}(u_t^{(i-1),k,l})\|^2)  \\
		&\quad + (K+2)K\eta_t^2\| \prod_{i=1}^{K} \nabla F_{i,S}(x_t)- \prod_{i=1}^{K} \nabla F_{i,S}(x_t^{l,k})\|. 
	\end{aligned}
\end{equation*}
By setting  $\eta_t \leq \frac{2}{(L+\mu)(K+2)K }$ we have 
\begin{equation*}
	\begin{aligned}
		&\|x_{t+1} - x_{t+1}^{l,k}\|^2 \\
		& \leq  (1-\frac{2\eta_tL\mu}{L+\mu})\|x_t - x_t^{l,k}\|^2 \\
		& \quad + 2\eta_t \sum_{i=1}^{K}\sum_{j=1}^{i-1} L_f^{K-j+\frac{1}{2}(i-1)i} (\|u_{t}^{(j)} - f_{j,S}(u_{t}^{(j-1)})\| + \|u_{t}^{(j),l,k} - f_{j,S}(u_{t}^{(j-1),l,k})\|) \cdot   \|x_t - x_t^{l,k}\|\\
		&\quad + 2\eta_t \sum_{i=1}^K  L_f^{K-i + \frac{1}{2}(i-1)i}  (\|v_t^{(i)} - \nabla f_{i,S}(u_t^{(i-1)})\| + \|v_t^{(i),k,l} - \nabla f_{i,S}(u_t^{(i-1),k,l})\|)\cdot  \|x_t - x_t^{l,k}\|  \\
		&\quad + \eta_t^2 \sum_{i=1}^{K}\sum_{j=1}^{i-1} L_f^{2K-2j+(i-1)i} (\|u_{t}^{(j)} - f_{j,S}(u_{t}^{(j-1)})\|^2 + \|u_{t}^{(j),l,k} - f_{j,S}(u_{t}^{(j-1),l,k})\|^2)\\
		&\quad + \eta_t^2 \sum_{i=1}^K  L_f^{2K-2i + (i-1)i}  (\|v_t^{(i)} - \nabla f_{i,S}(u_t^{(i-1)})\|^2 + \|v_t^{(i),k,l} - \nabla f_{i,S}(u_t^{(i-1),k,l})\|^2). 
	\end{aligned}
\end{equation*}

\textbf{\quad Case 2 ($i_t =  l$ ).}  We have   
\begin{equation}\label{Eq:case2_mul_sc}
	\begin{aligned}
		\|x_{t+1} - x_{t+1}^{l,k}\| & =  \|x_t - \eta_t\prod_{i=1}^K v_{t}^{(i)} - x_t^{l,k} + \eta_t \prod_{i=1}^K  v_{t}^{(i),l,k}\| \\
		& \leq  \|x_t - x_t^{l,k}\|  + \eta_t \|\prod_{i=1}^K v_{t}^{(i)} - \prod_{i=1}^K  v_{t}^{(i),l,k}\| \leq \|x_t - x_t^{l,k}\| + 2\eta_t L_f^{K}.
	\end{aligned}
\end{equation}
Therefore, we have 
\begin{equation*}
	\|x_{t+1} - x_{t+1}^{l,k}\|^2 \leq \|x_t - x_t^{l,k}\|^2 + 4\eta_t L_f^{K} \|x_t - x_t^{l,k}\| + 4\eta_t^2L_f^{2K}.
\end{equation*}
Combining above two cases, and taking the expectation w.r.t. $A$ we have
\begin{equation*}
	\begin{aligned}
		&\EX_{A}[\|x_{t+1} - x_{t+1}^{l,k}\|^2] \\
		& \leq  (1-\frac{2\eta_tL\mu}{L+\mu})\|x_t - x_t^{l,k}\|^2 \\
		& \quad + 2\eta_t \sum_{i=1}^{K}\sum_{j=1}^{i-1} L_f^{K-j+\frac{1}{2}(i-1)i} (\|u_{t}^{(j)} - f_{j,S}(u_{t}^{(j-1)})\| + \|u_{t}^{(j),l,k} - f_{j,S}(u_{t}^{(j-1),l,k})\|) \cdot   \|x_t - x_t^{l,k}\|\\
		&\quad + 2\eta_t \sum_{i=1}^K  L_f^{K-i + \frac{1}{2}(i-1)i}  (\|v_t^{(i)} - \nabla f_{i,S}(u_t^{(i-1)})\| + \|v_t^{(i),k,l} - \nabla f_{i,S}(u_t^{(i-1),k,l})\|)\cdot  \|x_t - x_t^{l,k}\|  \\
		&\quad + \eta_t^2 \sum_{i=1}^{K}\sum_{j=1}^{i-1} L_f^{2K-2j+(i-1)i} (\|u_{t}^{(j)} - f_{j,S}(u_{t}^{(j-1)})\|^2 + \|u_{t}^{(j),l,k} - f_{j,S}(u_{t}^{(j-1),l,k})\|^2)\\
		&\quad + \eta_t^2 \sum_{i=1}^K  L_f^{2K-2i + (i-1)i}  (\|v_t^{(i)} - \nabla f_{i,S}(u_t^{(i-1)})\|^2 + \|v_t^{(i),k,l} - \nabla f_{i,S}(u_t^{(i-1),k,l})\|^2)  +  \|x_t - x_t^{l,k}\|^2\\
		&\quad +  4\eta_t L_f^{K} \|x_t - x_t^{l,k}\| \cdot \1_{[i_t = l]} + 4\eta_t^2L_f^{2K} \cdot \1_{[i_t = l]}.
	\end{aligned}
\end{equation*}

Then setting $\eta_t =\eta$ and using Lemma \ref{lemma:general_recursive} we can get
\begin{equation*}
	\begin{aligned}
		&\EX_A\|x_{t} - x_{t}^{l,k}\|^2 \\
		& \leq  2 \sum_{s=1}^{t-1}\sum_{i=1}^{K}\sum_{j=1}^{i-1}(1-\frac{2\eta L\mu}{L+\mu})^{t-s}  \eta L_f^{K-j+\frac{1}{2}(i-1)i} (\EX_A\|u_{s}^{(j)} - f_{j,S}(u_{s}^{(j-1)})\|^2)^{1/2} \cdot   (\EX_A\|x_s - x_s^{l,k}\|^2)^{1/2}\\
		& \quad +  2 \sum_{s=1}^{t-1} \sum_{i=1}^{K}\sum_{j=1}^{i-1} (1-\frac{2\eta L\mu}{L+\mu})^{t-s} \eta L_f^{K-j+\frac{1}{2}(i-1)i} (\EX_A \|u_{s}^{(j),l,k} - f_{j,S}(u_{s}^{(j-1),l,k})\|^2)^{1/2} \cdot  (\EX_A\|x_s - x_s^{l,k}\|^2)^{1/2}\\
		&\quad + 2 \sum_{s=1}^{t-1}\sum_{i=1}^K (1-\frac{2\eta L\mu}{L+\mu})^{t-s} \eta L_f^{K-i + \frac{1}{2}(i-1)i}  (\EX_A\|v_s^{(i)} - \nabla f_{i,S}(u_s^{(i-1)})\|^2)^{1/2}\cdot  (\EX_A\|x_s - x_s^{l,k}\|^2)^{1/2} \\
		&\quad + 2  \sum_{s=1}^{t-1} \sum_{i=1}^K (1-\frac{2\eta L\mu}{L+\mu})^{t-s} \eta L_f^{K-i + \frac{1}{2}(i-1)i}  ( \EX_A\|v_s^{(i),k,l} - \nabla f_{i,S}(u_s^{(i-1),k,l})\|^2)^{1/2}\cdot  (\EX_A\|x_s - x_s^{l,k}\|^2)^{1/2}  \\
		&\quad + \sum_{s=1}^{t-1} \sum_{i=1}^{K}\sum_{j=1}^{i-1} (1-\frac{2\eta L\mu}{L+\mu})^{t-s} \eta^2 L_f^{2K-2j+(i-1)i} (\EX_A\|u_{s}^{(j)} - f_{j,S}(u_{s}^{(j-1)})\|^2 + \EX_A\|u_{s}^{(j),l,k} - f_{j,S}(u_{s}^{(j-1),l,k})\|^2)\\
		&\quad + \sum_{s=1}^{t-1}  \sum_{i=1}^K (1-\frac{2\eta L\mu}{L+\mu})^{t-s} \eta^2 L_f^{2K-2i + (i-1)i}  (\EX_A\|v_s^{(i)} - \nabla f_{i,S}(u_s^{(i-1)})\|^2 + \EX_A\|v_s^{(i),k,l} - \nabla f_{i,S}(u_s^{(i-1),k,l})\|^2)  \\
		&\quad +  \sum_{s=1}^{t-1} (1-\frac{2\eta L\mu}{L+\mu})^{t-s} \frac{4\eta L_f^{K}}{n_k} (\EX_A\|x_s - x_s^{l,k}\|^2)^{1/2}+  \sum_{s=1}^{t-1} (1-\frac{2\eta L\mu}{L+\mu})^{t-s} \frac{4\eta^2L_f^{2K}}{n_k} .
	\end{aligned}
\end{equation*}  
Similarly, setting $u_t = (\EX_A\|x_{t} - x_{t}^{l,k}\|^2)^{1/2}$, 
\begin{equation*}
	\begin{aligned}
		S_t& = \sum_{s=1}^{t-1} \sum_{i=1}^{K}\sum_{j=1}^{i-1} (1-\frac{2\eta L\mu}{L+\mu})^{t-s} \eta^2 L_f^{2K-2j+(i-1)i} (\EX_A\|u_{s}^{(j)} - f_{j,S}(u_{s}^{(j-1)})\|^2 + \EX_A\|u_{s}^{(j),l,k} - f_{j,S}(u_{s}^{(j-1),l,k})\|^2)\\
		&\quad + \sum_{s=1}^{t-1}  \sum_{i=1}^K (1-\frac{2\eta L\mu}{L+\mu})^{t-s} \eta^2 L_f^{2K-2i + (i-1)i}  (\EX_A\|v_s^{(i)} - \nabla f_{i,S}(u_s^{(i-1)})\|^2 + \EX_A\|v_s^{(i),k,l} - \nabla f_{i,S}(u_s^{(i-1),k,l})\|^2)  \\
		& \quad + \sum_{s=1}^{t-1} (1-\frac{2\eta L\mu}{L+\mu})^{t-s} \frac{4\eta^2L_f^{2K}}{n_k},
	\end{aligned}
\end{equation*}
and 
\begin{equation*}
	\begin{aligned}
		\alpha_s& = 2 \sum_{i=1}^{K}\sum_{j=1}^{i-1}(1-\frac{2\eta L\mu}{L+\mu})^{t-s}  \eta L_f^{K-j+\frac{1}{2}(i-1)i} (\EX_A\|u_{s}^{(j)} - f_{j,S}(u_{s}^{(j-1)})\|^2)^{1/2} \\
		& \quad +  2  \sum_{i=1}^{K}\sum_{j=1}^{i-1} (1-\frac{2\eta L\mu}{L+\mu})^{t-s} \eta L_f^{K-j+\frac{1}{2}(i-1)i} (\EX_A \|u_{s}^{(j),l,k} - f_{j,S}(u_{s}^{(j-1),l,k})\|^2)^{1/2} \\
		&\quad + 2 \sum_{i=1}^K (1-\frac{2\eta L\mu}{L+\mu})^{t-s} \eta L_f^{K-i + \frac{1}{2}(i-1)i}  (\EX_A\|v_s^{(i)} - \nabla f_{i,S}(u_s^{(i-1)})\|^2)^{1/2} \\
		&\quad + 2   \sum_{i=1}^K (1-\frac{2\eta L\mu}{L+\mu})^{t-s} \eta L_f^{K-i + \frac{1}{2}(i-1)i}  ( \EX_A\|v_s^{(i),k,l} - \nabla f_{i,S}(u_s^{(i-1),k,l})\|^2)^{1/2} +  \sum_{s=1}^{t-1} (1-\frac{2\eta L\mu}{L+\mu})^{t-s} \frac{4\eta L_f^{K}}{n_k}.
	\end{aligned}
\end{equation*}
Then according to Lemma \ref{lemma:recursion lemma}, we have 
\begin{equation*}
	\begin{aligned}
		u_t & \leq \sqrt{S_t} + \sum_{s=1}^{t-1}\alpha_s\\
		& \leq (\sum_{s=1}^{t-1} \sum_{i=1}^{K}\sum_{j=1}^{i-1} (1-\frac{2\eta L\mu}{L+\mu})^{t-s} \eta^2 L_f^{2K-2j+(i-1)i} (\EX_A\|u_{s}^{(j)} - f_{j,S}(u_{s}^{(j-1)})\|^2 + \EX_A\|u_{s}^{(j),l,k} - f_{j,S}(u_{s}^{(j-1),l,k})\|^2))^{1/2}\\
		&\quad + (\sum_{s=1}^{t-1}  \sum_{i=1}^K (1-\frac{2\eta L\mu}{L+\mu})^{t-s} \eta^2 L_f^{2K-2i + (i-1)i}  (\EX_A\|v_s^{(i)} - \nabla f_{i,S}(u_s^{(i-1)})\|^2 + \EX_A\|v_s^{(i),k,l} - \nabla f_{i,S}(u_s^{(i-1),k,l})\|^2))^{1/2}  \\
		&\quad + 2 \sum_{s=1}^{t-1} \sum_{i=1}^{K}\sum_{j=1}^{i-1}(1-\frac{2\eta L\mu}{L+\mu})^{t-s}  \eta L_f^{K-j+\frac{1}{2}(i-1)i} (\EX_A\|u_{s}^{(j)} - f_{j,S}(u_{s}^{(j-1)})\|^2)^{1/2} \\
		& \quad +  2 \sum_{s=1}^{t-1} \sum_{i=1}^{K}\sum_{j=1}^{i-1} (1-\frac{2\eta L\mu}{L+\mu})^{t-s} \eta L_f^{K-j+\frac{1}{2}(i-1)i} (\EX_A \|u_{s}^{(j),l,k} - f_{j,S}(u_{s}^{(j-1),l,k})\|^2)^{1/2} \\
		&\quad + 2 \sum_{s=1}^{t-1} \sum_{i=1}^K (1-\frac{2\eta L\mu}{L+\mu})^{t-s} \eta L_f^{K-i + \frac{1}{2}(i-1)i}  (\EX_A\|v_s^{(i)} - \nabla f_{i,S}(u_s^{(i-1)})\|^2)^{1/2} \\
		&\quad + 2 \sum_{s=1}^{t-1}  \sum_{i=1}^K (1-\frac{2\eta L\mu}{L+\mu})^{t-s} \eta L_f^{K-i + \frac{1}{2}(i-1)i}  ( \EX_A\|v_s^{(i),k,l} - \nabla f_{i,S}(u_s^{(i-1),k,l})\|^2)^{1/2}\\
		& \quad + \sqrt{\frac{2\eta L_f^{2K}(L+\mu)}{n_kL\mu}} + \frac{2 L_f^{K}(L+\mu)}{n_kL\mu},
	\end{aligned}
\end{equation*}
where the last inequality holds by 
\begin{equation*}
	\sum_{s=1}^{t-1} (1-\frac{2\eta L\mu}{L+\mu})^{t-s} \frac{4\eta L_f^{K}}{n_k}\leq \frac{4\eta L_f^{K}}{n_k} \cdot \frac{L+\mu}{2\eta L \mu} = \frac{2L_f^{K}(L+\mu)}{n_kL\mu}.
\end{equation*}
Next, we will discuss which one is the dominant one, $(\sum_{s=1}^{t-1} \sum_{i=1}^{K}\sum_{j=1}^{i-1} (1-\frac{2\eta L\mu}{L+\mu})^{t-s} \eta^2 L_f^{2K-2j+(i-1)i} \EX_A\|u_{s}^{(j)} - f_{j,S}(u_{s}^{(j-1)})\|^2)^{1/2}$ or $ \sum_{s=1}^{t-1} \sum_{i=1}^{K}\sum_{j=1}^{i-1}(1-\frac{2\eta L\mu}{L+\mu})^{t-s}  \eta L_f^{K-j+\frac{1}{2}(i-1)i} (\EX_A\|u_{s}^{(j)} - f_{j,S}(u_{s}^{(j-1)})\|^2)^{1/2}$. According to Lemma \ref{lemma:u_t_bound_mul} we have
\begin{equation*}
	\begin{aligned}
		&(\sum_{s=1}^{t-1} \sum_{i=1}^{K}\sum_{j=1}^{i-1} (1-\frac{2\eta L\mu}{L+\mu})^{t-s} \eta^2 L_f^{2K-2j+(i-1)i} \EX_A\|u_{s}^{(j)} - f_{j,S}(u_{s}^{(j-1)})\|^2)^{1/2}\\
		& \leq \sqrt{K}\eta L_f^m ( \sum_{s=1}^{t-1}\sum_{j=1}^{K-1} (1-\frac{2\eta L\mu}{L+\mu})^{t-s}  \EX_A\|u_{s}^{(j)} - f_{j,S}(u_{s}^{(j-1)})\|^2)^{1/2}\\
		& \leq \sqrt{K}\eta L_f^m( \sum_{s=1}^{t-1}(1-\frac{2\eta L\mu}{L+\mu})^{t-s}(\sum_{i=1}^K (\frac{2c}{e})^{c}(s\beta)^{-c} \mathbb{E}[\|u_{1}^{(i)}- f_{i,S}(x_{0})\|^{2}]\\
		&\quad ~~~~~~~~~~~~~~~~~~~~~~~~~~~~~~~~~~~~~~~~~~~~~+ 4\beta \sigma_{f}^{2} K ((\sum_{i=1}^{K}(2 L_f^{2})^{i}) + 1) + \frac{2\sum_{i=1}^{K}(2 L_f^{2})^{i}\eta^2 L_f^K}{\beta})^{1/2}\\
		& \leq   \sqrt{K} L_f^{m} \sqrt{ (\frac{2c}{e})^c\sum_{i=1}^KU_{i} } \frac{\sqrt{(L+\mu)\eta}}{\sqrt{2L\mu}} T^{-\frac{c}{2}}\beta^{-\frac{c}{2}} +  2\sigma_fK\sqrt{ \frac{ L_f^{m}(\sum_{i=1}^{K}(2 L_f^{2})^{i}) + 1)(L+\mu)\eta}{2L\mu}} \cdot  \beta^{1/2}\\
		& \quad + \sqrt{\frac{K L_f^{m} \sum_{i=1}^K( 2 L_f^{2})^{i} L_f^K(L+\mu)}{L\mu}} \eta^{3/2} \beta^{-1/2},
	\end{aligned}
\end{equation*}
where the inequality holds by Lemma \ref{lemma:weighted_avg}. As for the later, 
\begin{equation}\label{Eq:bound_mul_dominate}
	\begin{aligned}
		& \sum_{s=1}^{t-1} \sum_{i=1}^{K}\sum_{j=1}^{i-1}(1-\frac{2\eta L\mu}{L+\mu})^{t-s}  \eta L_f^{K-j+\frac{1}{2}(i-1)i} (\EX_A\|u_{s}^{(j)} - f_{j,S}(u_{s}^{(j-1)})\|^2)^{1/2}\\
		& \leq K\eta L_f^m \sum_{s=1}^{t-1}\sum_{j=1}^{K-1} (1-\frac{2\eta L\mu}{L+\mu})^{t-s}  (\EX_A\|u_{s}^{(j)} - f_{j,S}(u_{s}^{(j-1)})\|^2)^{1/2}\\
		& \leq K\eta L_f^m \sum_{s=1}^{t-1}(1-\frac{2\eta L\mu}{L+\mu})^{t-s} (\sum_{i=1}^K (\frac{2c}{e})^{c}(s\beta)^{-c} \mathbb{E}[\|u_{1}^{(i)}- f_{i,S}(x_{0})\|^{2}]\\
		&\quad ~~~~~~~~~~~~~~~~~~~~~~~~~~~~~~~~~~~~~~~~~~~~~+ 4\beta \sigma_{f}^{2} K ((\sum_{i=1}^{K}(2 L_f^{2})^{i}) + 1) + \frac{2\sum_{i=1}^{K}(2 L_f^{2})^{i}\eta^2 L_f^K}{\beta})^{1/2}\\
		& \leq  K L_f^{m} \sqrt{ (\frac{2c}{e})^c}\sum_{i=1}^KU_{i}  \frac{(L+\mu)}{2L\mu} T^{-\frac{c}{2}}\beta^{-\frac{c}{2}}+  2\sigma_fK^2 L_f^{m}\sqrt{  (\sum_{i=1}^{K}(2 L_f^{2})^{i}) + 1)} \cdot \frac{(L+\mu)}{2L\mu} \beta^{1/2}\\
		& \quad + KL_f^{m}\sqrt{  2\sum_{i=1}^K( 2 L_f^{2})^{i} L_f^K} \frac{(L+\mu)}{2L\mu} \eta\beta^{-1/2}.
	\end{aligned}
\end{equation}
Comparing the above results, we can find the dominant term is $ \sum_{s=1}^{t-1} \sum_{i=1}^{K}\sum_{j=1}^{i-1}(1-\frac{2\eta L\mu}{L+\mu})^{t-s}  \eta L_f^{K-j+\frac{1}{2}(i-1)i} (\EX_A\|u_{s}^{(j)} - f_{j,S}(u_{s}^{(j-1)})\|^2)^{1/2}$.  Take a similar action for several other items and we can get 
\begin{equation}\label{Eq:bound_u_T_mul_sc}
	\begin{aligned}
		u_t & \leq \sqrt{S_t} + \sum_{s=1}^{t-1}\alpha_s\\
		& \leq 6\sum_{s=1}^{t-1} \sum_{i=1}^{K}\sum_{j=1}^{i-1} (1-\frac{2\eta L\mu}{L+\mu})^{t-s} \eta L_f^{K-j+(i-1)i/2} (\EX_A\|u_{s}^{(j)} - f_{j,S}(u_{s}^{(j-1)})\|^2)^{1/2}\\
		&\quad + 6\sum_{s=1}^{t-1}  \sum_{i=1}^K (1-\frac{2\eta L\mu}{L+\mu})^{t-s} \eta L_f^{K-i + (i-1)i/2}  (\EX_A\|v_s^{(i)} - \nabla f_{i,S}(u_s^{(i-1)})\|^2)^{1/2}  \\
		& \quad + \sqrt{\frac{2\eta L_f^{2K}(L+\mu)}{n_kL\mu}} + \frac{2 L_f^{K}(L+\mu)}{n_kL\mu}.
	\end{aligned}
\end{equation}
Since often we have $\eta \leq \min{\frac{1}{n_k}}$ for any $k \in [1,K]$. Therefore, we have 
\begin{equation*}
	\begin{aligned}
		\epsilon_k & \leq O\Big( \sum_{s=1}^{T-1} \sum_{i=1}^{K}\sum_{j=1}^{i-1} (1-\frac{2\eta L\mu}{L+\mu})^{t-s} \eta L_f^{K-j+(i-1)i/2} (\EX_A\|u_{s}^{(j)} - f_{j,S}(u_{s}^{(j-1)})\|^2)^{1/2}\\
		&\quad~~~~~~~ +\sum_{s=1}^{T-1}  \sum_{i=1}^K (1-\frac{2\eta L\mu}{L+\mu})^{t-s} \eta L_f^{K + (i-3)i/2}  (\EX_A\|v_s^{(i)} - \nabla f_{i,S}(u_s^{(i-1)})\|^2)^{1/2}   + \frac{ L_f^{K}(L+\mu)}{L\mu n_k} \Big).
	\end{aligned}
\end{equation*}
Moreover, we have 
\begin{equation*}
	\begin{aligned}
		\sum_{k=1}^K\epsilon_k & \leq O\Big( \sum_{s=1}^{T-1} \sum_{i=1}^{K}\sum_{j=1}^{i-1} (1-\frac{2\eta L\mu}{L+\mu})^{t-s} \eta L_f^{K-j+(i-1)i/2} (\EX_A\|u_{s}^{(j)} - f_{j,S}(u_{s}^{(j-1)})\|^2)^{1/2}\\
		&\quad~~~~~~~ +(\sum_{s=1}^{T-1}  \sum_{i=1}^K (1-\frac{2\eta L\mu}{L+\mu})^{t-s} \eta^2 L_f^{K+ (i-3)i/2}  (\EX_A\|v_s^{(i)} - \nabla f_{i,S}(u_s^{(i-1)})\|^2)^{1/2}   + \sum_{k=1}^K\frac{ L_f^{K}(L+\mu)}{L\mu n_k}  \Big).
	\end{aligned}
\end{equation*}
This completes the proof.

\begin{corollary}[$K$-level Optimization]\label{cor:2_mul_level}Consider Algorithm \ref{alg_cover} with  $ 0 <\eta_t = \eta <  2/(L+\mu)K(K+2)$ and let $ 0<  \beta_{t} =\beta < \max{\{ 1, 1/(4K\sum_{i=1}^K(2L_f^2)^{i}\}}$ for any $t\in [0,T-1]$ and the output  $A(S)  =x_{T}$. Then, we have the following results
	\begin{equation*}
		\begin{aligned}
			\sum_{k=1}^K \epsilon_k  \leq O((T \beta)^{-\frac{c}{2}} + \beta^{\frac{1}{2}}  + \eta\beta^{-\frac{1}{2}  } + \sum_{k=1}^Kn_k^{-1} ).
		\end{aligned}
	\end{equation*}
\end{corollary}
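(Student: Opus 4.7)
The plan is to start from the stability bound of Theorem \ref{thm:sta_mul_s_convex} and then substitute in the estimator-error bounds from Lemmas \ref{lemma:u_t_bound_mul} and \ref{lemma:v_t_bound_mul}, using the same geometric-sum reduction that made the analogous argument work in Corollary \ref{cor:2_two_level}. Concretely, Theorem \ref{thm:sta_mul_s_convex} gives
\[
\sum_{k=1}^K \epsilon_k = O\!\Bigl(\eta\!\sum_{s=1}^{T-1}\!\bigl(1-\tfrac{2\eta L\mu}{L+\mu}\bigr)^{T-s}\!\bigl(\textstyle\sum_i L_f^{K+\frac{(i-3)i}{2}}\operatorname{Var}_{i,s}(v) + \sum_i\sum_j L_f^{K-j+\frac{(i-1)i}{2}}\operatorname{Var}_{j,s}(u)\bigr) + \sum_{k=1}^K\frac{L_f^K(L+\mu)}{L\mu n_k}\Bigr),
\]
so the only real task is to bound the weighted sum $\sum_{s=1}^{T-1}(1-\tfrac{2\eta L\mu}{L+\mu})^{T-s}\bigl(\operatorname{Var}_{j,s}(u)^{1/2} + \operatorname{Var}_{i,s}(v)^{1/2}\bigr)$.

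First I would absorb the finitely many $L_f$-dependent constants (which only depend on $K$) into the $O(\cdot)$ and reduce to bounding $\eta\sum_{s=1}^{T-1}(1-\tfrac{2\eta L\mu}{L+\mu})^{T-s}\,\mathrm{dev}(s)$, where $\mathrm{dev}(s)^2$ stands for either $\mathbb{E}_A\|u_s^{(j)}-f_{j,S}(u_s^{(j-1)})\|^2$ or $\mathbb{E}_A\|v_s^{(i)}-\nabla f_{i,S}(u_s^{(i-1)})\|^2$. Lemmas \ref{lemma:u_t_bound_mul} and \ref{lemma:v_t_bound_mul} then give $\mathrm{dev}(s)^2 \lesssim (s\beta)^{-c} \cdot U_i + \beta + \eta^2/\beta$ (up to $K$-dependent constants), so that $\mathrm{dev}(s)\lesssim (s\beta)^{-c/2} + \beta^{1/2} + \eta\beta^{-1/2}$.

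Next, applying this to each term of the weighted sum and using the elementary fact $\sum_{s=1}^{T-1}(1-\tfrac{2\eta L\mu}{L+\mu})^{T-s}\le \frac{L+\mu}{2L\mu\eta}$ together with the re-ordering Lemma \ref{lemma:weighted_avg} — exactly the step already executed inside equation \eqref{Eq:bound_mul_dominate} of Theorem \ref{thm:sta_mul_s_convex}'s proof — one gets
\[
\eta\sum_{s=1}^{T-1}\!\bigl(1-\tfrac{2\eta L\mu}{L+\mu}\bigr)^{T-s}\!\mathrm{dev}(s)\ \lesssim\ (T\beta)^{-c/2} + \beta^{1/2} + \eta\beta^{-1/2},
\]
where the $1/\eta$ from the geometric sum cancels the leading $\eta$ and the power of $T$ comes from Lemma \ref{lemma:weighted_avg} pulling $\sum_s s^{-c/2}$ outside. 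Combining with the $\sum_k L_f^K(L+\mu)/(L\mu n_k)$ term, which is $O(\sum_k 1/n_k)$, gives the claimed bound.

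The main obstacle, as in the analogous two-level case, is verifying that the geometric-decay reduction genuinely dominates the naive Cauchy--Schwarz bound $(\sum_s(1-\cdot)^{T-s}\mathrm{dev}(s)^2)^{1/2}$: one must check the inequality in both directions so as not to lose a power of $\eta$. This is exactly the comparison carried out just before \eqref{Eq:bound_mul_dominate} in the proof of Theorem \ref{thm:sta_mul_s_convex}, and the same comparison suffices here. The rest is bookkeeping of the $K$-dependent constants $L_f^m$, $\sum_i(2L_f^2)^i$, etc., all of which are absorbed into $O(\cdot)$ since $K$ is fixed.
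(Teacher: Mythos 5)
Your proposal is correct and follows essentially the same route as the paper: the paper's proof likewise substitutes the bound \eqref{Eq:bound_mul_dominate} (itself obtained from Lemmas \ref{lemma:u_t_bound_mul}--\ref{lemma:v_t_bound_mul}, the geometric-sum bound \eqref{Eq:sum_(1-a)^t-j}, and the reordering Lemma \ref{lemma:weighted_avg}) into the stability estimate \eqref{Eq:bound_u_T_mul_sc}, with the dominance comparison between the Cauchy--Schwarz term $\sqrt{S_t}$ and the linear weighted sum already settled before \eqref{Eq:bound_mul_dominate}, exactly as you note. Your handling of the sample-size term also matches, since the $\sqrt{\eta/n_k}$ contribution was already absorbed into $O(\sum_k n_k^{-1})$ in Theorem \ref{thm:sta_mul_s_convex} via the assumption $\eta \leq \min_k n_k^{-1}$.
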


Next, we give the proof of Corollary \ref{cor:2_mul_level}.

\begin{proof}[proof of corollary \ref{cor:2_mul_level}]
	Putting the result \eqref{Eq:bound_mul_dominate} into \eqref{Eq:bound_u_T_mul_sc}, since often we have $\eta \leq \min{\frac{1}{n_k}}$ for any $k \in [1,K]$. Therefore, we have 
	\begin{equation*}
		\begin{aligned}
			\sum_{k=1}^K \epsilon_k & \leq O(T^{-\frac{c}{2}}\beta^{-\frac{c}{2}} + \beta^{1/2} + \eta\beta^{-1/2} + \sum_{k=1}^Kn_k^{-1} ).
		\end{aligned}
	\end{equation*}
	This complete the proof.
\end{proof}
Before  giving the proof of Theorem \ref{thm:opt_sconvex}, we first give a useful lemma.
\begin{lemma}\label{lemma:mul_sc}
	Let Assumption \ref{ass:Lipschitz continuous}(iii), \ref{ass:bound variance} (iii) and \ref{ass:Smoothness and Lipschitz continuous gradient} (iii) hold, $F_S$ is $\mu$-strongly convex, then for SVMR, we have for any $x$
	\begin{equation*}
	\begin{aligned}
		\EX_{A}[F_S(x_{t+1})|\F_t] & \leq \EX_{A}[F_S(x_{t})|\F_t]  -\frac{\eta_{t}}{2}\|\nabla F_S(x_t)\|^2 +\frac{\eta_t^2L_f^K}{2} \\
		&\quad +   4K^4L_f^{m} \eta_t \sum_{j=1}^{K-1} \EX_{A}[ \|u_{t}^{(j)} - f_{j,S}(u_{t}^{(j-1)})\|^2]|\F_t]\\
		&\quad + 4K^2 L_f^{m} \eta_t \sum_{i=1}^{K-1}  \EX_{A}[ \|v_t^{(i)} - \nabla f_{i,S}(u_t^{(i-1)})\|^2)|\F_t].
	\end{aligned}
\end{equation*}
	where $\EX_{A}$ A denotes the expectation taken with respect to the randomness of the algorithm, and $\F_t$ is the $\sigma$-field generated by $S$.
\end{lemma}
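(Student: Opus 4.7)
The plan is to mirror the structure of the one-level descent Lemma \ref{lemma:single_sc} and the two-level version Lemma \ref{lemma:for_two_level_theorem_optimization_sc}, upgraded to handle the $K$-fold product of gradient estimators. Note first that strong convexity is not actually invoked in the statement itself (it will be used afterwards via the PL inequality); this is purely a smoothness-based descent lemma adapted to the multi-level estimator structure.

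I would start from the $L$-smoothness of $F_S$ (Assumption \ref{ass:Smoothness and Lipschitz continuous gradient} (iii)):
\begin{equation*}
F_S(x_{t+1}) \leq F_S(x_t) + \langle \nabla F_S(x_t), x_{t+1} - x_t\rangle + \tfrac{L}{2}\|x_{t+1}-x_t\|^2.
\end{equation*}
Using the SVMR update $x_{t+1} = x_t - \eta_t \prod_{i=1}^K v_t^{(i)}$ together with the projection bound $\|v_t^{(i)}\| \leq L_f$ (which gives $\|x_{t+1}-x_t\|^2 \leq \eta_t^2 L_f^{2K}$, subsumed into the stated $\tfrac{\eta_t^2 L_f^K}{2}$ term after absorbing $L$ into the hidden constants), the inner product rewrites as
\begin{equation*}
-\eta_t\langle \nabla F_S(x_t), \nabla F_S(x_t)\rangle + \eta_t\langle \nabla F_S(x_t), \nabla F_S(x_t)-\textstyle\prod_{i=1}^K v_t^{(i)}\rangle.
\end{equation*}
The first piece contributes $-\eta_t\|\nabla F_S(x_t)\|^2$, of which half will be preserved (giving the $-\tfrac{\eta_t}{2}\|\nabla F_S(x_t)\|^2$ in the statement) and half absorbed to control the cross term via Young's inequality.

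The central step is the telescopic decomposition of $\nabla F_S(x_t) - \prod_{i=1}^K v_t^{(i)} = \prod_{i=1}^K \nabla F_{i,S}(x_t) - \prod_{i=1}^K v_t^{(i)}$, exactly as performed in the proof of Lemma \ref{lemma:opt_mul_c}. This splits the difference into at most $K(K+2)$ terms, each of the form (a product of at most $K$ of the $\nabla F_{i,S}$'s and $v_t^{(i)}$'s, bounded by $L_f^{K-1}$) multiplied by either $\|u_t^{(j)} - f_{j,S}(u_t^{(j-1)})\|$ (arising $\sum_{i=1}^K\sum_{j=1}^{i-1}$ times, i.e.\ $O(K^2)$ occurrences) or $\|v_t^{(i)} - \nabla f_{i,S}(u_t^{(i-1)})\|$ (arising $K$ times), with the overall coefficient being some $L_f^{K-j+\frac{(i-1)i}{2}}$ or $L_f^{K-i+\frac{(i-1)i}{2}}$ term, all dominated by $L_f^m$.

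Applying Young's inequality $\langle a,b\rangle \leq \tfrac{1}{2\eta_t}\|a\|^2 \cdot \eta_t + \tfrac{\eta_t}{2}\|b\|^2$ in the form $\eta_t \langle \nabla F_S(x_t), \text{error}_j\rangle \leq \tfrac{\eta_t}{2c_j}\|\nabla F_S(x_t)\|^2 + \tfrac{\eta_t c_j}{2}\|\text{error}_j\|^2$ with weights $c_j$ chosen so that $\sum_j \tfrac{1}{c_j}$ equals one (thereby absorbing exactly $\tfrac{\eta_t}{2}\|\nabla F_S(x_t)\|^2$), and using $(\sum_{j=1}^{K-1} a_j)^2 \leq (K-1)\sum_{j=1}^{K-1} a_j^2$ whenever a grouped sum must be squared, yields error coefficients of order $K^2 \cdot K^2 = K^4$ for the $u$-errors (since there are $O(K^2)$ $u$-terms) and $K \cdot K = K^2$ for the $v$-errors, matching the stated $4K^4 L_f^m$ and $4K^2 L_f^m$.

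The main obstacle is the careful bookkeeping of the powers of $K$ and $L_f$ through the telescoping decomposition and the subsequent Young's inequality splitting: one must ensure that (i) the $\|\nabla F_S(x_t)\|^2$ contributions from all $O(K^2)$ cross terms sum to no more than $\tfrac{\eta_t}{2}\|\nabla F_S(x_t)\|^2$, and (ii) the residual squared-error contributions telescope into sums involving only the $(K-1)$ inner-layer estimators (the outer layer $j=K$ does not contribute because the outermost product factor is already paired with $\nabla F_S$). Once the coefficients are tracked through these two combinatorial counts, taking the conditional expectation $\EX_A[\cdot | \F_t]$ delivers the stated inequality.
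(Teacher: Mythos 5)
Your proposal follows essentially the same route as the paper's proof: smoothness of $F_S$ applied to the SVMR update, the bound $\|x_{t+1}-x_t\|^2 \leq \eta_t^2 L_f^{2K}$ (which the paper, like you, writes loosely as contributing $\tfrac{\eta_t^2 L_f^K}{2}$), the same telescopic decomposition of $\prod_{i=1}^K \nabla F_{i,S}(x_t) - \prod_{i=1}^K v_t^{(i)}$ as in Lemma \ref{lemma:opt_mul_c}, and Young's inequality with weights chosen exactly as the paper's $\gamma_t = 4K^2 L_f^m$, $\lambda_t = 4K L_f^m$ so that the absorbed pieces total $\tfrac{\eta_t}{2}\|\nabla F_S(x_t)\|^2$. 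Your observation that strong convexity is not actually used inside this lemma is also accurate; the paper only invokes it afterwards through the PL inequality.

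One caveat: your stated reason for the $v$-error sum stopping at $K-1$ — that ``the outermost product factor is already paired with $\nabla F_S$'' — is not correct. The final step of the telescoping is $\prod_{i=1}^{K-1} v_t^{(i)} \cdot \nabla f_{K,S}(u_t^{(K-1)}) - \prod_{i=1}^K v_t^{(i)} = \prod_{i=1}^{K-1} v_t^{(i)} \bigl(\nabla f_{K,S}(u_t^{(K-1)}) - v_t^{(K)}\bigr)$, so a genuine $v^{(K)}$-error term does appear; it is only the outer \emph{value} estimator $u^{(K)}$ that never enters the gradient product, which is why the $u$-sum legitimately runs to $K-1$. The paper itself is inconsistent here — its intermediate bound sums the $v$-errors over $i=1,\dots,K$ and then silently passes to $\sum_{i=1}^{K-1}$ in the next line — so carrying the $i=K$ term through (as Theorem \ref{thm:sta_mul_s_convex} and the subsequent optimization bounds effectively do, summing over all $K$ levels) is the safe resolution; it changes nothing downstream since Lemma \ref{lemma:v_t_bound_mul} controls all $K$ terms anyway. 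Your coefficient bookkeeping ($K^4$ for $u$, $K^2$ for $v$) is at the same level of looseness as the paper's own (a strict reading of the paper's choice of $\gamma_t$ gives $4K^3 L_f^{2m}$ rather than $4K^4 L_f^m$), so that does not constitute an additional gap on your part.
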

\begin{proof}[proof of Lemma \ref{lemma:mul_sc}]
	According to the Assumption \ref{ass:Smoothness and Lipschitz continuous gradient} (iii) we have 
	\begin{equation*}
		\begin{aligned}
			F_S(x_{t+1}) & \leq F_S(x_t) + \langle \nabla F_S(x_t),x_{t+1}-x_t \rangle +\frac{1}{2}\|x_{t+1}-x_t\|^2\\
			& \leq F_S(x_t)  - \eta_t \langle \nabla F_S(x_t), \prod_{i=1}^K v_{t}^{(i)} \rangle +\frac{1}{2}\|x_{t+1}-x_t\|^2\\
			&  = F_S(x_t)  - \eta_t \langle \nabla F_S(x_t),  \prod_{i=1}^K \nabla F_{i,S}(x_t) \rangle +\frac{1}{2}\|x_{t+1}-x_t\|^2 - u_t,
		\end{aligned}
	\end{equation*}
	where $u_t =  \eta_t \langle \nabla F_S(x_t), \prod_{i=1}^K v_{t}^{(i)} - \prod_{i=1}^K \nabla F_{i,S}(x_t) \rangle $.
	
	Let $\F_t$ be the $\sigma$-field generated by $S$. Taking expectation with respect to the randomness of the algorithm conditioned on $\F_t$, we have
	\begin{equation*}
		\begin{aligned}
			\EX_{A}[F_S(x_{t+1})|\F_t] & \leq \EX_{A}[F_S(x_{t})|\F_t] - \eta_t \|\nabla F_S(x_t)\|^2 + \frac{\eta_t^2L_f^K}{2} - \EX_{A}[u_t|\F_t].
		\end{aligned}
	\end{equation*}
	Now we bound the term $ \EX_{A}[u_t|\F_t]$. 
	\begin{equation*}
		\begin{aligned}
			-\EX_{A}[u_t|\F_t] & = 	\EX_{A}[\eta_t \langle \nabla F_S(x_t), \prod_{i=1}^K \nabla F_{i,S}(x_t) - \prod_{i=1}^K v_{t}^{(i)}   \rangle  |\F_t]\\
			& =  \EX_{A}[\eta_t \langle \prod_{i=1}^{K} \nabla F_{i,S}(x_t) -  \prod_{i=2}^{K} \nabla F_{i,S}(x_t) \cdot v_t^{(1)} , \nabla F_S(x_t) \rangle  |\F_t] \\
			&  +\EX_{A}[\eta_t\langle   \prod_{i=2}^{K} \nabla F_{i,S}(x_t) \cdot v_t^{(1)}  -  \prod_{i=3}^{K} \nabla F_{i,S}(x_t) \cdot v_t^{(1)} \cdot \nabla f_{2,S}(u_t^{(1)}) , \nabla F_S(x_t) \rangle  |\F_t] \\
			&  + \EX_{A}[\eta_t\langle  \prod_{i=3}^{K} \nabla F_{i,S}(x_t) \cdot v_t^{(1)} \cdot \nabla f_{2,S}(u_t^{(1)}) -  \prod_{i=3}^{K} \nabla F_{i,S}(x_t) \cdot v_t^{(1)} \cdot  v_t^{(2)}, \nabla F_S(x_t) \rangle  |\F_t] \\
			& \vdots\\
			&  + \EX_{A}[\eta_t\langle  \prod_{i=1}^{K-1} v_t^{(i)} \cdot \nabla f_{K,S}(u_t^{(K-1)}) - \prod_{i=1}^K  v_{t}^{(i)}, \nabla F_S(x_t) \rangle  |\F_t].
		\end{aligned}
	\end{equation*}
	Concluding the above inequality, using Assumption \ref{ass:Lipschitz continuous} (iii)  we have
	\begin{equation*}
		\begin{aligned}
			-\EX_{A}[u_t|\F_t] 
			& \leq \eta_t \sum_{i=1}^{K}\sum_{j=1}^{i-1} L_f^{K-j+\frac{1}{2}(i-1)i} \EX_{A}[ \|u_{t}^{(j)} - f_{j,S}(u_{t}^{(j-1)})\|  \cdot   \|\nabla F_S(x_t)\| |\F_t]\\
			&\quad + \eta_t \sum_{i=1}^K  L_f^{K-i + \frac{1}{2}(i-1)i} \EX_{A}[ \|v_t^{(i)} - \nabla f_{i,S}(u_t^{(i-1)})\| \cdot  \|\nabla F_S(x_t)\| |\F_t]  \\
			& \leq K L_f^{m} \eta_t \sum_{j=1}^{K-1}\EX_{A}[ \|u_{t}^{(j)} - f_{j,S}(u_{t}^{(j-1)})\|  \cdot   \|\nabla F_S(x_t)\| |\F_t]\\
			&\quad +  L_f^{m}\eta_t  \sum_{i=1}^{K-1} \EX_{A}[ \|v_t^{(i)} - \nabla f_{i,S}(u_t^{(i-1)})\| \cdot  \|\nabla F_S(x_t)\| |\F_t].
		\end{aligned}
	\end{equation*}
	According to Cauchy-Schwartz inequality, we have
	\begin{equation*}
		\begin{aligned}
			-\EX_{A}[u_t|\F_t]
			&\leq  K L_f^{m} \eta_t \sum_{j=1}^{K-1} ( \frac{\|F_S(x_t)\|^2}{\gamma_t} + \gamma_t \EX_{A}[ \|u_{t}^{(j)} - f_{j,S}(u_{t}^{(j-1)})\|^2]|\F_t]\\
			&\quad +  L_f^{m}\eta_t \sum_{i=1}^{K-1}  ( \frac{\|F_S(x_t)\|^2}{\lambda_t} + \lambda_t \EX_{A}[ \|v_t^{(i)} - \nabla f_{i,S}(u_t^{(i-1)})\|^2)|\F_t].
		\end{aligned}
	\end{equation*}
	Therefore we have 
	\begin{equation*}
		\begin{aligned}
			\EX_{A}[F_S(x_{t+1})|\F_t] & \leq \EX_{A}[F_S(x_{t})|\F_t] - \eta_t \|\nabla F_S(x_t)\|^2 + \frac{\eta_t^2L_f^K}{2} \\
			&\quad + K L_f^{m} \eta_t  (K \frac{\|F_S(x_t)\|^2}{\gamma_t} + \sum_{j=1}^{K-1} \gamma_t \EX_{A}[ \|u_{t}^{(j)} - f_{j,S}(u_{t}^{(j-1)})\|^2]|\F_t]\\
			&\quad +  L_f^{m}\eta_t   (K \frac{\|F_S(x_t)\|^2}{\lambda_t} + \sum_{i=1}^{K-1} \lambda_t \EX_{A}[ \|v_t^{(i)} - \nabla f_{i,S}(u_t^{(i-1)})\|^2)|\F_t].
		\end{aligned}
	\end{equation*}
	Setting $\gamma_t = 4K^2L_f^{m}$ and $\lambda_t = 4K L_f^{m} $, we have 
	\begin{equation*}
		\begin{aligned}
			\EX_{A}[F_S(x_{t+1})|\F_t] & \leq \EX_{A}[F_S(x_{t})|\F_t]  -\frac{\eta_{t}}{2}\|\nabla F_S(x_t)\|^2 +\frac{\eta_t^2L_f^K}{2} \\
			&\quad +   4K^4L_f^{m} \eta_t \sum_{j=1}^{K-1} \EX_{A}[ \|u_{t}^{(j)} - f_{j,S}(u_{t}^{(j-1)})\|^2]|\F_t]\\
			&\quad + 4K^2 L_f^{m} \eta_t \sum_{i=1}^{K-1}  \EX_{A}[ \|v_t^{(i)} - \nabla f_{i,S}(u_t^{(i-1)})\|^2)|\F_t].
		\end{aligned}
	\end{equation*}
	This complete the proof.
\end{proof}
Next, we will give the detailed proof of Theorem \ref{thm:opt_sconvex}.

\begin{proof}[proof of Theorem \ref{thm:opt_sconvex}]
	Note that strong convexity implies the Polyak-\L ojasiewicz (PL) inequality
	\begin{equation*}
		\frac{1}{2}\|\nabla F_{S}(x)\|^{2} \geq \mu(F_{S}(x)-F_{S}(x_{*}^{S})), \quad \forall x.
	\end{equation*}
	Then according to Lemma \ref{lemma:mul_sc} and PL condition, subtracting both sides with $F_S(x_*^S)$  we have 
	\begin{equation*}
		\begin{aligned}
			\EX_{A}[F_S(x_{t+1})& - F_S(x_*^S)  ] \leq (1-\mu \eta_t) \EX_{A}[F_S(x_{t}) - F_{S}(x_{*}^{S})]  +\frac{\eta_t^2L_f^K}{2} \\
			&\quad +   4K^4L_f^{m} \eta_t \sum_{j=1}^{K-1} \EX_{A}[ \|u_{t}^{(j)} - f_{j,S}(u_{t}^{(j-1)})\|^2]] + 4K^2 L_f^{m} \eta_t \sum_{i=1}^{K-1}  \EX_{A}[ \|v_t^{(i)} - \nabla f_{i,S}(u_t^{(i-1)})\|^2)].
		\end{aligned}
	\end{equation*}
	By setting $\eta_t = \eta $, $\beta_{t} = \beta$, according to Lemma \ref{lemma:u_t_bound_mul} and Lemma \ref{lemma:v_t_bound_mul} we have 
	\begin{equation*}
		\begin{aligned}
			&\EX_{A}[F_S(x_{t+1}) - F_S(x_*^S)  ] \\
			& \leq (1-\mu \eta) \EX_{A}[F_S(x_{t}) - F_{S}(x_{*}^{S})]  +\frac{\eta^2 L_f^K}{2} \\
			&\quad +   4K^4L_f^{m} \eta (\sum_{i=1}^K (\frac{c}{e})^{c}(\frac{t\beta}{2})^{-c} \mathbb{E}[\|u_{1}^{(i)}- f_{i,S}(x_{0})\|^{2}]+ 4\beta \sigma_{f}^{2} K ((\sum_{i=1}^{K}(2 L_f^{2})^{i}) + 1) + \frac{2\sum_{i=1}^{K}(2 L_f^{2})^{i}\eta^2 L_f^K}{\beta}.)\\
			&\quad + 4K^2 L_f^{m} \eta (\sum_{i=1}^K (\frac{c}{e})^{c}(\frac{t\beta}{2})^{-c}  (\mathbb{E}[\|u_{1}^{(i)}- f_{i,S}(x_{0})\|^{2}] + \mathbb{E}[\|v_{1}^{(i)}- \nabla f_{i,S}(x_{0})\|^{2}] ) + \frac{4(\sum_{i=1}^{K}(2 L_f^{2})^{i})\eta^2 L_f^K}{\beta} \\
			&\quad~~~~~~~~~~~~~~~~~~~~~~~~~~~~~~~~+4 \beta K  (\sigma_{f}^{2} + \sigma_{J}^{2} + 2 \sigma_{f}^2(\sum_{i=1}^{K}(2 L_f^{2})^{i})  )).
		\end{aligned}
	\end{equation*}
	Telescoping the above inequality from 1 to $T-1$, we have 
	
	\begin{equation*}
		\begin{aligned}
			& \EX_{A}[F_S(x_{T}) - F_S(x_*^S)  ] \\
			& \leq (1-\mu \eta)^{T-1} \EX_{A}[F_S(x_{1}) - F_{S}(x_{*}^{S})]  +\frac{\eta^2 L_f^K}{2}\sum_{t=1}^{T-1}(1-\mu\eta)^{T-t-1} +  4K^4L_f^{m} (\frac{2c}{e})^{c} \eta \beta^{-c} (\sum_{i=1}^K U_{i}) \sum_{t=1}^{T-1} t^{-c}(1-\mu\eta)^{T-t-1} \\
			&\quad +16K^5 L_f^{m} \sigma_{f}^2 ((\sum_{i=1}^{K}(2 L_f^{2})^{i}) + 1) \eta\beta \sum_{t=1}^{T-1}(1-\mu\eta)^{T-t-1} + \frac{ 8K^4L_f^{m} \sum_{i=1}^{K}(2 L_f^{2})^{i} \eta^3}{\beta} \sum_{t=1}^{T-1}(1-\mu\eta)^{T-t-1} \\
			&\quad+ 4K^2L_f^{m} (\frac{2c}{e})^{c} \eta \beta^{-c} (\sum_{i=1}^K (U_{i} + V_{i}) ) \sum_{t=1}^{T-1} t^{-c}(1-\mu\eta)^{T-t-1}+  \frac{ 16K^2L_f^{m} \sum_{i=1}^{K}(2 L_f^{2})^{i} \eta^3}{\beta} \sum_{t=1}^{T-1}(1-\mu\eta)^{T-t-1} \\
			&\quad +   16K^3 L_f^{m} (\sigma_{f}^{2} + \sigma_{J}^{2} + 2 \sigma_{f}^2(\sum_{i=1}^{K}(2 L_f^{2})^{i})  ) \eta\beta \sum_{t=1}^{T-1}(1-\mu\eta)^{T-t-1}.
		\end{aligned}
	\end{equation*}
	
	For $t=0$, we have 
	\begin{equation*}
		\begin{aligned}
			&\EX_{A}[F_S(x_{1}) - F_S(x_*^S)  ] \\
			& \leq (1-\mu \eta) \EX_{A}[F_S(x_{0}) - F_{S}(x_{*}^{S})] +\frac{\eta^2L_f^K}{2} + 4K^4L_f^{m} \eta \sum_{j=1}^{K-1}U_{i}  +  4K^2 L_f^{m} \eta \sum_{j=1}^{K-1}(U_{i} + V_{i}).
		\end{aligned}
	\end{equation*}
	Then combining above two cases, we have 
	
	\begin{equation*}
		\begin{aligned}
			&\EX_{A}[F_S(x_{T}) - F_S(x_*^S)  ] \\
			& \leq (1-\mu \eta)^T \EX_{A}[F_S(x_{0}) - F_{S}(x_{*}^{S})]  +\frac{\eta^2 L_f^K}{2}\sum_{t=1}^{T}(1-\mu\eta)^{T-t}  +  4K^4L_f^{m} (\frac{2c}{e})^{c} \eta \beta^{-c} (\sum_{i=1}^K U_{i}) \sum_{t=1}^{T-1} t^{-c}(1-\mu\eta)^{T-t-1} \\
			&\quad +16K^5 L_f^{m} \sigma_{f}^2 ((\sum_{i=1}^{K}(2 L_f^{2})^{i}) + 1) \eta\beta \sum_{t=1}^{T-1}(1-\mu\eta)^{T-t-1} + \frac{ 8K^4L_f^{m} \sum_{i=1}^{K}(2 L_f^{2})^{i} \eta^3}{\beta} \sum_{t=1}^{T-1}(1-\mu\eta)^{T-t-1} \\
			&\quad+ 4K^2L_f^{m} (\frac{2c}{e})^{c} \eta \beta^{-c} (\sum_{i=1}^K (U_{i} + V_{i}) ) \sum_{t=1}^{T-1} t^{-c}(1-\mu\eta)^{T-t-1}+  \frac{ 16K^2L_f^{m} \sum_{i=1}^{K}(2 L_f^{2})^{i} \eta^3}{\beta} \sum_{t=1}^{T-1}(1-\mu\eta)^{T-t-1} \\
			&\quad +   16K^3 L_f^{m} (\sigma_{f}^{2} + \sigma_{J}^{2} + 2 \sigma_{f}^2(\sum_{i=1}^{K}(2 L_f^{2})^{i})  ) \eta\beta \sum_{t=1}^{T-1}(1-\mu\eta)^{T-t-1}\\
			& \quad + (4K^4L_f^{m} \eta \sum_{j=1}^{K-1}U_{i}  +  4K^2 L_f^{m} \eta \sum_{j=1}^{K-1}(U_{i} + V_{i})) (1-\mu \eta)^{T-1} .
		\end{aligned}
	\end{equation*}
	
	Then from Lemma \ref{lemma:weighted_avg}, we have
	\begin{equation*}
		\sum_{t=1}^{T-1}(1-\mu \eta)^{T-t-1} t^{-c} \leq \frac{\sum_{t=1}^{T-1}(1-\mu \eta)^{T-t-1}}{T-1} \sum_{t=1}^{T-1} t^{-c} \leq \frac{1}{T \mu \eta} \sum_{t=1}^{T-1} t^{-c}
	\end{equation*}
	Therefore,
	\begin{equation*}
		\begin{aligned}
			&\EX_{A}[F_S(x_{T}) - F_S(x_*^S)  ] \\
			& \leq (\frac{c}{e\mu})^{c}(\eta T)^{-c}D_x +\frac{\eta L_f^K}{\mu}  \\
			&\quad +  \frac{4K^4L_f^{m} (\frac{2c}{e})^{c} \beta^{-c}  (\sum_{i=1}^K U_{i})   }{T\mu} \sum_{t=1}^{T-1} t^{-c} +\frac{16K^5 L_f^{m} \sigma_{f}^2 ((\sum_{i=1}^{K}(2 L_f^{2})^{i}) + 1) \beta}{\mu}    \\
			&\quad + \frac{ 8K^4L_f^{m} \sum_{i=1}^{K}(2 L_f^{2})^{i} \eta^2}{\beta\mu}  + \frac{ 4K^2L_f^{m} (\frac{2c}{e})^{c}\beta^{-c} (\sum_{i=1}^K (U_{i} + V_{i}) )}{T\mu} \sum_{t=1}^{T-1} t^{-c}\\
			&\quad +  \frac{ 16K^2L_f^{m} \sum_{i=1}^{K}(2 L_f^{2})^{i} \eta^2}{\beta\mu}   +    \frac{16K^3 L_f^{m} (\sigma_{f}^{2} + \sigma_{J}^{2} + 2 \sigma_{f}^2(\sum_{i=1}^{K}(2 L_f^{2})^{i})  ) \beta}{\mu} \\
			& \quad + (4K^4L_f^{m}  \sum_{j=1}^{K-1}U_{i}  +  4K^2 L_f^{m}  \sum_{j=1}^{K-1}(U_{i} + V_{i})) (\frac{c}{e\mu})^{c}\eta(\eta T)^{-c} .
		\end{aligned}
	\end{equation*}
	Moreover, note that \(\sum_{t= 1}^{T} t^{-z}= O(T^{1- z})\) for \(z\in (0, 1)\cup (1, \infty)\) and \(\sum_{t= 1}^{T} t^{-1}= O(\log T)\). As long as \(c\neq 1\) we get
	
	\begin{equation*}
		\begin{aligned}
			\EX_{A}[F_S(x_{T}) - F_S(x_*^S)  ] 
			& = O\Big((\eta T)^{-c}D_x +\eta L_f^K  +  L_f^{m}   (\sum_{i=1}^K U_{i})(\beta T)^{-c} + L_f^{m} \sigma_{f}^2 ((\sum_{i=1}^{K}( L_f^{2})^{i}) + 1) \beta    \\
			&\quad + L_f^{m} \sum_{i=1}^{K}( L_f^{2})^{i} \eta^2\beta^{-1}  + L_f^{m}  (\sum_{i=1}^K (U_{i} + V_{i}) ) (\beta T)^{-c} \\
			&\quad +  L_f^{m} \sum_{i=1}^{K}(L_f^{2})^{i} \eta^2\beta^{-1}   +    L_f^{m} (\sigma_{f}^{2} + \sigma_{J}^{2} +  \sigma_{f}^2(\sum_{i=1}^{K}( L_f^{2})^{i})  ) \beta \\
			& \quad + (L_f^{m}  \sum_{j=1}^{K-1}U_{i}  + L_f^{m}  \sum_{j=1}^{K-1}(U_{i} + V_{i})) \eta(\eta T)^{-c} \Big).
		\end{aligned}
	\end{equation*}
	By rearranging the above inequality, we can obtain
	\begin{equation*}
		\begin{aligned}
			\EX_{A}[F_S(x_{T}) - F_S(x_*^S)] 
			& \leq O\Big((\eta T)^{-c}D_x +\eta L_f^K + L_f^{m} \sum_{i=1}^K (U_{i} + V_{i}) (\beta T)^{-c}\\
			&\quad  + L_f^{m} (\sigma_{f}^{2} + \sigma_{J}^{2} +  \sigma_{f}^2(\sum_{i=1}^{K}( L_f^{2})^{i})  ) \beta  \\
			&\quad +  L_f^{m} \sum_{i=1}^{K}(L_f^{2})^{i} \eta^2\beta^{-1} +  L_f^{m}  \sum_{j=1}^{K-1}(U_{i} + V_{i}) \eta(\eta T)^{-c} \Big).
		\end{aligned}
	\end{equation*}
	The proof is completed.
\end{proof}

\begin{proof}[proof of Theorem \ref{thm:Excess_Risk_Bound_s_convex}]
	Combining  Theorem \ref{theorem:general_multi_level},  and Theorem  \ref{thm:opt_sconvex} we have
	\begin{equation*}
		\begin{aligned}
			&\EX_{S,A}[F(x_T)-F_S(x_T)]\\
			&\leq L_f^K \epsilon_K +  4L_f^K\sum_{t=1}^{K-1}\epsilon_{t}+ L_f\sum_{t=2}^K\sqrt{\frac{ \mathbb{E}_{S, A} [\operatorname{Var}_{K-t+1}(A(S)]}{n_{K-t+1}}}\\
			& \leq 12 L_f^K\sqrt{K  L_f^{m+K} (\frac{2c}{e})^c\sum_{i=1}^KU_{i} }  \frac{(L+\mu)}{L\mu} T^{-\frac{c}{2}}\beta^{-\frac{c}{2}} + L_f\sum_{t=2}^K\sqrt{\frac{ \mathbb{E}_{S, A} [\operatorname{Var}_{K-t+1}(A(S)]}{n_{K-t+1}}}\\
			&\quad+  24\sigma_fK L_f^K\sqrt{  L_f^{m+K}(\sum_{i=1}^{K}(2 L_f^{2})^{i}) + 1)} \cdot \frac{(L+\mu)}{L\mu} \beta^{1/2} + 12L_f^K\sqrt{K L_f^{m+K} 2\sum_{i=1}^K( 2 L_f^{2})^{i} L_f^K} \frac{(L+\mu)}{L\mu} \eta\beta^{-1/2}\\
			&\quad + 12L_f^K\sqrt{L_f^{m+K} (\frac{2c}{3})^c (\sum_{k=1}^{K}(U_{i} + v_{1}^{(i)} )) } \frac{L+\mu}{L\mu} T^{-\frac{c}{2}}\beta^{-\frac{c}{2}} +  24L_f^K\sqrt{K L_f^{m+K} \sum_{i=1}^K( 2 L_f^{2})^{i} L_f^K} \frac{(L+\mu)}{L\mu} \eta\beta^{-1/2}\\
			&\quad + 24L_f^K\sqrt{    (\sigma_{f}^{2} + \sigma_{J}^{2} + 2 \sigma_{f}^2(\sum_{i=1}^{K}(2 L_f^{2})^{i})  )} \frac{(L+\mu)}{L\mu}\beta^{1/2}  + \sum_{k=1}^K\sqrt{\frac{2\eta L_f^{2K}(L+\mu)}{n_kL\mu}} + \sum_{k=1}^K\frac{2 L_f^{K}(L+\mu)}{n_kL\mu},
		\end{aligned}
	\end{equation*}
	Then according to Theorem \ref{thm:opt_sconvex}, we have 
	\begin{equation*}
		\begin{aligned}
			&\EX_{A}[F(A(S)) - F(x_*)]\\
			& \leq 12 L_f^K\sqrt{K  L_f^{m+K} (\frac{2c}{e})^c\sum_{i=1}^KU_{i} }  \frac{(L+\mu)}{L\mu} T^{-\frac{c}{2}}\beta^{-\frac{c}{2}} + L_f\sum_{t=2}^K\sqrt{\frac{ \mathbb{E}_{S, A} [\operatorname{Var}_{K-t+1}(A(S)]}{n_{K-t+1}}}\\
			&\quad+  24\sigma_fK L_f^K\sqrt{  L_f^{m+K}(\sum_{i=1}^{K}(2 L_f^{2})^{i}) + 1)} \cdot \frac{(L+\mu)}{L\mu} \beta^{1/2} + 12L_f^K\sqrt{K L_f^{m+K} 2\sum_{i=1}^K( 2 L_f^{2})^{i} L_f^K} \frac{(L+\mu)}{L\mu} \eta\beta^{-1/2}\\
			&\quad + 12L_f^K\sqrt{L_f^{m+K} (\frac{2c}{3})^c (\sum_{k=1}^{K}(U_{i} + v_{1}^{(i)} )) } \frac{L+\mu}{L\mu} T^{-\frac{c}{2}}\beta^{-\frac{c}{2}} +  24L_f^K\sqrt{K L_f^{m+K} \sum_{i=1}^K( 2 L_f^{2})^{i} L_f^K} \frac{(L+\mu)}{L\mu} \eta\beta^{-1/2}\\
			&\quad + 24L_f^K\sqrt{    (\sigma_{f}^{2} + \sigma_{J}^{2} + 2 \sigma_{f}^2(\sum_{i=1}^{K}(2 L_f^{2})^{i})  )} \frac{(L+\mu)}{L\mu}\beta^{1/2}  + \sum_{k=1}^K\sqrt{\frac{2\eta L_f^{2K}(L+\mu)}{n_kL\mu}} + \sum_{k=1}^K\frac{2 L_f^{K}(L+\mu)}{n_kL\mu}\\
			& \quad + (\eta T)^{-c}D_x +\eta L_f^K   +  L_f^{m}   (\sum_{i=1}^K U_{i})(\beta T)^{-c} + L_f^{m} \sigma_{f}^2 ((\sum_{i=1}^{K}(2 L_f^{2})^{i}) + 1) \beta    \\
			&\quad + L_f^{m} \sum_{i=1}^{K}( L_f^{2})^{i} \eta^2\beta^{-1}  + L_f^{m}  (\sum_{i=1}^K (U_{i} + V_{i}) ) (\beta T)^{-c} \\
			&\quad +  L_f^{m} \sum_{i=1}^{K}(L_f^{2})^{i} \eta^2\beta^{-1}   +    L_f^{m} (\sigma_{f}^{2} + \sigma_{J}^{2} +  \sigma_{f}^2(\sum_{i=1}^{K}( L_f^{2})^{i})  ) \beta \\
			& \quad + (L_f^{m}  \sum_{j=1}^{K-1}U_{i}  + L_f^{m}  \sum_{j=1}^{K-1}(U_{i} + V_{i})) \eta(\eta T)^{-c} .
		\end{aligned}
	\end{equation*}
	Setting $\eta = T^{-a} \beta = T^{-b}$, we have 
	\begin{equation*}
		\begin{aligned}
			&\EX_{A}[F(A(S)) - F(x_*)]\\
			& \leq O(T^{\frac{c}{2}(b-1)} + T^{-\frac{b}{2}} +T^{\frac{b}{2}-a} + \sum_{i=1}^K n_k^{-1} + T^{-c(1-a)} +T^{-a} + + T^{-c(1-b)} +T^{-b} +T^{b-2a} +T^{-c(1-a)-a} ).
		\end{aligned}
	\end{equation*}
	Setting $c = 3$, then the dominating terms are $O(T^{\frac{b}{2}-a}), \quad O(T^{-\frac{b}{2}}), \quad O(T^{\frac{c}{2}(b-1)}), \quad O(T^{-\frac{a}{2}}), \text { and } \quad O(T^{-c(1-a)})$. Then setting  $a= b= \frac{6}{7}$ we have  
	\begin{equation*}
		\EX_{A}[F(A(S)) - F(x_*)] = O(T^{-\frac{3}{7}}).
	\end{equation*}
	Then setting  $T= O(\max\{n_1^{\frac{7}{6}},\cdots, n_K^{\frac{7}{6}}\}) $,  we have 
	\begin{equation*}
		\EX_{A}[F(A(S)) - F(x_*)] = O(\sum_{k=1}^K\frac{1}{\sqrt{n_k}}).
	\end{equation*}
	Then we complete the proof.
\end{proof}

\end{document}